\documentclass{article}

\usepackage{iclr2024_conference}

\iclrfinalcopy

\usepackage[utf8]{inputenc} 
\usepackage[T1]{fontenc}    
\usepackage{hyperref}       
\usepackage{url}            
\usepackage{booktabs}       
\usepackage{amsfonts}       
\usepackage{nicefrac}       
\usepackage{microtype}      
\usepackage{xcolor}         

\usepackage{times}
\usepackage{amsmath, amssymb, amsthm, bm}
\usepackage{multirow}
\usepackage{natbib}
\usepackage{graphicx}
\usepackage{makecell}
\usepackage{array}
\usepackage{url}
\usepackage{dsfont}
\usepackage{cleveref}
\usepackage{csquotes}
\usepackage{cancel}
\usepackage{booktabs}
\usepackage{array}
\usepackage{algorithm}
\usepackage{algorithmic}
\usepackage{enumitem}
\usepackage{caption}
\usepackage{subcaption}
\usepackage{cleveref}
\usepackage{thmtools}
\usepackage{comment}
\usepackage{thm-restate}
\usepackage{bbm}
\usepackage{bm, dsfont}

\let\hat\widehat
\let\tilde\widetilde


\newcommand{\bbm}{\bm{m}}

\newcommand{\bx}{\bm{x}}

\newcommand{\bz}{\bm{z}}

\newcommand{\bX}{\bm{X}}


\newcommand{\cB}{\mathcal{B}}
\newcommand{\cC}{\mathcal{C}}

\newcommand{\cG}{\mathcal{G}}

\newcommand{\cL}{\mathcal{L}}


\newcommand{\NN}{\mathbb{N}}

\newcommand{\RR}{\mathbb{R}}


\newcommand{\btheta}{\bm{\theta}}


\newcommand{\tr}{\mathop{\mathrm{tr}}}


\newcommand{\rank}{\mathrm{rank}}



\newcommand{\norm}[1]{\left\|#1\right\|}
\def\abs#1{\left| #1 \right|}


\newcommand{\dotp}[2]{\langle{#1},{#2}\rangle}


\newcommand{\diff}{\mathrm{d}}

\newcommand*{\E}{\mathbb{E}}

\def\vc{{\bm{c}}}

\makeatletter
\newcommand{\ve}{\@ifnextchar\bgroup{\velong}{{\bm{e}}}}
\newcommand{\velong}[1]{{\bm{#1}}}
\makeatother

\def\vg{{\bm{g}}}
\def\vh{{\bm{h}}}

\def\vm{{\bm{m}}}

\def\vs{{\bm{s}}}

\def\vv{{\bm{v}}}

\def\vx{{\bm{x}}}
\def\vy{{\bm{y}}}
\def\vz{{\bm{z}}}
\def\vxi{{\bm{\xi}}}

\def\vtheta{{\bm{\theta}}}



\def\mI{{\bm{I}}}

\def\mSigma{{\bm{\Sigma}}}


\def\bvx{{\bm{X}}}
\def\bvz{{\bm{Z}}}
\def\bvy{{\bm{Y}}}
\def\bvm{{\bm{M}}}
\def\bvw{{\bm{W}}}

\def\hatvx{{\hat{\vx}}}
\def\hatbvx{{\hat{\bvx}}}
\def\R{{\mathbb{R}}}


\newcommand{\Z}{\mathbb{Z}}

\newcommand{\Loss}{\mathcal{L}}

\newcommand{\DatZ}{\mathcal{Z}} 


\newcommand{\dd}{\textup{\textrm{d}}}


\usepackage{textcomp}

\usepackage{xcolor}
\hypersetup{
    colorlinks,
    linkcolor={red!50!black},
    citecolor={blue!50!black},
    urlcolor={blue!80!black}
}

\theoremstyle{plain}
\newtheorem{theorem}{Theorem}[section]
\newtheorem{lemma}[theorem]{Lemma}

\theoremstyle{definition}
\newtheorem{definition}[theorem]{Definition}
\newtheorem{proposition}[theorem]{Proposition}

\newtheorem{assumption}[theorem]{Assumption}

\crefname{claim}{claim}{claims}
\crefname{conjecture}{conjecture}{conjectures}
\crefname{assumption}{assumption}{assumptions}
\crefname{condition}{condition}{conditions}
\newcommand{\xinit}{x_{\mathrm{init}}}

\usepackage{authblk}

\title{The Marginal Value of Momentum \\ for Small Learning Rate SGD}

\author{\textbf{Runzhe Wang$^1$, Sadhika Malladi$^{1}$, Tianhao Wang$^{2}$, Kaifeng Lyu$^1$, Zhiyuan Li$^{34}$}\\
$^1$Princeton University, $^2$Yale University, $^3$Stanford University,\\ $^4$Toyota Technological Institute at Chicago\\
\texttt{\{runzhew,smalladi,klyu\}@princeton.edu},\\ \texttt{tianhao.wang@yale.edu, zhiyuanli@ttic.edu}
}
\begin{document}

\maketitle

\begin{abstract}
	 Momentum is known to accelerate the convergence of gradient descent in strongly convex settings without stochastic gradient noise. In stochastic optimization, such as training neural networks, folklore suggests that momentum may help deep learning optimization by reducing the variance of the stochastic gradient update, but previous theoretical analyses do not find momentum to offer any provable acceleration.
    Theoretical results in this paper clarify the role of momentum in stochastic settings where the learning rate is small and gradient noise is the dominant source of instability, suggesting that SGD with and without momentum behave similarly in the short and long time horizons.
    Experiments show that momentum indeed has limited benefits for both optimization and generalization in practical training regimes where the optimal learning rate is not very large, including small- to medium-batch training from scratch on ImageNet and fine-tuning language models on downstream tasks.
\end{abstract}

\section{Introduction}

In modern deep learning, it is standard to combine stochastic gradient methods with {\em heavy-ball momentum}, or {\em momentum} for short, to enable a more stable and efficient training of neural networks~\citep{sutskever2013on}. 
The simplest form is {\em Stochastic Gradient Descent with Momentum} (SGDM). 
SGDM aims to minimize the training loss $\cL(\vx)$ given a noisy gradient oracle $\cG(\vx)$, which is usually realized by evaluating the gradient at a randomly sampled mini-batch from the training set. 
Specifically, let $\gamma, \beta$ be the learning rate and momentum coefficient, then SGDM can be stated as:
\begin{equation}\label{equ:SGDM-intro}
    \vg_k \sim \cG(\vx_k), \qquad \vm_{k+1} = \beta \vm_k + \vg_k, \qquad \vx_{k+1} = \vx_{k} - \gamma \vm_{k+1},
\end{equation}
where $\vg_k, \vm_k, \vx_k$ are the gradient, momentum buffer, and parameter vector at step $k$. 

For typical choices of $\beta\in(0,1)$, the momentum buffer can be interpreted as an exponential moving average of past gradients, i.e., $\vm_k = \sum_{j=0}^{k} \beta^{k-j} \vg_j$. 
Based on this interpretation, \citet{polyak1964some,polyak1987intro,rumelhart1987learning} argued that momentum is able to cancel out oscillations along high-curvature directions and add up contributions along low-curvature directions. 
More concretely, for strongly convex functions without any noise in gradient estimates, \citet{polyak1964some,polyak1987intro} showed that adding momentum can stabilize the optimization process even when the learning rate is so large that can make vanilla gradient descent diverge, and thus momentum accelerates the convergence to minimizers by allowing using a larger learning rate.

In deep learning, however, the random sampling of mini-batches inevitably introduces a large amount of stochastic gradient noise, which sometimes dominates the true gradient and may become the main source of training instability.
As the above convergence results solely analyze the noiseless case, it remains unclear in theory whether momentum can likewise stabilize the stochastic optimization process in deep learning. 

To understand the benefit of momentum in stochastic optimization,
several prior studies~\citep{bottou2018optimization,defazio2020momentum,you2020large} speculate that averaging past stochastic gradients through momentum may reduce the variance of the noise in the parameter update, thus making the loss decrease faster.
To approach this more rigorously, \citet{cutkosky2019momentum} proposed a variant of SGDM that provably accelerates training by leveraging the reduced variance in the updates.

Nevertheless, for SGDM without any modifications, past theoretical analyses in the stochastic optimization of convex and non-convex functions typically conclude with a convergence rate that is comparable to that of vanilla SGD, but not faster~\citep{yan2018unified,yu2019linear,liu2020improved,sebbouh2021almost,li2022last}. 
Besides, there also exist simple and concrete instances of convex optimization where momentum does not speed up the convergence rate of SGD, even though it is possible to optimize faster with some variants of SGDM~\citep{kidambi2018on}. This naturally raises the following question on the true role of momentum:
\begin{center}
    \emph{Does noise reduction in SGDM updates really benefit neural network training?}
\end{center}
To address this question, this paper delves into the training regime where the learning rate is small enough to prevent oscillations along high-curvature directions, yet the gradient noise is large enough to induce instability. This setting enables us to concentrate exclusively on the interplay between momentum and gradient noise.
More importantly, this training regime is of practical significance as in many situations, such as small-batch training from scratch or fine-tuning a pre-trained model, the optimal learning rate is indeed relatively small~\citep{liu2019roberta,malladi2023kernelbased}.

\textbf{Main Contributions.} 
In this paper, we present analyses of the training trajectories of SGD with and without momentum, in the regime of small learning rate. 
We provide theoretical justifications of a long-held belief that SGDM with learning rate $\gamma$ and momentum $\beta$ performs comparably to SGD with learning rate $\eta = \frac{\gamma}{1 - \beta}$~\citep{tugay1989properties,orr1996dynamics, qian1999momentum, yuan2016influence,smith2020generalization}.
This finding offers negative evidence for the usefulness of noise reduction in momentum. Additionally, this also motivates us to reformulate SGDM in \Cref{def:SGDM} so SGDM and SGD perform comparably under the same learning rate $\eta$, which in turn simplifies our analysis.

More specifically, given a run of SGDM, we show that vanilla SGD can closely track its trajectory in the following two regimes with different time horizon:

\vspace{-0.1in}
\begin{description}
    \item[Regime I.] Training with SGD and SGDM for $O(1/\eta)$ steps where the scaling of gradient noise covariance can be as large as $O(1/\eta)$.
    Specifically, \Cref{thm:weak-appro-main} shows that SGD and SGDM are $O(\sqrt{\eta/(1-\beta)})$-close to each other in the sense of weak approximation, where $\eta,\beta$ are the learning rate and momentum coefficient under the notation of \Cref{def:SGDM}. Our analysis not only includes the classical result that both SGD and SGDM converge to Gradient Flow in $O(1/\eta)$ steps where the stochastic gradient is sampled from a bounded distribution independent of $\eta$, but also covers the regime of applying Linear Scaling Rule~\citep{goyal2017accurate}, where one decreases the learning rate and batch size at the same rate, so the noise covariance increases inversely proportional to $\eta$, and in this case both SGD and SGDM converge to a Stochastic Differential Equation (SDE). 
    Our results improve over previous analysis~\citep{yuan2016influence,liu2018diffusion} by avoiding underestimating the role of noise when scaling down the learning rate, and provide rigorous theoretical supports to the scaling claims in \citet{smith2020generalization,cowsik2022flatter}.
    Technically we introduce an auxiliary dynamics $\vy_k$ (\Cref{equ:coupled-traj}) that bridges SGDM and SGD. 
   \item[Regime II.] Training with SGD and SGDM for $O(1/\eta^2)$ steps for overparametrized models where the minimizers of the loss connect as a manifold and after reaching such a manifold, the gradient noise propels the iterates to move slowly along it. \Cref{thm:slow-sde-main} shows that SGD and SGDM follow the same dynamics along the manifold of minimizers and thus have the same implicit bias. The implicit bias result of SGD is due to \citet{katzenberger1991solutions,li2021happens} whose analysis does not apply to SGDM because its dynamic depends non-homogeneously on $\eta$.
    Our proof of \Cref{thm:slow-sde-main} is non-trivial in carefully  decomposing the updates.
\end{description}

In \Cref{sec:exps}, we further empirically verify that momentum indeed has limited benefits for both optimization and generalization in several practical training regimes, including small- to medium-batch training from scratch on ImageNet and fine-tuning RoBERTa-large on downstream tasks.
For large-batch training, we observe that SGDM allows training with a large learning rate, in which regime vanilla SGD may exhibit instability that degrades the training speed and generalization. The observations are consistent with previous empirical studies on SGDM~\citep{kidambi2018on,shallue2019measuring,smith2020generalization}. We argue that the use of a large learning rate makes the weak approximation bound~$O(\sqrt{\eta /(1-\beta)})$ loose: running SVAG~\citep{li2021validity}, an SDE simulation method for both SGD and SGDM, shrinks or even eliminates the performance gain of momentum.

Finally, we highlight that our results can also have practical significance beyond just understanding the role of momentum. In recent years, the GPU memory capacity sometimes becomes a bottleneck in training large models.
As the momentum buffer costs as expensive as storing the entire model, it has raised much interest in when it is safe to remove momentum~\citep{shazeer2018adafactor}. Our work sheds light on this question by formally proving that momentum only provides marginal values in small learning rate SGD.
Furthermore, our results imply that within reasonable range of scales the final performance is insensitive to the momentum hyperparametrization, thereby provide support to save the effort in the extensive hyperparameter grid search.

\section{Preliminaries}

Consider optimizing a loss function $\cL(\vtheta) = \frac{1}{\Xi} \sum_{i = 1}^{\Xi} \cL_i(\vtheta)$ where $\cL_i:\RR^d\to\RR$ corresponds to the loss on the $i$-th sample.
We use $\vtheta$ to indicate parameters along a general trajectory.
In each step, we sample a random minibatch $\cB \subseteq [\Xi]$, and compute the gradient of the minibatch loss $\cL_\cB(\vtheta) = \frac{1}{|\cB|} \sum_{i\in\cB} \cL_i(\vtheta)$ to get the following noisy estimate of $\nabla \cL(\btheta)$, i.e.,
    $\nabla\cL_\cB(\btheta) = \frac{1}{|\cB|} \sum_{i\in\cB} \nabla\cL_i(\btheta)$.
It is easy to check that the noise covariance matrix of $\nabla \cL_{\cB}(\btheta)$, namely 
$\E_{\mathcal{B}} (\nabla
\Loss_{\mathcal{B}}(\vtheta) - \nabla \Loss(\vtheta))(\nabla \Loss_{\mathcal{B}}(\vtheta) - \nabla
\Loss(\vtheta))^\top$,
scales proportionally to $\frac{1}{|\cB|}$. Motivated by this, \citet{malladi2022sdes}
abstracts $\nabla \cL_{\cB}(\btheta)$ as sampled from a noisy gradient oracle where the noise covariance only depends on a scale parameter. 
\begin{definition}[NGOS,~\citet{malladi2022sdes}] \label{def:NGOS}
    A {\em Noisy Gradient Oracle with Scale Parameter} (NGOS) is
    characterized by a tuple $\cG_{\sigma} = (\cL, \mSigma, \DatZ_{\sigma})$. For a
    scale parameter $\sigma > 0$, $\cG_{\sigma}$ takes as input $\vtheta$ and returns $\vg
    =\nabla \cL(\vtheta) + \sigma \vv$, where $\nabla \cL(\vtheta)$ is the gradient of
    $\cL$ at $\vtheta$ and $\vv$ is the gradient noise drawn from the probability distribution
    $\DatZ_{\sigma}(\vtheta)$ with mean zero and covariance matrix $\mSigma(\vtheta)$.
    $\mSigma(\vtheta)$ is independent of the noise scale $\sigma$.
    Slightly abusing the notation, we also use $\cG_{\sigma}(\vtheta)$ to denote the distribution of $\vg$ given $\sigma$ and $\vtheta$.
\end{definition}

In the above minibatch setting, we have noise scale $\sigma=\frac{1}{|\mathcal{B}|}$. Generally, we invoke NGOS with bigger $\sigma$ for smaller magnitudes of the learning rates. Such scaling is in compliance with the Linear Scaling Rule ~\citep{goyal2017accurate} and is discussed further after \Cref{lem:descent}.
We now instantiate the SGD and SGDM trajectories under this noise oracle.

\begin{definition}[Vanilla SGD]
	Given a stochastic gradient oracle $\cG_\sigma$, SGD with the learning rate schedule $\{\bar{\eta}_k\}$ updates the parameters $\vz_k\in\RR^d$ from initialization $\vz_0$, as
\begin{align}
    \vz_{k+1} = \vz_k - \bar{\eta}_k\vg_k, \qquad \vg_k\sim\cG_\sigma(\vz_k).\label{def:sgd}
\end{align}	
\end{definition}
\begin{definition}[SGD with Momentum/SGDM]\label{def:SGDM}
Given oracle $\cG_\sigma$, SGDM with the hyperparameter schedule $\{(\eta_k,\beta_k)\}$, where $\beta_k\in (0,1)$, updates the parameters $\vx_k\in\RR^d$ from $(\vm_0,\vx_0)$,  as
\begin{align}\label{dqu:sgdm-1}
		\vm_{k+1} = \beta_{k}\vm_{k} + (1-\beta_{k})\vg_{k}, \qquad
		\vx_{k+1} = \vx_{k} - \eta_{k}\vm_{k+1},  \qquad\vg_k\sim\cG_\sigma(\vx_k).
	\end{align}
\end{definition}
Notice that the formulation of SGDM in \Cref{def:SGDM} is different from \eqref{equ:SGDM-intro} that sometimes appear in previous literature. In our results, \Cref{def:SGDM} offers a more natural parameterization for the comparison between SGDM and SGD. An easy conversion is given by rewriting \Cref{dqu:sgdm-1} as: 
\[\vx_{k+1} = \vx_k - \eta_k(1-\beta_k) \vg_k + \beta_k\frac{\eta_k}{\eta_{k-1}}(\vx_k-\vx_{k-1}).\]
Then setting $\eta_k=\frac{\gamma}{1-\beta}$ and $\beta_k=\beta$ recovers the form of  \eqref{equ:SGDM-intro}.  

Modeling the gradient noise as an NGOS gives us the flexibility to scale the noise in our theoretical setting to make the effect of noise non-vanishing in small learning rate training, as observed in \Cref{lem:descent}, a variant of the standard gradient descent lemma for SGD. 
\begin{proposition}[Descent Lemma for SGD]\label{lem:descent}
    Given $\vz_k$, the expected change of loss in the next step is
    \begin{align*}        &\E[\Loss(\vz_{k+1})|\vz_k] - \Loss(\vz_{k})  =\\
     &\qquad\underbrace{ - \eta \norm{\nabla \Loss(\vz_{k})}^2}_\text{descent force} + \underbrace{\frac{1}{2}(\sigma\eta)^2\tr((\nabla^2\Loss)\mSigma(\vz_k))}_\text{noise-induced} + \underbrace{\frac{1}{2}\eta^2(\nabla\Loss^\top(\nabla^2\Loss)\nabla\Loss(\vz_k))}_\text{curvature-induced}+o(\eta^2, (\sigma\eta)^2).
    \end{align*}
\end{proposition}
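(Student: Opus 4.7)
The plan is to prove this by a second-order Taylor expansion of $\Loss$ around $\vz_k$ and then take the conditional expectation over the noise $\vg_k$.

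First, since $\vz_{k+1} = \vz_k - \eta \vg_k$ with $\vg_k \sim \cG_\sigma(\vz_k)$, I would write
\begin{equation*}
\Loss(\vz_{k+1}) = \Loss(\vz_k) - \eta \nabla \Loss(\vz_k)^\top \vg_k + \tfrac{1}{2}\eta^2 \vg_k^\top \nabla^2 \Loss(\vz_k) \vg_k + R(\vz_k,\vg_k,\eta),
\end{equation*}
where $R$ is the third-order Taylor remainder, which under a standard smoothness assumption ($\Loss \in C^3$ with locally bounded third derivative) satisfies $|R| \le C\,\eta^3 \|\vg_k\|^3$.

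Next I would take conditional expectation given $\vz_k$, using the two defining moments of the NGOS: $\E[\vg_k \mid \vz_k] = \nabla \Loss(\vz_k)$ and $\mathrm{Cov}(\vg_k \mid \vz_k) = \sigma^2 \mSigma(\vz_k)$. Applying the standard identity $\E[\vg^\top A \vg] = (\E \vg)^\top A (\E \vg) + \tr(A\,\mathrm{Cov}(\vg))$ to the Hessian term gives
\begin{equation*}
\E[\vg_k^\top \nabla^2 \Loss(\vz_k) \vg_k \mid \vz_k] = \nabla \Loss(\vz_k)^\top \nabla^2 \Loss(\vz_k) \nabla \Loss(\vz_k) + \sigma^2 \tr\!\big(\nabla^2 \Loss(\vz_k)\,\mSigma(\vz_k)\big),
\end{equation*}
which yields exactly the three named terms once multiplied by $\tfrac{1}{2}\eta^2$ and combined with the descent term $-\eta \|\nabla \Loss(\vz_k)\|^2$ coming from the first-order piece.

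The only real thing to check is that the Taylor remainder is $o(\eta^2,(\sigma\eta)^2)$ in expectation. Writing $\vg_k = \nabla \Loss(\vz_k) + \sigma \vv_k$ and using $\|\vg_k\|^3 \lesssim \|\nabla \Loss(\vz_k)\|^3 + \sigma^3 \|\vv_k\|^3$, one obtains $\E[|R| \mid \vz_k] \lesssim \eta^3 (1 + \sigma^3 \E\|\vv_k\|^3)$, which under the implicit assumption that $\mSigma$ and higher moments of $\DatZ_\sigma$ are locally bounded is $o(\eta^2) + o((\sigma\eta)^2)$ in the small-$\eta$, small-$\sigma\eta$ regime the paper works in. This moment-control step is the only mildly nontrivial part; the rest is bookkeeping on the Taylor expansion. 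I would conclude by collecting the three explicit terms and absorbing everything else into the $o(\eta^2,(\sigma\eta)^2)$ remainder.
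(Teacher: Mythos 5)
Your proposal is correct and follows essentially the same route as the paper's proof: a second-order Taylor expansion, conditional expectation using the NGOS mean and covariance, and control of the third-order remainder via moment bounds on the stochastic gradient (the paper bounds $\E\norm{\vz_{k+1}-\vz_k}^3$ through $\sqrt{\E\norm{\vz_{k+1}-\vz_k}^6}=O(\eta^3+(\sigma\eta)^3)$ under the same simplifying boundedness assumptions you invoke). No gaps.
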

\Cref{lem:descent} highlights noise-induced and curvature-induced factors that prevent the loss to decrease. For regular loss functions and small learning rates, the following phenomenon are expected.
\begin{itemize}
    \item In $O(\eta^{-1})$ steps, only for $\sigma= O(1/\sqrt{\eta})$, the loss is guaranteed to decrease for small $\eta$, during which the curvature-induced factor accumulates vanishing $o(1)$ impact as $\eta\to 0$. For $\sigma= \Theta(1/\sqrt{\eta})$, the noise-induced impact is on the same order as the descent force and will not be vanishing on the training curve, so noise affects the curve similarly across different learning rates. For $\sigma= o( 1/\sqrt{\eta})$, the noise-induced impact will be vanishing as $\eta\to 0$.
    \item Assume $\tr((\nabla^2\Loss) \mSigma(\vz_k))$ is non-vanishing as $\eta\to 0$. The loss plateaus at value $O(\eta\sigma^2)$ when the impacts balance each other, and the noise-induced impact is significant until $O((\sigma\eta)^{-2})$ steps of updates.
\end{itemize}
Inspired by these observations, we are interested in studying the behavior of SGDM in two regimes, for $O(\eta^{-1})$ steps of update with $\sigma\leq 1/\sqrt{\eta}$, and for $O(\eta^{-2})$ steps of update with $\sigma\leq 1$. The two regimes capture the common practices where people use noisy small-batch updates to train a model from scratch and then reduce the noise-induced impact after the training loss plateaus (usually by annealing the learning rate) in pursuit of a model that converges and generalizes better.

\section{Weak Approximation of SGDM by SGD in $O(1/\eta)$ Steps}\label{sec:weak_approx}

Next, we will present our main theoretical results on SGDM with small learning rates.
In this section, we show that in $O(1/\eta)$ steps, SGD approximates SGDM in the sense of \Cref{def:weak_approx} for $\sigma\leq 1/\sqrt{\eta}$.
The next section studies SGDM over a longer training horizon ($O(1/\eta^2)$ steps) to characterize the coinciding implicit regularization effects of SGDM and SGD.
\subsection{A Warm-Up Example: The Variance Reduction Effect of Momentum}\label{sec:warm-up}
Intuitively, momentum makes the SGD update directions less noisy by averaging past stochastic gradients, which seems at first glance to contradict our result that the distribution of SGD and SGDM are approximately the same.
However, the apparent discrepancy is a consequence that,
by carrying the current gradient noise to subsequent steps, the updates of SGDM have long-range correlations .

For instance, we consider the case where the stochastic gradients are i.i.d. gaussian as $\vg_k\sim \mathcal{N}(\vc,\sigma^2\mI)$ for a constant vector $\vc$. We compare SGD and SGDM trajectories with hyperparameter $\eta_k=\eta$ and $\beta_k=\beta$, and initialization $\vz_0=\vx_0$ and $\vm_0\sim \mathcal{N}(\vc,\frac{1-\beta}{1+\beta}\sigma^2\mI)$. The single-step updates are
\begin{align*}
    \vz_{k+1} - \vz_{k} & = -\eta\vg_k\sim \mathcal{N}(-\eta\vc, \eta^2\sigma^2\mI).\\
    \vx_{k+1} - \vx_{k} & = -\eta\vm_{k+1} = -\eta(\beta^{k+1}\vm_0 + \sum_{s=0}^{k} \beta^{k-s}(1-\beta)\vg_s) \sim \mathcal{N}(-\eta\vc, \frac{1-\beta}{1+\beta}\eta^2\sigma^2\mI).
\end{align*}
Therefore, the variance of each single-step update is reduced by a factor of $\frac{1-\beta}{1+\beta}$, which implies larger momentum generates a smoother trajectory. However, we are usually more interested in tracking the final loss distributions induced by each trajectory. The distributions of after $k$ steps are
\begin{align*}
    \vz_k & \sim\mathcal{N}(\vz_0-k\eta\vc, k\eta^2\sigma^2 \mI);\\
    \vx_{k} & = \vz_0-\eta\beta\frac{1-\beta^k}{1-\beta}\vm_0 -\eta\sum_{s=0}^{k-1} (1-\beta^{k-s})\vg_s \sim \mathcal{N}\bigg(\vz_0-k\eta\vc, k\eta^2\sigma^2\mI-2\beta\eta^2\sigma^2\frac{1-\beta^k}{1-\beta^2}\mI\bigg).
\end{align*}
Notice that the variance of the final endpoint is only different by $|2\beta\eta^2\sigma^2\frac{1-\beta^k}{1-\beta^2}|\leq \frac{2\eta^2\sigma^2}{1-\beta^2}$, which is bounded regardless of $k$. The variance of $\vx_k$ is increased at rate $\eta^2\sigma^2$ per step, which is significantly larger than the per step update variance $\frac{1-\beta}{1+\beta}\eta^2\sigma^2$. This is a consequence of the  positive correlation of momentum updates that contributes to the variance of the trajectory in total.

Furthermore, we can observe that SGD and SGDM trajectories have different levels of turbulence induced by the different per step update variances. In some cases covered by our main result \Cref{thm:weak-appro-main}, the SGD and SGDM trajectories even exhibit different asymptotic behaviors in the limit $\eta\to 0$. For instance, when $\sigma=\eta^{-1/3}$, $\beta=1-\eta^{3/4}$, if we track the trajectory at $k=t\eta^{-1}$ steps for constant $t>0$, $\vz_k\sim \mathcal{N}(\vz_0-t\vc, t\eta^{1/3} \mI)$ and $\vx_k\sim \mathcal{N}(\vz_0-t\vc, t\eta^{1/3}\mI-2\eta^{7/12}\frac{\beta(1-\beta^k)}{1+\beta}\mI )$ with vanishing variance as $\eta\to 0$. While both trajectories converge to the straight line $\vz_0-t\vc|_{t\geq 0}$ in the limit, when we measure their total length,
\begin{align*}
\E\sum_k \norm{\vz_{k+1}-\vz_k}_2 &\geq t\eta^{-1/3}\E_{\xi\sim \mathcal{N}(0,\mI)}\norm{\xi}\to \infty , \\
\E\sum_k \norm{\vx_{k+1}-\vx_k}_2 &\leq \frac{t}{\eta}\sqrt{\eta^2\norm{\vc}^2 +\frac{\eta^{25/12}}{2-\eta^{3/4}}d}\to t\norm{c}.
\end{align*}
We observe that as $\eta$ gets smaller, the SGD trajectory becomes more and more turbulent as its length goes unbounded, while the SGDM trajectory becomes more and more smooth, though they have the same limit. Consequently, the turbulence of the training curve may not faithfully reflect the true stochasticity of the iterates as a whole, and may not be indicative of the quality of the obtained model with different choices of the momentum. 

\subsection{Main Results on Weak Approximations of SGDM}\label{sec:weak-appro-main-1}
The above warm-up example reminds us that SGD and SGDM trajectories may have different appearances that are irrelevant to the final distributions of the outcomes, which in our concern is mostly important. Therefore we need to talk about trajectory approximations in the correct mathematical metric. For our main results, we introduce the notion of weak approximations between two families of trajectories, inspired by~\citep{li2019stochastic}. We say a function $g(\bx):\R^d\to \R^m$ has polynomial growth if there are constants $k_1,k_2>0$ such that $\norm{g(\bx)}_2\leq k_1(1+\norm{\bx}_2^{k_2})$, $\forall\bx\in\RR^d$, and we say a function $g$ has all-order polynomial-growing derivatives if $g$ is $\mathcal{C}^\infty$ and $\nabla^\alpha g$ has polynomial growth for all $\alpha\geq 0$.

\begin{definition}[Order-$\gamma$ Weak Approximation]\label{def:weak_approx}
Two families of discrete trajectories $\vx^\eta_k$ and $\vy^\eta_k$ are weak approximations of each other, if there is $\eta_{\text{thr}}>0$ that for any $T>0$, any function $h$ of all-order polynomial-growing derivatives, and any $\eta\leq \eta_{\text{thr}}$, there is a constant $C_{h,T}$ independent of $\eta$ that
	$$ \max_{k=0,...,\lfloor T/\eta\rfloor} | \E h(\vx^\eta_k) - \E h(\vy^\eta_k) | \leq C_{h,T}\cdot\eta^\gamma. $$
\end{definition}
Weak approximation implies that $\vx_k^\eta$ and $\vy_k^\eta$ have similar distributions at any step $k\leq T/\eta$ even when $k\to\infty$ as $\eta\to 0$, and specifically in the deep learning setting it implies that the two training (testing) curves are similar.

In the small learning rate cases, we use the big-$O$ notations to specify the order of magnitudes as the learning rate scale $\eta\to 0$. Consider a SGDM run with hyperparameters $\{(\eta_k,\beta_k)\}_{k\geq 0}$. Let the magnitudes of the learning rates be controlled by a scalar $\eta$ as $\eta_k=O(\eta)$. Furthermore, to capture the asymptotic behaviour of the the momentum decay $\beta_k$, we set an index $\alpha\geq 0$ so that the decay rate of the momentum is controlled as $1-\beta_k=O(\eta^\alpha)$. $\alpha=0$ corresponds to a constant-scale decay schedule while $\alpha>0$ corresponds to a schedule where $\beta_k$ is closer to $1$ for smaller learning rates. Formally, we introduce the following denotation.
\begin{definition}\label{def:hpschedule} A (family of) hyperparameter schedule $\{\eta_k,\beta_k\}_{k\geq 1}$ is scaled by $\eta$ with index $\alpha$ if there are constants
$\eta_{\max}, \lambda_{\min}$ and $\lambda_{\max}$, independent of $\eta$, such that for all $k$,
\[0\leq \eta_k/\eta<\eta_{\max}, \quad 0<\lambda_{\min}\leq (1-\beta_k)/\eta^{\alpha}\leq\lambda_{\max}<1.\]
\end{definition}
We need the boundedness of the initial momentum for the SGDM trajectory to start safely.
\begin{assumption} For each $m\geq 1$, there is constant $C_{m}\geq 0$ that $\E (\norm{\vm_0}_2^m)\leq C_m$;\label{ass:init-bound}
\end{assumption}
Following \citet{malladi2022sdes}, we further assume that the NGOS satisfies the below conditions, which make the trajectory amenable to analysis. 

\begin{assumption}
    The NGOS $\cG_\sigma = (\cL, \mSigma, \DatZ_{\sigma})$ satisfies the following conditions.
    
    \begin{enumerate}[leftmargin=0.2in]
        \item \textbf{Well-Behaved}: $\nabla\cL$ is Lipschitz and $\cC^\infty$-smooth; $\mSigma^{1/2}$ is bounded, Lipschitz, and $\cC^\infty$-smooth; all partial derivatives of $\nabla\cL$ and $\mSigma^{1/2}$ up to and including the third order have polynomial growth.
        \item \textbf{Bounded Moments}: For all integers $m \ge 1$ and all noise scale parameters $\sigma$,
	there exists a constant $C_{2m}$ (independent of $\sigma$) such that
	$(\E_{\vv \sim
	\DatZ_{\sigma}(\vtheta)}[\norm{\vv}_2^{2m}])^{\frac{1}{2m}} \le
	C_{2m}(1 + \norm{\vtheta}_2)$, $\forall\vtheta \in \RR^d$.
    \end{enumerate}
    \label{assume:ngos}
\end{assumption} Given the above definitions, we are ready to establish our main result.
\begin{theorem}[Weak Approximation of SGDM by SGD]\label{thm:weak-appro-main}
    Fix the initial point $\vx_0$ , $\alpha\in [0,1)$, and an NGOS satisfying \Cref{assume:ngos}. Consider the SGDM update $\vx^\eta_k$ with schedule $\{(\eta_k,\beta_k)\}_{k\geq 1}$ scaled by $\eta$ with index $\alpha$, noise scaling $\sigma\leq \eta^{-1/2}$ and initialization $(\vm_0,\vx_0)$ satisfying \Cref{ass:init-bound}, then $\vx^\eta_k$ is an order-$(1-\alpha)/2$ weak approximation (\Cref{def:weak_approx}) of the SGD trajectory $\vz^\eta_k$ with initialization $\vz^\eta_0=\vx_0$, noise scaling $\sigma$ and learning rates $\bar{\eta}_k= \sum_{s=k}^\infty \eta_s \prod_{\tau=k+1}^s \beta_\tau(1-\beta_{k}).$

    Specifically, for a constant schedule where $(\eta_k=\eta,\beta_k=\beta)$ , $\bar{\eta}_k=\eta$. In this case, SGD and SGDM with the same learning rate weakly approximate each other at distance $O(\sqrt{\eta/(1-\beta)})$.
\end{theorem}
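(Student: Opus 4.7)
The plan is to introduce an auxiliary sequence $\vy_k$ that bridges SGDM and SGD: it stays close to $\vx_k$ in every $L^m$-norm, yet its update rule is a pure SGD-type recursion at the target learning rate $\bar\eta_k$. Concretely, set
\[\vy_k := \vx_k - d_k \vm_k, \qquad d_k := \frac{\beta_k \bar\eta_k}{1-\beta_k},\]
where the coefficient $d_k$ is consistent with the definition of $\bar\eta_k$ in the theorem and satisfies the backward recurrence $d_k = \beta_k(d_{k+1} + \eta_k)$. A direct manipulation of \eqref{dqu:sgdm-1} using this recurrence gives
\[\vy_{k+1} - \vy_k = -\bar\eta_k \vg_k, \qquad \vg_k \sim \cG_\sigma(\vx_k),\]
so $\vy_k$ takes a genuine SGD step at learning rate $\bar\eta_k$, except that the oracle is queried at $\vx_k$ rather than at $\vy_k$.

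With $\vy_k$ in hand, I split the target bound by $|\E h(\vx_k) - \E h(\vz_k)| \le |\E h(\vx_k) - \E h(\vy_k)| + |\E h(\vy_k) - \E h(\vz_k)|$ and handle the two pieces separately. For the first piece, I bound the $L^m$ moments of $\vx_k - \vy_k = d_k \vm_k$ by expanding $\vm_k$ as the sum of a deterministic drift (of order $1$) and a stochastic fluctuation whose covariance is $\mathcal{O}(\sigma^2(1-\beta_k)) = \mathcal{O}(\eta^{\alpha-1})$ under $\sigma \le \eta^{-1/2}$ and \Cref{assume:ngos}. Since $d_k = \mathcal{O}(\eta^{1-\alpha})$ by \Cref{def:hpschedule}, the fluctuation-dominated balance gives $\|\vx_k - \vy_k\|_{L^m} = \mathcal{O}(\eta^{(1-\alpha)/2})$ for every $m$. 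Taylor expansion of $h$, combined with the polynomial growth of its derivatives and a priori moment bounds on $\vy_k$, converts this into $|\E h(\vx_k) - \E h(\vy_k)| = \mathcal{O}(\eta^{(1-\alpha)/2})$.

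For the second piece, $\vy_k$ and $\vz_k$ are both SGD-type dynamics at learning rate $\bar\eta_k = \mathcal{O}(\eta)$, differing only in that $\vy$'s oracle is queried at $\vx_k$ whereas $\vz$'s is at $\vz_k$. I apply the standard weak-approximation machinery: let $u_j(\vx) := \E[h(\vz_{k}) \mid \vz_j = \vx]$ be the discrete backward semigroup of SGD, whose spatial derivatives have polynomial growth under \Cref{assume:ngos}. Telescoping
\[\E h(\vy_k) - \E h(\vz_k) = \sum_{j} \E\bigl[u_{j+1}(\vy_{j+1}) - u_{j+1}(\vy_j - \bar\eta_j\tildevg_j)\bigr]\]
with $\tildevg_j \sim \cG_\sigma(\vy_j)$ reduces everything to one-step errors driven by $\cG_\sigma(\vx_j) - \cG_\sigma(\vy_j)$, which by the Lipschitz regularity of $\nabla\Loss$ and $\mSigma^{1/2}$ are of size $\mathcal{O}(\bar\eta_j \|\vx_j - \vy_j\|) = \mathcal{O}(\eta^{(3-\alpha)/2})$. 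Summing $\mathcal{O}(1/\eta)$ such errors and closing the loop with a Gronwall-type estimate to absorb the $\|\vx_j - \vz_j\|$ feedback yields $|\E h(\vy_k) - \E h(\vz_k)| = \mathcal{O}(\eta^{(1-\alpha)/2})$, completing the proof.

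The main obstacle is the sharp $L^m$ bound on $\vm_k$ in the regime $\sigma \le \eta^{-1/2}$: the noise absorbed into the momentum buffer nearly saturates the order that weak approximation can track, so the cancellation between $d_k = \mathcal{O}(\eta^{1-\alpha})$ and $\|\vm_k\|_{L^m} = \mathcal{O}(\eta^{(\alpha-1)/2})$ must land exactly on the stated order $(1-\alpha)/2$. Secondary technicalities include verifying that $\bar\eta_k$ and $d_k$ have the asserted uniform orders under the time-varying schedule of \Cref{def:hpschedule} and that the infinite sum defining $\bar\eta_k$ converges; handling the initial displacement $\vy_0 - \vx_0 = -d_0 \vm_0$ via \Cref{ass:init-bound}, which contributes only $\mathcal{O}(\eta^{1-\alpha})$ and is therefore lower order; and propagating a priori polynomial moment bounds for $\vy_k, \vz_k, \vx_k, \vm_k$ uniformly over the $\mathcal{O}(1/\eta)$ horizon despite the potentially large noise scale.
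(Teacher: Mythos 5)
Your proposal is correct and follows essentially the same route as the paper: the auxiliary sequence $\vy_k = \vx_k - \frac{\bar\eta_k\beta_k}{1-\beta_k}\vm_k$ is exactly the paper's coupled trajectory, the two-step decomposition (closeness of $\vx_k$ to $\vy_k$ via the moment bound $\E\|\vm_k\|^2 = O(\eta^{\alpha-1})$, then comparison of $\vy_k$ to $\vz_k$ via the method of moments and the backward semigroup $u_j$) matches the paper's Steps 1 and 2, and the order bookkeeping lands on $(1-\alpha)/2$ in the same way. The treatment of the initial displacement $\vy_0 - \vx_0 = -d_0\vm_0$ as a lower-order $O(\eta^{1-\alpha})$ contribution also coincides with the paper's final step.
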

The theorem shows that when the learning rates has a small scale $\eta$, under reasonable momentum decay and reasonable gradient noise amplification, the outcomes obtained by SGDM and SGD are close in distribution over $O(1/\eta)$ steps. 
Specifically at the limit $\eta\to 0$, the outcomes will have the same distribution. Following  \citet{li2019stochastic}, if $\sigma = 1/\sqrt{\eta}$, then the limiting distribution can be described by the law of the solution $\bvx_t$ to an stochastic differential equation (SDE): 
$$\dd \bvx_t = -\lambda_t \nabla\Loss(\bvx_t) \dd t + \lambda_t \mSigma^{1/2}(\bvx_t) \dd \bvw_t.$$ under brownian motion $\bvw_t$
and some rescaled learning rate schedule $\lambda_t$. If $\sigma\ll 1/\sqrt{\eta}$, however, the limit will in general be the solution to the gradient flow ODE $\dd \bvx_t = -\lambda_t \nabla\Loss(\bvx_t) \dd t$ and the impact of gradient noises will be vanishing.

The theorem is built upon an infinite learning rate schedule $k=1,2\cdots \infty$. In the case where we wish to consider a finite schedule, we can apply the theorem after schedule extension by infinitely copying the hyperparameters at the last step. Besides, the theorem requires $\alpha\in [0,1)$, and the approximation grows weaker as $\alpha$ approaches 1. At $\alpha=1$, the two trajectories are no longer weak approximations of each other and have different limiting distributions. $\alpha>1$ yields undesirable hyperparameter schedules where excessively heavy momentum usually slows down or even messes up optimization. Further details are discussed in \Cref{discus}.

\section{The Limit of SGDM and SGD are identical in $O(1/\eta^2)$ Steps}\label{sec:limit-main}

In this section, we follow the framework from \citet{li2021happens} to study the dynamics of SGDM when the iterates are close to some manifold of local minimizers of $\cL$. Former analyses (e.g., \citet{yan2018unified}) suggest that on regular functions, SGDM and SGD will get close to a local minimizer in $o(1/\eta^2)$ steps, at which point the loss function plateaus and the trajectory random walks near the local minimizer. If the local minimizers connect an manifold in the parameter space, then the updates accumulate into a drift inside the manifold over $O(1/\eta^2)$ steps. \citet{li2021happens} shows that  under certain circumstances, the drift induces favorable generalization properties after the training loss reaches its minimum, by leading to minima with smaller local sharpness.

Therefore, by investigating this regime, we hope to detect the value of momentum in late-phase training, especially that concerning extra generalization benefits. Yet in this section, we show that when $\eta\to 0$, the limiting dynamic of SGDM admits the same form as that of SGD, suggesting that momentum provides no extra generalization benefits over at least $O(1/\eta^{2})$ steps of updates.

\subsection{Preliminaries on manifold of local minimizers}
We consider the case of optimizing an over-parameterized neural network, where usually the minimizers of the loss $\cL$ form manifolds. Let $\Gamma$ be a region of local minimizers that SGD can reach, and we will work mathematically in $\Gamma$ to see whether adding momentum changes the dynamical behaviors.

\begin{assumption}\label{assump:manifold}
    $\Loss$ is smooth. $\Gamma$ is a $(d-M)$-dimensional submanifold of $\RR^d$ for some integer $0\leq M\leq d$. 
    Moreover, every $\bx\in\Gamma$ is a local minimizer of $\cL$ with $\nabla\cL(\bx)=0$ and $\rank(\nabla^2\cL(\bx))=M$.
\end{assumption}
We consider a neighborhood $O_\Gamma$ of $\Gamma$ that $\Gamma$ is an attraction set of $O_\Gamma$ under $\nabla\cL$.
Specifically, we define the gradient flow under $\nabla\cL$ by $\phi(\bx,t) = \bx - \int_0^t\nabla\cL(\phi(\bx,s))\diff s$ for any $\bx\in\RR^d$ and $t\geq 0$.
We further define gradient projection map associated with $\nabla\cL$ as $\Phi(\bx) := \lim_{t\to\infty}\phi(\bx,t)$. 
\begin{assumption}\label{assump:neighborhood}
    From any point $\bx\in O_\Gamma$, the gradient flow governed by $\nabla\cL$ converges to some point in $\Gamma$, i.e., $\Phi(\bx)$ is well-defined and $\Phi(\bx) \in \Gamma$.
\end{assumption}

It can be shown that for every $\bx\in\Gamma$, $\partial\Phi(\bx)$ is the orthogonal projection onto the tangent space of $\Gamma$ at $\bx$.
Moreover, \citet{li2021happens} proved that for any initialization $\bx_0\in O_\Gamma$, a fixed learning rate schedule $\eta_k\equiv\eta$, and any $t>0$, time-rescaled SGD iterates $\bz_{\lfloor t/\eta^2\rfloor}$ converges in distribution to $\bvz_t$, the solution to the following SDE. We will refer to $\bvz_t$ as the slow SDE.
\begin{align}\label{eq:slowSDE-0}\bvz_t = \Phi(\vx_0)+\int_0^t \partial\Phi(\bvz_s)\mSigma^{1/2}(\bvz_s)\dd W_s +\int_0^t\frac{1}{2}\partial^2\Phi(\bvz_s)[\mSigma(\bvz_s)]\dd s.
\end{align}
Notice that $Z_t$ always stays in $\Gamma$ with $Z_0 = \Phi(\vx_0)$. Though $\vz_0=\vx_0$ is not in $\Gamma$, for any $t>0$ the limit of $\bz_{\lfloor t/\eta^2\rfloor}$ will fall onto $\Gamma$. 
\subsection{Analysis of SGDM via the slow SDE}
As the limiting dynamics of SGD iterates is known as above, we are curious about whether adding momentum modifies this limit. It turns out that within a fairly free range of hyperparameters, SGDM also has the same limiting dynamics. Specifically, for a family of hyperparameters $\{(\eta^{(n)}_k,\beta^{(n)}_k)\}_{k\geq 1}$ scaled by a series of scalars $\eta^{(n)}$ with index $\alpha$ (\Cref{def:hpschedule}, $\lim_{n\to\infty}\eta^{(n)}=0$), we will show that if the hyperparameter schedules converge as $n\to\infty$, then SGDM iterates will also converge into the limiting dynamics of SGD with the limiting learning rates, irrelevant of the momentum decay factors.

Similar to the setting in \citet{li2021happens}, we consider a fixed time rescaling $t = k(\eta^{(n)})^2$. We  stipulate that the schedule $\eta^{(n)}_k\to \eta^{(n)}\cdot\lambda_t$ as $n\to\infty$ for a rescaled schedule in continuous time $\lambda:[0,T]\to \R^+$. In the special case $\eta_k^{(n)}\equiv\eta^{(n)}$, it is clear that $\lambda_t\equiv 1$, and the setting of \Cref{eq:slowSDE-0} is recovered. Formally we introduce
\begin{assumption}\label{ass:conve-hpschedule}
    $\lambda_t: [0,T]\to \R^+$ has finite variation, and $$\lim_{n\to\infty}\eta^{(n)}\sum_{k=0}^{\lfloor T/(\eta^{(n)})^2\rfloor}|\eta^{(n)}_k-\eta^{(n)}\cdot\lambda_{k(\eta^{(n)})^2}|=0.$$
\end{assumption}
\begin{assumption}[Bounded variation]\label{ass:finite-var}
There is a constant $Q$ independent of $n$ such that for all $n$,
\[\sum_{k=1}^{\lfloor T/(\eta^{(n)})^2\rfloor}|\eta_k^{(n)}-\eta_{k-1}^{(n)}|\leq Q\eta^{(n)},\quad\sum_{k=1}^{\lfloor T/(\eta^{(n)})^2\rfloor}|\beta_k^{(n)}-\beta_{k-1}^{(n)}|\leq Q(\eta^{(n)})^{\alpha}\]
\end{assumption} 

In this general regime, we define the slow SDE on $\Gamma$ to admit the following description:
\begin{align}\label{eq:general_sde}
    \bvx_t = \Phi(\vx_0)+\int_0^t\lambda_t \partial\Phi(\bvx_s)\mSigma^{1/2}(\bvx_s)\dd W_s +\int_0^t\frac{\lambda_t^2}{2}\partial^2\Phi(\bvx_s)[\mSigma(\bvx_s)]\dd s.
\end{align}
Both SGDM and SGD converge to the above slow SDE on $\Gamma$, as summarized in the following theorem.
\begin{theorem} \label{thm:slow-sde-main}
Fix the initialization $\vx_0=\bz_0\in O_\Gamma$ and any $\alpha\in(0,1)$, and suppose the initial momentum $\bbm_0$ satisfies \Cref{ass:init-bound}. 
For $n\geq 1$, let $\{(\eta_k^{(n)},\beta_k^{(n)})\}_{k\geq 1}$ be any hyperparameter schedule scaled by $\eta^{(n)}$ with index $\alpha$,
satisfying \Cref{ass:conve-hpschedule,ass:finite-var}.
Fix the noise scale $\sigma^{(n)}\equiv 1$.
Under \Cref{assump:manifold,assump:neighborhood}, consider the SGDM trajectory $\{\vx_k^{(n)}\}$ with schedule $\{(\eta_k^{(n)}, \beta_k^{(n)})\}$, initialization $(\vx_0,\vm_0)$, and the SGD trajectory $\{\vz_k^{(n)}\}$ with schedule $\{\eta_k^{(n)}\}$, initialization $\vz_0=\vx_0$. 
Suppose the slow SDE defined in \eqref{eq:general_sde} has a global solution $\{\bX_t\}_{t\geq 0}$, then as $n\to\infty$ with $\eta^{(n)}\to0$, both $\bx_{\lfloor t/(\eta^{(n)})^2\rfloor}^{(n)}$ and $\bz_{\lfloor t/(\eta^{(n)})^2\rfloor}^{(n)}$ converge in distribution to $\bX_t$.
\end{theorem}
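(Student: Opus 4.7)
The plan is to reduce SGDM to SGD via an auxiliary coupling and then invoke a time-inhomogeneous extension of the slow-SDE limit from \citet{li2021happens}. For the SGD trajectory $\vz^{(n)}_k$, convergence to $\bX_t$ follows by extending the Katzenberger-type argument in \citet{li2021happens} from constant to variable learning rate schedules: \Cref{ass:conve-hpschedule,ass:finite-var} provide exactly the regularity of $\eta^{(n)}_k \to \eta^{(n)}\lambda_t$ needed to insert $\lambda_t$ into both the drift and the diffusion of \eqref{eq:general_sde}. The nontrivial part is therefore the SGDM trajectory $\vx^{(n)}_k$.

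\textbf{Auxiliary trajectory.} Define $\vy_k := \vx_k - c_k \vm_k$ with $c_k := \eta^{(n)}_{k-1}\beta^{(n)}_{k-1}/(1-\beta^{(n)}_{k-1}) = O((\eta^{(n)})^{1-\alpha})$. A direct calculation using $\vm_{k+1} = \beta^{(n)}_k \vm_k + (1-\beta^{(n)}_k)\vg_k$ yields
\begin{equation*}
    \vy_{k+1} - \vy_k \;=\; -\eta^{(n)}_k \vg_k \;+\; (c_k - c_{k+1})\vm_k,
\end{equation*}
so $\vy_k$ obeys an SGD-like step at $\vx_k$ plus a correction driven by the variation of $c_k$. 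A standard moment bound on momentum using \Cref{ass:init-bound,assume:ngos} gives $\sup_k \E[\|\vm_k\|^{2p}] = O(1)$ uniformly over the $O((\eta^{(n)})^{-2})$-step horizon; combined with \Cref{ass:finite-var}, which forces $\sum_k |c_k - c_{k+1}| = O((\eta^{(n)})^{1-\alpha})$, the accumulated correction is $o(1)$ in probability. Moreover, $\|\vy_k - \vx_k\| = \|c_k \vm_k\| = O((\eta^{(n)})^{1-\alpha}) \to 0$ uniformly, so the two trajectories agree in the limit and it suffices to analyze $\vy_k$.

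\textbf{Passing to the slow SDE.} Writing $\vg_k = \nabla\Loss(\vy_k) + \delta_k + \vv_k$ with $\delta_k := \nabla\Loss(\vx_k) - \nabla\Loss(\vy_k)$ and $\vv_k$ the zero-mean NGOS noise of covariance $\mSigma(\vx_k) = \mSigma(\vy_k) + o(1)$, the $\vy_k$ dynamics becomes a bona fide SGD step at $\vy_k$ plus the drift perturbation $-\eta^{(n)}_k \delta_k$. Applying the Katzenberger framework to this perturbed SGD dynamics, once $\delta_k$ is controlled (see below), yields convergence of $\vy^{(n)}_{\lfloor t/(\eta^{(n)})^2 \rfloor}$ in distribution to $\bX_t$, and $\|\vy_k - \vx_k\| \to 0$ transfers the conclusion to $\vx^{(n)}_{\lfloor t/(\eta^{(n)})^2 \rfloor}$.

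\textbf{Main obstacle.} The technical heart, the ``careful decomposition of the updates'' alluded to in the introduction, is controlling $\delta_k$ across the long $O((\eta^{(n)})^{-2})$-step horizon. Taylor expansion gives $\delta_k = \nabla^2\Loss(\vy_k)(c_k \vm_k) + O((\eta^{(n)})^{2-2\alpha})$, so a naive per-step bound $\|\delta_k\| = O((\eta^{(n)})^{1-\alpha})$ would integrate to a divergent contribution because $\nabla^2\Loss$ has $O(1)$ eigenvalues normal to $\Gamma$. The resolution exploits that $\delta_k$ lies in the range of $\nabla^2\Loss(\vy_k)$, i.e., in the normal bundle of $\Gamma$ once $\vy_k$ has entered the Katzenberger neighborhood of $\Gamma$; such normal perturbations are damped at geometric rate by the strong restoring gradient force and feed only into the diffusion coefficient $\partial\Phi\cdot\mSigma^{1/2}$ already present in \eqref{eq:general_sde}, while the tangential contributions are of strictly lower order after projection through $\partial\Phi$. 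Together with tightness of the rescaled family and identification of the limit via the martingale problem for \eqref{eq:general_sde}, this closes the argument.
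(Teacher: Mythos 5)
Your route is genuinely different from the paper's: you try to reuse the short-horizon coupling $\vy_k=\vx_k-c_k\vm_k$ (cf.\ \Cref{equ:coupled-traj}) and then feed the resulting ``perturbed SGD'' into the Katzenberger framework, whereas the paper never introduces a coupled trajectory in this regime — it applies the c\`adl\`ag It\^o formula directly to $\Phi(X^n_t)$ and spends essentially all of \Cref{sec:app_long_horizon} on averaging lemmas for the fast Ornstein--Uhlenbeck momentum process. The difference matters because the step you flag as the ``main obstacle'' is exactly where your argument stops being an argument. The drift perturbation $-\eta_k\partial\Phi(\vy_k)\nabla^2\Loss(\vy_k)c_k\vm_k$ accumulates over $O(\eta^{-2})$ steps to something of naive size $\eta^{-\alpha}\sup_k\E\|\vm_k\|$, and even using the sharper bound $\E\|\vm_k\|^2=O(\eta^{\alpha})$ near the manifold (the analogue of \Cref{lem:m-bound-tight}; your claimed $\sup_k\E\|\vm_k\|^{2p}=O(1)$ is true but far too weak here) the absolute-value bound still diverges like $\eta^{-\alpha/2}$. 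The only way to close this is to exploit cancellation: $\vm_k$ mixes at rate $1-\beta_k=\Theta(\eta^{\alpha})$ and is nearly conditionally mean-zero near $\Gamma$, so one needs a summation-by-parts/averaging identity showing that $\sum_k\eta_k c_k\,\partial\Phi\nabla^2\Loss(\vy_k)\vm_k\to 0$ together with the statement that $\eta^{-\alpha}\vm_k\vm_k^\top$ time-averages to $\tfrac{\gamma_t}{2}\mSigma$ (this is where the factor $\tfrac12$ in the limiting drift of \eqref{eq:general_sde} actually comes from). These are precisely \Cref{lem:ave-1,lem:aver-1,lem:aver-3} in the paper, and they are the mathematical content of the theorem; your sketch replaces them with the assertion that normal perturbations are ``damped by the strong restoring gradient force and feed only into the diffusion coefficient already present.''

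That assertion is also not quite right as a mechanism. The damping of the perturbation is not provided by the restoring gradient force acting on $\vx$ (which operates on the $O(\eta^{-1})$ time scale and is what \Cref{lem:conver-to-mani} handles); it is provided by the geometric decay of the momentum buffer itself at rate $1-\beta_k$. Moreover the correlated perturbation does not ``feed into the diffusion coefficient'': after averaging it contributes \emph{zero} to the limit, and a priori a perturbation of the form $\partial\Phi\nabla^2\Loss\,c_k\vm_k$, being correlated with the past noise increments that drive $\vy$, could contribute an extra It\^o-type drift — ruling this out is exactly what the integration-by-parts computation in \Cref{lem:aver-1} does. Finally, ``applying the Katzenberger framework to this perturbed SGD dynamics'' is not a black-box step: \Cref{thm:previous_thm} requires the drift to be $-\nabla\Loss$ evaluated at the current iterate, so with the gradient evaluated at $\vx_k\ne\vy_k$ you must in any case re-derive the limit, e.g.\ via tightness plus the Kurtz--Protter convergence theorem (\Cref{thm:weak-limit}) as the paper does. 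In short, the skeleton of your reduction is plausible and could likely be made to work, but the proof of the theorem lives in the averaging estimates you have left as assertions.
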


The proof of \Cref{thm:slow-sde-main} is inspired by \citet{calzolari1997limit}. Similarly in this regime, the momentum process $\vm_k^{(n)}$ behaves like an Uhlenbeck-Ornstein process with $O(\eta^\alpha)$ mixing variance,  so the per-step variance will be significantly smaller than that of SGD as is in \Cref{sec:warm-up}. To prove the result, a more careful expansion of the per-step change $\Phi(\vx_{k+1})-\Phi(\vx_{k})$ is needed. The proof is detailed in \Cref{sec:app_long_horizon}.

\section{Experiments}\label{sec:exps}
In the previous sections, we conclude in theory that SGDM and SGD have similar performances within noisy short-horizon or general long-horizon training. While our theoretical results mostly work for learning rates that are asymptotically small, in this section, we verify that momentum indeed has limited benefits in practical training regimes where the optimal learning rate is finite but not very large. We defer some of the additional details of this section to the appendix. 
\subsection{Momentum may indeed have marginal value in practice}

\textbf{ImageNet Experiments.} 
First, we train ResNet-50 on ImageNet across batch sizes. Following the experimental setup in \citet{goyal2017accurate}, we use a learning rate schedule that starts with a 5-epoch linear warmup to the peak learning rate and decays it at epoch \#30, \#60, \#80. For SGDM~\eqref{equ:SGDM-intro}, we use the default value of $\beta = 0.9$, and grid search for the best learning rate $\gamma$ over $0.1 \times 2^k$ ($k \in \Z$). Then we check whether vanilla SGD with learning rate $\frac{\gamma}{1-\beta}$ can achieve the same performance as SGDM. Consistent with previous empirical studies~\citep{shallue2019measuring,smith2020generalization}, we observed that for training with smaller batch sizes, the optimal learning rate of SGDM is small enough so that SGD can perform comparably, though SGDM can indeed outperform SGD at larger batch sizes.

\begin{figure}[t]
    \centering 
    \vspace{-0.8cm}
    \begin{subfigure}[b]{0.48\textwidth}
         \centering
         \includegraphics[width=\textwidth]{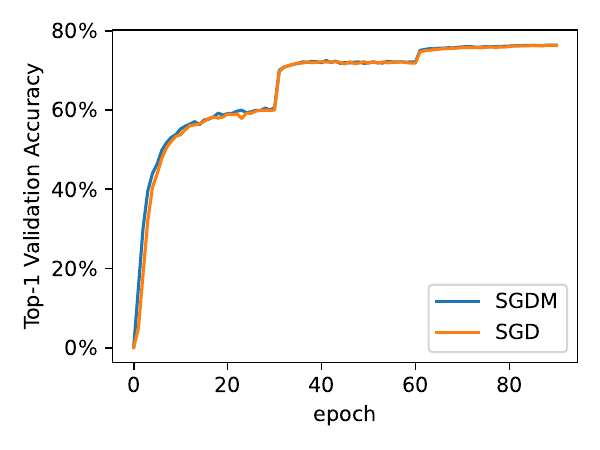}
         \caption{SGDM and SGD with batch size 1024}
    \end{subfigure}
    \hfill
    \begin{subfigure}[b]{0.48\textwidth}
         \centering
         \includegraphics[width=\textwidth]{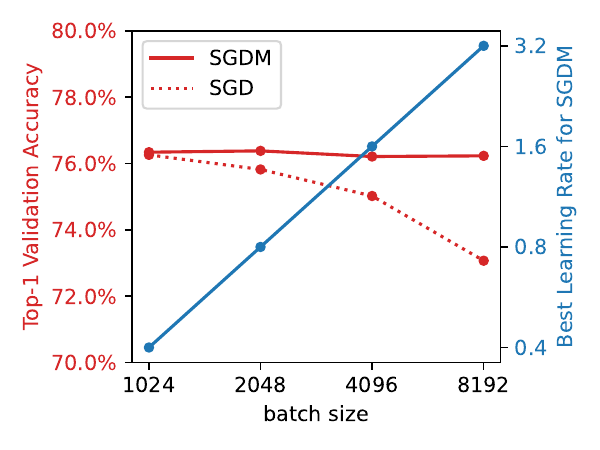}
         \caption{SGDM v.s. SGD across batch sizes}
    \end{subfigure}
    \caption{
        SGDM performs comparably to SGD in training ResNet-50 on ImageNet with smaller batch sizes (e.g., 1024), and outperforms SGD significantly at larger batch sizes.
        }  
    \label{fig:svag} 
\end{figure} 

\textbf{Language Model Experiments.}
In fine-tuning a pre-trained model, a small learning rate is also preferable to retain the model's knowledge learned during pre-training. Indeed, we observe that SGD and SGDM behave similarly in this case.
We fine-tune RoBERTa-large~\citep{liu2019roberta} on $5$ diverse tasks (SST-2~\citep{socher2013recursive_sst-2}, SST-5~\citep{socher2013recursive_sst-2}, SNLI~\citep{bowman2015large}, TREC~\citep{voorhees2000building_trec}, and MNLI~\citep{williams2018broad_mnli}) using SGD and SGDM.
We follow the few shot setting described in~\citep{gao-etal-2021-making,malladi2023kernelbased}, using a grid for SGD based on~\citep{malladi2023kernelbased} and sampling $512$ examples per class (\Cref{tab:lmft}). Additional settings and trajectories are in \Cref{app_sec:exps}.

\begin{table*}[h]
\centering
\caption{
        SGD and SGDM for fine-tuning RoBERTa-large on $5$ tasks using $512$ examples from each class~\citep{gao-etal-2021-making,malladi2023kernelbased}. Results are averaged over $5$ random subsets of the full dataset. These findings confirm that SGD and SGDM approximate each other in noisy settings.
    }
\resizebox{0.8\textwidth}{!}{
    \setlength{\tabcolsep}{0.3cm}
    \begin{tabular}{lccccccc}
    \toprule
     Task  &  \multicolumn{1}{c}{\textbf{SST-2}} &\multicolumn{1}{c}{\textbf{SST-5}} & \multicolumn{1}{c}{\textbf{SNLI}} & \multicolumn{1}{c}{\textbf{TREC}} & \multicolumn{1}{c}{\textbf{MNLI}} \\
    \midrule
    Zero-shot & 79.0 & 35.5 & 50.2  & 51.4 & 48.8  \\
    SGD & 94.0 (0.4) & 55.2 (1.1) & 87.7 (0.3) & 97.2 (0.2) & 84.0 (0.3) \\
    SGDM & 94.0 (0.5) & 55.0 (1.0) & 88.4 (0.6) & 97.2 (0.4) & 83.7 (0.8) \\

    \bottomrule
    \end{tabular}}
    
    \label{tab:lmft}
\end{table*}

\subsection{Investigating the benefit of momentum in large-batch training}
The ImageNet experiments demonstrate that momentum indeed offers benefits in large-batch training when the optimal learning rate is relatively large. We now use large-batch experiments on CIFAR-10 to provide empirical evidence that this benefit is not due to the noise reduction effect and is marginal when SGD is well-approximated by its SDE. To do this we apply SVAG \citep{li2021validity} to control the noise scale in the gradient oracle in both SGDM and SGD updates.
\begin{definition}[SVAG]\label{def:svag}
    With any $\ell>0$, SVAG transforms the NGOS $\cG_\sigma=(f, \mSigma, \DatZ_{\sigma})$  (\Cref{def:NGOS}) into another NGOS $\hat\cG_{\sqrt{\ell}\sigma} = (f, \mSigma, \hat\DatZ_{\sqrt{\ell}\sigma})$ with scale $\sqrt{\ell}\sigma$. For an input $\vtheta$, $\hat\cG_{\ell\sigma}$ returns $\hat\vg = r_1(\ell)\vg_1 + r_2(\ell)\vg_2$ where $\vg_1,\vg_2\sim\cG_\sigma(\vtheta)$ and $r_i(\ell) = \frac 12 (1 + (-1)^i\sqrt{2\ell-1})$. $\hat\DatZ_{\sqrt{\ell}\sigma}$ is defined to ensure $\hat\vg$ has the same distribution as $\nabla f(\vtheta) + \sqrt{\ell}\sigma\vz$ when $\vz\sim\hat\DatZ_{\sqrt{\ell}\sigma}(\vtheta)$. 
\end{definition}
In our experiments, given an original SGD run with learning rate $\eta$ for $K$ steps, we perform a new SGD run with learning rate $\eta/\ell$, SVAG$(\ell)$ as the gradient oracle, for $K\ell$ steps.
The new SGD trajectory will be closer to its SDE approximation as $\ell$ gets larger, and converge to its SDE as $\ell \to +\infty$~\citep{li2021validity}.
We also apply the same modifications to SGDM runs ($\beta$ the momentum decay factor is unmodified).
In another view, applying this modification makes the total accumulated noise-induced impact and descent force (\Cref{lem:descent}) qualitatively stay on the same scale, while the accumulated curvature-induced impact is reduced by a factor of $\ell$. 

We train a ResNet-32~\citep{he2016deep} on CIFAR-10~\citep{cifar10} with batch size $B=512$. We first grid search to find the best learning rate for the standard SGDM ($\ell=1$), and then we perform SGD and SGDM with that same learning rate ($\bar{\eta}_k=\eta_k=\eta$ in the formulation \Cref{def:sgd,dqu:sgdm-1}). Then we modify both processes with different $\ell$ values. The results are summarized in \Cref{fig:svag}. We observe that while standard SGDM outperforms standard SGD, when we increase the value of $\ell$, the two trajectories become closer until SGDM has no evident edge over SGD. The finding corroborates that the benefit of adding momentum is mostly due to the alleviation of curvature-induced impacts, but will be marginal in general small-batch or small learning-rate settings when SGD is well-approximated by SDE. 
\begin{figure}[t]
    \vspace{-0.8cm}
    \centering 
    \includegraphics[width=0.6\textwidth]{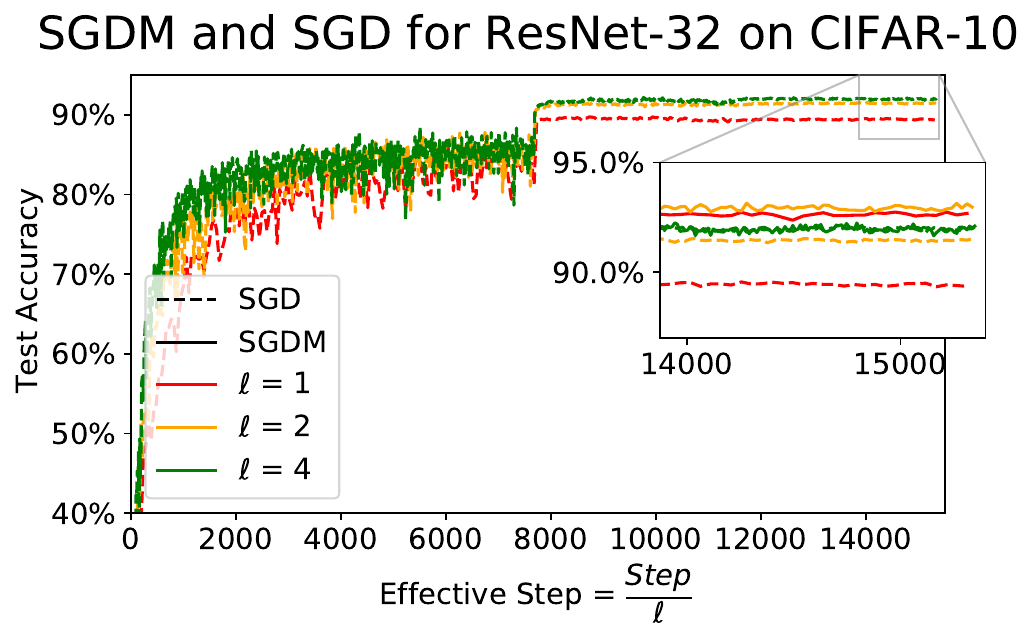}
    \caption{
        Standard SGDM achieves higher test performance than SGD (see $\ell=1$), but the two trajectories get closer when reducing the curvature-induced term with SVAG (i.e., increasing the value of $\ell$, see Definition \ref{def:svag} and Lemma \ref{lem:descent}). These experiments confirm our theoretical findings that SGD and SGDM approximate each other when the gradient noise is the primary source of instability. We use batch size $B=512$ with two learning rate decays by a factor of $0.1$ at epochs $80$ and $120$. We grid search to find the best learning rate for SGDM ($\eta=0.2$) and then use it to run SGD and SGDM with SVAG. We use $\beta=0.9$ for SGDM. Additional experimental details are in the appendix.
        }  
    \label{fig:svag} 
\end{figure} 
\section{Conclusions}
    This work provides theoretical characterizations of the role of momentum in stochastic gradient methods. We formally show that momentum does not introduce optimization and generalization benefits when the learning rates are small, and we further exhibit empirically that the value of momentum is marginal for gradient-noise-dominated learning settings with practical learning rate scales. Hence we conclude that momentum does not provide a significant performance boost in the above cases. Our results further suggest that model performance is agnostic to the choice of momentum parameters over a range of hyperparameter scales.
\bibliography{reference}
\bibliographystyle{plainnat}

\appendix

\newpage
\section{Related Works}
\textbf{The role of momentum in optimization.}
The accelerating effect of some variants of momentum has been observed in convex optimization~\citep{kidambi2018on} and linear regression~\citep{jain2018accelerating} with under specialized parametrizations.
\citet{smith2018disciplined} pointed out that momentum can help stabilize training, but the optimal choice of momentum is closely related to the choice of learning rate.
\citet{plattner2022on} later empirically established that momentum enlarges the learning rate but does not boost performance. 
\citet{arnold2019reducing} argued using a quadratic example that momentum might not reduce variance as the gradient noise in each would actually be carried over to future iterates due to momentum.
\citet{tondji2021variance} showed that the application of a multi-momentum strategy can achieve variance reduction in deep learning.

\citet{defazio2020momentum} proposed a stochastic primal averaging formulation for SGDM which facilitates a Lyapunov analysis for SGDM, and one particular insight from their analysis is that momentum may help reduce noise in the early stage of training but is no longer helpful when the iterates are close to local minima.
\citet{xie2021positive} showed that under SDE approximation, the posterior of SGDM is the same as that of SGD.
\citet{jelassi2022towards} proved the generalization benefit of momentum in GD in a specific setting of binary classification, by showing that GD+M is able to learn small margin data from the historical gradients in the momentum.
A stronger implicit regularization effect of momentum in GD is also proved in \citet{ghosh2023implicit}. \citet{fu2023and} showed that momentum is beneficial in deferring an "abrupt sharpening" phenomenon that slows down optimization when the learning rate is large.

\textbf{Convergence of momentum methods.}
Momentum-based methods do not tend to yield faster convergence rates in theory.
\citet{yu2019linear} showed that distributed SGDM can achieve the same linear speedup as ditributed SGD in the non-convex setting.
Also in the non-convex setting, \citet{yan2018unified} showed that the gradient norm converges at the same rate for SGD, SGDM and stochastic Nestrov's accelerated gradient descent, and they used stability analysis to argue that momentum helps generalization when the loss function is Lipschitz. 
Under the formulation of quasi-hyperbolic momentum~\citep{ma2018quasihyperbolic}, 
\citet{gitman2019understanding} proposed another unified analysis for momentum methods. 
 \citet{liu2020improved} proved that SGDM converges as fast as SGD for strongly convex and non-convex objectives even without a bounded gradient assumption. Using a iterate-averaging formulation, \citet{sebbouh2021almost} proved last-iterate convergence of SGDM in both convex and non-convex settings.
Later, \citet{li2022last} showed that constant momentum can lead to suboptimal last-iterate convergence rate and increasing momentum resolves the issue. \citet{smith2018disciplined,liu2018diffusion} provided evidence that momentum helps escape saddle points.

\textbf{Characterizing implicit bias near manifold of local minimizers}
A recent line of work has studied the implicit bias induced by gradient noise in SGD-type algorithms, when iterates are close to some manifold of local minimizers \citep{blanc2020implicit,damian2021label,li2021happens}.
In particular, \citet{li2021happens} developed a framework for describing the dynamics of SGD via a slow SDE on the manifold of local minimizers in the regime of small learning rate (see~\Cref{sec:app_long_horizon} for an introduction).
Similar methodology has become a powerful tool for analyzing algorithmic implicit bias and has been extended to many other settings, including SGD/GD for models with normalization layers~\citep{lyu2022understanding,li2022fast}, GD in the edge of stability regime~\citep{arora2022understanding}, Local SGD~\citep{gu2023why},  sharpness-aware minimization~\citep{wen2022does}, and pre-training for language models~\citep{liu2022same}.
Notably, \citet{cowsik2022flatter} utilized the similar idea to study the slow SDE of SGDM study the optimal scale of the momentum parameter with respect to the learning rate, which has a focus different from our paper. 

\newpage
\section{Additional Preliminaries}
\subsection{SGDM Formulations}
Recall \Cref{def:SGDM} where we defined SGDM as following: \begin{align}\label{equ:sgdm-2}
		\vm_{k+1} = \beta_{k}\vm_{k} + (1-\beta_{k})\vg_{k}, \qquad
		\vx_{k+1} = \vx_{k} - \eta_{k}\vm_{k+1}.
\end{align}
Our formulation is different from various SGDM implementations (e.g. pytorch \citep{paszke2019pytorch}) which admit the following form:
\begin{definition}[Standard formulation of SGD with momentum]\label{def:SGDM_standard}
Given a stochastic gradient $\bar{\vg}_k\sim\cG_\sigma(\bar{\vx}_k)$, SGD-Standard with the hyperparameter schedule $(\gamma_k,\mu_k,\tau_k)$ momentum updates the parameters $\bar{\vx}_k\in\RR^d$ from $(\bar{\vm}_0,\bar{\vx}_0)$ as
	\begin{align}
		\bar{\vm}_{k+1} &= \mu_{k}\bar{\vm}_{k} + (1-\tau_{k})\bar{\vg}_{k} \\
		\bar{\vx}_{k+1} &= \bar{\vx}_{k} - \gamma_{k}\bar{\vm}_{k+1}
	\end{align}	
 where $\tau_k\in [0,1)$.
\end{definition}

Notice that with $\tau_k=0$, $\mu_k = \beta$ and $\gamma_k=\gamma$, the standard formulation recovers \Cref{equ:SGDM-intro} that has commonly appeared in the literature. The standard formulation also recovers \Cref{equ:sgdm-2} with $\mu_k = \tau_k=\beta_k$ and $\eta_k=\gamma_k$. Actually, as we will show in the following lemma, these formulations are equivalent up to hyperparameter transformations. 
\begin{lemma}[Equivalence of SGDM and SGD-Standard]\label{lem:SGDM_standard}
    Let $\alpha_k$  be the sequence that $\alpha_0=1$ and 
    \[\alpha_{k+1} = \frac{\alpha_k}{\alpha_k(1-\tau_k)+\mu_k}.\]
    Then $0<\alpha_{k+1}\leq \frac{1}{1-\tau_k}$. For parameters schedules and initialization
    \[\beta_k =\frac{\alpha_{k+1}}{\alpha_k}\mu_k, \eta_k = \frac{\gamma_k}{\alpha_{k+1}}, \vm_0 = \bar{\vm}_0, \vx_0 = \bar{\vx}_0\] then $\bar{\vx}_k$ and $\vx_k$ follow the same distribution. $\vx_k$ is by \Cref{def:SGDM} and $\bar{\vx}_k$ by \Cref{def:SGDM_standard}.
\end{lemma}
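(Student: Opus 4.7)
The plan is to exhibit an explicit bijection between the trajectories of \Cref{def:SGDM} and \Cref{def:SGDM_standard} via a time-dependent rescaling of the momentum buffer, namely $\vm_k = \alpha_k \bar{\vm}_k$ while keeping $\vx_k = \bar{\vx}_k$. Since both processes receive stochastic gradients through the NGOS $\cG_\sigma$ evaluated at the current parameter, once the parameter trajectories coincide the gradient oracles can be coupled so that $\vg_k = \bar{\vg}_k$, and hence the full joint distributions agree. The coefficients $\beta_k = (\alpha_{k+1}/\alpha_k)\mu_k$ and $\eta_k = \gamma_k/\alpha_{k+1}$ are exactly the ones forced by this rescaling.

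First I would dispense with the bound $0<\alpha_{k+1}\leq 1/(1-\tau_k)$ by induction on $k$. The base $\alpha_0=1>0$ is given. Assuming $\alpha_k>0$, the denominator $\alpha_k(1-\tau_k)+\mu_k$ is a sum of a strictly positive term (since $\tau_k\in[0,1)$ and $\alpha_k>0$) and a nonnegative term (taking the usual $\mu_k\geq 0$), hence strictly positive, which gives $\alpha_{k+1}>0$; dropping $\mu_k\geq 0$ from the denominator yields the upper bound $\alpha_{k+1}\leq 1/(1-\tau_k)$.

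Next I would verify the momentum recursion. Multiplying $\bar{\vm}_{k+1}=\mu_k\bar{\vm}_k+(1-\tau_k)\bar{\vg}_k$ by $\alpha_{k+1}$ and using $\vm_k=\alpha_k\bar{\vm}_k$ gives
\begin{equation*}
\vm_{k+1} \;=\; \frac{\alpha_{k+1}\mu_k}{\alpha_k}\vm_k + \alpha_{k+1}(1-\tau_k)\bar{\vg}_k \;=\; \beta_k\vm_k + \alpha_{k+1}(1-\tau_k)\bar{\vg}_k.
\end{equation*}
The defining recursion for $\alpha_{k+1}$ rearranges to $\alpha_k-\alpha_{k+1}\mu_k=\alpha_{k+1}\alpha_k(1-\tau_k)$, i.e.\ $1-\beta_k=\alpha_{k+1}(1-\tau_k)$, so the above reduces to $\vm_{k+1}=\beta_k\vm_k+(1-\beta_k)\bar{\vg}_k$, matching \Cref{def:SGDM} with $\vg_k=\bar{\vg}_k$. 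For the parameter update,
\begin{equation*}
\vx_k-\eta_k\vm_{k+1}\;=\;\bar{\vx}_k-\frac{\gamma_k}{\alpha_{k+1}}\cdot\alpha_{k+1}\bar{\vm}_{k+1}\;=\;\bar{\vx}_{k+1},
\end{equation*}
so the parameter iterates stay synchronized step by step.

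The main (and essentially only) obstacle is a bookkeeping one: making the coupling of the stochastic gradients rigorous. Because both updates query $\cG_\sigma$ at the same point $\vx_k=\bar{\vx}_k$, one can couple the two samples so that $\vg_k=\bar{\vg}_k$ almost surely; then the inductive identities $\vm_k=\alpha_k\bar{\vm}_k$ and $\vx_k=\bar{\vx}_k$ propagate from step $k$ to step $k+1$, and a straightforward induction on $k$ closes the argument, yielding equality in distribution of $\vx_k$ and $\bar{\vx}_k$ (in fact of the whole trajectories).
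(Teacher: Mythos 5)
Your proposal is correct and follows essentially the same route as the paper: the same rescaling $\vm_k=\alpha_k\bar{\vm}_k$, the same key identity $1-\beta_k=\alpha_{k+1}(1-\tau_k)$, and the same step-by-step verification of the two recursions. The only cosmetic difference is that you close the argument by coupling the gradient samples almost surely, whereas the paper runs an explicit induction on the joint CDFs of $(\vx_k,\vm_k,\vg_k)$; these formalize the same observation, and your added induction for $0<\alpha_{k+1}\le 1/(1-\tau_k)$ (under the implicit $\mu_k\ge 0$) is a harmless bonus the paper omits.
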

\begin{proof}
    We shall prove that given $\vx_k=\bar{\vx_k}$,  $\vg_k=\bar{\vg}_k$ and 
    $\vm_k = \alpha_k\bar{\vm}_k$, there is and $\vm_{k+1} = \alpha_{k+1}\bar{\vm}_{k+1}$ and $\bar{\vx}_{k+1}=\vx_{k+1}$. This directly follows that $\alpha_{k+1}(1-\tau_{k})=1-\beta_k$, and
    \begin{align*}
        \bar{\vm}_{k+1} & = \mu_{k}\bar{\vm}_{k} + (1-\tau_{k})\bar{\vg}_{k}\\
        & = \frac{\mu_{k}}{\alpha_k}\vm_{k} + (1-\tau_{k})\vg_{k}\\
        & = \frac{1}{\alpha_{k+1}}(\beta_k\vm_{k} + \alpha_{k+1}(1-\tau_{k})\vg_{k})\\
        & = \frac{1}{\alpha_{k+1}}\vm_{k+1}.\\
        \bar{\vx}_{k+1} & = \bar{\vx}_{k} - \gamma_{k}\bar{\vm}_{k+1}\\
        & = \vx_{k} - \eta_k \vm_{k+1}.
    \end{align*}  
    Then we can prove the claim by induction that $\vx_k$ and $\bar{\vx}_k$ are identical in distribution. More specifically,
    we shall prove that the CDF for $(\vx_k,\vm_k,\vg_k)$ at $(\vx,\vm,\vg)$ is the same as the CDF for $(\bar{\vx}_k,\bar{\vm}_k,\bar{\vg}_k)$ at $(\vx,\frac{\vm}{\alpha_k},\vg)$.
    For $k=0$, the claim is trivial. If the induction premise holds for $k$, then let $(\vx_{k+1},\vm_{k+1},\vg_{k+1})$ be the one-step iterate in \Cref{equ:sgdm-2} from $(\vx_k,\vm_k,\vg_k)=(\vx,\vm,\vg)$ and $(\bar{\vx}_{k+1},\bar{\vm}_{k+1},\bar{\vg}_{k+1})$ be the one-step iterate in \Cref{def:SGDM_standard} from $(\bar{\vx}_k,\bar{\vm}_k,\bar{\vg}_k)=(\vx,\frac{\vm}{\alpha_k},\vg)$, then we know from above that    $\alpha_{k+1}\bar{\vm}_{k+1}=\vm_{k+1}$ and $\bar{\vx}_{k+1}=\vx_{k+1}$, and therefore the conditional distribution for $(\vx_{k+1},\vm_{k+1},\vg_{k+1})$ is the same as that for $(\bar{\vx}_{k+1},\alpha_{k+1}\bar{\vm}_{k+1},\bar{\vg}_{k+1})$. Integration proves the claim for $k+1$, and thereby the induction is complete.
\end{proof}

Therefore, in the rest part of the appendix, we will stick to the formulation of SGDM as \Cref{equ:sgdm-2} unless specified otherwise.

\subsection{Descent Lemma for SGD and SGDM}

In \Cref{lem:descent}, given the SGD updates $\{z_k\}$ with constant learning rate $\eta$, we 
 have the gradient descent lemma 
\begin{align}\label{eq:gdl1}      &\E[\Loss(\vz_{k+1})|\vz_k] - \Loss(\vz_{k})  =\\
     &\qquad\underbrace{ - \eta \norm{\nabla \Loss(\vz_{k})}^2}_\text{descent force} + \underbrace{\frac{1}{2}(\sigma\eta)^2\tr((\nabla^2\Loss)\mSigma(\vz_k))}_\text{noise-induced} + \underbrace{\frac{1}{2}\eta^2(\nabla\Loss^\top(\nabla^2\Loss)\nabla\Loss(\vz_k))}_\text{curvature-induced}+o(\eta^2, (\sigma\eta)^2).
    \end{align}

We can get a similar result for the SGDM updates $\{x_k\}$, namely, when $1-\beta\ll\eta$,
\begin{align}\label{eq:gdl2}
        \E\Loss(\vx_{k+1}) & = \E\Loss(\vx_{k}) - \eta \E\norm{\nabla \Loss(\vx_{k})}^2 \\
        &+ \frac{1}{2}\eta^2\E(\nabla\Loss^\top(\nabla^2\Loss)\nabla\Loss(\vx_k)+\frac{1}{2} (\sigma\eta)^2\tr((\nabla^2\Loss)\mSigma(\vx_k)))\\
        & + \frac{\eta\beta}{1-\beta}(S_{k+1}-S_k) + \eta^2\E [\vm_k^\top\nabla^2\Loss\nabla\Loss(\vx_k)] + o(\eta^2/(1-\beta),(\sigma\eta)^2/(1-\beta)).
    \end{align}
    where $S_k = \E[\nabla\Loss(\vx_k)^\top \vm_k - \frac{\eta}{2}\vm_k^\top \nabla^2\Loss(\vx_k)\vm_k]$.
    Notice that beside the terms in the SGD updates, additions are an $O(\eta/(1-\beta))$-order telescoping term and an $O(\eta^2)$ extra impact term. The impact of the telescoping term is large for per-step updates but is limited across the whole trajectory, so the difference between SGDM and SGD trajectory is actually depicted by the extra term $\eta^2\E [\vm_k^\top\nabla^2\Loss\nabla\Loss(\vx_k)]$. In the settings where the extra term is bounded, we can bound the difference between training curves of SGDM and SGD.
\begin{proof}[Brief Proof for \cref{lem:descent}]
    We assume $\norm{\nabla^n\Loss}$ is bounded ($\norm{\nabla^n\Loss}_\infty<C_n$) for any order of derivatives $n=0,1,2,3\cdots$ and the trajectories are bounded ($\E\norm{\vz_k}^m,\E\norm{\vx_k}^m\leq C_m$) to simplify the reasoning. In the SGD case, $z_{k+1}-z_k=-\eta g_k$ has moments
    \begin{align*}
        \E\norm{z_{k+1}-z_k}^{2m} & = \eta^{2m} \E\norm{g_k}^{2m}\\
        & \leq \eta^{2m} 2^{2m} \E (\norm{\nabla\Loss(z_k)}^{2m} + \sigma^{2m}\norm{\vv_k}^{2m})\\
        & = O(\eta^{2m} + (\sigma\eta)^{2m}).
    \end{align*} 
    Similarly,
    \begin{align*}
        \E\norm{x_{k+1}-x_k}^{2m} & = \eta^{2m} \E\norm{m_{k+1}}^{2m}\\
        & = \eta^{2m} \E\norm{\beta^{k+1} m_0 + \sum_{s=0}^{k}\beta^{k-s}(1-\beta) g_{s}}^{2m}\\
        & \leq \eta^{2m}2^{2m} \E\left(\beta^{2m(k+1)} \norm{m_0
        }^{2m}+\left(\sum_{s=0}^{k}\beta^{k-s}\right)^{2m-1}\left(\sum_{s=0}^{k}\beta^{k-s}(1-\beta)^{2m}\norm{g_{s}}^{2m}\right)\right)\\
        & = O(\eta^{2m} + (\sigma\eta)^{2m}).
    \end{align*} 
    Therefore $\E\norm{z_{k+1}-z_k}^3\leq \sqrt{\E\norm{z_{k+1}-z_k}^6} = o(\eta^2,(\sigma\eta)^2)$ and $\E\norm{x_{k+1}-x_k}^3\leq \sqrt{\E\norm{x_{k+1}-x_k}^6} = o(\eta^2,(\sigma\eta)^2)$.
    Taylor expansion gives
 \begin{align*}
    \E\Loss(z_{k+1})|z_k & = \Loss(z_k) + \E\nabla\Loss(z_k)^\top (z_{k+1}-z_k) + \frac{1}{2} \E(z_{k+1}-z_k)^\top\nabla^2\Loss(z_k)(z_{k+1}-z_k)+o(\eta^2,(\sigma\eta)^2).
 \end{align*}
 Expansion yields \Cref{eq:gdl1}. Similarly,
  \begin{align*}
    \E\Loss(x_{k+1}) & = \E[\Loss(x_k) + \nabla\Loss(x_k)^\top (x_{k+1}-x_k) + \frac{1}{2} (x_{k+1}-x_k)^\top\nabla^2\Loss(x_k)(x_{k+1}-x_k)]+o(\eta^2,(\sigma\eta)^2)\\
    & = \E[\Loss(x_k) - \eta\nabla\Loss(x_k)^\top m_{k+1} + \frac{\eta^2}{2} m_{k+1}^\top\nabla^2\Loss(x_k)m_{k+1}]+o(\eta^2,(\sigma\eta)^2)
 \end{align*}
 Since $m_{k+1} = g_k + \frac{\beta}{1-\beta}(m_k-m_{k+1})$, 
   \begin{align*}
    \E\Loss(x_{k+1}) 
    & = \E[\Loss(x_k) - \eta\nabla\Loss(x_k)^\top g_k - \eta\nabla\Loss(x_k)^\top \frac{\beta}{1-\beta}(m_k-m_{k+1})\\
    & + \frac{\eta^2}{2} m_{k+1}^\top\nabla^2\Loss(x_k)(g_k + \frac{\beta}{1-\beta}(m_k-m_{k+1}))]+o(\eta^2,(\sigma\eta)^2)\\
    & = \E[\Loss(x_k) - \eta\nabla\Loss(x_k)^\top g_k - \frac{\eta\beta}{1-\beta}(\nabla\Loss(x_k)^\top m_k-\nabla\Loss(x_{k+1})^\top m_{k+1})\\
    & + \frac{\eta\beta}{1-\beta}(m_{k+1}^\top\nabla^2\Loss(x_k)(x_k-x_{k+1}) + O(\eta^2,\sigma^3\eta^2))\\
    & + \frac{\eta^2}{2} m_{k+1}^\top\nabla^2\Loss(x_k)(g_k + \frac{\beta}{1-\beta}(m_k-m_{k+1}))]+o(\eta^2,(\sigma\eta)^2)\\
    & = \E[\Loss(x_k) - \eta\nabla\Loss(x_k)^\top g_k - \frac{\eta\beta}{1-\beta}(\nabla\Loss(x_k)^\top m_k-\nabla\Loss(x_{k+1})^\top m_{k+1})\\
    & + \frac{\eta^2}{2} (g_k + \frac{\beta}{1-\beta}(m_k-m_{k+1}))^\top\nabla^2\Loss(x_k)(g_k + \frac{\beta}{1-\beta}(m_k+m_{k+1}))]+\frac{o(\eta^2,(\sigma\eta)^2)}{1-\beta}.
 \end{align*}
 Expansion yields \Cref{eq:gdl2}.
\end{proof}

\section{The dynamics of SGDM in $O(1/\eta)$ time}

In this section we will prove \Cref{thm:weak-appro-main} as the first part of our main result. First we shall take a closer examination at our update rule in \Cref{def:SGDM}. Let $\beta_{s:t}=\prod_{r=s}^t \beta_r$ be the product of consecutive momentum hyperparameters for $s\leq t$, and define $\beta_{s:t}=1$ for $s>t$, we get the following expansion forms.
\begin{lemma}\label{lem:unrollm}
    \begin{align*}
        \vm_k & =\beta_{0:k-1} \vm_0 + \sum_{s=0}^{k-1}\beta_{s+1:k-1}(1-\beta_{s}) \vg_{s}\\
        \vx_k & = \vx_0 - \sum_{s=0}^{k-1}\eta_s\beta_{0:s}\vm_0 - \sum_{s=0}^{k-1}\sum_{r=s}^{k-1}\eta_r\beta_{s+1:r}(1-\beta_{s})\vg_s\\
    \end{align*}
\end{lemma}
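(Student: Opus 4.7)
The plan is to prove the two identities in order: first the expression for $\vm_k$ by straightforward induction on $k$, then deduce the expression for $\vx_k$ by summing the one-step update $\vx_{k+1}-\vx_k=-\eta_k \vm_{k+1}$ and applying Fubini to swap the resulting double sum.

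For the momentum identity, I would induct on $k$. The base case $k=0$ holds with the conventions that an empty product $\beta_{0:-1}$ equals $1$ and an empty sum is zero. For the inductive step, suppose the formula holds at step $k$. Applying the recursion $\vm_{k+1}=\beta_k \vm_k + (1-\beta_k)\vg_k$ and distributing $\beta_k$ through the inductive hypothesis converts $\beta_{0:k-1}$ into $\beta_{0:k}$ and each $\beta_{s+1:k-1}$ into $\beta_{s+1:k}$. The new term $(1-\beta_k)\vg_k$ is precisely the $s=k$ term of the target sum, since the convention $\beta_{k+1:k}=1$ makes the prefactor equal to $(1-\beta_k)$. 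This closes the induction.

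For the position identity, I would start from $\vx_k = \vx_0 - \sum_{j=0}^{k-1}\eta_j \vm_{j+1}$, which follows by telescoping the update $\vx_{j+1}=\vx_j-\eta_j \vm_{j+1}$. Substituting the just-proved expression for $\vm_{j+1}$ gives
\[
\vx_k = \vx_0 - \Bigl(\sum_{j=0}^{k-1}\eta_j \beta_{0:j}\Bigr)\vm_0 - \sum_{j=0}^{k-1}\sum_{s=0}^{j}\eta_j \beta_{s+1:j}(1-\beta_s)\vg_s.
\]
The first correction term already matches the claimed form (after renaming $j$ to $s$). For the double sum, swapping the order of summation replaces $\{0\le s\le j\le k-1\}$ with $\{0\le s\le k-1,\ s\le j\le k-1\}$; renaming the inner index $j$ to $r$ yields exactly $\sum_{s=0}^{k-1}\sum_{r=s}^{k-1}\eta_r \beta_{s+1:r}(1-\beta_s)\vg_s$, matching the statement.

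There is no real obstacle here: the only care needed is bookkeeping with the empty-product convention $\beta_{s:t}=1$ for $s>t$ (already specified in the paper) so that the $s=k$ term emerges correctly in the induction and so that boundary indices line up after swapping summation order. Everything else is mechanical.
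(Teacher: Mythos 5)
Your proof is correct and is exactly the "expansion of the recursion" argument the paper invokes (the paper's proof is literally one line, "By expansion of the update rule"), just written out with the induction and the summation swap made explicit. No issues.
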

\begin{proof}
    By expansion of \Cref{dqu:sgdm-1}.
\end{proof}

Let the coefficients $c_{s,k}=\sum_{r=s}^{k-1}\eta_r\beta_{s+1:r}(1-\beta_{s})$ so that $\vx_k=\vx_0 - \frac{\beta_0 c_{0:k}}{1-\beta_0}\vm_0 -\sum_{s=0}^{k-1} c_{s,k}\vg_s$. Notice that $c_{s,k}$ is increasing in $k$ but it is always upper-bounded as $c_{s,k}\leq c_{s,\infty}=\sum_{r=s}^{\infty}\eta_r\beta_{s+1:r}(1-\beta_{s})\leq (\sup_{t}\eta_t)\sum_{r=s}^{\infty}(\sup_t\beta_t)^{r-s}=\frac{\sup_{r}\eta_r}{1-\sup_t\beta_t}$. Notice that when we have a finite schedule, we can make any extension to infinity steps (e.g. infinitely copy the hyperparameters on the last step), and the reasonings will work for the extended schedule. In this case, though the trajectory $\vx_k$ are not affected by future hyperparameters, the coefficients $c_{s,\infty}$ will depend on our extension. Furthermore, if for any $t$, $\beta_t$ and $\eta_t$ are about the same scale, then we should expect that the difference $c_{s,\infty}-c_{s,k}$ is exponentially small in $k$. Then we can hypothesize that the trajectory $\vx_k$ should be close to a trajectory of the following form:
\[\tilde{\vx}_k=\tilde{\vx}_0 - \frac{\beta_0 c_{0:\infty}}{1-\beta_0}\vm_0 -\sum_{s=0}^{k-1} c_{s,\infty}\tilde{\vg}_s\]
which is exactly a SGD trajectory given the learning rate schedule $c_{s,\infty}$.

To formalize the above thought, we define the averaged learning rate schedule
\begin{equation}\label{equ:averaged-lr}
    \bar{\eta}_k := c_{k,\infty}=\sum_{s=k}^\infty \eta_s  \beta_{k+1:s}(1-\beta_{k})
\end{equation}
And the coupled trajectory
\begin{align}\label{equ:coupled-traj}
    \vy_{k} & :=\vx_{k}-\frac{\bar{\eta}_k\beta_k}{1-\beta_k}\vm_k\\
    & = \vx_0 - \frac{\beta_0 \bar{\eta}_0}{1-\beta_0}\vm_0 -\sum_{s=0}^{k-1} \bar{\eta}_s\vg_s.
\end{align}
Then there is the following transition:
\begin{align}
    \vy_{k} =\vy_{k-1}-\bar{\eta}_{k-1}\vg_{k-1}
\end{align}
$\vy_k$ has an interesting geometrical interpretation as the endpoint of SGDM if we cut-off all the gradient signal from step $k$. E.g., $\vx_\infty$ if we set $\vg_{k+1}=\vg_{k+2}=\cdots = 0$ for and update the SGDM from ($\vx_k$,$\vm_k$).

\subsection{$\alpha\in [0,1)$}
In this section we provide the proof for \Cref{thm:weak-appro-main}. Specifically, when $\alpha\in[0,1)$ is the index for the control of the learning rate schedule (\Cref{def:hpschedule}), we hope to show that $\vy_k$ is close to a SGD trajectory $\vz_k$, defined by the following for $k\geq 0$:
\begin{align}
    &\vz_0 = \vx_0\\
    &\vh_{k} \sim \cG_\sigma(\vz_{k})\\
    &\vz_{k+1} =\vz_{k}-\bar{\eta}_{k}\vh_{k}
\end{align}

Specifically, we use $\vy_k$ as a bridge for connecting $\vx_k$ and $\vz_k$. Our proof consists of two steps.
\begin{enumerate}
    \item We show that $\vx_k$ and $\vy_k$ are order-$(1-\alpha)/2$ weak approximations of each other.  
    \item We show that $\vz_k$ and $\vy_k$ are order-$(1-\alpha)/2$ weak approximations of each other. 
\end{enumerate}
Then we can conclude that $\vx_k$ and $\vz_k$ are order-$(1-\alpha)/2$ weak approximations of each other, which states our main theorem \Cref{thm:weak-appro-main}.

First, we give a control of the averaged learning rate $\bar{\eta}_k$ (\Cref{equ:averaged-lr}) with the following lemma.
\begin{lemma}
    Let $(\eta_k,\beta_k)$ be a learning rate schedule scaled by $\eta$ (\Cref{def:hpschedule}) with index $\alpha$, and $\bar{\eta}_k$ be the averaged learning rate (\Cref{equ:averaged-lr}), there is 
    \begin{align*}
        & \bar{\eta}_k\leq \frac{\lambda_{\max}\eta_{\max}}{\lambda_{\min}}\eta\\
        & \frac{\bar{\eta}_k\beta_k}{1-\beta_k}\leq \frac{\lambda_{\max}\eta_{\max}}{\lambda^2_{\min}}\eta^{1-\alpha}.\\
    \end{align*}
\end{lemma}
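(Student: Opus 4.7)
The plan is to apply the uniform bounds from \Cref{def:hpschedule} directly to each factor in the defining series for $\bar{\eta}_k$. From the scaling assumption I get three elementwise bounds: $\eta_s \leq \eta_{\max}\eta$ for every $s$, $1-\beta_k \leq \lambda_{\max}\eta^{\alpha}$, and crucially $\beta_r \leq 1-\lambda_{\min}\eta^{\alpha}$ uniformly in $r$. The third bound is the only one whose use is not immediate, because $\beta_{k+1:s}$ is in general a non-homogeneous product; but since the upper bound on each $\beta_r$ is the same constant $1-\lambda_{\min}\eta^{\alpha}$ across indices, I can dominate the product by the geometric quantity $(1-\lambda_{\min}\eta^{\alpha})^{s-k}$.

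For the first inequality I would write $\bar{\eta}_k = (1-\beta_k)\sum_{s=k}^{\infty}\eta_s\,\beta_{k+1:s}$, insert the three bounds above, and pull constants out to obtain
\[\bar{\eta}_k \;\leq\; \lambda_{\max}\eta^{\alpha}\cdot\eta_{\max}\eta \sum_{j=0}^{\infty}(1-\lambda_{\min}\eta^{\alpha})^{j} \;=\; \lambda_{\max}\eta^{\alpha}\cdot\eta_{\max}\eta \cdot \frac{1}{\lambda_{\min}\eta^{\alpha}}.\]
The $\eta^{\alpha}$ factors cancel, leaving exactly $\frac{\lambda_{\max}\eta_{\max}}{\lambda_{\min}}\eta$ as claimed. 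For the second inequality I would note that $\beta_k \leq 1$ lets me drop $\beta_k$ from the numerator, while $1-\beta_k \geq \lambda_{\min}\eta^{\alpha}$ in the denominator gives
\[\frac{\bar{\eta}_k\beta_k}{1-\beta_k} \;\leq\; \frac{\bar{\eta}_k}{\lambda_{\min}\eta^{\alpha}} \;\leq\; \frac{\lambda_{\max}\eta_{\max}}{\lambda_{\min}^{2}}\,\eta^{1-\alpha},\]
combining the first bound with a single division.

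There is no substantive obstacle here: the lemma is a mechanical bookkeeping step whose role is to expose the $\eta$ and $\eta^{1-\alpha}$ scalings of $\bar{\eta}_k$ and $\bar{\eta}_k\beta_k/(1-\beta_k)$, which feed directly into the weak-approximation estimates. The only mild conceptual point worth flagging is that \Cref{def:hpschedule} requires $\lambda_{\max}<1$ (not merely $\leq 1$), which ensures $\lambda_{\min}\eta^{\alpha}>0$ and therefore the geometric tail is summable; without this one would not get a finite bound.
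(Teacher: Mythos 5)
Your proof is correct and is essentially identical to the paper's: both bound each factor uniformly ($\eta_s\leq\eta_{\max}\eta$, $1-\beta_k\leq\lambda_{\max}\eta^{\alpha}$, $\beta_{k+1:s}\leq(1-\lambda_{\min}\eta^{\alpha})^{s-k}$), sum the geometric series to cancel the $\eta^{\alpha}$ factors, and then divide by $1-\beta_k\geq\lambda_{\min}\eta^{\alpha}$ for the second bound. One small correction to your closing remark: summability of the geometric tail comes from $\lambda_{\min}>0$ (which makes $1-\lambda_{\min}\eta^{\alpha}<1$), not from $\lambda_{\max}<1$.
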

\begin{proof}
    By \Cref{def:hpschedule} we know $\beta_k \in [1-\lambda_{\max}\eta^\alpha,1-\lambda_{\min}\eta^\alpha]$ and $\eta_k\leq \eta_{\max}\eta$. Therefore
    \begin{align*}
      \bar{\eta}_k&=\sum_{s=k}^\infty \eta_s  \beta_{k+1:s}(1-\beta_{k})\\
      & \leq \sum_{s=k}^\infty \eta_{\max}\eta(1-\lambda_{\min}\eta^\alpha)^{s-k} \lambda_{\max}\eta^\alpha\\
      & = \frac{\lambda_{\max}\eta_{\max}}{\lambda_{\min}}\eta.\\
      \frac{\bar{\eta}_k\beta_k}{1-\beta_k} & \leq \frac{\lambda_{\max}\eta_{\max}}{\lambda_{\min}}\eta\cdot \frac{1}{1-\beta_k}\\
      & \leq \frac{\lambda_{\max}\eta_{\max}}{\lambda^2_{\min}}\eta^{1-\alpha}.
    \end{align*}
\end{proof}
\subsubsection{Step 1}
For the first step we are going to proof the following \Cref{thm:weakappro-1}. For notation simplicity we omit the superscript $\eta$ for the scaling when there is no ambiguity.
\begin{theorem}[Weak Approximation of Coupled Trajectory]\label{thm:weakappro-1}
    With the learning rate schedule $(\eta_k,\beta_k)$ scaled by $\eta$ with index $\alpha\in [0,1)$. Assume that the NGOS $\cG_\sigma$ satisfy \Cref{assume:ngos} and the initialization $\vm_0$ satisfy \Cref{ass:init-bound}. For any noise scale $\sigma\leq \eta^{-1/2}$, let $(\vx_k,\vm_k)$ be the SGDM trajectory and $\vy_k$ be the corresponding coupled trajectory (\Cref{equ:coupled-traj}). 
	Then, the coupled trajectory $\vy_k$ is an order-$\gamma$ weak approximation (\Cref{def:weak_approx}) to $\vx_k$ with $\gamma = (1-\alpha)/2$.
	\label{thm:coupled_weak_approx}
\end{theorem}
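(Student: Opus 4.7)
The plan is to exploit the explicit relation $\vy_k = \vx_k - \tfrac{\bar\eta_k\beta_k}{1-\beta_k}\vm_k$ built into the definition of the coupled trajectory, together with the preceding lemma's bound $\tfrac{\bar\eta_k\beta_k}{1-\beta_k} = O(\eta^{1-\alpha})$. Since $\vx_k-\vy_k$ has size $O(\eta^{1-\alpha})\cdot\|\vm_k\|$, controlling the size of $\vm_k$ essentially controls the discrepancy. The weak-approximation rate $\eta^{(1-\alpha)/2}$ (which is only \emph{half} the naive exponent $1-\alpha$) then arises because $\vm_k$ itself is \emph{large}: as an exponentially weighted average of $\vg_s$'s with decay $1-\beta_s = \Theta(\eta^{\alpha})$ and noise scale $\sigma \le \eta^{-1/2}$, it has second moment of order $(1-\beta)\sigma^2 = \eta^{\alpha-1}$, so its square root gives exactly the target rate.

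First I would establish uniform moment bounds of the form $\mathbb{E}\|\vx_k\|^{2p}, \mathbb{E}\|\vy_k\|^{2p} \le C_p$ and $\mathbb{E}\|\vm_k\|^{2p} \le C_p\,\eta^{p(\alpha-1)}$ for every integer $p$, uniformly for $k\le T/\eta$ and $\eta$ small. This requires a joint inductive argument because the noise moments in \Cref{assume:ngos} grow with $\|\vtheta\|$. Unrolling $\vm_k$ as in \Cref{lem:unrollm} and applying Minkowski yields
\[ \|\vm_k\|_{L^{2p}} \lesssim \beta_{0:k-1}\|\vm_0\|_{L^{2p}} + \Big\|\sum_{s}\beta_{s+1:k-1}(1-\beta_s)\vg_s\Big\|_{L^{2p}}, \]
and decomposing $\vg_s = \nabla\cL(\vx_s)+\sigma\vv_s$ reveals two contributions: a deterministic piece bounded by $\sup_s\|\nabla\cL(\vx_s)\|_{L^{2p}}$ (since the coefficients sum to $\le 1$), and a noise piece whose $L^{2p}$ norm, via Burkholder/Marcinkiewicz--Zygmund on the conditionally mean-zero increments, is controlled by $\sigma\sqrt{\sum_s \beta_{s+1:k-1}^2(1-\beta_s)^2}\cdot\sup_s\|\vv_s\|_{L^{2p}} = O(\eta^{(\alpha-1)/2})$. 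Plugging into the SGDM recursion $\vx_{k+1}=\vx_k-\eta_k\vm_{k+1}$ shows that each step adds at most $O(\eta^2)\cdot O(\eta^{\alpha-1}) = O(\eta^{1+\alpha})$ variance, which telescopes to $O(\eta^\alpha)$ over $T/\eta$ steps and closes a Gronwall induction on $\|\vx_k\|_{L^{2p}}$.

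Once these moment bounds are in hand, the argument becomes essentially Taylor expansion. Write $\Delta_k:=\vx_k-\vy_k = \tfrac{\bar\eta_k\beta_k}{1-\beta_k}\vm_k$; the bounds above give $\mathbb{E}\|\Delta_k\|^{2p} = O(\eta^{p(1-\alpha)})$ and in particular $\mathbb{E}\|\Delta_k\| = O(\eta^{(1-\alpha)/2})$. For $h$ with polynomial-growing derivatives, expand
\[ h(\vx_k)-h(\vy_k) = \nabla h(\vy_k)^\top \Delta_k + \tfrac12\, \Delta_k^\top \nabla^2 h(\xi_k)\,\Delta_k, \]
where $\xi_k$ lies on the segment between $\vy_k$ and $\vx_k$. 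Using Cauchy--Schwarz with the polynomial-growth bounds on $\nabla h,\nabla^2 h$ and the uniform moment bounds on $\vy_k, \xi_k, \Delta_k$, both $\mathbb{E}[\nabla h(\vy_k)^\top\Delta_k]$ and $\mathbb{E}[\Delta_k^\top\nabla^2 h(\xi_k)\Delta_k]$ are bounded by $C_{h,T}\,\eta^{(1-\alpha)/2}$, exactly as required by \Cref{def:weak_approx}.

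The main obstacle is the self-consistent moment control in the first step: noise injected at step $k$ is measured in units of $(1+\|\vx_k\|)$, while $\|\vx_k\|$ itself accumulates from past noise, so the joint induction on $(\vx_k, \vm_k)$ must be set up carefully in a way that is quantitatively uniform over $k \le T/\eta$ and $\eta\to 0$; everything after that is routine Taylor/Cauchy--Schwarz. The restriction $\alpha<1$ enters crucially here, because at $\alpha=1$ the factor $\eta^{(1-\alpha)/2}$ stops vanishing and the coupling genuinely breaks down.
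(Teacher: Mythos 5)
Your proposal follows essentially the same route as the paper's proof: bound $\E\norm{\vm_k}^2 = O(\eta^{\alpha-1})$ by splitting the unrolled momentum into a gradient part (coefficients summing to $\le \lambda_{\max}/\lambda_{\min}$) and a martingale noise part with $\sum_s \beta_{s+1:k-1}^2(1-\beta_s)^2\sigma^2 = O(\eta^{\alpha-1})$, establish uniform polynomial moment bounds on $\vx_k,\vy_k$ by a Gr\"onwall induction, and finish with Taylor expansion plus Cauchy--Schwarz on $\norm{\vx_k-\vy_k} = \frac{\bar\eta_k\beta_k}{1-\beta_k}\norm{\vm_k} = O(\eta^{1-\alpha})\norm{\vm_k}$, yielding the rate $\eta^{(1-\alpha)/2}$. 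One quantitative slip worth flagging: you claim the per-step variance $O(\eta^{1+\alpha})$ of the $\vx$-increments ``telescopes to $O(\eta^\alpha)$ over $T/\eta$ steps,'' but the increments $-\eta_k\vm_{k+1}$ are strongly positively correlated (this is exactly the point of the paper's warm-up example), so variances do not add; the correct accumulated variance is $O(1)$, obtained either by summing cross-terms or, as the paper does, by running the accumulation argument on the coupled trajectory $\vy_k$ whose noise increments are genuine martingale differences. Since the Gr\"onwall induction only needs boundedness rather than smallness, this does not break your proof, but the bookkeeping must be done on $\vy_k$ (or with the cross-correlations included) rather than by naively summing per-step variances of $\vx_k$.
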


For notations, let $\nabla_k = \nabla\Loss(\vx_k)$ so $\vg_k = \nabla_k + \sigma\vv_k$ is the stochastic gradient sampled from the NGOS. Also by \Cref{assume:ngos}, as $\mSigma^{1/2}$ is bounded, let $C_{\tr}>0$ be the constant that $\tr\mSigma(\vx)\leq C_{\tr}$ for all $\vx\in\R^d$. As $\nabla\Loss$ is Lipschitz, let $L>0$ be the constant that $\norm{\nabla\Loss(\vx)-\nabla\Loss(\vy)}\leq L\norm{\vx-\vy}$ for all $\vx,\vy\in\R^d$.  As $\mSigma^{1/2}$ is Lipschitz and bounded, let $L_{\Sigma}>0$ be the constant that $|\tr(\mSigma(\vx)-\mSigma(\vy))|\leq L_{\Sigma}\norm{\vx-\vy}$ for all $\vx,\vy\in\R^d$.

To show the result, let $h$ be a test function for the weak approximation, then for some $\vz$ between $\vx_k$ and $\vy_k$, $|\E h(\vx_k) - \E h(\vy_k)| = \E[|\langle \nabla h(\vz), \vx_k-\vy_k \rangle|] \leq \E[\|\nabla h(\vz) \| \|\vx_k-\vy_k\|] = \E[\frac{\eta_k\beta_k}{1-\beta_k}\|\nabla h(\vz) \| \|\vm_k\|]$. Thus it suffices to bound $\E\norm{\vx_k}^{2m}$, $\E\norm{\vy_k}^{2m}$  and $\E\norm{\vm_k}$ for any $m\geq 0$ and $k=O(\eta^{-1})$. As the gradient noises are scaled with variance $O(\sigma^2) = O(\eta^{-1})$, there will be $\E\norm{\vm_k}\gg 1$, and we will show that $\E\norm{\vm_k}=O(\eta^{(\alpha-1)/2})$ is the correct scale so we still have $\E\norm{\vx_k}^{2m}=O(1)$ and $|\E h(\vx_k) - \E h(\vy_k)|=O(\eta^{(1-\alpha)/2})$. An useful inequality we will use often in our proof is the Gr\"{o}nwall's inequality.
\begin{lemma}[Gr\"{o}nwall's Inequality \citep{gronwall1919note}]\label{lem:gronwall-discrete}
    For non-negative sequences $\{f_i,g_i,k_i\in\R\}_{i\geq 0}$, if for all $t>0$
    \[f_t\leq g_t + \sum_{s=0}^{t-1} k_s f_s\]
    then $f_t\leq g_t + \sum_{s=0}^{t-1} k_sg_s \exp(\sum_{r=s+1}^{t-1} k_r)$.
\end{lemma}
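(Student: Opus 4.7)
The plan is to prove the discrete Grönwall inequality by induction on $t$, introducing the partial sum $S_t := \sum_{s=0}^{t-1} k_s f_s$ as an auxiliary quantity that converts the implicit bound on $f_t$ into an explicit one. Under this notation the hypothesis reads $f_t \leq g_t + S_t$, and the target inequality is equivalent to a bound on $S_t$ alone, namely $S_t \leq \sum_{s=0}^{t-1} k_s g_s \exp(\sum_{r=s+1}^{t-1} k_r)$. So the task reduces to proving this bound on $S_t$.

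First, I would derive a one-step linear recurrence for $S_t$. Since $S_{t+1} - S_t = k_t f_t$ and the hypothesis gives $f_t \leq g_t + S_t$, non-negativity of $k_t$ yields $S_{t+1} \leq (1+k_t) S_t + k_t g_t$, starting from $S_0 = 0$ under the empty-sum convention. This is a scalar affine recurrence with non-negative coefficients, which is exactly the form for which one can write down a closed-form upper bound.

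Next, I would iterate the recurrence by induction on $t$ to obtain $S_t \leq \sum_{s=0}^{t-1} k_s g_s \prod_{r=s+1}^{t-1}(1+k_r)$. The inductive step multiplies the bound for $S_t$ by $(1+k_t)$ and adds $k_t g_t$: the added term produces the $s=t$ contribution to the new sum, while the multiplication by $(1+k_t)$ extends every earlier product index from $t-1$ up to $t$. The base case $t=0$ is trivial (both sides equal $0$ under the empty-sum convention). Finally, I would apply the elementary inequality $1+x \leq e^x$ for $x \geq 0$ termwise to convert the product into an exponential of a sum, giving $\prod_{r=s+1}^{t-1}(1+k_r) \leq \exp(\sum_{r=s+1}^{t-1} k_r)$, and plugging back into $f_t \leq g_t + S_t$ yields the claim.

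There is no genuine obstacle in this proof; the only care needed is the bookkeeping of empty-sum/empty-product conventions at $t=0$ and $t=1$ so that the base case is not vacuous, and tracking explicitly where non-negativity of each sequence is consumed. In particular, $k_t \geq 0$ is needed when passing from $f_t \leq g_t + S_t$ to $k_t f_t \leq k_t g_t + k_t S_t$; $g_s \geq 0$ and $k_s \geq 0$ ensure the pointwise inequality $1+k_r \leq e^{k_r}$ survives being substituted into the non-negative sum $\sum_s k_s g_s(\cdot)$; and $f_s \geq 0$ is used implicitly so that $S_t$ is itself non-negative and monotone.
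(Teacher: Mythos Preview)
The paper does not actually supply a proof of this lemma; it is stated with a citation to \citet{gronwall1919note} and treated as a standard tool. Your argument is the classical derivation: introduce the partial sum $S_t=\sum_{s<t}k_sf_s$, obtain the affine recursion $S_{t+1}\le(1+k_t)S_t+k_tg_t$, unroll it by induction into the product form $S_t\le\sum_{s<t}k_sg_s\prod_{r=s+1}^{t-1}(1+k_r)$, and then replace $1+k_r$ by $e^{k_r}$. This is correct and is exactly how the discrete Gr\"onwall inequality is normally proved, so there is nothing substantive to compare.

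One small caveat worth flagging: the hypothesis in the paper is written only for $t>0$, which does not by itself give $f_0\le g_0$; yet you need that inequality (equivalently, the hypothesis at $t=0$ with the empty sum) to pass from $S_0=0$ to $S_1=k_0f_0\le k_0g_0$ in your inductive step. This is a quirk of how the lemma is stated rather than a gap in your method---the intended reading, consistent with how the lemma is applied in the paper, is that the assumed bound also holds at $t=0$.
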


Next follows the proofs.

\begin{lemma} 
	We can bound the expected squared norm \begin{align}\E\norm{\vm_k}^2  \leq 12\frac{\lambda^2_{\max}}{\lambda^2_{\min}}\sup_{\tau=1,...,k}\E(\|\nabla_\tau\|)^2 + (4+8\frac{L\eta_{\max}}{\lambda_{\min }}+\eta^{\alpha-1})K_0\label{equ:expe-norm-m}\end{align}
	\label{lem:exp_norm_m}
    with constant $K_0 = \frac{2\lambda^2_{\max}}{\lambda_{\min}}C_{\tr} + 3\E \norm{\vm_0}^2\leq\frac{2\lambda^2_{\max}}{\lambda_{\min}}C_{\tr}+3C_2$ ($C_2$ by \Cref{ass:init-bound}), for a small enough learning rate $\eta<\left(\frac{\lambda_{\min }}{4L\eta_{\max}}\right)^{\frac{1}{1-\alpha}}$. 
\end{lemma}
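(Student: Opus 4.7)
The plan is to unroll the momentum recursion and decompose it into a deterministic gradient signal, a martingale noise signal, and a decaying memory of the initial momentum, then bound each piece separately. By \Cref{lem:unrollm},
\begin{equation*}
\vm_k = \beta_{0:k-1}\vm_0 + \sum_{s=0}^{k-1} w_s (\nabla_s + \sigma\vv_s), \qquad w_s := \beta_{s+1:k-1}(1-\beta_s),
\end{equation*}
so the elementary inequality $\|a+b+c\|^2 \leq 3(\|a\|^2+\|b\|^2+\|c\|^2)$ yields
\begin{equation*}
\|\vm_k\|^2 \leq 3\beta_{0:k-1}^2\|\vm_0\|^2 + 3\Bigl\|\sum_{s=0}^{k-1}w_s\nabla_s\Bigr\|^2 + 3\sigma^2\Bigl\|\sum_{s=0}^{k-1}w_s\vv_s\Bigr\|^2.
\end{equation*}

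Before treating the three pieces, I would record the one-line identity $S_k := \sum_{s=0}^{k-1} w_s = 1-\beta_{0:k-1}\le 1$, which follows from the telescoping recursion $S_{k+1}=\beta_k S_k + (1-\beta_k)$ with $S_0=0$ and cleanly replaces all messy bookkeeping on the $\beta$-products. Jensen's inequality applied to $\|\cdot\|^2$ with the normalized weights $w_s/S_k$ then gives $\|\sum_s w_s\nabla_s\|^2 \le S_k \sum_s w_s \|\nabla_s\|^2 \le \sup_{\tau\leq k}\|\nabla_\tau\|^2$, whose expectation is $\sup_{\tau\leq k}\E\|\nabla_\tau\|^2$. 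The $\vm_0$ term is immediately controlled by $\E\|\vm_0\|^2$, which is finite by \Cref{ass:init-bound}.

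The main obstacle is the noise term: a naive Cauchy--Schwarz $\|\sum_s w_s\vv_s\|^2 \le (\sum_s 1)(\sum_s w_s^2\|\vv_s\|^2)$ would introduce a factor of $k = O(1/\eta)$ and ruin the bound over the $O(1/\eta)$-step horizon. I would instead exploit the martingale structure of $\{\vv_s\}$: since $\E[\vv_s \mid \mathcal{F}_s] = 0$ and $\vv_r$ for $r<s$ is $\mathcal{F}_s$-measurable, every cross term vanishes and
\begin{equation*}
\E\Bigl\|\sum_{s} w_s\vv_s\Bigr\|^2 = \sum_{s} w_s^2 \,\E[\tr\mSigma(\vx_s)] \leq C_{\tr}\sum_s w_s^2 \leq C_{\tr}\max_s w_s \cdot S_k \leq C_{\tr}\lambda_{\max}\eta^\alpha,
\end{equation*}
using $w_s \leq 1-\beta_s\leq \lambda_{\max}\eta^\alpha$ and $S_k\le 1$. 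Multiplying by $3\sigma^2 \leq 3\eta^{-1}$ yields the claimed $O(\eta^{\alpha-1})$ contribution.

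Combining the three pieces gives an inequality of the same shape as the stated bound, namely $\E\|\vm_k\|^2 \le 3\E\|\vm_0\|^2 + 3\sup_{\tau\leq k}\E\|\nabla_\tau\|^2 + 3C_{\tr}\lambda_{\max}\eta^{\alpha-1}$. The slightly larger numerical constants $12\lambda_{\max}^2/\lambda_{\min}^2$ and $L\eta_{\max}/\lambda_{\min}$ appearing in the lemma, together with the smallness requirement $\eta<(\lambda_{\min}/(4L\eta_{\max}))^{1/(1-\alpha)}$, presumably arise from a less absorptive derivation that does not isolate $\sup_{\tau\leq k}\E\|\nabla_\tau\|^2$ at once but instead runs a discrete Gr\"{o}nwall on the one-step recursion $M_{k+1}\le \beta_k M_k + (1-\beta_k)\E\|\nabla_k\|^2 + (1-\beta_k)^2\sigma^2 C_{\tr}$ together with a Lipschitz estimate $\|\nabla_s\| \le \|\nabla_0\| + L\|\vx_s - \vx_0\|$; either route leads to the same qualitative conclusion.
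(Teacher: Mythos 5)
Your proposal is correct in substance and, on the one step that actually matters, takes a genuinely cleaner route than the paper. The three-way decomposition into initialization, gradient signal, and martingale noise matches the paper's (the paper groups the first two into $\tilde{\vm}_k$ and uses a factor-of-2 split, hence slightly different constants), and your treatment of the noise term---orthogonality of the increments, $\sum_s w_s^2\leq(\max_s w_s)\,S_k\leq\lambda_{\max}\eta^{\alpha}$, and $\sigma^2\leq\eta^{-1}$---is exactly the paper's computation. The real divergence is the gradient term: the paper does not use your normalized-weights Jensen argument but instead expands $\|\nabla_s\|\leq\|\nabla_k\|+L\sum_{\tau=s}^{k-1}\eta_\tau\|\vm_{\tau+1}\|$, which produces a self-referential term proportional to $\eta^{2-2\alpha}\sup_{\tau}\E\|\vm_\tau\|^2$ that must be absorbed back into the left-hand side. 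That bootstrap is precisely the origin of the smallness condition $\eta<(\lambda_{\min}/(4L\eta_{\max}))^{1/(1-\alpha)}$ and of the constants $12\lambda_{\max}^2/\lambda_{\min}^2$ and $8L\eta_{\max}/\lambda_{\min}$ in the statement; your identity $S_k=1-\beta_{0:k-1}\leq1$ avoids it entirely, needs no condition on $\eta$, and you diagnosed this correctly in your closing paragraph.

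Two small cautions. First, the phrase ``$\leq\sup_{\tau\leq k}\|\nabla_\tau\|^2$, whose expectation is $\sup_{\tau\leq k}\E\|\nabla_\tau\|^2$'' conflates $\E\sup$ with $\sup\E$ (only $\E\sup\geq\sup\E$ holds in general); the fix is to take expectations before passing to the supremum, i.e.\ $\E\|\sum_s w_s\nabla_s\|^2\leq S_k\sum_s w_s\,\E\|\nabla_s\|^2\leq\sup_{\tau}\E\|\nabla_\tau\|^2$, which is clearly what you intend. Second, your final bound has the same form as \eqref{equ:expe-norm-m} but not literally the stated constants: for instance your $\eta^{\alpha-1}$ coefficient is $3\lambda_{\max}C_{\tr}$, whereas the paper's is $2\lambda_{\max}^2C_{\tr}/\lambda_{\min}$ inside $\eta^{\alpha-1}K_0$, and the former is not dominated by the latter when $\lambda_{\max}<\tfrac{3}{2}\lambda_{\min}$. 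Since the lemma is only consumed downstream through its qualitative form $O(\sup_\tau\E\|\nabla_\tau\|^2)+O(1)+O(\eta^{\alpha-1})$, this is harmless, but to reproduce the displayed constants verbatim you would need to mirror the paper's two-stage splitting.
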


\begin{proof}

To bound $\E\norm{\vm_k}^2$, we unroll the momentum by \Cref{lem:unrollm} and write

\begin{align}
	 \vm_k & =\beta_{0:k-1} \vm_0 + \sum_{s=0}^{k-1}\beta_{s+1:k-1}(1-\beta_{s}) \vg_{s}\label{eq:unrolled_momentum} 
\end{align}

Define
\begin{align*}
	\tilde{\vm}_k &= \beta_{0:k-1} \vm_0 + \sum_{s=0}^{k-1}\beta_{s+1:k-1}(1-\beta_{s})\nabla_s.\label{eq:unrolled_momentum-2} 
\end{align*}
Then $\E \vm_k = \E \tilde{\vm}_k$, and
\begin{align*}
	\vm_k-\tilde{\vm}_k &= \sum_{s=0}^{k-1}\beta_{s+1:k-1}(1-\beta_{s})\sigma\vv_{s}
\end{align*}

Let constant $K_0 =2C_{\tr}+3C_2$, we can write
\begin{align}
    \E\norm{\vm_k}^2&\leq \E(\norm{\tilde{\vm}_k}+\norm{\vm_k-\tilde{\vm}_k})^2\\
    & \leq 2\E\norm{\tilde{\vm}_k}^2+ 2\E\norm{\vm_k-\tilde{\vm}_k}^2\\
    & = 2\E \norm{\tilde{\vm}_k}^2+ 2 \sum_{s=0}^{k-1}(\beta_{s+1:k-1})^2(1-\beta_{s})^2 \sigma^2\E \tr\mSigma(\vx_s)\\
    & \leq 2\E \norm{\tilde{\vm}_k}^2+ 2\sum_{s=0}^{k-1} (1-\lambda_{\min}\eta^\alpha)^{2(k-s-1)}\lambda^2_{\max}\eta^{2\alpha} \sigma^2\E \tr\mSigma(\vx_s)\\
    & \leq 2\E \norm{\tilde{\vm}_k}^2+ \eta^{\alpha-1}\frac{2\lambda^2_{\max}}{\lambda_{\min}}C_{\tr}\\
    & \leq 2\E \norm{\tilde{\vm}_k}^2+ \eta^{\alpha-1}K_0.
    \label{eq:exp_mnorm_sq}
\end{align}

On the other hand, we know
\begin{align*}
    \tilde{\vm}_k &= \beta_{0:k-1} \vm_0 + \sum_{s=0}^{k-1}\beta_{s+1:k-1}(1-\beta_{s})\nabla_s
\end{align*}
We use the Lipschitzness of $\nabla$ to write 
\begin{align*}
	\|\nabla_s - \nabla_k\| & \leq L\|\vx_s - \vx_k\|\leq L\sum_{\tau=s}^{k-1}\eta_\tau\norm{\vm_{\tau+1}},  	
\end{align*}
then we can write
\begin{align*}
 \E \norm{\tilde{\vm}_k}^2 & \leq \E\left(  \beta_{0:k-1} \norm{\vm_0} + \sum_{s=0}^{k-1}\beta_{s+1:k-1}(1-\beta_{s})(\|\nabla_k\| + L\sum_{\tau=s}^{k-1}\eta_\tau\norm{\vm_{\tau+1}})\right)^2 \\
 &  \leq \E\left( \norm{\vm_0} +  \frac{\lambda_{\max}}{\lambda_{\min}}\|\nabla_k\| + L\eta\eta_{\max}\sum_{\tau=0}^{k-1}(1-\lambda_{\min}\eta^{\alpha})^{k-1-\tau}
 \|\vm_{\tau+1}\|\right)^2\\
 & \leq  3\E (\norm{\vm_0})^2 +  3\frac{\lambda^2_{\max}}{\lambda^2_{\min}}\E(\|\nabla_k\|)^2 + 3L^2\eta^2\eta_{\max}^2\left(\sum_{\tau=0}^{k-1}(1-\lambda_{\min}\eta^{\alpha})^{k-1-\tau}\right)\cdot\\&\qquad\left(\sum_{\tau=0}^{k-1}(1-\lambda_{\min}\eta^{\alpha})^{k-1-\tau}\E\|\vm_{\tau+1}\|^2\right)\\
 & \leq K_0+3\frac{\lambda^2_{\max}}{\lambda^2_{\min}}\E(\|\nabla_k\|)^2 + \frac{3L^2\eta_{\max}^2\eta^{2-2\alpha}}{\lambda_{\min }^2}\sup_{\tau=1\cdots {k}}\E\|\vm_\tau\|^2.
\end{align*}
Then by \Cref{eq:exp_mnorm_sq}, we know 
\[\sup_{\tau=1,...,k} \E\|\vm_\tau\|^2\leq2\sup_{\tau=1,...,k}\E \norm{\tilde{\vm}_\tau}^2+ \eta^{\alpha-1}K_0,\] so
\begin{align}
\sup_{\tau=1,...,k} \E\norm{\tilde{\vm}_\tau}^2 & \leq K_0+3\frac{\lambda^2_{\max}}{\lambda^2_{\min}}\sup_{\tau=1,...,k}\E(\|\nabla_\tau\|)^2 + \frac{6L^2\eta_{\max}^2\eta^{1-\alpha}}{\lambda_{\min }^2}K_0\\
&\qquad + \frac{6L^2\eta_{\max}^2\eta^{2-2\alpha}}{\lambda_{\min }^2}\sup_{\tau=1,\cdots, {k}}\E\|\tilde{\vm}_\tau\|^2
\end{align}
Choose $\eta$ small enough so that $\frac{6L^2\eta_{\max}^2\eta^{2-2\alpha}}{\lambda_{\min }^2}<\frac{1}{2}$, we know
\begin{align}
\sup_{\tau=1,...,k} \E\norm{\tilde{\vm}_\tau}^2 & \leq 6\frac{\lambda^2_{\max}}{\lambda^2_{\min}}\sup_{\tau=1,...,k}\E(\|\nabla_\tau\|)^2 + (2+2\frac{\sqrt{3}L\eta_{\max}}{\lambda_{\min }})K_0.
\end{align}
So
\begin{align}
    \E\norm{\vm_k}^2 & \leq 2\E \norm{\tilde{\vm}_k}^2+ \eta^{\alpha-1}K_0 \\
    & \leq 12\frac{\lambda^2_{\max}}{\lambda^2_{\min}}\sup_{\tau=1,...,k}\E(\|\nabla_\tau\|)^2 + (4+8\frac{L\eta_{\max}}{\lambda_{\min }}+\eta^{\alpha-1})K_0.
\end{align}
\end{proof}

\begin{lemma}
There is a constant $f$ that depends on $T$, irrelevant to $\eta$, such that 
\[\sup_{k=0,...,\lfloor T/\eta \rfloor} \E\norm{\nabla_k}^2 \leq f(T)\]\label{lem:gradient_norm}
when $\eta<\min(\left(\frac{\lambda_{\min}^3}{12L\lambda_{\max}^2\eta_{\max}}\right)^{\frac{1}{1-\alpha}},1)$
\end{lemma}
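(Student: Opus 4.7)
The plan is to reduce $\E\|\nabla_k\|^2$ to the expected squared displacement $\E\|\vx_k - \vx_0\|^2$ via Lipschitzness of $\nabla\Loss$, and then control the latter by decomposing the momentum iterates into a deterministic-in-noise ``signal'' and a mean-zero ``noise'' component. Concretely, by \Cref{lem:unrollm}, set $\tilde{\vm}_{s+1} := \beta_{0:s}\vm_0 + \sum_{r=0}^{s}\beta_{r+1:s}(1-\beta_r)\nabla_r$, so that $\vm_{s+1}-\tilde{\vm}_{s+1} = \sigma\sum_{r=0}^{s}\beta_{r+1:s}(1-\beta_r)\vv_r$. The displacement $\vx_k - \vx_0 = -\sum_{s=0}^{k-1}\eta_s\vm_{s+1}$ then splits into a signal sum and a noise sum, which I bound separately and feed into $\E\|\nabla_k\|^2 \le 2\|\nabla_0\|^2 + 2L^2\E\|\vx_k - \vx_0\|^2$.

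For the noise term, interchanging the order of summation yields $-\sigma\sum_{r=0}^{k-1}a_r\vv_r$ with deterministic coefficients $a_r := (1-\beta_r)\sum_{s=r}^{k-1}\eta_s\beta_{r+1:s}\le \bar\eta_r = O(\eta)$. A pointwise application of \Cref{lem:exp_norm_m} would inject an $O(\eta^{\alpha-1})$ factor that blows up after $T/\eta$ steps; instead, since $\E[\vv_r^\top\vv_s]=0$ for $r\ne s$ (the later noise has conditional mean zero given the past, while the $a_r$ are deterministic), all cross terms vanish and one gets
\[\E\Big\|\sigma\sum_{r=0}^{k-1}a_r\vv_r\Big\|^2 = \sigma^2\sum_{r=0}^{k-1}a_r^2\,\E\tr\mSigma(\vx_r) \le \sigma^2 C_{\tr}\cdot k\cdot O(\eta^2) = O(T)\]
using $\sigma\le\eta^{-1/2}$ and $k\le T/\eta$, which is uniformly bounded in $\eta$.

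For the signal term, the coefficients defining $\tilde{\vm}_{s+1}$ telescope via $\beta_{r+1:s}(1-\beta_r)=\beta_{r+1:s}-\beta_{r:s}$ to sum to $1$, so Jensen gives $\|\tilde{\vm}_{s+1}\|^2\le \beta_{0:s}\|\vm_0\|^2 + \sum_{r=0}^{s}\beta_{r+1:s}(1-\beta_r)\|\nabla_r\|^2$. Combining this with Cauchy--Schwarz on $\sum_s\eta_s\tilde{\vm}_{s+1}$ and swapping the order of summation, the resulting weight of $\E\|\nabla_r\|^2$ is again $a_r=O(\eta)$, giving $\E\|\sum_s\eta_s\tilde{\vm}_{s+1}\|^2 \le O(T^2) + O(T\eta)\sum_{r=0}^{k-1}\E\|\nabla_r\|^2$. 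Assembling both pieces yields a recursion $\E\|\nabla_k\|^2 \le C_{T,1} + C_{T,2}\eta\sum_{r=0}^{k-1}\E\|\nabla_r\|^2$ for $\eta$-independent $C_{T,1},C_{T,2}$; the discrete Gr\"{o}nwall inequality (\Cref{lem:gronwall-discrete}) applied to $F_k := \E\|\nabla_k\|^2$ then gives $F_k \le C_{T,1}\exp(C_{T,2}\eta k)\le C_{T,1}\exp(C_{T,2}T) =: f(T)$ for all $k\le T/\eta$.

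The main obstacle is sidestepping the apparent $O(\eta^{\alpha-1})$ blow-up of $\E\|\vm_k\|^2$ from \Cref{lem:exp_norm_m} when cumulating the displacement over $T/\eta$ steps. The rescue is to exploit the martingale-like orthogonality of the noise increments $\vv_r$ (so they accumulate only in $\ell^2$-variance, not pointwise in $s$) and the fact that $\tilde{\vm}_{s+1}$ is a convex combination of true gradients and the initial momentum; both treatments are needed to avoid a catastrophic summation of momentum-buffer variances. The explicit smallness condition on $\eta$ in the statement is precisely what one needs to invoke \Cref{lem:exp_norm_m} as an auxiliary bound while keeping all intermediate constants uniformly controlled in $\eta$.
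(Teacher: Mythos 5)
Your proof is correct, and it takes a genuinely different route from the paper's. The paper reduces to the coupled trajectory $\vy_k=\vx_k-\frac{\bar{\eta}_k\beta_k}{1-\beta_k}\vm_k$, runs a one-step recursion on $d_k=\E\norm{\vy_k-\vy_0}^2$ (where the noise enters only through the per-step variance $\bar{\eta}_k^2\sigma^2\E\tr\mSigma(\vx_k)=O(\eta)$ and the cross term is handled by Cauchy--Schwarz), and controls the gap $\norm{\vx_k-\vy_k}=O(\eta^{1-\alpha})\norm{\vm_k}$ via \Cref{lem:exp_norm_m}; the explicit threshold on $\eta$ in the statement is exactly what lets the resulting $\eta^{2-2\alpha}\sup_\tau\E\norm{\nabla_\tau}^2$ term be absorbed with coefficient $\frac12$. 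You instead unroll $\vx_k-\vx_0$ directly, split it into the convex-combination "signal" $\tilde{\vm}$ and the martingale "noise," and exploit the same two facts the paper uses implicitly — that the effective coefficient of each $\vg_s$ in the displacement is $c_{s,k}\le\bar{\eta}_s=O(\eta)$, and that the mean-zero increments are mutually orthogonal so they accumulate only in variance — before closing with Gr\"{o}nwall on $\E\norm{\nabla_k}^2$ itself. Your version is more self-contained for this lemma (it needs neither the coupled trajectory nor \Cref{lem:exp_norm_m}), whereas the paper's version reuses machinery it needs elsewhere anyway. One small inconsistency in your write-up: your concluding sentence claims the smallness condition on $\eta$ is needed "to invoke \Cref{lem:exp_norm_m}," but your argument never actually uses that lemma and in fact goes through without the threshold (which is only a sufficient condition in the statement, so this costs you nothing); in the paper's proof, by contrast, that threshold is genuinely where the absorption step lives.
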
 
\begin{proof}
By the Lipschitzness of $\nabla\Loss$, we know
\begin{align}
\|\nabla_k\| & \leq \|\nabla_0\| + L\norm{\vx_{k}-\vx_0}\\
& \leq \|\nabla_0\| + L\norm{\vy_{k}-\vy_0} + L\norm{\vx_{0}-\vy_0} + L\norm{\vx_{k}-\vy_k} \label{eq:normnablak}
\end{align}
Observe that $\|\vx_k - \vy_k\| = \frac{\bar{\eta}_k\beta_k}{1-\beta_k}\|\vm_k\|\leq \frac{\lambda_{\max}\eta_{\max}}{\lambda_{\min}^2}\eta^{1-\alpha}\|\vm_k\|$. Let $d_k=\E\|\vy_k-\vy_0\|^2$. As a result, we can write
\begin{align}
\E\|\nabla_k\|^2 & \leq 3\E\left(\|\nabla_0\|  +\frac{L\lambda_{\max}\eta_{\max}}{\lambda_{\min}^2}\eta^{1-\alpha}\|\vm_0\|\right)^2+3\left(\frac{L\lambda_{\max}\eta_{\max}}{\lambda_{\min}^2}\eta^{1-\alpha}\right)^2\E(\|\vm_k\|^2) + 3L^2 d_k.\label{equ:grad-bound-1}
\end{align}
Choose $\eta$ such that $\eta^{1-\alpha}<\frac{\lambda_{\min}^3}{12L\lambda_{\max}^2\eta_{\max}}$ and let the constant \begin{align}
    K_1 & =3\E\left(\|\nabla\Loss(\vx_0)\|  +\frac{\lambda_{\min}}{12\lambda_{\max}}\|\vm_0\|\right)^2 \\ &\qquad + \left(\frac{\lambda_{\min}}{6\lambda_{\max}}\right)^2(3+6\frac{L\eta_{\max}}{\lambda_{\min }})K_0+\left(\frac{L\eta_{\max}}{4\lambda_{\min}}\right)K_0
\end{align}
where recall $K_0$ is the constant from \Cref{lem:exp_norm_m}. Plug \Cref{equ:expe-norm-m} into \Cref{equ:grad-bound-1} gives
\begin{align}
\E\|\nabla_k\|^2 & \leq K_1 +\frac{1}{2}\sup_{\tau=1,...,k}\E(\|\nabla_\tau\|)^2+ 3L^2 d_k.
\end{align}
therefore 
\begin{align}\label{equ:lya-1}\E\|\nabla_k\|^2  \leq 2K_1 + 6L^2\sup_{\tau=1,...,k}  d_\tau.
\end{align}
Now we need to bound $d_k$ by iteration.
\begin{align}
d_{k+1} &= d_{k} +\E\|\vy_{k+1}-\vy_k\|^2 + 2\E(\vy_{k+1}-\vy_k)^\top(\vy_k-\vy_0)\\
&= d_k + \bar{\eta}_k^2 \E\norm{\nabla_k}^2 + \bar{\eta}_k^2\sigma^2 \E\tr(\mSigma(\vx_k)) + 2 \E(\nabla_k)^\top(\vy_k-\vy_0)\\
&\leq d_k + \frac{\lambda^2_{\max}}{\lambda^2_{\min}}\eta_{\max}^2\eta^2 \E\norm{\nabla_k}^2 + \frac{\lambda^2_{\max}}{\lambda^2_{\min}}\eta_{\max}^2\eta C_{\tr} + 2 \frac{\lambda_{\max}}{\lambda_{\min}}\eta_{\max}\eta\sqrt{d_k\E\norm{\nabla_k}^2}
\end{align}
Let function $K_2(\vm_0,\vx_0) = \frac{\lambda^2_{\max}}{\lambda^2_{\min}}\eta_{\max}^2 C_{\tr} +2\frac{\lambda^2_{\max}}{\lambda^2_{\min}}\eta_{\max}^2K_1(\vm_0,\vx_0) +2 \frac{\lambda_{\max}}{\lambda_{\min}}\eta_{\max}\frac{K_1(\vm_0,\vx_0)}{\sqrt{6}L}$, and $\bar{d}_k=\sup_{\tau=1,...,k} d_{\tau}$. We know
\begin{align}
\sqrt{d_k\E\norm{\nabla_k}^2} & \leq \sqrt{6L^2}d_k + \frac{K_1}{\sqrt{6L^2}},\\
\bar{d}_{k+1} &\leq \bar{d}_k + K_2\eta + 6L^2\frac{\lambda^2_{\max}}{\lambda^2_{\min}}\eta_{\max}^2\eta^2 \bar{d}_k +  2 \frac{\lambda_{\max}}{\lambda_{\min}}\eta_{\max}\eta(\sqrt{6}L\bar{d}_k)\\
& \leq (1+6L \frac{\lambda_{\max}\eta_{\max}}{\lambda_{\min}}\eta+6L^2\frac{\lambda^2_{\max}\eta_{\max}^2}{\lambda^2_{\min}}\eta^2)\bar{d}_k + \eta K_2
\end{align}
 Let $\kappa = 1+6L \frac{\lambda_{\max}\eta_{\max}}{\lambda_{\min}}\eta+6L^2\frac{\lambda^2_{\max}\eta_{\max}^2}{\lambda^2_{\min}}\eta^2$, then by the Gr\"{o}nwall's inequality (\Cref{lem:gronwall-1}), 
\begin{align}
\bar{d}_k   
    & \leq \eta K_2 (1 + \sum_{s=0}^{k-1} \kappa \exp^{\kappa s})\\
    & \leq K_2 + K_2 T \exp^{1+6L \frac{\lambda_{\max}\eta_{\max}}{\lambda_{\min}}T+6L^2\frac{\lambda^2_{\max}\eta_{\max}^2}{\lambda^2_{\min}}T}.
\end{align}
Plugging into \Cref{equ:lya-1} finished the proof.
\end{proof}
\begin{lemma}
    There is a constant $f$ that depends on $T$, irrelevant to $\eta$, such that 
    \[\E\norm{\vm_k}^2  \leq \eta^{\alpha-1} f(T)\]
    for all $k\leq \frac{T}{\eta}$.\label{lem:boun-mome}
\end{lemma}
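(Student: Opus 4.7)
The plan is to obtain the stated bound as a direct combination of the two preceding lemmas: the deterministic bound on $\E\|\vm_k\|^2$ from \Cref{lem:exp_norm_m} and the bound on $\sup_{k\leq T/\eta}\E\|\nabla_k\|^2$ from \Cref{lem:gradient_norm}. There is essentially no new idea required; the only thing to verify is that all the $\eta$-dependent constants can be absorbed into a common factor of $\eta^{\alpha-1}$ once we restrict to the regime $\alpha \in [0,1)$ and $\eta$ sufficiently small.

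First I would fix $T > 0$ and take $\eta$ small enough that both \Cref{lem:exp_norm_m} and \Cref{lem:gradient_norm} apply, i.e.\ $\eta < \min\bigl(\bigl(\tfrac{\lambda_{\min}^3}{12 L \lambda_{\max}^2 \eta_{\max}}\bigr)^{1/(1-\alpha)},\, 1\bigr)$. Then by \Cref{lem:gradient_norm} there exists a constant $f_1(T)$, independent of $\eta$, such that $\sup_{\tau\leq \lfloor T/\eta\rfloor}\E\|\nabla_\tau\|^2 \leq f_1(T)$. Plugging this into \Cref{lem:exp_norm_m} yields, for every $k\leq T/\eta$,
\begin{equation*}
\E\|\vm_k\|^2 \;\leq\; 12\tfrac{\lambda_{\max}^2}{\lambda_{\min}^2} f_1(T) \;+\; \Bigl(4 + 8\tfrac{L\eta_{\max}}{\lambda_{\min}} + \eta^{\alpha-1}\Bigr) K_0.
\end{equation*}

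Second, I would factor out $\eta^{\alpha-1}$. Since $\alpha \in [0,1)$ and $\eta \leq 1$, we have $\eta^{1-\alpha} \leq 1$, so $1 \leq \eta^{\alpha-1}$ and therefore the right-hand side is bounded above by $\eta^{\alpha-1}\bigl(12\tfrac{\lambda_{\max}^2}{\lambda_{\min}^2} f_1(T) + (4 + 8\tfrac{L\eta_{\max}}{\lambda_{\min}} + 1) K_0\bigr)$. Defining
\begin{equation*}
f(T) \;:=\; 12\tfrac{\lambda_{\max}^2}{\lambda_{\min}^2} f_1(T) + \Bigl(5 + 8\tfrac{L\eta_{\max}}{\lambda_{\min}}\Bigr) K_0,
\end{equation*}
which is independent of $\eta$, we conclude $\E\|\vm_k\|^2 \leq \eta^{\alpha-1} f(T)$ for all $k\leq T/\eta$, as desired.

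No step is really an obstacle here; the only thing to double-check is that $K_0$ and the constants in $f_1(T)$ remain $\eta$-independent (which they are, because \Cref{ass:init-bound} supplies an $\eta$-independent moment bound on $\vm_0$, \Cref{assume:ngos} supplies $\eta$-independent smoothness/boundedness constants for $\nabla\Loss$ and $\mSigma^{1/2}$, and the Gr\"onwall argument in the proof of \Cref{lem:gradient_norm} yields an exponent depending only on $T$ and the Lipschitz constant $L$). This lemma is the moment bound that will feed into the final weak-approximation estimate: since $\|\vx_k - \vy_k\| = \tfrac{\bar\eta_k \beta_k}{1-\beta_k}\|\vm_k\|$ is of order $\eta^{1-\alpha}\|\vm_k\|$, combining with $\E\|\vm_k\|^2 \leq \eta^{\alpha-1}f(T)$ gives $\E\|\vx_k-\vy_k\| = O(\eta^{(1-\alpha)/2})$, which is exactly the order-$(1-\alpha)/2$ weak approximation target in \Cref{thm:weakappro-1}.
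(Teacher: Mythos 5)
Your proof is correct and follows exactly the route the paper takes: the paper's own proof of this lemma is a one-line statement that the result "directly follows" from \Cref{lem:exp_norm_m} and \Cref{lem:gradient_norm}, and you have simply spelled out the absorption of the $\eta$-independent constants into the $\eta^{\alpha-1}$ factor using $\eta\leq 1$ and $\alpha\in[0,1)$. No further comment is needed.
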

\begin{proof}
    The result directly follows from \Cref{lem:exp_norm_m} and \Cref{lem:gradient_norm}.
\end{proof}
\begin{lemma}
    There exists function $g(m,T)$ that, for all $k\leq \frac{T}{\eta}$,
    \begin{align*} 
    	\E[(1+\|\vx_k\|^2)^{m}] &\leq g(m,T), \\
    	\E[(1+\|\vy_k\|^2)^{m}] &\leq g(m,T), \\
    	\E[\eta^{m}\|\vg_k\|^{2m}] &\leq g(m,T),\\
            \E[(1+\|\vy_k\|^2)^{m}\eta^{1-\alpha}\norm{\vm_k}^2] & \leq g(m,T)
    \end{align*}
	and that $g$ is irrelevant to $k$ and $\eta$. 
	\label{lem:small_higher_order}
\end{lemma}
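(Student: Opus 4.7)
The plan is to prove all four bounds jointly via a coupled induction on $k\le T/\eta$ that simultaneously controls $2m$-th moments of $\vm_k$ and $\vy_k$. The main difficulty is a circular dependence: by Lipschitzness of $\nabla\Loss$ and \Cref{assume:ngos}, any bound on $\|\vg_k\|^{2m}$ involves $\|\vx_k\|^{2m}$, but $\vx_k=\vy_k+\tfrac{\bar\eta_k\beta_k}{1-\beta_k}\vm_k$ couples $\vx_k$ to $\vm_k$ and $\vy_k$, and the update $\vy_{k+1}=\vy_k-\bar\eta_k\vg_k$ feeds this back into $\vy_k$. Only the \emph{second}-moment version of this cycle was closed by \Cref{lem:exp_norm_m,lem:gradient_norm,lem:boun-mome}; here it must be closed at every even order.

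First I would extend \Cref{lem:exp_norm_m} to a $2m$-th moment bound of the form $\sup_{\tau\le k}\E\|\vm_\tau\|^{2m}\le C_m\eta^{(\alpha-1)m}\bigl(1+\sup_{\tau\le k}\E(1+\|\vx_\tau\|)^{2m}\bigr)$. Using the unrolling in \Cref{lem:unrollm} and splitting off the zero-mean noise part $\sigma\sum_s\beta_{s+1:k-1}(1-\beta_s)\vv_s$, Burkholder--Rosenthal (or a direct conditional Minkowski argument) bounds its $2m$-th moment by a constant times $\sigma^{2m}\bigl(\sum_s\beta_{s+1:k-1}^2(1-\beta_s)^2\bigr)^m\sup_s\E(1+\|\vx_s\|)^{2m}$, which simplifies to $O(\eta^{(\alpha-1)m})$ under $\sigma^2\le\eta^{-1}$ and $1-\beta_s=\Theta(\eta^\alpha)$. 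The gradient part is handled as in the proof of \Cref{lem:exp_norm_m}, using $\|\nabla_s\|\le\|\nabla_k\|+L\sum_{\tau\ge s}\eta_\tau\|\vm_{\tau+1}\|$ and absorbing the self-referential $\|\vm_\tau\|^{2m}$ term on the right for $\eta$ small enough.

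Second, I would control $\E(1+\|\vy_k\|^2)^m$ via discrete Gronwall (\Cref{lem:gronwall-discrete}). Write $\Delta_k:=-2\bar\eta_k\langle\vy_k,\vg_k\rangle+\bar\eta_k^2\|\vg_k\|^2$ and expand $(1+\|\vy_k\|^2+\Delta_k)^m$ by the binomial theorem. The conditional expectation of the $j=1$ term equals $m(1+\|\vy_k\|^2)^{m-1}(-2\bar\eta_k\langle\vy_k,\nabla\Loss(\vx_k)\rangle+\bar\eta_k^2\E[\|\vg_k\|^2\mid\vx_k])$, of size $O(\eta(1+\|\vy_k\|^2)^{m-1}(1+\|\vy_k\|^2+\|\vx_k-\vy_k\|^2))$ after Lipschitzness and $\bar\eta_k^2\sigma^2=O(\eta)$. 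Each $j\ge 2$ contribution carries a prefactor $\bar\eta_k^j\sigma^j=O(\eta^{j/2})\le O(\eta)$ after taking the conditional moment of $\|\vg_k\|^j$ via \Cref{assume:ngos}. Using $\|\vx_k-\vy_k\|^{2m}\le C\eta^{2m(1-\alpha)}\|\vm_k\|^{2m}$ and the previous $\vm_k$ bound, Young's inequality absorbs the $\|\vm_k\|$ contributions (since $\eta^{2m(1-\alpha)}\cdot\eta^{(\alpha-1)m}=\eta^{m(1-\alpha)}=O(1)$), yielding
\begin{align*}
\E(1+\|\vy_{k+1}\|^2)^m\le (1+C_m\eta)\,\E(1+\|\vy_k\|^2)^m+C_m\eta,
\end{align*}
and Gronwall gives $\sup_{k\le T/\eta}\E(1+\|\vy_k\|^2)^m=O(e^{C_mT})$. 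To break the circular dependence, this step is run jointly with the previous one on the coupled quantities $A_k=\sup_{\tau\le k}\E(1+\|\vy_\tau\|^2)^m$ and $B_k=\eta^{(1-\alpha)m}\sup_{\tau\le k}\E\|\vm_\tau\|^{2m}$, with the $\vx_\tau$-dependence on the right of the $\vm_\tau$ bound replaced by $O(A_k+B_k)$.

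The remaining three conclusions follow. The bound $\E(1+\|\vx_k\|^2)^m\le 2^{m-1}\E(1+\|\vy_k\|^2)^m+2^{m-1}C^m\eta^{2m(1-\alpha)}\E\|\vm_k\|^{2m}=O(1)$ is immediate from the triangle inequality and the above bounds. For $\E[\eta^m\|\vg_k\|^{2m}]$, decompose $\|\vg_k\|^{2m}\le 2^{2m-1}(\|\nabla\Loss(\vx_k)\|^{2m}+\sigma^{2m}\|\vv_k\|^{2m})$; the first piece is $O((1+\|\vx_k\|)^{2m})=O(1)$ by Lipschitzness, while $\eta^m\sigma^{2m}\le 1$ kills the $\sigma$ factor and $\E\|\vv_k\|^{2m}=O((1+\|\vx_k\|)^{2m})$ by \Cref{assume:ngos}. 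Finally, Cauchy--Schwarz applied to the $2m$-th moment of $\vy_k$ and the $4$-th moment of $\vm_k$ gives $\E[(1+\|\vy_k\|^2)^m\eta^{1-\alpha}\|\vm_k\|^2]\le\bigl(\E(1+\|\vy_k\|^2)^{2m}\bigr)^{1/2}\bigl(\eta^{2(1-\alpha)}\E\|\vm_k\|^4\bigr)^{1/2}=O(1)$. The single genuine obstacle is the joint induction that resolves the circularity in the first two steps; the rest is bookkeeping.
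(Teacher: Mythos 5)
Your proof is correct, but it closes the $\vy$--$\vm$ circularity by a different mechanism than the paper. The paper never establishes the high-order momentum bound $\E\|\vm_k\|^{2m}=O(\eta^{(\alpha-1)m})$ that your Step 1 provides; instead it folds the momentum into a single Lyapunov functional $(1+\|\vy_k\|^2+\eta^{2-2\alpha}\|\vm_k\|^2)^m$, expands the one-step increment $\delta_k=\bar\eta_k^2\|\vg_k\|^2-2\bar\eta_k\langle\vg_k,\vy_k\rangle+\eta^{2-2\alpha}(\|\beta_k\vm_k+(1-\beta_k)\vg_k\|^2-\|\vm_k\|^2)$, shows $\E[\delta_k\mid\cdot]$, $\E[\delta_k^2\mid\cdot]$ and $\E[|\delta_k|^m\mid\cdot]$ are all $O(\eta)$ times the functional (the momentum update being used directly inside $\delta_k$ rather than unrolled), and applies one discrete Gr\"onwall; the fourth bound is then obtained by a parallel induction on $(1+\|\vy_k\|^2+\eta^{2-2\alpha}\|\vm_k\|^2)^m\eta^{1-\alpha}\|\vm_k\|^2$ rather than by your Cauchy--Schwarz shortcut. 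Your route --- unrolling $\vm_k$ as in \Cref{lem:exp_norm_m}, applying Burkholder--Rosenthal to the noise martingale to get $2m$-th moments conditioned on the $\vx$-trajectory, and then running a coupled Gr\"onwall on $A_k$ and $B_k$ --- is somewhat longer but yields strictly more information: an explicit all-order rate for $\E\|\vm_k\|^{2m}$ (the paper only has this at $m=1$ via its \Cref{lem:boun-mome} and elsewhere resorts to $\E\|\vm_k\|\le\sqrt{\E\|\vm_k\|^2}$), and a cleaner derivation of the fourth bound. The exponent bookkeeping in your argument checks out: $\eta^{2m(1-\alpha)}\E\|\vm_k\|^{2m}=O(\eta^{m(1-\alpha)})$ makes the momentum correction to $\vx_k$ harmless, $\bar\eta_k^2\sigma^2=O(\eta)$ controls the $j=1$ term, the $j\ge 2$ binomial terms carry at least one extra factor of $\eta^{1/2}$ each beyond the first, and $\eta^{2(1-\alpha)}\E\|\vm_k\|^4=O(1)$ closes the last claim. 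Both approaches are valid; the paper's buys brevity, yours buys sharper intermediate estimates.
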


\begin{proof}

    We use the fact that $(a+b)^m \leq 2^{m-1}(a^m + b^m)$ from the Jensen inequality for $a,b>0$ and $m\geq 1$. Furthermore by Young's inequality $a^\alpha b^{\beta}\leq \frac{\alpha a^{\alpha+\beta}+ \beta b^{\alpha+\beta}}{\alpha+\beta}\leq a^{\alpha+\beta}+b^{\alpha+\beta}$ for $\alpha,\beta>0$.
	\begin{align*}  
		\norm{\vg_k}^{2m} &\leq (\|\nabla_k\| + \sigma\|\vv_k\|)^{2m} \\
        & \leq 2^{2m-1} ( \|\nabla_k\|^{2m} + \eta^{-m}\|\vv_k\|^{2m})\\ 
		\|\nabla_k\|^{2m} &\leq (\|\nabla_0\| + L\|\vx_k - \vx_0\|)^{2m} \\
		&\leq 2^{2m-1}((\|\nabla_0\| + L\|\vx_0\|)^{2m} + L^{2m}\|\vx_k\|^{2m})
	\end{align*}
    As $\E\norm{\vv_k}^{2m}$ is bounded by \Cref{assume:ngos}, for some constant $R$ there is
    \begin{align}\label{moments-bound-1}
      \E \norm{\vg_k}^{2m}\leq R(1 + \eta^{-m})\E(1 + \norm{\vx_k}^{2m}).
    \end{align}
	Now we need to show the bounds on $\E[(1+\|\vx_k\|^2)^{m}] $ and $\E[(1+\|\vy_k\|^2)^{m}] $. Specifically we shall prove $\E[(1+\|\vy_k\|^2+\eta^{2-2\alpha}\|\vm_k\|^2)^{m}] \leq g(m,T)$ and $\E[(1+\|\vy_k\|^2+\eta^{2-2\alpha}\|\vm_k\|^2)^{m}\eta^{1-\alpha}\norm{\vm_k}^2] \leq g(m,T)$ for some function $g$. 

    Let 
    \begin{align*}
        \delta_k & = \bar{\eta}_k^2 \norm{\vg_{k}}^2 - 2\bar{\eta}_k \dotp{\vg_{k}}{\vy_{k}} 
         +\eta^{2-2\alpha}(\norm{\beta_k\vm_k+(1-\beta_k)\vg_k}^2-\norm{\vm_k}^2),
    \end{align*}
    then there is
	\begin{align*}	&\qquad\E[(1+\|\vy_{k+1}\|^2+\eta^{2-2\alpha}\|\vm_{k+1}\|^2)^m|\vy_{k},\vm_k]\\
   & = \E [(1+\|\vy_{k} - \bar{\eta}_k\vg_{k}\|^2+
   \eta^{2-2\alpha}\norm{\beta_k\vm_k+(1-\beta_k)\vg_k}^2)^m|\vy_{k},\vm_k]\\
        & = \E [(1+\|\vy_{k}\|^2+\eta^{2-2\alpha}\|\vm_k\|^2+\delta_k)^m|\vy_{k},\vm_k]\\
        &\leq (1+\norm{\vy_k}^2+\eta^{2-2\alpha}\|\vm_k\|^2)^m + m(1+\norm{\vy_k}^2+\eta^{2-2\alpha}\|\vm_k\|^2)^{m-1} \E[\delta_k|\vy_k,\vm_k] \\&+  \E[\delta_k^2\sum_{i=0}^{m-2} \binom{m}{i+2}\delta_k^{i}(1+\norm{\vy_k}^2+\eta^{2-2\alpha}\|\vm_k\|^2)^{m-2-i}|\vy_{k},\vm_k]\\
        &\leq (1+\norm{\vy_k}^2+\eta^{2-2\alpha}\|\vm_k\|^2)^m + m(1+\norm{\vy_k}^2+\eta^{2-2\alpha}\|\vm_k\|^2)^{m-1} \E[\delta_k|\vy_k,\vm_k] \\& +  2^m\E[\delta_k^2 
        (|\delta_k|^{m-2}+(1+\norm{\vy_k}^2+\eta^{2-2\alpha}\|\vm_k\|^2)^{m-2})|\vy_{k},\vm_k].
        \end{align*}
    Then there exists constant $K_3$ independent of $\eta$ such that
    \begin{align*}	
        \E[\delta_k|\vy_k,\vm_k] & =\bar{\eta}_k^2 \norm{\nabla_{k}}^2 + \bar{\eta}_k^2\sigma^2 \E_{|\vy_k,\vm_k}\tr\mSigma(\vx_k) - 2\bar{\eta}_k \dotp{\nabla_{k}}{\vy_{k}} 
         +\eta^{2-2\alpha}(\norm{\beta_k\vm_k+(1-\beta_k)\vg_k}^2-\norm{\vm_k}^2)\\
        & \leq \E_{|\vy_k,\vm_k}[\frac{\lambda^2_{\max}\eta^2_{\max}}{\lambda^2_{\min}}( 2\eta^2(\|\nabla_0\| + L\|\vx_0\|)^{2} + 2\eta^2L^{2}\|\vx_k\|^{2} + \eta C_{\tr})\\      
    &\quad +2\frac{\lambda_{\max}\eta_{\max}}{\lambda_{\min}}\eta (\|\nabla_0\| + L\|\vx_k\|+L\| \vx_0\|)\norm{\vy_k}\\
    &\quad + \eta^{2-2\alpha}(\frac{1-\beta_k}{1+\beta_k}\norm{\nabla_k}^2 + (1-\beta_k)^2\sigma^2\E\tr\mSigma(\vx_k))]\\
        & \leq (1+\E[\norm{\vx_k}^2|\vy_k,\vm_k]+\norm{\vy_k}^2)\eta K_3\\ 
    \end{align*}
    as $(1-\beta_k)\eta^{2-2\alpha}=O(\eta^{2-\alpha})=O(\eta)$ and $(1-\beta_k)^2\sigma^2\eta^{2-2\alpha}=O(\eta^{2\alpha -1 + 2-2\alpha})=O(\eta)$.
    
    Note that 
    \begin{align*}
        \norm{\vx_k}^2 & = \norm{\vy_k+\bar{\eta}_k\beta_k(1-\beta_k)^{-1}\vm_k}^2\\
        & \leq 2\norm{\vy_k}^2 + \frac{2\eta_{\max}^2\lambda_{\max}^2}{\lambda_{\max}^4}\eta^{2-2\alpha}\norm{\vm_k}^2
    \end{align*}
    Then we can write $\E[\delta_k|\vy_k,\vm_k]\leq K_4\eta(1+\norm{\vy_k}^2+\eta^{2-2\alpha}\norm{\vm_k}^2)$ for some constant $K_4$ independent of $\eta$. Futhermore, as $\E_{|\vy_k}\dotp{\vv_k}{\vy_k}=\E_{|\vy_k}\dotp{\vv_k}{\vm_k}=\E_{|\vy_k}\dotp{\vv_k}{\vx_k}=0$, expansion gives for some constant $K_4,K_5,K_6,K_7$,
    \begin{align*}
        \E[\delta_k^2|\vy_k] & \leq K_4 \E(\eta \norm{\vx_k}^2+\eta\norm{\vv_k}^2+\eta\norm{\vy_k}^2+\eta^{1/2}\dotp{\vv_k}{\vy_k}+\eta^{3/2-\alpha}\dotp{\vm_k}{\vv_k}+\eta^{3/2-\alpha}\dotp{\vx_k}{\vv_k})^2\\
        &\leq K_5 \eta(1+\norm{\vy_k}^2+\eta^{2-2\alpha}\norm{\vm_k}^2).\\
        \E[|\delta_k|^m|\vy_k] & =K_6 \E(\eta \norm{\vx_k}^2+\eta\norm{\vv_k}^2+\eta\norm{\vy_k}^2+\eta^{1/2}\dotp{\vv_k}{\vy_k}+\eta^{3/2-\alpha}\dotp{\vm_k}{\vv_k}+\eta^{3/2-\alpha}\dotp{\vx_k}{\vv_k})^m \\
        &\leq K_7 \eta^{m/2}(1+\norm{\vy_k}^2+\eta^{2-2\alpha}\norm{\vm_k}^2).
    \end{align*}
    Therefore taking expecation with respect to all, we know 
    \[\E[(1+\|\vy_{k+1}\|^2+\eta^{2-2\alpha}\|\vm_{k+1}\|^2)^m]\leq (1+mK_4\eta + 2^m(K_5+K_7)\eta)\E(1+\|\vy_{k}\|^2+\eta^{2-2\alpha}\|\vm_{k}\|^2)^m\]
    Therefore by the Gr\"{o}nwall's inequality (\Cref{lem:gronwall-1}),
    \begin{align*}\E(1+\|\vy_{k}\|^2+\eta^{2-2\alpha}\|\vm_{k}\|^2)^m & \leq e^{(mK_4\eta + 2^m(K_5+K_7)\eta)k} \E(1+\|\vy_{0}\|^2+\eta^{2-2\alpha}\|\vm_{0}\|^2)^m\\
    &\leq e^{mK_4T + 2^m(K_5+K_7)T}3^m(1+\E\|\vy_{0}\|^{2m}+\eta^{2-2\alpha}\E\|\vm_{0}\|^{2m}).
    \end{align*}
    And clearly $\E\|\vy_{0}\|^{2m}\leq 2^m \E\|\vx_{0}\|^{2m} + (\frac{2\lambda_{\max}\eta_{\max}}{\lambda_{\min}^2})^m \eta^{m-m\alpha}\E\|\vm_{0}\|^{2m}$. Therefore we finished bounding the moments of $\vy_k$. Similar induction arguments on $(1+\|\vy_k\|^2+\eta^{2-2\alpha}\|\vm_k\|^2)^{m}\eta^{1-\alpha}\norm{\vm_k}^2$ offer the last equation in the lemma.
    Then for $\vx_k$, we can write
	$$ \|\vx_{k}\|^{2m} \leq 2^{2m}\left(\|\vy_{k}\|^{2m} + (\frac{2\lambda_{\max}\eta_{\max}}{\lambda_{\min}^2})^m \eta^{m-m\alpha} \|\vm_{k}\|^{2m}\right) $$
    And for $\vg_k$ with \Cref{moments-bound-1} we are able to finish the proof.
\end{proof}
\begin{proof}[Proof for \Cref{thm:coupled_weak_approx}]
	We expand the weak approximation error for a single $k$. There is $\theta\in[0,1]$ and $\vz=\theta \vx_k+(1-
 \theta)\vy_k$ such that
	\begin{align*}
		|\E h(\vx_k) - \E h(\vy_k)| &=\E[|\langle \nabla h(\vz), \vx_k-\vy_k \rangle|] \\
		&\leq \E[\|\nabla h(\vz) \| \|\vx_k-\vy_k\|] \\
		&\leq \sqrt{\E[\|\nabla h(\vz) \|^2] \E[\|\vx_k-\vy_k\|^2]}.
	\end{align*} As $\nabla h(\vz)$ have polynomial growth, there is $k_1,k_2$ such that $$\norm{\nabla h(\vz)}^2\leq k_1 (1+\norm{\vz}^{k_2})\leq k_1 (1+\norm{\vx_k}^{k_2}+\norm{\vy_k}^{k_2}).$$ Combined with \Cref{lem:small_higher_order}, we see $\sqrt{\E[\|\nabla h(\vz) \|^2}$ is bounded by some constant $2k_1g(k_2,T)$.
	From the definition of the coupled trajectory, we can write
	\begin{align*}
		\E \|\vx_k - \vy_k \|^2 \leq \frac{\lambda^2_{\max}\eta^2_{\max}}{\lambda^4_{\min}}\eta^{2-2\alpha} \E \|\vm_k\|^2	
	\end{align*}
	Then as \Cref{lem:boun-mome}, there is constant $f$ that $\E\norm{\vm_k}^2  \leq \eta^{\alpha-1} f(T)$, therefore
	$$|\E h(\vx_k) - \E h(\vy_k)| \leq 2k_1\eta^{(1-\alpha)/2} \frac{\lambda_{\max}\eta_{\max}}{\lambda^2_{\min}}f(T)g(k_2,T).$$
 Thus by definition, $\vx_k$ and $\vy_k$ are order-$(1-\alpha)/2$ weak approximations of each other.
\end{proof}
\subsubsection{Step 2}
In this section we are comparing the trajectory $\vy_k$ with a SGD trajectory $\vz_k$. To avoid notation ambiguity, denote $\vg(\vx)\sim\cG_\sigma(\vx)$ to be the stochastic gradient sampled at $\vx$. Recall that (with $\vy_0=\vx_{0}-\frac{\bar{\eta}_0\beta_0}{1-\beta_0}\vm_0$ and $\vz_0=\vx_0$)
\begin{align*}
\vy_{k} =\vy_{k-1}-\bar{\eta}_{k-1}\vg(\vx_{k-1})  \\
\vz_{k} =\vz_{k-1}-\bar{\eta}_{k-1}\vg(\vz_{k-1})  \\
\end{align*}
The only difference in the iterate is that the stochastic gradients are taken at close but different locations of the trajectory. Therefore to study the trajectory difference, we adopt the method of moments proposed in~\citet{li2019stochastic}.

We start by defining the one-step updates for the coupled trajectory and for SGD. 
\begin{definition}[One-Step Update of Coupled Trajectory]\label{def:one_step_coupled}
    The one-step update for the coupled trajectory $\Delta$ can be written as 
    \begin{align*}
		\Delta(\vy, \vm,C) &= -\eta\vg\left(\vy + C \eta^{1-\alpha}\vm\right)
    \end{align*}
\end{definition}

\begin{definition}[One-Step Update of SGD]\label{def:one_step_sgd}
    The one-step update for SGD $\tilde\Delta$ can be written as 
    \begin{align*}
		\tilde\Delta(\vy) &= -\eta\vg(\vy) 	
	\end{align*}
\end{definition}

\begin{lemma}\label{lem:moment-diff}
	Let $\Delta$ be the one-step update for the coupled trajectory and $\tilde\Delta$ be the one-step update for SGD. Let $\vx_k,\vm_k$ be the $k$-th step of an SGDM run and $\vy_k=\vx_{k}-\frac{\bar{\eta}_k\beta_k}{1-\beta_k}\vm_k$ be the coupled trajectory in our consideration. Then for any $C\in [0,\frac{\lambda_{\max}\eta_{\max}}{\lambda^2_{\min}}]$, there is function $J(T)$ independent of $\eta$, such that for all vector entries $(i,j)$,
 $$ \left|\E [\tilde\Delta_{(i)}(\vy_k) -\Delta_{(i)}(\vy_k,\vm_k,C)|\vy_k]\right| 
	\leq \eta^{2-\alpha} J(T) \E [\norm{\vm_k}|\vy_k] $$
 \begin{align*}& \left|\E [\tilde\Delta_{(i)}\tilde\Delta_{(j)}(\vy_k) -\Delta_{(i)}\Delta_{(j)}(\vy_k,\vm_k,C)|\vy_k]\right| 
	\\ & \leq  \eta^{2-\alpha} J(T) (\eta + \E [\norm{\vm_k}|\vy_k](1+\eta\norm{\vy_k})+\eta^{2-\alpha}\E [\norm{\vm_k}^2|\vy_k])
 \end{align*}
 and
	$$ \left|\E [\tilde\Delta_{(i)}(\vy_k) -\Delta_{(i)}(\vy_k,\vm_k,C)]\right| 
	\leq \eta^{\frac{3-\alpha}{2}} J(T) $$$$ \left|\E [\tilde\Delta_{(i)}\tilde\Delta_{(j)}(\vy_k) -\Delta_{(i)}\Delta_{(j)}(\vy_k,\vm_k,C)]\right| 
	\leq  \eta^{\frac{3-\alpha}{2}} J(T) $$for all $k\in [0,T/\eta]$.
\end{lemma}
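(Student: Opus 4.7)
The updates $\tilde\Delta$ and $\Delta$ differ only in the point at which the stochastic gradient is queried: $\tilde\Delta$ evaluates at $\vy_k$, while $\Delta$ evaluates at $\vx_k = \vy_k + C\eta^{1-\alpha}\vm_k$. So the natural strategy is to first condition on $(\vy_k,\vm_k)$, compute the difference using the known formulas
\[
\E[\vg(\vx)\mid\vx]=\nabla\Loss(\vx),\qquad \E[\vg_i(\vx)\vg_j(\vx)\mid\vx]=\nabla_i\Loss(\vx)\nabla_j\Loss(\vx)+\sigma^2\mSigma_{ij}(\vx),
\]
exploit Lipschitzness of $\nabla\Loss$ and of $\mSigma$ to control the displacement $C\eta^{1-\alpha}\vm_k$, and then integrate over $\vm_k$ to obtain the bound conditional on $\vy_k$ only. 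The unconditional bounds follow by one more integration combined with the second-moment bound $\E\|\vm_k\|^2\leq \eta^{\alpha-1}f(T)$ from \Cref{lem:boun-mome}.

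For the first-moment bound, after conditioning on $(\vy_k,\vm_k)$ the difference is
\[
\E[\tilde\Delta(\vy_k)-\Delta(\vy_k,\vm_k,C)\mid \vy_k,\vm_k]=-\eta\bigl[\nabla\Loss(\vy_k)-\nabla\Loss(\vy_k+C\eta^{1-\alpha}\vm_k)\bigr].
\]
Lipschitzness of $\nabla\Loss$ with constant $L$ yields an $\eta\cdot LC\eta^{1-\alpha}\|\vm_k\|=O(\eta^{2-\alpha})\|\vm_k\|$ bound componentwise. Taking the outer expectation over $\vm_k$ gives the first conditional inequality, and Cauchy-Schwarz combined with $\E\|\vm_k\|\leq \sqrt{\E\|\vm_k\|^2}=O(\eta^{(\alpha-1)/2})$ converts $\eta^{2-\alpha}\cdot\eta^{(\alpha-1)/2}=\eta^{(3-\alpha)/2}$, which is the stated unconditional rate.

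For the second moment, expand the difference via the above covariance identity into a ``gradient-product'' part and a ``noise-covariance'' part:
\[
\eta^{2}\bigl(\nabla_i\Loss\,\nabla_j\Loss\bigr)\big|^{\vy_k}_{\vx_k} \;+\; \eta^{2}\sigma^{2}\bigl(\mSigma_{ij}\bigr)\big|^{\vy_k}_{\vx_k}.
\]
For the gradient-product part, use the product rule identity $ab-a'b'=a(b-b')+(a-a')b'$ together with Lipschitzness of $\nabla\Loss$ and its linear growth $\|\nabla\Loss(\vx)\|\leq \|\nabla_0\|+L\|\vx\|$, producing terms of order $\eta^{2}\cdot C\eta^{1-\alpha}\|\vm_k\|\cdot(1+\|\vy_k\|+\eta^{1-\alpha}\|\vm_k\|)$. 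For the noise-covariance part, use that $\mSigma=\mSigma^{1/2}\mSigma^{1/2}$ is Lipschitz since $\mSigma^{1/2}$ is bounded and Lipschitz, combined with $\sigma^{2}\leq \eta^{-1}$, to get an $\eta\cdot O(\eta^{1-\alpha}\|\vm_k\|)=O(\eta^{2-\alpha}\|\vm_k\|)$ contribution. Collecting the three orders $\eta^{2-\alpha}\|\vm_k\|$, $\eta^{3-\alpha}\|\vm_k\|\|\vy_k\|$ and $\eta^{4-2\alpha}\|\vm_k\|^{2}$ produces the stated conditional bound after an $\eta^{3-\alpha}$ absorption term is kept as slack (it appears naturally from $\eta^{2-\alpha}\cdot\eta$ cross terms and ensures uniformity in $\vy_k$). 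The unconditional bound again follows from Cauchy-Schwarz and \Cref{lem:small_higher_order}, which control $\E\|\vm_k\|$, $\E\|\vy_k\|\cdot\|\vm_k\|$, and $\E\|\vm_k\|^{2}$.

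\textbf{Main obstacle.} The delicate step is the second-moment bookkeeping: one must carefully maintain the Lipschitz-plus-linear-growth structure of $\nabla\Loss$ and $\mSigma$ so that the emerging $\|\vy_k\|$ and $\|\vm_k\|^{2}$ factors match the form required by $J(T)$, and in particular verify that the amplification $\sigma^{2}\leq \eta^{-1}$ is exactly cancelled by the $\eta^{2}$ prefactor, leaving the $\eta^{2-\alpha}$ scaling intact. Once this is set up, everything else is routine applications of Lipschitz continuity, Young's inequality, and the moment bounds on $\vm_k$ and $\vy_k$ proved earlier.
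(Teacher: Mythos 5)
Your proposal is correct and follows essentially the same route as the paper's proof: condition on $(\vy_k,\vm_k)$, use Lipschitzness of $\nabla\Loss$ and $\mSigma$ to control the displacement $C\eta^{1-\alpha}\vm_k$, observe that $\eta^2\sigma^2\leq\eta$ cancels the noise amplification, and convert to unconditional bounds via $\E\|\vm_k\|\leq\sqrt{\E\|\vm_k\|^2}=O(\eta^{(\alpha-1)/2})$ from \Cref{lem:boun-mome}. The bookkeeping of the three term orders $\eta^{2-\alpha}\|\vm_k\|$, $\eta^{3-\alpha}\|\vm_k\|\|\vy_k\|$, and $\eta^{4-2\alpha}\|\vm_k\|^2$ matches the paper's.
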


\begin{proof}
Recall that we use $L$ to denote the Lipschitz constant of $\nabla\Loss$ and $L_\Sigma$ to denote the Lipschitz constant of the $\mSigma$ in the Frobenius norm. We can write
\begin{align*}
	\left|\E[\tilde\Delta_{(i)}(\vy_k) -\Delta_{(i)}(\vy_k,\vm_k,C)|\vy_k]\right| &= \eta \left| \E[\partial_{(i)} \Loss(\vy_k) -  \partial_{(i)} \Loss\left(\vy_k + C\eta^{1-\alpha} \vm_k\right)|\vy_k] \right| \\
	&\leq LC\eta^{2-\alpha} \E[\|\vm_k\||\vy_k]
\end{align*}
where the second step uses the Lipschitzness of the loss gradient. The proof can be completed by noting $\E\|\vm_k\| \leq \sqrt{\E\|\vm_k\|^2}= O(\eta^{(\alpha-1)/2})$ (\Cref{lem:boun-mome}).
\begin{align*}
	& \left|\E[\Delta_{(i)}\Delta_{(j)}(\vy_k) -\tilde\Delta_{(i)}\tilde\Delta_{(j)}(\vy_k,\vm_k,C)|\vy_k]\right|\\ = &\E[\eta^2 \partial_i\Loss\partial_j \Loss(\vy_k) + \eta^2\sigma^2\mSigma_{ij}(\vy_k)|\vy_k]   - \E\left[\eta^2\partial_i\Loss\partial_j \Loss\left(\vy_k + C\eta^{1-\alpha}\vm_k\right) + \eta^2\sigma^2 \mSigma_{ij}(\vy_k + C\eta^{1-\alpha}\vm_k)|\vy_k\right]\\
 \leq & \eta^{2-\alpha} C(L_\Sigma\E[\|\vm_k\||\vy_k] +\eta L(\norm{\nabla\Loss(0)}+L\norm{\vy_k}\E[\|\vm_k\||\vy_k]+LC\eta^{1-\alpha}\E[\|\vm_k\|^2|\vy_k])) .
\end{align*}
Again by \Cref{lem:boun-mome}, $\E\|\vm_k\| = O(\eta^{(\alpha-1)/2})$, so we obtain the desired result.
\end{proof}

\begin{lemma}
    For any $m\geq 1$, $C\in [0,\frac{\lambda_{\max}\eta_{\max}}{\lambda^2_{\min}}]$, there is function $g$ independent of $\eta$ and $k$, such that
	\begin{align*}
            \E[\|\Delta(\vy_k, \vm_k, C)\|^{2m}] \leq \eta^{m} g(m,T)\\
            \E[\|\tilde\Delta(\vy_k)\|^{2m}] \leq \eta^{m} g(m,T)\\
	\end{align*}
    for all $k\leq T/\eta$.\label{lem:bounded_coupled_updates}
\end{lemma}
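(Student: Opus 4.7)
My plan is to reduce both bounds to the moment estimates already assembled in \Cref{lem:small_higher_order} (plus its close relative on $(1+\|\vy_k\|^2+\eta^{2-2\alpha}\|\vm_k\|^2)^m$), using only the structural properties of the NGOS from \Cref{assume:ngos} and a single application of the triangle/Jensen inequality.

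First I handle $\tilde\Delta$. By definition $\|\tilde\Delta(\vy_k)\|^{2m}=\eta^{2m}\|\vg(\vy_k)\|^{2m}$, and since $\vg(\vy_k)=\nabla\Loss(\vy_k)+\sigma \vv$ with $\vv\sim\DatZ_{\sigma}(\vy_k)$, I bound $\|\vg(\vy_k)\|^{2m}\le 2^{2m-1}(\|\nabla\Loss(\vy_k)\|^{2m}+\sigma^{2m}\|\vv\|^{2m})$. The Lipschitzness of $\nabla\Loss$ gives $\|\nabla\Loss(\vy_k)\|\le\|\nabla\Loss(0)\|+L\|\vy_k\|$, and the bounded-moments assumption yields $\E[\|\vv\|^{2m}\mid \vy_k]\le C_{2m}^{2m}(1+\|\vy_k\|)^{2m}$. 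Combining these with $\sigma^2\le\eta^{-1}$ (so $\eta^{2m}\sigma^{2m}\le\eta^m$) gives $\E\|\tilde\Delta(\vy_k)\|^{2m}\le K(m)\,\eta^m\,\E(1+\|\vy_k\|^2)^{m}$, and \Cref{lem:small_higher_order} finishes the job.

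For $\Delta(\vy_k,\vm_k,C)=-\eta\,\vg(\vy_k+C\eta^{1-\alpha}\vm_k)$ the argument is identical, only the point at which the oracle is queried changes. Writing $\vu_k:=\vy_k+C\eta^{1-\alpha}\vm_k$, the same decomposition gives $\E\|\Delta(\vy_k,\vm_k,C)\|^{2m}\le K(m)\,\eta^m\,\E(1+\|\vu_k\|^2)^{m}$. A convexity inequality yields $\|\vu_k\|^{2m}\le 2^{2m-1}\bigl(\|\vy_k\|^{2m}+C^{2m}\eta^{(2-2\alpha)m}\|\vm_k\|^{2m}\bigr)$, so $\E(1+\|\vu_k\|^2)^{m}$ is controlled by $\E(1+\|\vy_k\|^2+\eta^{2-2\alpha}\|\vm_k\|^2)^{m}$ up to a constant depending on $m$ and the uniform bound $C\le \lambda_{\max}\eta_{\max}/\lambda_{\min}^2$. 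This last quantity was shown to be bounded by some $g(m,T)$ in the proof of \Cref{lem:small_higher_order} (indeed that lemma's induction was carried out on precisely this combined process).

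The whole step is essentially bookkeeping: there is no real obstacle because every moment estimate I need was already produced earlier. The one item to be careful about is matching the scaling $\sigma\le\eta^{-1/2}$ with the loss of one factor of $\eta^{m}$ from the noise term so that the net rate is $\eta^m$ rather than $\eta^{2m}$; this is what forces the bound to be $\eta^m g(m,T)$ and not smaller, and it is also why $g$ may depend on $m$ exponentially through the uniform moment bounds.
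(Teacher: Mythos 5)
Your proposal is correct and follows essentially the same route as the paper: decompose the oracle output into $\nabla\Loss$ plus $\sigma\vv$, use Lipschitzness of $\nabla\Loss$ and the bounded-moment assumption on the noise, absorb $\sigma^{2m}\eta^{2m}\le\eta^{m}$, and close with the moment bounds on $\vy_k$, $\vm_k$ (and the combined process $1+\|\vy_k\|^2+\eta^{2-2\alpha}\|\vm_k\|^2$) from \Cref{lem:small_higher_order}. No gaps.
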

\begin{proof}
	The second line directly follows from  \Cref{lem:small_higher_order}. For the first line,
 \begin{align*}
     \E[\|\Delta(\vy_k, \vm_k, C)\|^{2m}] &  =\eta^{2m}\E\norm{\nabla\Loss(\vy_k+C\eta^{1-\alpha}\vm_k)+\sigma\vv_k}^{2m}\\
& \leq \eta^{2m}2^{2m} (\E[\norm{\nabla\Loss(0)}+L\norm{\vy_k}+LC\eta^{1-\alpha}\norm{\vm_k}])^{2m} + 2^{2m}\eta^{m}\E\norm{\vv_k}^{2m} \\
& = O(\eta^{m})
 \end{align*}
 by \Cref{lem:small_higher_order}.
\end{proof}

\subsubsection{Main Result}
Below we state an intermediate theorem for proving the weak approximation.
\begin{theorem}
	Let $T>0$ and $N=\lfloor T/\eta\rfloor$.  Let ($\vx_k$,$\vm_k$) be the states of an SGDM run with the coupled trajectory $\vy_k$ defined in \Cref{equ:coupled-traj}, and let $\hat{\vz}_k$ be an SGD run with start $\hat{\vz}_0=\vy_0$. Then $\vy_k$ is a weak approximation of $\hat{\vz}_k$.	
\label{thm:one_step_to_many}
\end{theorem}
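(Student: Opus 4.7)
The proof will follow the standard ``one-step moment'' argument for weak approximations, analogous to the framework of \citet{li2019stochastic} that underlies \Cref{assume:ngos}. The idea is to introduce the backward semigroup associated with SGD and telescope the global error into a sum of one-step discrepancies, each of which is controlled by \Cref{lem:moment-diff}. Concretely, fix a test function $h$ with polynomial-growing derivatives and set $N = \lfloor T/\eta \rfloor$. Define
\begin{equation*}
u_k(\vx) \;=\; \E\bigl[h(\hat{\vz}_N)\,\big|\,\hat{\vz}_k = \vx\bigr], \qquad k = 0,1,\ldots,N,
\end{equation*}
so that $u_N = h$ and $u_k(\vx) = \E_{\xi}[u_{k+1}(\vx + \tilde\Delta(\vx;\xi))]$, where $\xi$ is a fresh NGOS draw independent of everything else. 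Observe that $\E h(\hat{\vz}_N) = u_0(\vy_0)$ since $\hat{\vz}_0 = \vy_0$.

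\textbf{Telescoping.} Writing
\begin{equation*}
\E h(\vy_N) - \E h(\hat{\vz}_N) \;=\; \sum_{k=0}^{N-1} \Bigl(\E[u_{k+1}(\vy_{k+1})] - \E[u_k(\vy_k)]\Bigr),
\end{equation*}
and using the martingale identity $\E[u_k(\vy_k)] = \E[u_{k+1}(\vy_k + \tilde\Delta(\vy_k;\xi_k))]$ with $\xi_k$ independent of $(\vy_k,\vm_k)$, each summand becomes
\begin{equation*}
\E\bigl[u_{k+1}(\vy_k + \Delta(\vy_k,\vm_k,C_k)) - u_{k+1}(\vy_k + \tilde\Delta(\vy_k;\xi_k))\bigr],
\end{equation*}
where $C_k = \bar\eta_k \beta_k / ((1-\beta_k)\eta^{1-\alpha})$ is uniformly bounded (by the standing control on the schedule). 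Taylor-expanding $u_{k+1}$ at $\vy_k$ to third order and taking conditional expectation on $(\vy_k,\vm_k)$ reduces the summand to the inner product of $\nabla u_{k+1}(\vy_k)$ with the first-moment discrepancy, plus the inner product of $\nabla^2 u_{k+1}(\vy_k)$ with the second-moment discrepancy, plus a remainder of order $\|\Delta\|^3 + \|\tilde\Delta\|^3$.

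\textbf{Applying \Cref{lem:moment-diff}.} For the first- and second-moment discrepancies I invoke \Cref{lem:moment-diff} together with Cauchy--Schwarz: the conditional bounds there involve $\E[\|\vm_k\| \mid \vy_k]$ and $\E[\|\vm_k\|^2 \mid \vy_k]$, and by \Cref{lem:boun-mome} these satisfy $\E\|\vm_k\|^2 = O(\eta^{\alpha-1})$. Thus, after pairing with the polynomial-growth bound on $\nabla u_{k+1}$ and $\nabla^2 u_{k+1}$ (and the moment bounds of \Cref{lem:small_higher_order}), the expected first- and second-order contributions per step are $O(\eta^{(3-\alpha)/2})$. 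The third-order remainder is $O(\eta^{3/2})$ via \Cref{lem:bounded_coupled_updates} since $\E\|\Delta\|^3 \leq \sqrt{\E\|\Delta\|^6} = O(\eta^{3/2})$. Summing $N = O(\eta^{-1})$ terms and taking the worst rate yields the desired bound $O(\eta^{(1-\alpha)/2})$, matching \Cref{def:weak_approx} with $\gamma = (1-\alpha)/2$.

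\textbf{Main obstacle.} The crux is establishing the required regularity of the backward iterates $u_k$: I need $u_k \in \cC^3$ with all partial derivatives up to third order growing polynomially in $\vx$, with the polynomial growth constants independent of $k \leq N$ and of $\eta$. This is the standard Kolmogorov-type regularity for the SGD semigroup, and it is inherited from \Cref{assume:ngos} (smoothness and polynomial-growing derivatives of $\nabla\cL$ and $\mSigma^{1/2}$) together with the bounded-moment condition. Concretely, one differentiates the recursion $u_k(\vx) = \E[u_{k+1}(\vx + \tilde\Delta(\vx;\xi))]$ under the expectation, tracks how derivatives propagate backward using Lipschitzness of $\vg(\cdot)$, and applies a discrete Gr\"onwall argument to show growth constants depend only on $T$. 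This propagation-of-smoothness lemma, which I can borrow almost verbatim from \citet{li2019stochastic}, is the technically heaviest piece; once it is in hand, the telescoping above closes the proof.
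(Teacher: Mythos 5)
Your proposal is correct and follows essentially the same route as the paper: the backward functions $u_k(\vx)=\E[h(\hat\vz_N)\mid\hat\vz_k=\vx]$ are exactly the paper's $u(\cdot,k,N)$ built from hybrid trajectories, the telescoping is the same decomposition, the per-step bound is the paper's Lemma~\ref{lem:one_step} (Taylor to third order plus Lemma~\ref{lem:moment-diff} and Lemma~\ref{lem:bounded_coupled_updates}), and the uniform $\cC^3$ polynomial-growth regularity of $u_k$ is established by the same differentiate-the-recursion-plus-Gr\"onwall argument borrowed from \citet{li2019stochastic}. Your normalization $C_k=\bar\eta_k\beta_k/((1-\beta_k)\eta^{1-\alpha})$ is in fact the one consistent with Definition~\ref{def:one_step_coupled}, so no gap there.
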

Notice that the only difference between \Cref{thm:one_step_to_many} and our final result \Cref{thm:weak-appro-main} is that $\vz_k$ is an SGD process starting from $\vx_0$ while $\hat{\vz}_k$ is an SGD process starting from $\vy_0$. 

Before proving \Cref{thm:one_step_to_many}, we introduce the following lemma, which is an analog to Lemma 27 of \citet{li2019stochastic}, Lemma C.2 of \citet{li2021validity}, and Lemma B.6 of \citet{malladi2022sdes}.
It shows that if the update rules for the two trajectories are close in all of their moments, then the test function value will also not change much after a single update from the same initial condition. Let $G^{k}$ be the set of functions that are $k$-times continuously differentiable and all its derivatives up to and including order $k$ has polynomial growth, and let $G=G^0$ be the set of functions with polynomial growth. 
\begin{lemma}
	Suppose $u\in G^{3}$. Then, there exists a constant $K_u(T)$ independent of $\eta$, such that for all $k\leq T/\eta$,
	$$\left|\E[u(\vy_k + \Delta(\vy_k,\vm_k,\frac{\bar{\eta}_k\beta_k}{1-\beta_k}))] - \E[u(\vy_k + \tilde\Delta(\vy_k))]\right| \leq K_u(T)\eta^{(3-\alpha)/2} $$
	\label{lem:one_step}
\end{lemma}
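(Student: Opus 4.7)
The plan is to Taylor expand $u$ around $\vy_k$ to second order and then use the one-step moment-matching estimates in \Cref{lem:moment-diff} together with the moment bounds in \Cref{lem:small_higher_order,lem:bounded_coupled_updates}. Concretely, writing $\Delta_k = \Delta(\vy_k,\vm_k,\tfrac{\bar{\eta}_k\beta_k}{1-\beta_k})$ and $\tilde\Delta_k = \tilde\Delta(\vy_k)$, I would use
\begin{align*}
u(\vy_k+\delta) = u(\vy_k) + \nabla u(\vy_k)^\top \delta + \tfrac12 \delta^\top \nabla^2 u(\vy_k)\delta + R(\vy_k,\delta),
\end{align*}
where $|R(\vy_k,\delta)| \le C(1+\norm{\vy_k}^p)\norm{\delta}^3$ for some $p,C$ coming from the polynomial growth of the third derivatives of $u\in G^3$. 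The difference of expectations splits into a first-order mean-matching term, a second-order covariance-matching term, and a cubic remainder.

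First I would handle the linear term. Conditioning on $\vy_k$ and applying the first estimate of \Cref{lem:moment-diff} gives $|\E[\nabla u(\vy_k)^\top(\Delta_k-\tilde\Delta_k)\mid \vy_k]| \le d\,\eta^{2-\alpha} J(T)\norm{\nabla u(\vy_k)}\,\E[\norm{\vm_k}\mid\vy_k]$; taking outer expectation and applying Cauchy--Schwarz together with $\E\norm{\nabla u(\vy_k)}^2 = O(1)$ (polynomial growth of $\nabla u$ plus \Cref{lem:small_higher_order}) and $\E\norm{\vm_k}^2 = O(\eta^{\alpha-1})$ (\Cref{lem:boun-mome}) yields a bound of order $\eta^{2-\alpha}\cdot \eta^{(\alpha-1)/2} = \eta^{(3-\alpha)/2}$. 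The quadratic term is treated analogously: the conditional bound from \Cref{lem:moment-diff} consists of an $\eta^{3-\alpha}$ contribution, an $\eta^{2-\alpha}\E[\norm{\vm_k}\mid\vy_k]$ contribution, an $\eta^{3-\alpha}\E[\norm{\vm_k}\mid\vy_k]\norm{\vy_k}$ contribution, and an $\eta^{4-2\alpha}\E[\norm{\vm_k}^2\mid\vy_k]$ contribution. Multiplying by $\nabla^2 u(\vy_k)$ (polynomial growth), taking expectation and applying Cauchy--Schwarz with the moment bounds of \Cref{lem:small_higher_order,lem:boun-mome}, each term is at most $O(\eta^{(3-\alpha)/2})$; the dominant one is again the $\norm{\vm_k}$ term.

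For the cubic remainder, I would bound
\begin{align*}
|\E R(\vy_k,\Delta_k)| + |\E R(\vy_k,\tilde\Delta_k)| \le C\,\E\!\left[(1+\norm{\vy_k}^p)\bigl(\norm{\Delta_k}^3+\norm{\tilde\Delta_k}^3\bigr)\right],
\end{align*}
and apply Cauchy--Schwarz together with \Cref{lem:bounded_coupled_updates} (which gives $\E\norm{\Delta_k}^6,\E\norm{\tilde\Delta_k}^6 = O(\eta^3)$) and the moment bounds for $\vy_k$ from \Cref{lem:small_higher_order}, obtaining $O(\eta^{3/2})$. Since $\alpha \ge 0$ and $\eta<1$, we have $\eta^{3/2}\le \eta^{(3-\alpha)/2}$, so the remainder is absorbed. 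Summing the three contributions gives the claim with $K_u(T)$ built from the constants $J(T)$, the moment bounds, and the polynomial-growth constants of $u$.

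I expect the only subtlety to be bookkeeping: carefully pushing the conditional moment estimates through Cauchy--Schwarz so that the $\eta^{(\alpha-1)/2}$ factor from $\E\norm{\vm_k}$ (rather than a worse bound from H\"older with higher moments) pairs with the $\eta^{2-\alpha}$ from \Cref{lem:moment-diff} to give exactly $\eta^{(3-\alpha)/2}$, and ensuring that the polynomial factor $1+\norm{\vy_k}^p$ that appears through $u\in G^3$ can always be bounded by \Cref{lem:small_higher_order} uniformly over $k\le T/\eta$. There is no additional dynamical argument beyond this; the lemma is a purely local one-step statement whose scaling is dictated by the dominant $\eta^{(3-\alpha)/2}$ coming from the linear term.
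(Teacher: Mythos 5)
Your proposal is correct and follows essentially the same route as the paper's proof: a second-order Taylor expansion with Lagrange remainder, the first- and second-moment matching bounds of \Cref{lem:moment-diff} for the linear and quadratic terms, the moment bounds of \Cref{lem:small_higher_order,lem:boun-mome} to convert the conditional $\E[\norm{\vm_k}\mid\vy_k]$ factors into the $\eta^{(\alpha-1)/2}$ rate, and \Cref{lem:bounded_coupled_updates} plus Cauchy--Schwarz for the $O(\eta^{3/2})$ cubic remainder. The only cosmetic difference is that the paper bounds the mixed term $\E[(1+\norm{\vy_k}^{k_2})\norm{\vm_k}]$ via the joint moment estimate in \Cref{lem:small_higher_order} rather than splitting it by Cauchy--Schwarz, and it evaluates the third-derivative bound at the intermediate point $\vy_k+a\Delta$ (which your remainder bound should also do, though the fix is routine).
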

\begin{proof}

Since $u \in G^{3}$, we can find $K_0(\vx)=k_1(1+\norm{\vx}^{k_2}) \in G$ such that $u(\vx)$ is bounded by $K_0(\vx)$ and so are all the partial derivatives of $u$ up to order $3$.
For notation simplicity let $\Delta=\Delta(\vy_k,\vm_k,\frac{\bar{\eta}_k\beta_k}{1-\beta_k})$ and $\tilde\Delta=\tilde\Delta(\vy_k)$. By Taylor's Theorem with Lagrange Remainder, we have
\begin{align*}
    u(\vy_k + \Delta) - u(\vy_k + \tilde{\Delta})
    &= \underbrace{\sum_{1\leq i\leq d} \frac{\partial u(\vy_k)}{\partial x_i} \left(\Delta_{i} - \tilde\Delta_{i}\right)}_{B_1} \\
    &+ \underbrace{\frac{1}{2} \sum_{1\leq i_1, i_2\leq d}\frac{\partial^2 u(\vy_k)}{\partial x_{i_1}\partial x_{i_2}} \left(\Delta_{i_1}\Delta_{i_2} - \tilde\Delta_{i_1}\tilde\Delta_{i_2}\right)}_{B_2} \\
    &+ R_{3} - \tilde{R}_{3}
\end{align*}The remainders $R_{3}$, $\tilde{R}_{3}$ are
\begin{align*}
    R_{3} &:=
    \frac{1}{6} \sum_{1\leq i_1, i_2, i_3\leq d}\frac{\partial^3 u(\vy_k + a\Delta)}{\partial x_{i_1}\partial x_{i_2}\partial x_{i_3}} \Delta_{i_1}\Delta_{i_2}\Delta_{i_3}, \\
     \tilde{R}_{3} &:=
    \frac{1}{6} \sum_{1\leq i_1, i_2, i_3\leq d}\frac{\partial^3 u(\vy_k + \tilde{a}\tilde\Delta)}{\partial x_{i_1}\partial x_{i_2}\partial x_{i_3}} \tilde\Delta_{i_1}\tilde\Delta_{i_2}\tilde\Delta_{i_3}.
\end{align*}
for some $a, \tilde{a} \in [0, 1]$. We know from \Cref{lem:moment-diff} that 
\begin{align*}
    \E B_1 & = \E\left[\sum_{1\leq i\leq d} \frac{\partial u(\vy_k)}{\partial x_i} \E[\Delta_{i} - \tilde\Delta_{i}| \vy_k]\right]\\
    & \leq \E\left[\sum_{1\leq i\leq d} \frac{\partial u(\vy_k)}{\partial x_i} \eta^{2-\alpha} J(T) \E [\norm{\vm_k}|\vy_k]\right]\\
    & \leq \E (\eta^{2-\alpha}d k_1J(T)(\norm{\vm_k}+\norm{\vm_k}\norm{\vy_k}^{k_2}))\\
    & = O(\eta^{\frac{3-\alpha}{2}}).
\end{align*}
The last step is due to \Cref{lem:small_higher_order}. Also,
\begin{align*}
    \E B_2 & = \E\left[\frac{1}{2} \sum_{1\leq i_1, i_2\leq d}\frac{\partial^2 u(\vy_k)}{\partial x_{i_1}\partial x_{i_2}} \left(\E[\Delta_{i_1}\Delta_{i_2} - \tilde\Delta_{i_1}\tilde\Delta_{i_2}|\vy_k]\right)\right]\\
    & \leq \E (d^2 k_1J(T)\eta^{2-\alpha}(1+\norm{\vy_k}^{k_2})(\eta + \norm{\vm_k}(1+\eta\norm{\vy_k})+\eta^{2-\alpha} \norm{\vm_k}^2)))\\
    & = O(\eta^{\frac{3-\alpha}{2}}).
\end{align*}

For $R_{3}$, by Cauchy-Schwarz inequality we have
\begin{align*}
    \E[R_{3}] 
    &\le \frac{1}{6}\left(\sum_{i_j}
        \E\abs{\frac{\partial^3 u(\vy_k + a\Delta)}{\partial x_{i_1}\partial x_{i_2}\partial x_{i_3}} }^2
    \right)^{1/2}
    \cdot
    \left(\E\|\Delta\|^{6}\right)^{1/2} \\
    &\le \left(d^3 \E K^2_0(\vy_k + a\Delta)\right)^{1/2}\cdot 
    \eta^{3/2}g^{1/2}(m,T)
\end{align*}
by \Cref{lem:bounded_coupled_updates}.
For $K_0^2(\vy_k + a\Delta)$, we can bound its expectation by
\begin{align*}
    \E[K^2_0(\vy_k + a\Delta)]
    &\le k_1^2 \E(1+\norm{\vy_k + a\Delta}^{k_2})^2 \\
    &\le 2k_1^2 \left( 1 + 2^{2k_2-1} \E[\norm{\vy_k}^{2k_2} + \E\norm{\Delta}^{2k_2}] \right) \\
    &=O(1).
\end{align*}
Therefore $\E[R_3] = O(\eta^{3/2})$. An analogous argument bounds $\E[\tilde{R}_3] = O(\eta^{3/2})$ as well. Thus, the entire Taylor expansion and remainders are bounded as desired.
	
\end{proof}

\begin{proof}[Proof for \Cref{thm:one_step_to_many}]
    
	Let $\hat\vy_{j,k}$ be the trajectory defined by following the coupled trajectory for $j$ steps and then do $k-j$ steps SGD updates.
	So, $\hat\vy_{j,j+1} = \vy_j + \tilde\Delta(\vy_j)$ and $\hat\vy_{j+1,j+1} = \vy_j + \Delta(\vy_j,\vm_j,\frac{\bar{\eta}_j\beta_j}{1-\beta_j})$.
	Let $h$ be the test function with at most polynomial growth.
	Then, we can write
	\begin{equation}
		|\E[h(\vz_k) - \E[h(\vy_k)]| = \sum_{j=0}^{k-1} (\E[h(\hat\vy_{j+1,k})] - \E[h(\hat\vy_{j,k})])
	\end{equation}
	Define $u(\vy, s, t) = \E_{\hat\vy\sim\mathcal{P}(\vy, s, t)} [h(\hat\vy_t)]$, where $\mathcal{P}(\vy, s, t)$ is the distribution induced by starting from $\vy$ at time $s$ and following the SGD updates until time $t$.
	Then,
	\begin{equation}
		|\E[h(\vz_k) - \E[h(\vy_k)]| \leq \sum_{j=0}^{k-1} |\E[u(\hat\vy_{j+1,j+1}, j+1, k)] - \E[u(\hat\vy_{j,j+1}, j+1, k)]|
	\end{equation}
	Define $u_{j+1} = u(\vy, j+1, k)$.
	Then,
	\begin{equation}
		|\E[h(\vz_k) - \E[h(\vy_k)]| \leq \sum_{j=0}^{k-1} | \E[u_{j+1}(\vy_j + \tilde\Delta(\vy_j))] - \E[u_{j+1}(\vy_j + \Delta(\vy_j,\vm_j,\frac{\bar{\eta}_j\beta_j}{1-\beta_j}))]|
	\end{equation}
        We will show that $u_{j+1}\in G^3$, so by \Cref{lem:one_step},
	\begin{equation}
		| \E[u_{j+1}(\vy_j + \tilde\Delta(\vy_j))] - \E[u_{j+1}(\vy_j + \Delta(\vy_j,\vm_j,\frac{\bar{\eta}_j\beta_j}{1-\beta_j}))]| \leq K_{u_{j+1}}(T)\eta^{(3-\alpha)/2}
	\end{equation}
	Then,
	\begin{align*}
		|\E[h(\vz_k) - \E[h(\vy_k)]| &\leq \sum_{j=0}^{k-1} K_{u_{j+1}}(T)\eta^{(3-\alpha)/2}\\
		&\leq k \sup_{j} K_{u_{j}}(T)\eta^{(3-\alpha)/2}\\
        &\leq T \sup_{j} K_{u_{j}}(T) \eta^{(1-\alpha)/2}.
	\end{align*}
    We will show that we can choose $K_{u_{j}}$ so that $\sup_{j}K_{u_{j}}(T)$ is bounded by some universal constant $K(T)$, and by definition $\vz_k$ and $\vy_k$ are order-$\gamma$ weak approximations of each other.

    Finally, we show that $u_{j+1}\in G^3$, and there is an universal function $K_u\in G$ that $\norm{\partial^m u_{j+1}(\vx)}\leq K_u(\vx)$ for all $j\leq T/\eta$ and $m=0,1,2,3$. Then from the proof of \Cref{lem:one_step} we know that there is an universal constant $K(T)$ that upper bounds all $K_{u_{j}}(T)$.  
    The proof follows the same steps as that in \citet{li2019stochastic} for Proposition 25, except that $u_j$ is defined for discrete SGD updates instead of continuous SDE updates. 

    Notice that $u_{j+1}(\vx) = \E_{\vy\sim \mathcal{P}(\vx,j+1,k)}h(\vy)$ is the expected $h$ value after running SGD from $\vx$ for $k-j-1$ steps. Let $\{\vs_i\}$ be such a process with $\vs_{j+1}=\vx$. 

    First we show that $\vs_i$ have bounded moments, and the bound is universal for all $i\leq T/\eta$ with polynomial growth with respect to the initial point $\vx$. For any $m$, we know
    \begin{align*}
        \E (1+\norm{\vs_{i+1}}^2)^m|\vs_i & =  \E (1+\norm{\vs_{i}-\bar{\eta}_i (\nabla\Loss(\vs_i) + \sigma \vv_i)}^2)^m|\vs_i\\
        & =  \E (1+\norm{\vs_{i}}^2 + \bar{\eta}^2_i \norm{\nabla\Loss(\vs_i) + \sigma \vv_i}^2 - 2\bar{\eta}_i\vs_{i}^\top (\nabla\Loss(\vs_i) + \sigma \vv_i) )^m|\vs_i
    \end{align*}
    From the Lipschitzness we know $\norm{\nabla\Loss(\vs_i)}\leq \norm{\nabla\Loss(0)} + L\norm{\vs_i}$ and $\norm{\mSigma(\vs_i)}_F\leq \norm{\mSigma(0)}_F + L_\Sigma\norm{\vs_i}$. Let $\delta_i =  \bar{\eta}^2_i \norm{\nabla\Loss(\vs_i) + \sigma \vv_i}^2 - 2\bar{\eta}_i\vs_{i}^\top (\nabla\Loss(\vs_i) + \sigma \vv_i) $, so there is constant $C$ such that $\E \delta_i|\vs_i \leq \eta C (1+\norm{\vs_i}^2)$, $\E \delta_i^2|\vs_i\leq \eta C (1+\norm{\vs_i}^2)^2$ and $\E \delta_i^m\leq \eta C (1+\norm{\vs_i}^2)^m$. Then, 
    similar to the SGDM case,
      \begin{align*}
        \E (1+\norm{\vs_{i+1}}^2)^m|\vs_i & =\E (1+\norm{\vs_{i}}^2 + \delta_i)^m|\vs_i\\
        & \leq  (1+\norm{\vs_{i}}^2)^m + m \E \delta_i(1+\norm{\vs_{i}}^2)^{m-1}|\vs_i\\
        & + 2^{m-1} \E \delta_i^2 ((1+\norm{\vs_{i}}^2)^{m-2} + |\delta_i|^{m-2})|\vs_i\\
        & \leq (1+\norm{\vs_{i}}^2)^m(1+\eta C (m+2^m))
    \end{align*}
    So $\E (1+\norm{\vs_{i+1}}^2)^m \leq \E (1+\norm{\vs_{j+1}}^2)^m(1+\eta C (m+2^m))^{i-j} \leq (1+\norm{\vx}^2)^m e^{TC(m+2^m)}$ for all $i,j\leq T/\eta$. Thus we proved that the moments of $\vs_i$ are bounded by a universal polynomial of the initial point $\vx$. Hence, as $h$ is bounded by a polynomial, $u_{j+1}(\vx) = \E_{\vs_k|\vs_{j+1}=\vx}h(\vs_k)$ is also uniformly bounded by a polynomial of $\vx$ independent of $\eta$.

    Now we consider the derivatives of $u_{j+1}$. We use the notion of derivatives of a random variable in the $\mathcal{L}^2$ sense as in \citet{li2019stochastic}, e.g. $\partial_x \vv_i$ is defined in the sense that $\E \norm{\partial_x \vv_i}^2 = \partial_x\tr\mSigma(\vs_i)$. Taking derivatives,
    \begin{align*}
        \E (1+\norm{\partial_x\vs_{i+1}}^2)^m|\vs_i & =  \E (1+\norm{\partial_x\vs_{i}-\bar{\eta}_i (\nabla^2\Loss(\vs_i)\partial_x\vs_i + \sigma \partial_x\vv_i)}^2)^m|\vs_i\\
        & =  \E (1+\norm{\partial_x\vs_{i}}^2 + \xi_i)^m|\vs_i
    \end{align*}
    where $\xi_i = \bar{\eta}^2_i \norm{\nabla^2\Loss(\vs_i)\partial_x\vs_i + \sigma \partial_x\vv_i}^2 - 2\bar{\eta}_i\vs_{i}^\top (\nabla^2\Loss(\vs_i)\partial_x\vs_i + \sigma \partial_x\vv_i)$. Lipschitzness dictates that $\norm{\nabla^2\Loss(\vs_i)}\leq L$ and $\norm{\nabla\mSigma(\vs_i)}\leq L_{\Sigma}$, so again there is constant $D$ such that $\E \xi_i|\vs_i \leq \eta D (1+\norm{\partial_x\vs_i}^2)$, $\E \xi_i^2|\vs_i\leq \eta D (1+\norm{\partial_x\vs_i}^2)^2$ and $\E \xi_i^m\leq \eta D (1+\norm{\partial_x\vs_i}^2)^m$.
    Similarly,
    \begin{align*}
        \E (1+\norm{\partial_x\vs_{i+1}}^2)^m & =\E (1+\norm{\partial_x\vs_{i}}^2 + \xi_i)^m\\
        & \leq  \E (1+\norm{\partial_x\vs_{i}}^2)^m + m\xi_i(1+\norm{\partial_x\vs_{i}}^2)^{m-1}\\
        & + 2^{m-1}\xi_i^2 ((1+\norm{\partial_x\vs_{i}}^2)^{m-2} + |\xi_i|^{m-2})\\
        & \leq (1+\norm{\partial_x\vs_{i}}^2)^m(1+\eta D (m+2^m))
    \end{align*}
    So $\E (1+\norm{\partial_x\vs_{i+1}}^2)^m \leq \E (1+\norm{\partial_x\vs_{j+1}}^2)^m(1+\eta D (m+2^m))^{i-j} \leq (1+\norm{\partial_x\vx}^2)^m e^{TD(m+2^m)} = (1+d)^m e^{TD(m+2^m)}$ for all $i,j\leq T/\eta$. Thus we proved that the moments of $\partial_x\vs_i$ are also bounded by a universal polynomial of the initial point $\vx$. Hence, as $h$ has polynomial-growing derivatives, $\norm{\partial_{\vx} u_{j+1}(\vx)} = \E_{\vs_k|\vs_{j+1}=\vx}\nabla h(\vs_k)^\top\partial_\vx \vs_k\leq ([\E \norm{\nabla h(\vs_k)}^2][\E \norm{\partial_\vx \vs_k}^2] )^{1/2}$ is also uniformly bounded by a polynomial of $\vx$ independent of $\eta$.

    Notice that the higher order derivatives of $\vs_i$ have similar forms of iterates, e.g. $$\partial^b\vs_{i+1} = \partial^b_x\vs_{i} + \phi^b_i-\bar{\eta}_i (\nabla^2\Loss(\vs_i)\partial^b_x\vs_i + \sigma \partial^b_x\vv_i)$$
    where $\phi^b_i$ only relates with $\nabla^\beta \Loss$, $\nabla^\beta \mSigma$ and $\partial^\beta_x \vs_i$ for $\beta<b$. Notice that by assuming $\Loss$ and $\mSigma$ have polynomial-growing derivates up to and including the third order (\Cref{assume:ngos}), by induction we can prove that the process $\phi^b_i$ is universally bounded as a polynomial of derivates of order $<b$, so similarly by the Gr\"{o}nwall inequality we can obtain an universal bound for $\partial^\beta u_{j+1}$ for $\beta=0,1,2,3$. As the argument here is identical to that in the proof for Proposition 25, \citet{li2019stochastic},  we omit the fine details.
\end{proof}
\begin{proof}[Proof for \Cref{thm:weak-appro-main}]

From \Cref{thm:coupled_weak_approx}, we know $\vx_k$ and $\vy_k$ are  order-$(1-\alpha)/2$-weak approximations of each other, and from \Cref{thm:one_step_to_many} we know $\hat{\vz}_k$ and $\vy_k$ are  order-$(1-\alpha)/2$-weak approximations of each other. Furthermore, in the above proof we can see that $|\E [h(\vz_k)-h(\hat{\vz}_k)]| = |\E[u_{0}(\vy_0)-u_{0}(\vx_0)]| = |\E[u_{0}(\vx_{0}-\frac{\bar{\eta}_0\beta_0}{1-\beta_0}\vm_0)-u_{0}(\vx_0)]|$, so by the similar Taylor expansion as in \Cref{lem:one_step}, as $\vm_0$ has bounded moments (\Cref{ass:init-bound}), we can see that $|\E [h(\vz_k)-h(\hat{\vz}_k)]|=O(\eta^{1-\alpha})$. This shows that $\vz_k$ and $\hat{\vz}_k$ are order-$(1-\alpha)$-weak approximations of each other, and naturally they are order-$(1-\alpha)/2$-weak approximations.

Then we conclude that $\vx_k$ and $\vz_k$ are order-$(1-\alpha)/2$-weak approximations of each other.
\end{proof}
\subsection{$\alpha\geq 1$}\label{discus}
Following the idea of Stochastic Modified Equations \citep{li2019stochastic}, the limiting distribution can be described by the law of the solution $\bvx_t$ to an SDE under brownian motion $\bvw_t$
\begin{equation}
\dd \bvx_t = -\lambda_t \nabla\Loss(\bvx_t) \dd t + \lambda_t \mSigma^{1/2}(\bvx_t) \dd \bvw_t\label{equ:sgd-sde-1}
\end{equation}
for some rescaled learning rate schedule $\lambda_t$ that $\bar{\eta}_k\to \lambda_{k\eta}$ in the limit.

When $\alpha=1$, the limit distribution of SGDM becomes
\begin{align}
\dd \bvx_t & = \lambda_t/\gamma_t \dd \bvm_t -\lambda_t \nabla\Loss(\bvx_t) \dd t + \lambda_t \mSigma^{1/2}(\bvx_t) \dd \bvw_t\\
\dd \bvm_t & = -\gamma_t  \bvm_t \dd t + \gamma_t \nabla\Loss(\bvx_t) \dd t - \gamma_t \mSigma^{1/2}(\bvx_t) \dd \bvw_t\label{equ:sgd-sde-2}
\end{align}
with similarly $\gamma_t$ being the limit $(1-\beta_k)/\eta\to \gamma_{k\eta}$.
Here $\bvm_t$ is the rescaled momentum process that induces a non-negligible impact on the original trajectory $\bvx_t$.

Furthermore, when $\alpha>1$, if we still stick to following $O(1/\eta)$ steps for any $\eta$, then the dynamics of trajectory will become trivial. In the limit $\eta\to 0$, as $\vm_k-\vm_0=O(k\eta^{\alpha})=O(T\eta^{\alpha-1})\to 0$, the trajectory has limit $\vx_k = \vx_0 - \sum_{i=0}^{k-1}\eta_i \vm_0$ for all $k=O(1/\eta)$. This is different from the case $\alpha\leq 1$ where there is always a non-trivial dynamics in $\vx_k$ for the same time scale $k=O(1/\eta)$, regardless of the $\alpha$ index. We can think of the phenomenon by considering the trajectory of SGDM on a quadratic loss landscape, and in this case the SGDM behaves like a Uhlenbeck-Ornstein process. When $\alpha<1$, the direction of $\vx$ has a mixing time of $O(1/\eta)$ while the direction of $\vm$ has a shorter mixing time of $O(1/\eta^{\alpha})$, while when $\alpha>1$, both mixing time of $\vx$ and $\vm$ mixes at a time scale of $O(1/\eta^{\alpha})$, so in $O(1/\eta)$ steps the trajectory is far from any stationary states.

Therefore in this regime we should only consider the case where $\vm_0=0$ to avoid the trajectory moving too far. By rescaling $\vm$ and considering $O(\eta^{-\frac{1+\alpha}{2}})$ steps, we would spot non-trivial behaviours of the SGDM trajectory. In this case the SGDM have a different tolerance on the noise scale $\sigma$.
\section{The dynamics of SGDM in $O(1/\eta^2)$ time}\label{sec:app_long_horizon}
In this section we will present results that characterizes the behaviour of SGD and SGDM in $O(1/\eta^2)$ time. We call this setting the slow SDE regime in accordance with previous works~\citep{gu2023why}.
\subsection{Slow SDE Introduction}
There are a line of works that discusses the slow SDE regime that emerges when SGD is trapped in the manifold of local minimizers.
The phenomenon was introduced in~\citet{blanc2020implicit} and studied more generally in~\citet{li2021happens}.
In these works, the behavior of the trajectory near the manifold, found to be a sharpness minimization process for SGD, is thought to be responsible for the generalization behavior of the trajectory.

The observations in these works is that SGD should mimic a Uhlenbeck-Ornstein process along the \emph{normal} direction of the manifold of minimizers. Each stochastic step in the normal direction contributes a very small movement in the tangent space. Over a long range of time, these small contributions accumulate into a drift.

To overcome the theoretical difficulties in analyzing these small contributions,~\citet{li2021happens} analyzed a projection $\Phi$ applied to the iterate that maps a point near the manifold to a point on the manifold. $\Phi(X)$ is chosen to be the limit of gradient flow when starting from $X$. Then it is observed that when the learning rate is small enough, $\Phi(X)\approx X$, and the dynamics of $\Phi(X)$ provides an SDE on the manifold that marks the behaviour of SGD in this regime.
\subsection{Slow SDE Preliminaries}
\subsubsection{The Projection Map}
We consider the case where the local minimizers of the loss $\cL$ form a manifold $\Gamma$ that satisfy certain regularity conditions 
as \Cref{assump:manifold}. In this section, we fix a neighborhood $O_\Gamma$ of $\Gamma$ that is an attraction set under $\nabla\cL$, and define  $\phi(\bx,t) = \bx - \int_0^t\nabla\cL(\phi(\bx,s))\diff s$ and $\Phi(\bx) := \lim_{t\to\infty}\phi(\bx,t)$. $\Phi(\vx)$ is well-defined for all $\vx\in O_\Gamma$ as indicated by \Cref{assump:neighborhood}. We call $\Phi$ the gradient projection map.

The most important property of the projection map is that its gradient is always orthogonal to the direction of gradient. We ultilize the following lemma from previous works.
\begin{lemma}[\citet{li2021happens} Lemma C.2]\label{lem:gra-proj-1}
    For all $\vx\in O_\Gamma$ there is
    $\partial\Phi(\vx)\nabla\Loss(\vx)=0$.
\end{lemma}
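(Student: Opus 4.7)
The plan is to exploit the semigroup structure of the gradient flow: the projection $\Phi$ is constant along gradient flow orbits, and differentiating this identity at time zero will produce exactly the claim.

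First I would establish the flow-invariance identity $\Phi(\phi(\vx, s)) = \Phi(\vx)$ for every $\vx \in O_\Gamma$ and every $s \geq 0$. This follows from the semigroup property $\phi(\phi(\vx, s), t) = \phi(\vx, s+t)$, which is a direct consequence of uniqueness of solutions to the autonomous ODE $\dot{y} = -\nabla\Loss(y)$. Passing to the limit $t \to \infty$ on both sides gives $\Phi(\phi(\vx, s)) = \lim_{t\to\infty} \phi(\vx, s+t) = \Phi(\vx)$, where the right-hand limit exists by \Cref{assump:neighborhood} since $\vx \in O_\Gamma$ (and by continuity of the flow we can also check $\phi(\vx, s) \in O_\Gamma$ for $s$ small, which is enough for our differentiation argument at $s = 0$).

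Next I would differentiate the identity $\Phi(\phi(\vx, s)) = \Phi(\vx)$ with respect to $s$ at $s = 0$ using the chain rule:
\begin{equation*}
0 = \frac{d}{ds}\Phi(\phi(\vx, s))\Big|_{s=0} = \partial\Phi(\vx) \cdot \frac{\partial \phi(\vx, s)}{\partial s}\Big|_{s=0} = -\partial\Phi(\vx)\,\nabla\Loss(\vx).
\end{equation*}
Rearranging yields $\partial\Phi(\vx)\nabla\Loss(\vx) = 0$, which is exactly the claim.

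The main obstacle is justifying that $\Phi$ is differentiable at $\vx$ so that the chain rule applies. Under the assumption that $\Loss$ is smooth and $\Gamma$ is a nondegenerate manifold of minimizers (\Cref{assump:manifold}, with $\mathrm{rank}(\nabla^2\Loss) = M$ on $\Gamma$), the Hessian is strictly positive definite on the normal directions at each point of $\Gamma$. Standard results on gradient flows near normally hyperbolic attracting manifolds (or a direct Lyapunov/exponential contraction argument using the spectral gap of $\nabla^2\Loss$ restricted to the normal bundle) give smoothness of the limit map $\Phi$ on $O_\Gamma$, with $\partial\Phi(\vx)$ depending smoothly on $\vx$. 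With differentiability in hand, the two-line computation above closes the proof. (One can also avoid invoking differentiability abstractly by writing $\partial\Phi(\vx)\vv = \lim_{t\to\infty}\partial_\vx \phi(\vx, t)\vv$ for $\vv$ in the range of $-\nabla\Loss$, and checking the limit explicitly along the flow direction; but the semigroup argument is cleaner once differentiability of $\Phi$ is established.)
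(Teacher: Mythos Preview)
Your proof is correct and is the standard argument for this fact: flow-invariance of $\Phi$ along gradient flow orbits (via the semigroup property), differentiated at $s=0$. Note, however, that the paper does not actually prove this lemma; it is stated as a citation of Lemma~C.2 in \citet{li2021happens}, and the differentiability of $\Phi$ that you flag as the main obstacle is handled in the paper by invoking Katzenberger's regularity results (\Cref{lem:katzen-results}, item~1). So there is no ``paper's own proof'' to compare against, but your argument is exactly the one that underlies the cited result.
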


For technical simplicity, we choose compact set $K\subset O_\Gamma$ and only consider the dynamics of $\vx_k$ within the set $K$. Formally, for any dynamics $\vx_k$ with $\vx_0\in \mathring{K}$, define the exiting stopping time $\tau=\min_k\{k>0:\vx_{k+1}\not\in K\}$, and the truncated process $\hatvx_k=\vx_{\min(k,\tau)}$; for any dynamics $\bvx_t$ in continuous time with $\bvx_0\in \mathring{K}$, define the exiting stopping time $\tau=\inf\{t>0:\bvx_{t}\not\in \mathring{K}\}$, and the truncated process $\hatbvx_t=\bvx_{\min(t,\tau)}$.

There are a few regularity conditions Katzenberger proved in the paper~\citet{katzenberger1991solutions}:
\begin{lemma}\label{lem:katzen-results}
There are the following facts.
\begin{enumerate}
\item If the loss $\Loss$ is smooth, then $\Phi$ is third-order continuously differentiable on $O_\Gamma$.
\item For the distance function $d(\vx,\Gamma)=\min_{\vy\in \Gamma\bigcap K}\norm{\vy-\vx}$, there exists a positive constant $C_K$ that 
\[\norm{\Phi(\bvx)-\bvx}\leq  C_K d(\bvx, \Gamma)\]
for any $\bvx\in K$.
\item There exists a Lyaponuv function $h(\bvx)$ on $K$ that
\begin{itemize}
    \item $h:K\to [0,\infty)$ is third-order continuously differentiable and $h(\bvx)=0$ iff $\bvx\in \Gamma$.
    \item For all $\bvx\in K$, $h(\bvx)\leq c\dotp{\nabla h(\bvx)}{\nabla \Loss(\bvx)}$ for some constant $c>0$.
    \item $d^2(\bvx, \Gamma)\leq c' h(\bvx)$ for some constant $c'>0$.
\end{itemize}
\end{enumerate}
\end{lemma}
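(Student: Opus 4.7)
The plan is to treat the three items in order, using throughout that $\Gamma \cap K$ is a compact manifold of local minimizers on which $\Loss$ is constant (since $\nabla \Loss|_\Gamma = 0$), and that $\nabla^2 \Loss$ has uniformly bounded operator norm with a strict spectral gap along $\Gamma \cap K$: the $M$ nonzero eigenvalues on the normal space $N_\vy \Gamma$ are bounded below by some $\mu > 0$, while the $d-M$ zero eigenvalues correspond to $T_\vy \Gamma$. For item~1, the key is to control the $t \to \infty$ limit in $\Phi(\vx) = \lim_{t \to \infty} \phi(\vx, t)$. Finite-time $C^\infty$ dependence of $\phi(\,\cdot\,, t)$ on $\vx$ is standard ODE smoothness, so I would transfer smoothness through the limit by invoking the stable manifold theorem for a normally hyperbolic compact invariant manifold: $\Gamma$ is a fixed-point manifold of the flow $-\nabla \Loss$, and its normal linearization is uniformly contracting at rate $\mu$, which yields a smooth stable foliation $\{W^s(\vy) : \vy \in \Gamma\}$ of a neighborhood. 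The map $\Phi$ is then precisely the holonomy projection onto $\Gamma$ along this foliation, so inherits its smoothness and in particular lies in $C^3(O_\Gamma)$. A more hands-on alternative, closer to Katzenberger's approach, is to differentiate the flow in $\vx$ and use a Gr\"onwall argument combined with the exponential normal decay to show $\partial_\vx^k \phi(\vx, t)$ is Cauchy in $t$ uniformly on compacts for $k = 1, 2, 3$.

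For item~2, I would leverage item~1 together with $\Phi|_{\Gamma \cap K} = \mathrm{id}$, which is immediate from $\nabla \Loss|_\Gamma = 0$. The $C^1$ map $F(\vx) := \Phi(\vx) - \vx$ then vanishes on $\Gamma \cap K$, so if $L_F$ is any Lipschitz constant of $F$ on the compact set $K$ (finite by item~1) and $\vy^* \in \Gamma \cap K$ is a nearest point to $\vx$, then $\|\Phi(\vx) - \vx\| = \|F(\vx) - F(\vy^*)\| \le L_F \|\vx - \vy^*\| = L_F\, d(\vx, \Gamma)$, giving $C_K := L_F$. For item~3, I would set $h(\vx) := \Loss(\vx) - \Loss(\Phi(\vx))$. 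Smoothness $h \in C^3(K)$ and non-negativity follow from $\Loss$ being non-increasing along the gradient flow; and $h(\vx) = 0$ forces $\Loss$ to be constant along the orbit through $\vx$, which forces $\nabla \Loss(\vx) = 0$ and hence $\vx = \Phi(\vx) \in \Gamma$. Using $\nabla \Loss(\Phi(\vx)) = 0$, the chain rule yields $\nabla h(\vx) = \nabla \Loss(\vx)$, so $\langle \nabla h(\vx), \nabla \Loss(\vx)\rangle = \|\nabla \Loss(\vx)\|^2$. A second-order Taylor expansion of $\Loss$ around $\Phi(\vx)$, combined with the uniform Hessian spectrum along $\Gamma \cap K$, gives on a possibly smaller $K$
\[
h(\vx) \le \tfrac{1}{2}\lambda_{\max}\|\vx - \Phi(\vx)\|^2 \qquad \text{and} \qquad \|\nabla \Loss(\vx)\|^2 \ge \tfrac{\mu^2}{2}\|\vx - \Phi(\vx)\|^2,
\]
from which both $h(\vx) \le c \langle \nabla h(\vx), \nabla \Loss(\vx)\rangle$ and $d^2(\vx, \Gamma) \le \|\vx - \Phi(\vx)\|^2 \le c' h(\vx)$ drop out with explicit constants in $\mu$ and $\lambda_{\max}$.

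The main obstacle is item~1. The existence of the limit $\Phi(\vx)$ is easy once one knows the flow is attracted to $\Gamma$, but transferring smoothness through the infinite-time limit is delicate, and is exactly the place where one must exploit the normal hyperbolicity of $\Gamma$ and uniform bounds on derivatives of $\Loss$ one order higher than $C^3$ to get regularity up to order three. Items~2 and~3 are then short consequences: a mean-value argument near $\Gamma$ for item~2, and Taylor expansion plus the uniform Hessian spectrum for item~3.
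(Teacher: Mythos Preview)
The paper does not supply its own proof of this lemma: it is stated as a collection of ``regularity conditions Katzenberger proved'' in \citet{katzenberger1991solutions}, with no further argument. Your proposal therefore goes well beyond what the paper does, and your overall strategy --- normal hyperbolicity of $\Gamma$ for item~1, a Lipschitz/mean-value argument for item~2, and the choice $h(\vx)=\Loss(\vx)-\Loss(\Phi(\vx))$ for item~3 --- is the natural route and matches the arguments underlying Katzenberger's results.

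One small gap in your item~3 sketch: the two displayed Taylor bounds only give an \emph{upper} bound $h(\vx)\le \tfrac12\lambda_{\max}\|\vx-\Phi(\vx)\|^2$, which suffices for the PL-type inequality $h\le c\|\nabla\Loss\|^2$ but not for the claimed $\|\vx-\Phi(\vx)\|^2\le c'h(\vx)$. For that you also need the matching \emph{lower} bound $h(\vx)\ge \tfrac{\mu}{4}\|\vx-\Phi(\vx)\|^2$, which comes from the same expansion once you observe that $\vx-\Phi(\vx)$ is, to leading order, normal to $\Gamma$ (since $I-\partial\Phi(\vy)$ projects onto $N_\vy\Gamma$ for $\vy\in\Gamma$) and the Hessian is $\ge \mu$ there. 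With that addition your argument for all three items is complete.
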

\subsubsection{The Katzenberger Process}
We recap the notion of \emph{Katzenberger processes} in~\citet{li2021happens} and the characterization of the corresponding limiting diffusion based on Katzenberger's theorems~\citep{katzenberger1991solutions}.

\begin{definition}[Uniform metric]\label{def: uniform metric}
The \emph{uniform metric} between two functions $f,g:[0,\infty)\to \RR^D$ is defined to be $d_U(f,g) = \sum_{T=1}^\infty 2^{-T} \min\{1, \sup_{t\in[0,T)} \|f(t) - g(t)\|_2\}$.
\end{definition}

For each $n\in\NN$, let $A_n:\RR_+\to\RR_+$ be a non-decreasing functions with 
$A_n(0)=0$, and $\{Z_n(t)\}_{t\geq0}$ be a $\RR^\Xi$-valued stochastic process.
In our context of SGD, given loss function $\Loss:\RR^D\to\RR$, noise function $\sigma:\RR^D\to 
\RR^{D\times \Xi}$ and initialization $\xinit\in U$, we call the following stochastic process~\eqref{eq:katzenberger_process} a \emph{Katzenberger process} 
\begin{align}\label{eq:katzenberger_process}
    X_n(t) =\xinit + \int_0^t \sigma(X_n(s)) \diff Z_n(s)  - \int_0^t 
    \nabla \Loss(X_n(s)) \diff A_n(s)
\end{align}
if as $n\to \infty$ the following conditions are satisfied:
    \begin{enumerate}
        \item $A_n$ increases infinitely fast, i.e., $\forall
        \epsilon>0,\inf_{t\geq 0} (A_n(t+\epsilon)- A_n(t))\to\infty$;

        \item $Z_n$ converges in distribution to 
       $Z$ in uniform metric.
    \end{enumerate}

\begin{theorem}[Adapted from Theorem B.7 in \citet{li2021happens}]
\label{thm:previous_thm}
Given a  Katzenberger process $\{X_n(\cdot)\}_{n\in\NN}$, if SDE \eqref{eq:limiting_SDE_general} has a global solution $Y$ in $U$ with $Y(0)=\Phi(\xinit)$, then for any $t>0$, $X_n(t)$ converges in distribution to 
    $Y(t)$ as $n\to\infty$.
    
    \vspace{-0.6cm}
    \begin{align}\label{eq:limiting_SDE_general}
        Y(t) &= \Phi(\xinit) 
        + \int_0^t \partial\Phi(Y(s)) \sigma(Y(s)) \diff Z(s)\notag\\
        &\qquad + \int_0^t \frac{1}{2}\sum_{i,j\in [D]}\sum_{k,l\in[\Xi]}\partial^2_{ij}\Phi(Y(s))\sigma_{ik}(Y(s))\sigma_{jl}(Y(s))] \diff [Z]_{kl}(s).
    \end{align} 
    \vspace{-0.6cm}
    
\end{theorem}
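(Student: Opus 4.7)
My plan is to analyze the projected iterates $\Phi(\vx_k^{(n)})$ and $\Phi(\vz_k^{(n)})$ and verify they both converge to $\bX_t$ by constructing appropriate Katzenberger processes (\Cref{thm:previous_thm}) from the discrete iterates and showing their driving semimartingales converge to the same Brownian motion. For the SGD trajectory this extends \citet{li2021happens} to the variable schedule: I would take the piecewise-constant interpolation $X_n(t) := \vz^{(n)}_{\lfloor t/(\eta^{(n)})^2\rfloor}$, rewrite it as a Katzenberger process with clock $A_n(t) = \sum_{k\leq t/(\eta^{(n)})^2}\eta_k^{(n)}$, and use \Cref{ass:conve-hpschedule} together with a Lindeberg-type CLT on the bounded-moment noise (\Cref{assume:ngos}) to check that $A_n$ increases infinitely fast at rate $\lambda_t/\eta^{(n)}$ and that the noise driver $Z_n$ converges to the $\lambda_t$-weighted Brownian motion with covariance $\mSigma$. \Cref{thm:previous_thm} then gives $\Phi(\vz_k^{(n)})\to \bX_t$; the bound $\|\Phi(\vx)-\vx\|\leq C_K\, d(\vx,\Gamma)$ from \Cref{lem:katzen-results} plus a Lyapunov argument confining iterates within $O(\eta^{(n)})$ of $\Gamma$ lifts this to convergence of $\vz_k^{(n)}$ itself.

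For the SGDM trajectory, the same strategy applies in spirit but the driving semimartingale must be constructed carefully to absorb the non-Markovian memory in $\vm_k$. Expanding via Taylor's theorem,
\begin{equation*}
\Phi(\vx_{k+1}) - \Phi(\vx_k) = -\eta_k\partial\Phi(\vx_k)\vm_{k+1} + \tfrac{\eta_k^2}{2}\partial^2\Phi(\vx_k)[\vm_{k+1}^{\otimes 2}] + O(\eta_k^3\|\vm_{k+1}\|^3),
\end{equation*}
and summing over $k\leq t/(\eta^{(n)})^2$, I would unroll $\vm_{k+1} = \beta_{0:k}\vm_0 + \sum_{s\leq k}\beta_{s+1:k}(1-\beta_s)\vg_s$ and exchange the order of summation to rewrite the cumulative first-order contribution as $-\sum_s \bar{\eta}_s\partial\Phi(\vx_s)\vg_s$ (using the averaged learning rate of \Cref{equ:averaged-lr}) plus an Abel-summation remainder controlled by the variation of $\partial\Phi(\vx_k)$ over the momentum memory window of length $O(1/\eta^\alpha)$. \Cref{lem:gra-proj-1} then annihilates the $\partial\Phi(\vx_s)\nabla\Loss(\vx_s)$ component once the iterates have entered $O_\Gamma$ and stayed there --- an invariant-neighborhood argument using the Lyapunov function of \Cref{lem:katzen-results} together with moment bounds on $\|\vm_k\|$ extending \Cref{lem:exp_norm_m} --- leaving only the noise $\vv_s$ times $\bar{\eta}_s\partial\Phi(\vx_s)$.

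The main obstacle is showing that the first-order Abel remainder, the second-order Taylor term, and the cross-step correlations in $\vm_k$ together produce the correct slow-SDE drift $\int_0^t \tfrac{\lambda_s^2}{2}\partial^2\Phi(\bX_s)[\mSigma(\bX_s)]\diff s$. The naive second-order Taylor term is only $O(\eta^\alpha)$ in total, since $\E[\vm_{k+1}\vm_{k+1}^\top]\approx \tfrac{1-\beta_k}{1+\beta_k}\mSigma + \nabla\Loss\nabla\Loss^\top$ and $\nabla\Loss\to 0$ on $\Gamma$, so the drift must be delivered almost entirely by the memory-window interactions inside the first-order term --- exactly the filtered-semimartingale-limit problem addressed by \citet{calzolari1997limit}. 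The scale separation $\eta^2 \ll \eta^\alpha \ll 1$ guaranteed by $\alpha\in(0,1)$ is precisely what lets momentum mixing "look Brownian" on the slow-SDE time scale while still producing the It\^o-type drift correction; \Cref{ass:finite-var} further prevents hyperparameter oscillations from disrupting this mixing, and a uniform tightness estimate on $\vm_k$ and $d(\vx_k,\Gamma)$ over the full $O(1/(\eta^{(n)})^2)$ horizon (via the Lyapunov function) keeps all of the above valid throughout $[0,T]$. Once the driving semimartingale is identified as the same $\lambda_t$-weighted Brownian motion with the correct drift-plus-diffusion structure as in the SGD case, \Cref{thm:previous_thm} yields $\Phi(\vx_k^{(n)})\to \bX_t$, and the manifold-contraction argument then lifts this to $\vx_k^{(n)}\to\bX_t$, completing the joint convergence claim.
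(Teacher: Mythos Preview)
Your proposal does not address the stated theorem. \Cref{thm:previous_thm} is an abstract convergence result for Katzenberger processes, which the paper does not prove at all: it is quoted from \citet{li2021happens} (and ultimately \citet{katzenberger1991solutions}), with the remark that ``the proof in \citet{li2021happens} can be easily adapted.'' What you have written is instead a proof plan for \Cref{thm:slow-sde-main} (equivalently \Cref{thm:slow-sde-formal}), and you invoke \Cref{thm:previous_thm} as a black-box tool within it. So at the level of the assigned statement, the proposal is simply off-target.

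If one reads your proposal as an attempt at \Cref{thm:slow-sde-main}, the SGD half is essentially the paper's argument (\Cref{thm:sgd-slow-sde-main}): cast the interpolated iterates as a Katzenberger process, check $A_n$ increases infinitely fast via \Cref{ass:conve-hpschedule}, and apply a martingale CLT to $Z_n$. The SGDM half, however, diverges from the paper in a way that matters. The paper does \emph{not} force SGDM into the Katzenberger framework of \Cref{thm:previous_thm}; because of the auxiliary momentum state, $X^n_t$ is not of the form \eqref{eq:katzenberger_process}. Instead the paper embeds SGDM as a c\`{a}dl\`{a}g SDE \eqref{equ:sgdm-sde-main}, applies the generalized It\^o formula to $\Phi(X^n_t)$, and then proves a chain of averaging lemmas (\Cref{lem:aver-1,lem:aver-2,lem:aver-3}) culminating in the Kurtz--Protter weak-limit theorem (\Cref{thm:weak-limit}) rather than \Cref{thm:previous_thm}. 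The mechanism by which the drift $\tfrac{\lambda_t^2}{2}\partial^2\Phi[\mSigma]$ appears is also different from what you describe: it is not delivered by an Abel-summation remainder in the first-order term, but by an integration-by-parts identity on $\partial\Phi(X_t)M_t$ (\Cref{lem:aver-1}) that converts the $\eta^{1-\alpha}\partial\Phi\,\dd M_t$ contribution into an $\eta^{-\alpha}$-amplified $\partial^2\Phi[M_tM_t^\top]$ term, which \Cref{lem:aver-3} then shows converges to the correct drift. Your observation that the raw second-order Taylor sum is $O(\eta^\alpha)$ is correct, but the conclusion that the drift must come from ``memory-window interactions inside the first-order term'' via a discrete Abel rearrangement is not how the paper proceeds, and making that route rigorous would require substantially more than what you have sketched.
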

We note that the global solution always exists if the manifold $\Gamma$ is compact. For the case 
 where $\Gamma$ is not compact, we introduce a compact neighbourhood of $\Gamma$ and a stopping time later for our result.
 Our formulation is under the original framework of \citet{katzenberger1991solutions} and the proof in \citet{li2021happens} can be easily adapted to \Cref{thm:previous_thm}.

\subsubsection{The C\`{a}dl\`{a}g Process}
A c\`{a}dl\`{a}g process is a right continuous process that has a left limit everwhere.
For real-value c\`{a}dl\`{a}g semimartingale processes $X_t$ and $Y_t$, define $X_{t-}=\lim_{s\to t-}X_s$, and $\int_a^b X_s dY_s$ to be the process $\int_{a+}^{b+} X_{s-} dY_s$ for interval $a<b$. That is in the integral we do not count the jump of process $Y_s$ at $s=a$ but we count the jump at $s=b$. Then it's easy to see that
\begin{itemize}
    \item $\int_a^b X_s \diff Y_s + \int_b^c X_s \diff Y_s = \int_a^c X_s \diff Y_s$ for $a<b<c$.
    \item $Z_t = \int_0^t X_s \dd Y_s$ is a c\`{a}dl\`{a}g  semimartingale if both $X_s$ and $Y_s$ are c\`{a}dl\`{a}g  semimartingales.
\end{itemize}

By a extension to higher dimensions, let $$\dd[X]_t = \dd (X_t X_t^\top) -  X_{t} (\dd X_t)^\top - (\dd X_{t}) X_t^\top$$ and  $$\dd[X, Y]_t = \dd (X_t Y_t^\top) -  X_{t} (\dd Y_t)^\top - (\dd X_{t}) Y_t^\top$$,

we know $[X]_t$ and $[X,Y]_t$ are actually matrix-valued processes with $\Delta [X]_t = (\Delta X_t) (\Delta X_t)^\top$ and $\Delta [X, Y]_t = (\Delta X_t) (\Delta Y_t)^\top$.

The generalized Ito's formula applies to a c\`{a}dl\`{a}g  semimartingale process 
(let $\partial^2 f(X)[M] = \sum_{i,j} M_{ij}\partial_{ij}f(X) $ for any matrix $M$) is given as
\[\dd f(X_t) = \dotp{\partial f(X_t)} {\dd X_t} + \frac{1}{2}\partial^2 f(X_t) [d[X]_t] + \Delta f(X)_t - \dotp{\partial f(X_{t-})}{\Delta X_t} - \frac{1}{2}\partial^2 f(X_{t-})[\Delta X_t \Delta X_t^\top].\]

And integration by part
\[\dd (XY)_t = X_t (dY_t) + (dX_t) Y_t + d[X,Y]_t.\]

These formulas will be useful in our proof of the main theorem.

\subsubsection{Weak Limit for  C\`{a}dl\`{a}g Processes}
As we are showing the weak limit for a c\`{a}dl\`{a}g process as the solution of an SDE, the following theorem is useful. We use $\mathcal{C}([0,T],\R^{d})$ to denote the set of c\`{a}dl\`{a}g functions $X:[0,T]\to \R^{d}$. 
\begin{theorem}[Theorem 2.2 in \citet{kurtz1991weak}]\label{thm:weak-limit}
    For each $n$, let $\bvx^{(n)}_t$ be a processes with path in $\mathcal{C}([0,T],\R^{d\times m})$ and let $\bvy^{(n)}_t$ be a semi-martingale with sample path in $\mathcal{C}([0,T],\R^{m})$ respectively. Define function $h_\delta(r)=(1-\delta/r)^+$ and $\tilde{Y}^{(n)}_t = Y^{(n)}_t-\sum_{s\leq t}h_\delta(|\Delta Y^{(n)}_s|)\Delta Y^{(n)}_s$ be the process with reduced jumps. Then $\tilde{Y}^{(n)}_t$ is also a semi-martingale. If the expected quadratic variation of $\tilde{Y}^{(n)}_t$ is bounded uniformly in $n$, and
    $(X_n,Y_n)\to (X,Y)$ in distribution under the uniform metric (\Cref{def: uniform metric}) of $\mathcal{C}([0,T],\R^{d\times m}\times \R^{m})$, then
    \[(X_n,Y_n,\int X_n \dd Y_n)\to (X,Y, \int X \dd Y)\]
    in distribution under the uniform metric of $\mathcal{C}([0,T],\R^{d\times m}\times \R^{m}\times \R^{d})$.    
\end{theorem}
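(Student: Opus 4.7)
The plan is to adapt the classical argument of Kurtz--Protter (1991): reduce to almost-sure convergence via Skorokhod's representation theorem, split $Y^{(n)}$ into a large-jump piece and a small-jump semimartingale, decompose the latter as a local martingale plus a predictable finite-variation part, and approximate the integrand $X^{(n)}$ by elementary step processes. The uniform-in-$n$ bound on $\E[\tilde Y^{(n)}]_T$ is precisely the hypothesis that makes the Burkholder--Davis--Gundy (BDG) inequality yield a uniform approximation error, which drives the proof.

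First, by Skorokhod representation applied to the uniform-metric convergence $(X^{(n)}, Y^{(n)}) \to (X, Y)$, I would realize all processes on a common probability space where this convergence is almost sure; joint distributional convergence of the triple is stable under such a transfer, so it suffices to prove the analogous a.s.\ uniform convergence of the integrals. Next, split $Y^{(n)} = \tilde Y^{(n)} + J^{(n)}$ with $J^{(n)}_t := \sum_{s \le t} h_\delta(|\Delta Y^{(n)}_s|) \Delta Y^{(n)}_s$ the large-jump part. Because $Y^{(n)}$ is c\`adl\`ag on $[0,T]$, $J^{(n)}$ has only finitely many non-negligible jumps; a.s.\ uniform convergence forces their locations and sizes to converge, so $\int X^{(n)} \dd J^{(n)} = \sum_{s \le T} X^{(n)}(s-)\, h_\delta(|\Delta Y^{(n)}_s|)\, \Delta Y^{(n)}_s \to \int X \dd J$ pathwise, by continuity of the finite-sum functional in its inputs.

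For $\tilde Y^{(n)}$, whose jumps are bounded by $\delta$, invoke the canonical semimartingale decomposition $\tilde Y^{(n)} = M^{(n)} + A^{(n)}$, with $M^{(n)}$ a local martingale of bounded jumps and $A^{(n)}$ predictable of finite variation. The hypothesis $\sup_n \E[\tilde Y^{(n)}]_T < \infty$ controls $\sup_n \E[M^{(n)}]_T$ and, via standard manipulations relating $[\tilde Y^{(n)}]$ to $[M^{(n)}]$ plus the predictable jump-compensator, also $\sup_n \E\,|A^{(n)}|_T$ (possibly after localization). For a partition $\pi_\epsilon : 0 = t_0 < \dots < t_K = T$ of mesh $\le \epsilon$ whose grid points avoid the (at most countable) jump set of $X$, define the step process $X^{(n),\epsilon}(t) := X^{(n)}(t_i)$ for $t \in [t_i, t_{i+1})$. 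Joint a.s.\ uniform convergence immediately yields
\[
\int_0^T X^{(n),\epsilon} \dd \tilde Y^{(n)} = \sum_i X^{(n)}(t_i)\big(\tilde Y^{(n)}(t_{i+1}) - \tilde Y^{(n)}(t_i)\big) \longrightarrow \int_0^T X^{\epsilon} \dd \tilde Y
\]
almost surely as $n \to \infty$, for each fixed $\epsilon$.

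The remaining task is to control the error $E^{(n),\epsilon} := \int_0^T (X^{(n)} - X^{(n),\epsilon}) \dd \tilde Y^{(n)}$ uniformly in $n$. By BDG applied to the martingale part,
\[
\E \sup_{t \le T} \Big| \int_0^t (X^{(n)} - X^{(n),\epsilon}) \dd M^{(n)} \Big|^2 \le C \, \E \big[ \omega_\epsilon(X^{(n)})^2 \cdot [M^{(n)}]_T \big],
\]
while the finite-variation contribution is at most $\omega_\epsilon(X^{(n)}) \cdot |A^{(n)}|_T$, where $\omega_\epsilon(X^{(n)})$ denotes the partition-based oscillation of $X^{(n)}$ at scale $\epsilon$. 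Since $X^{(n)} \to X$ uniformly and the partition avoids the jump set of $X$, $\omega_\epsilon(X^{(n)}) \to 0$ as $\epsilon \to 0$ almost surely and uniformly in large $n$, so sending $n \to \infty$ and then $\epsilon \to 0$ eliminates $E^{(n),\epsilon}$; combined with the large-jump argument, this yields the full conclusion. The main obstacle is coordinating the limits: the partition grid must simultaneously avoid the (random, $n$-dependent) large-jump locations and the jump set of $X$, and the $L^2$ error estimates must be upgraded to convergence in distribution for the triple. This is typically resolved by a diagonal subsequence argument combined with a mild uniform-integrability property on $X^{(n)}$, which under the assumed uniform-metric convergence follows from truncation and a second application of Skorokhod representation.
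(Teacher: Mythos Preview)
The paper does not give its own proof of this statement: it is quoted verbatim as Theorem~2.2 of \citet{kurtz1991weak} and invoked as a black box in the proof of \Cref{thm:sgdm-slow-sde-main}. So there is no ``paper's proof'' to compare against beyond the original Kurtz--Protter argument.

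Your sketch captures the high-level architecture of that argument (jump truncation, step-process approximation, BDG control of the remainder), but the very first move---Skorokhod representation to reduce to a.s.\ convergence---is a genuine gap as written. Skorokhod gives you versions of $(X^{(n)},Y^{(n)})$ on a new probability space converging a.s., but it does \emph{not} transfer the filtration structure: on the new space there is no reason $Y^{(n)}$ remains a semimartingale with respect to a filtration that also makes $X^{(n)}$ adapted, so the decomposition $\tilde Y^{(n)}=M^{(n)}+A^{(n)}$ and the stochastic integral $\int X^{(n)}\dd M^{(n)}$ are not a priori well defined there. Kurtz--Protter do not proceed this way; they instead formulate a uniform tightness (``UT'' / ``good sequence'') condition on the driving semimartingales---of which your bounded-expected-quadratic-variation hypothesis is one instance after the $h_\delta$ truncation---and prove that under UT the integral map $(X,Y)\mapsto\int X\dd Y$ is continuous in the appropriate topology, working directly with the original filtrations and a relative-compactness argument rather than a naive Skorokhod transfer.

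A second, smaller issue: controlling $\sup_n \E|A^{(n)}|_T$ from $\sup_n \E[\tilde Y^{(n)}]_T$ is not as automatic as you suggest. It uses that after truncation the jumps of $\tilde Y^{(n)}$ are bounded by $\delta$, which lets one bound the total variation of the compensator via Lenglart-type domination; ``standard manipulations'' undersells the work needed here.
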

Therefore if $X^{(n)}$ is the solution of an SDE $X^{(n)}_t=X_0 +Z^{(n)}_t + \int_0^t F^{(n)}(X^{(n)}_s) \dd Y^{(n)}_s $, and if the tuple $(F^{(n)}(X_s),Y^{(n)}_s,Z^{(n)}_t)$ converges for all $X_s$ to the process $(F(X_s),Y_s,0)$, then by the above theorem we know the processes $(X^{(n)}, Y^{(n)},Z^{(n)})$ are relative compact, and the solution to the SDE $X_t = X_0 + \int_0^t F(X_s) \dd Y_s$ is the limit of $X^{(n)}_t$, as stated rigorously in Theorem 5.4, \citet{kurtz1991weak}. This will be the main tool in finding the limiting dynamics of SGDM.

\subsection{The Main Results}
We provide a more formal version of \Cref{thm:slow-sde-main}.
\begin{theorem} \label{thm:slow-sde-formal}Fix a compact set $k\subset O_\Gamma$, an initialization $\vx_0\in K$ and $\alpha\in(0,1)$. Consider the SGDM trajectory $(\vx_k^{(n)})$ with hyperparameter schedule $(\eta_k^{(n)}, \beta_k^{(n)})$ scaled by $\eta^{(n)}$, noise scaling $\sigma^{(n)}=1$ and initialization $(\vx_0,\vm_0)$ satisfy \Cref{ass:init-bound}; SGD trajectory $(\vz_k^{(n)})$ with learning rate schedule $(\eta_k^{(n)})$, noise scaling $1$ and initialization $\vz_0=\vx_0$. Furthermore the hyperparameter schedules satisfy \Cref{ass:conve-hpschedule,ass:finite-var}. 
Define the process $\bvx^{(n)}_t= \vx_{\lfloor t/(\eta^{(n)})^2\rfloor}-\phi(\vx_0,t/\eta^{(n)})+\Phi(\vx_0)$ and $\bvz^{(n)}_t= \vz_{\lfloor t/(\eta^{(n)})^2\rfloor}-\phi(\vz_0,t/\eta^{(n)})+\Phi(\vz_0)$, and stopping time $\tau_n=\inf\{t>0: \bvx^{(n)}_t\not\in K\}, \psi_n=\inf\{t>0: \bvz^{(n)}_t\not\in K\}$. Then the processes $(\bvx_{t\wedge\tau_n}^{(n)},\tau_n)$ and $(\bvz_{t\wedge\psi_n}^{(n)},\psi_n)$ are relative compact in $\mathcal{C}([0,T],\R^d)\times[0,T]$ with the uniform metric and they have the same unique limit point $(\bvx_{t\wedge\tau},\tau)$ such that $\bvx_{t\wedge\tau}\in \Gamma$ almost surely for every $t>0$, $\tau=\inf\{t>0: \bvx_t\not\in K\}$, and 
\[\bvx_t = \Phi(\vx_0)+\int_0^t\lambda_t \partial\Phi(X_s)\mSigma^{1/2}(X_s)\dd W_s +\int_0^t\frac{\lambda_t^2}{2}\partial^2\Phi(X_s)[\mSigma(X_s)]\dd s.\]
\end{theorem}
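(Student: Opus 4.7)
The plan is to analyze both processes through their projections onto $\Gamma$. For the SGD trajectory, the Katzenberger framework invoked in \Cref{thm:previous_thm} applies directly: after subtracting the initial gradient-flow transient $\phi(\vx_0,t/\eta^{(n)})-\Phi(\vx_0)$, the rescaled SGD process fits the Katzenberger form with cumulative action $A_n(t)=\sum_{k\leq t/\eta^2}\eta_k^{(n)}/\eta^{(n)} \to \int_0^t\lambda_s\dd s$ by \Cref{ass:conve-hpschedule,ass:finite-var}, and a martingale part whose quadratic variation converges to $\int_0^t\lambda_s^2\mSigma\,\dd s$. The conclusion gives $\bvz_{t\wedge\psi_n}^{(n)}$ converging in distribution to the stated slow SDE. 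The substantive new content is establishing the same limit for SGDM.

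For SGDM, the approach inspired by \citet{calzolari1997limit} treats the momentum $\vm_k$ as a fast Ornstein--Uhlenbeck process with mixing time $\eta^{-\alpha}\ll\eta^{-2}$ that is averaged out against the slow spatial dynamics. Specifically, I would Taylor expand
\begin{align*}
\Phi(\vx_{k+1})-\Phi(\vx_k) = -\eta_k\partial\Phi(\vx_k)\vm_{k+1} + \tfrac{1}{2}\eta_k^2\partial^2\Phi(\vx_k)[\vm_{k+1}\vm_{k+1}^\top] + O(\eta_k^3\|\vm_{k+1}\|^3),
\end{align*}
unroll $\vm_{k+1}$ via \Cref{lem:unrollm}, and switch the order of summation. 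The first-order sum $\sum_{k=0}^{K-1}\eta_k\partial\Phi(\vx_k)\vm_{k+1}$ reorganizes into $\sum_{s=0}^{K-1}\bar{\eta}_s\partial\Phi(\vx_s)\vg_s$ plus a boundary term of order $(\eta_K\beta_K/(1-\beta_K))\|\vm_K\|=O(\eta^{1-\alpha})$ and a correction from the mismatch $\partial\Phi(\vx_k)-\partial\Phi(\vx_s)$ within the memory window $|k-s|\lesssim\eta^{-\alpha}$. By the identity $\partial\Phi(\vx)\nabla\Loss(\vx)=0$ from \Cref{lem:gra-proj-1}, the drift portion of $\vg_s$ is annihilated and only the noise $\vv_s$ survives, producing the Brownian martingale $\int_0^t\lambda_s\partial\Phi(\bvx_s)\mSigma^{1/2}(\bvx_s)\,\dd W_s$ with the correct quadratic variation $\sum_s\bar{\eta}_s^2\partial\Phi\,\mSigma\,(\partial\Phi)^\top\to\int_0^t\lambda_s^2\partial\Phi\,\mSigma\,(\partial\Phi)^\top\dd s$.

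The main technical obstacle is recovering the Hessian drift $\int_0^t\tfrac{\lambda_s^2}{2}\partial^2\Phi[\mSigma]\,\dd s$. The direct second-order term $\tfrac{1}{2}\eta_k^2\partial^2\Phi[\vm_{k+1}\vm_{k+1}^\top]$ contributes only $O(\eta^\alpha)\to 0$, since the stationary covariance of $\vm_{k+1}$ is $\tfrac{1-\beta_k}{1+\beta_k}\mSigma = O(\eta^\alpha)$. The correct Hessian drift must therefore come from the second-order correction to the first-order sum: expanding $\partial\Phi(\vx_k)=\partial\Phi(\vx_s)+\partial^2\Phi(\vx_s)(\vx_k-\vx_s)+\ldots$ with $\vx_k-\vx_s=-\sum_r\eta_r\vm_{r+1}$ and taking expectations, the correlated structure of $(\vv_s,\vm_{r+1})_{s\leq r}$ over the memory window reproduces exactly $\bar{\eta}_s^2\partial^2\Phi[\mSigma]$ in expectation, matching the SGD second-moment contribution. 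This cross-window averaging is the heart of the proof and requires careful bookkeeping analogous to Calzolari--Costantini.

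Relative compactness of $\bvx_{t\wedge\tau_n}^{(n)}$ in $\mathcal{C}([0,T],\R^d)$ follows from moment bounds on $\vm_k$ (an OU argument as in \Cref{lem:boun-mome} adapted to $\sigma=1$) combined with the Lyapunov function $h$ from \Cref{lem:katzen-results} confining the iterates to $K$ up to the stopping time, plus an Aldous-type criterion on the increments. Any weak limit point is then identified with the slow SDE via \Cref{thm:weak-limit} applied to the Taylor decomposition above; the hypothesized uniqueness of the global solution pins down the limit as $\bvx_t$. Since $\bvz_{t\wedge\psi_n}^{(n)}$ converges to the same SDE, both sequences share the unique limit $(\bvx_{t\wedge\tau},\tau)$, and continuity of the exit functional at $\bvx_t$ gives $\tau_n,\psi_n\to\tau$.
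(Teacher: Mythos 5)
Your proposal follows essentially the same route as the paper's proof: Katzenberger's theorem for the SGD limit, and for SGDM the projection $\Phi$ together with an averaging argument in which the direct second-order term $\partial^2\Phi[\vm\vm^\top]$ vanishes at rate $\eta^\alpha$ while the Hessian drift $\tfrac{\lambda_t^2}{2}\partial^2\Phi[\mSigma]$ is recovered from the correlations of the momentum over its $O(\eta^{-\alpha})$ memory window. Your discrete summation-by-parts over that window is precisely the paper's It\^o-formula argument on $\phi_t\,\partial\Phi(X_t)M_t$ (the averaging lemmas) recast in continuous c\`adl\`ag-semimartingale form, and the remaining ingredients --- the identity $\partial\Phi\,\nabla\Loss=0$, the Lyapunov function for convergence to $\Gamma$, and the Kurtz--Protter limit theorem for identifying the limit --- all coincide with the paper's.
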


Note that for a sequence $\vx_k$ we are considering the sequence
$\bvx_t= \vx_{\lfloor t/(\eta^{(n)})^2\rfloor}-\phi(\vx_0,t/\eta^{(n)})+\Phi(\vx_0)$ after rescaling time $k=\lfloor t/(\eta^{(n)})^2\rfloor$. This adaptation is due to technical difficulty that the limit of $\vx^{(n)}_k$ will lie on the manifold $\Gamma$ for any $t>0$, but $\vx^{(n)}_0=\vx_0\not\in \Gamma$, so the limiting process will be continuous anywhere but zero. For the process $\bvx^{(n)}_t= \vx_{\lfloor t/(\eta^{(n)})^2\rfloor}-\phi(\vx_0,t/\eta^{(n)})+\Phi(\vx_0)$ however, we know $\bvx^{(n)}_t\to \vx_{\lfloor t/(\eta^{(n)})^2\rfloor}$ for any $t>0$ and  $\bvx^{(n)}_0= \Phi(\vx_{0})\in\Gamma$, thereby we make the limit a continuous process while preseving the same limit for all $t>0$.

\subsubsection{The limit of SGD}
    Let us reparametrize the SGD process $\vz_k$. Define the learning rate $\lambda^{(n)}_t = \frac{\eta^{(n)}_{\lfloor t/(\eta^{(n)})^2\rfloor}}{\eta^{(n)}}$, and the SGD iterates is given by
\begin{align}
    \vg_{k} &= \nabla \Loss(\vx_{k}) + \mSigma^{1/2}(\vx_k)\vxi_{k},\\
    \vz_{k+1} &= \vz_{k} - \eta^{(n)}_{k} \vg_k.\label{equ:sgd-sde-1}
\end{align}
Here we reparameterize the noise $\vv_k=\mSigma^{1/2}(\vx_k)\vxi_k$ so that $\vxi_k$ are independent, $\E\vxi_k=0$ and $\E\vxi_k\vxi_k^\top = \mI$. We also assume that the constants $C_m$ in \Cref{assume:ngos} are defined here as $\E\norm{\vxi_k}^m\leq C_m$. 

Now define the stochastic process $A_n(t) = \sum_{k=0}^{\lfloor t/(\eta^{(n)})^2\rfloor-1} \eta_k^{(n)}$ and $Z_n(t)=\sum_{k=0}^{\lfloor t/(\eta^{(n)})^2\rfloor-1} \eta_k^{(n)}\vxi_k$, then \Cref{equ:sgd-sde-1} can be written as a stochastic integral as
\begin{align}
    \bvz^{(n)}_t &= \vz_{0} -\int_0^t \nabla \Loss(\bvz^{(n)}_t) \dd A_n(t) + \mSigma^{1/2}(\bvz^{(n)}_t)\dd Z_n(t) .\label{equ:sgd-sde-2}
\end{align}
with $\bvz^{(n)}_t = \vz_{\lfloor t/(\eta^{(n)})^2\rfloor}$. Then we can characterize its limiting dynamics.
\begin{theorem}\label{thm:sgd-slow-sde-main}
    The process $\bvz^{(n)}_t$ is a Katzenberger process, and for any $t>0$, $\bvz^{(n)}_t$ converges in distribution to $\bvz_t$ as $n\to\infty$ that
    \[\bvz_t = \Phi(\vz_0) + \int_0^t\lambda_t \partial\Phi(\bvz_s)\mSigma^{1/2}(\bvz_s)\dd W_s +\int_0^t\frac{\lambda_t^2}{2}\partial^2\Phi(\bvz_s)[\mSigma(\bvz_s)]\dd s.\]
\end{theorem}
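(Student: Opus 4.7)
The plan is to view $\bvz^{(n)}_t$ as a Katzenberger process in the sense of \eqref{eq:katzenberger_process} and then invoke \Cref{thm:previous_thm}. Because the gradient noise is parameterized as $\vg_k = \nabla\Loss(\vz_k) + \mSigma^{1/2}(\vz_k)\vxi_k$ with $\vxi_k$ i.i.d., mean zero, identity covariance, the SGD iteration rewrites exactly as \eqref{equ:sgd-sde-2} with $A_n(t) = \sum_{k=0}^{\lfloor t/(\eta^{(n)})^2\rfloor-1}\eta_k^{(n)}$ and $Z_n(t)=\sum_{k=0}^{\lfloor t/(\eta^{(n)})^2\rfloor-1}\eta_k^{(n)}\vxi_k$. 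The task reduces to (i) verifying the two defining properties of a Katzenberger process, (ii) identifying the weak limit of $Z_n$, and (iii) specializing the generic SDE \eqref{eq:limiting_SDE_general} to obtain the form in the theorem.

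For (i), by \Cref{ass:conve-hpschedule} the cumulative step-size process satisfies $\eta^{(n)}A_n(t)\to \int_0^t \lambda_s\,\dd s$ uniformly on $[0,T]$, and since $\lambda_t>0$ is a function of finite variation on $[0,T]$ it is bounded below by some $\lambda_{\ast}>0$ on any compact subinterval, which yields $\inf_t[A_n(t+\epsilon)-A_n(t)]\geq \lambda_{\ast}\epsilon/\eta^{(n)} - o(1/\eta^{(n)})\to\infty$ as required. For (ii), $Z_n$ is a discrete $L^2$-martingale whose predictable quadratic variation at time $t$ equals $\sum_{k<\lfloor t/(\eta^{(n)})^2\rfloor}(\eta_k^{(n)})^2\,\mI$, which converges to $(\int_0^t\lambda_s^2\,\dd s)\mI$ by the same assumption. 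The Lindeberg condition is immediate because the individual increments are $O(\eta^{(n)})$ in $L^4$ by the bounded-moment clause of \Cref{assume:ngos}. Applying a martingale functional CLT yields $Z_n\Rightarrow Z$ in the uniform metric, where $Z_t = \int_0^t\lambda_s\,\dd W_s$ for a standard $d$-dimensional Brownian motion $W$.

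For (iii), \Cref{thm:previous_thm} applied with $\sigma(\vx)=\mSigma^{1/2}(\vx)$ gives that $\bvz^{(n)}_t$ converges in distribution to the solution of \eqref{eq:limiting_SDE_general} started at $\Phi(\vz_0)$. Substituting $\dd Z_s = \lambda_s\,\dd W_s$ collapses the first integral to $\int_0^t\lambda_s\,\partial\Phi(\bvz_s)\mSigma^{1/2}(\bvz_s)\,\dd W_s$. For the second integral, the matrix quadratic variation of $Z$ is $\dd [Z]_{kl,s}=\lambda_s^2\,\delta_{kl}\,\dd s$, so the double sum over $k,l$ contracts using $\sum_k \mSigma^{1/2}_{ik}(\bvz_s)\mSigma^{1/2}_{jk}(\bvz_s)=\mSigma_{ij}(\bvz_s)$, producing exactly $\int_0^t\frac{\lambda_s^2}{2}\,\partial^2\Phi(\bvz_s)[\mSigma(\bvz_s)]\,\dd s$. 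Global existence of the solution up to the relevant stopping time on $K$ follows from the smoothness and compactness assumptions in \Cref{assump:manifold,assump:neighborhood} together with standard well-posedness for SDEs with Lipschitz coefficients; on the full state space this is where the truncation by $\tau_n$ in \Cref{thm:slow-sde-formal} enters.

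The main technical obstacle is handling the time-varying learning rate. The constant-rate version of this theorem (\citet{li2021happens}) is a direct application of Katzenberger's theorem, but here one must show that the joint convergence $(A_n,Z_n)\to(\int\lambda,\int\lambda\,\dd W)$ holds in the uniform metric rather than only in Skorokhod $J_1$, because the integrand $\lambda_t$ varies along the trajectory. The bounded-variation bound in \Cref{ass:finite-var} on the hyperparameter schedule is what makes the approximating drift process $A_n$ converge in the uniform rather than Skorokhod sense, and it also controls the discretization errors between $\sum_{k}\eta_k^{(n)}\,\cdot\,$ and $\int \lambda_s\,\dd s\,\cdot\,$ that appear when checking moment and tightness conditions for the stochastic integral limit via \Cref{thm:weak-limit}.
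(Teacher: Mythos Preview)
Your proposal is correct and follows essentially the same route as the paper: verify the two Katzenberger conditions for $A_n$ and $Z_n$ using \Cref{ass:conve-hpschedule} and a martingale functional CLT, identify the limit $Z_t=\int_0^t\lambda_s\,\dd W_s$, and then specialize \eqref{eq:limiting_SDE_general} via $\dd[Z]_{kl,s}=\lambda_s^2\delta_{kl}\,\dd s$. The paper's proof is terser---it normalizes $Z_n$ by $\lambda$ and applies a standard CLT plus L\'evy's characterization rather than checking the Lindeberg condition explicitly, and it does not invoke \Cref{ass:finite-var} here (uniform convergence of $Z_n$ comes for free because the FCLT limit is continuous)---but the skeleton is identical.
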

\begin{proof}
    First we show that $\bvz^{(n)}_t$ is a Katzenberger process. Note that
    
    \begin{itemize}
        \item $A_n$ increases infinitely fast: by \Cref{ass:conve-hpschedule}, for all $s<t$, \[A_n(t)-A_n(s) =  \sum_{k=\lfloor s/(\eta^{(n)})^2\rfloor}^{\lfloor t/(\eta^{(n)})^2\rfloor-1} \eta_k^{(n)}\to \sum_{k=\lfloor s/(\eta^{(n)})^2\rfloor}^{\lfloor t/(\eta^{(n)})^2\rfloor-1} \eta^{(n)} \lambda_{k(\eta^{(n)})^2},
        \]
        \[\sum_{k=\lfloor s/(\eta^{(n)})^2\rfloor}^{\lfloor t/(\eta^{(n)})^2\rfloor-1} \eta^{(n)} \lambda_{k(\eta^{(n)})^2}\geq (\frac{s-t}{\eta^{(n)}}-\eta^{(n)})(\inf_{t\in [0,T]}\lambda_t)\to \infty\]
        as $\eta^{(n)}\to 0$.
        \item $Z_n(t)$ converges to $Z(t)$ that there is a brownian motion $W_t$ and  
        \[Z(t)=\int_0^t \lambda_s \dd W_s.\]
        This is shown with the standard central limit theorem. Let $W_n(t) = \int_0^t \frac{\dd Z(t)}{\lambda_s}$ be the normalized martingale. By the standard central limit theorem (for instance Theorem 4.3.2~\citet{whitt2002stochastic}), $W_n(t)-W_n(s)$ has a limit as a gaussian distribution with variance
        $\sum_{k=\lfloor s/(\eta^{(n)})^2\rfloor}^{\lfloor t/(\eta^{(n)})^2\rfloor-1}(\frac{\eta_k^{(n)}}{\lambda_{k(\eta^{(n)})^2}})^2\to (t-s)$ by \Cref{ass:conve-hpschedule}. Then $W_n(t)$ converges to a bronwian motion $W_t$ by Levy's characterization.
    \end{itemize}
    Therefore $\bvz^{(n)}_t$ is a Katzenberger process, and by \Cref{thm:previous_thm} its limit is given by
    \[\bvz_t = \Phi(\vz_0) + \int_0^t\lambda_t \partial\Phi(\bvz_s)\mSigma^{1/2}(\bvz_s)\dd W_s +\int_0^t\frac{\lambda_t^2}{2}\partial^2\Phi(\bvz_s)[\mSigma(\bvz_s)]\dd s.\]
\end{proof}
\subsubsection{SGDM when $\alpha<1$}
For the SGDM setting, we wish to extract the scale from the hyperparameters to make notations clear. Therefore we define 
\begin{align*}
    \lambda^n_t & = \frac{\eta^{(n)}_k}{\eta^{(n)}} \\
    \gamma^n_t & = \frac{1-\beta^{(n)}_k}{(\eta^{(n)})^\alpha},\\
    k &= \lfloor t/(\eta^{(n)})^2\rfloor.
\end{align*}

Then the original process
\begin{align}
\vm^{(n)}_{k+1} & = \beta^{(n)}_{k}\vm^{(n)}_{k} + (1-\beta^{(n)}_{k})\vg^{(n)}_{k}\\
		\vx^{(n)}_{k+1} & = \vx^{(n)}_{k} - \eta^{(n)}_{k}\vm^{(n)}_{k+1}
\end{align}
can be rewritten into a SDE formulation. Similarly, we reparameterize the noise $\vv_k=\mSigma^{1/2}(\vx_k)\vxi_k$ so that $\vxi_k$ are independent, $\E\vxi_k=0$ and $\E\vxi_k\vxi_k^\top = \mI$.
Let the processes $Z^n_t = \eta^2 \lfloor t/\eta^2 \rfloor$,  and $Y^n_t = \eta \sum_{i=1}^{\lfloor t/\eta^2\rfloor} \vxi_i$. By the previous convention, the process can be rewritten as (let $\eta=\eta^{(n)}$)
\begin{align}\label{equ:sgdm-sde-main}
    \begin{cases}
    \diff X^n_t & = - \frac{ \lambda_t}{\eta} (M^n_t \diff Z^n_t + \eta^2 \diff M^n_t) \\
     & = - \frac{ \lambda_t }{\eta} ((1-\gamma_t\eta^\alpha)M^n_t \dd Z_t^n+\gamma_t\eta^\alpha \nabla \Loss(X^n_t))\diff Z^n_t - \gamma_t \lambda_t\eta^\alpha\mSigma^{1/2}(X^n_t)\diff Y^n_t \\
    \diff M^n_t & = - \frac{\gamma_t}{\eta^{2-\alpha}} ( M^n_t  - \nabla \Loss(X^n_t)) \diff Z^n_t + \frac{\gamma_t}{\eta^{1-\alpha}}\mSigma^{1/2}(X^n_t)\diff Y^n_t
    \end{cases}
\end{align}
Then $X_t^n = \vx^{(n)}_{\lfloor t/\eta^2\rfloor}$ and  $M_t^n = \vm^{(n)}_{\lfloor t/\eta^2\rfloor}$.

Rewriting the second line gives
\begin{equation}
 M^n_t \diff Z^n_t = -\gamma_t^{-1}\eta^{2-\alpha} \diff M^n_t + 
\nabla \Loss(X^n_t) \diff Z^n_t + 
\eta \mSigma^{1/2}(X^n_t)\diff Y^n_t
\end{equation}
So
\begin{equation}
 dX^n_t = -\lambda_t\eta \diff M^n_t  +(\lambda_t/\gamma_t)\eta^{1-\alpha} \diff M^n_t -\lambda_t \eta^{-1}
\nabla \Loss(X^n_t) \diff Z^n_t -\lambda_t \mSigma^{1/2}(X^n_t)\diff Y^n_t
\end{equation}

Now consider the Ito's formula applied to the gradient projected process $ \Phi(X^n_t)$. Fix $K\subset O_\Gamma$ be a compact neighbourhood of the manifold $\Gamma$. Since $\Phi(X)$ is only defined for $X\in O_\Gamma$, we take an arbitrary regular extension of $\Phi$ to the whole space. Fix time horizon $T>0$, and let $\tau_n=\min(\inf\{t>0:X^n_{t}\not\in \mathring{K}\},T)$ to be the exiting time for the compact set $K\subset O_\Gamma$ we have chosen earlier. Since $X^n_0\in \mathring{K}$, we know $\tau_n\geq \eta^2$. We use $\chi^n_t = \mathbf{1}[t<\tau_n]$ to denote the indicator process of the stopping time $\tau_n$. For any c\`{a}dl\`{a}g semi-martingale $X_t$, $X_{t\wedge \tau_n}$ is a c\`{a}dl\`{a}g semi-martingale that $\diff X_{t\wedge \tau_n} = \chi^n_t\diff X_t$.

For simplicity we omit the superscript $n$ unless necessary. Ito's formula on $\Phi(X_t)$ gives
\begin{align*}
     \dd \Phi(X_t) & = \partial \Phi(X_t) \dd X_t + \frac{1}{2}\partial^2 \Phi(X_t) [\dd[X]_t] + \diff\delta\\
     & = \partial \Phi(X_t)((-\lambda_t\eta+ \lambda_t\eta^{1-\alpha}/\gamma_t)\diff M_t - \lambda_t \eta^{-1}\nabla\Loss(X_t)dZ_t -\lambda_t \mSigma^{1/2}(X_t)dY_t)\\
     & + \frac{1}{2}\partial^2 \Phi(X_t) [\dd[X]_t] +\diff\delta
\end{align*}
where $\delta$ are error terms as $\delta(0)=0$ and
\begin{align*}\diff\delta &= \Delta \Phi(X_t) -  
\partial\Phi(X_{t-})
\Delta X_t - \frac{1}{2}\partial^2\Phi(X_{t-})[\Delta X_t(\Delta X_t)^\top]\\
\end{align*}
For $X_t\in K$, there is always $\partial \Phi(X_t)\nabla\Loss(X_t)=0$  by \Cref{lem:gra-proj-1}, so  consider the following process $\Phi_t$: $\Phi_0=\Phi(X_0)$ and 
\begin{align}
     \dd \Phi_t  & =  \partial \Phi(X_t)((-\lambda_t\eta+ \lambda_t\eta^{1-\alpha}/\gamma_t)\diff M_t -\lambda_t \mSigma^{1/2}(X_t)\diff Y_t) + \frac{1}{2}\partial^2 \Phi(X_t) [\dd[X]_t] +\diff\delta
\end{align}
Then $\Phi_{t\wedge \tau_n} = \Phi(X_{t\wedge \tau_n})$. 

In addition, direct calculations gives
\begin{align*}
        [Z]_t & = \eta^2 Z_t,\\
        [Y]_t & = \eta^2 \sum_{i=1}^{\lfloor t/\eta^2\rfloor} \xi_i \xi_i^\top,  \E[Y]_t = Z_t\mI\\
        [Y,Z]_t & = \eta^2 Y_t,\\
        \diff [M]_t & = \gamma_t^2\eta^{2\alpha-2}(M_t-\nabla\Loss(X_t))(M_t-\nabla\Loss(X_t))^\top \diff Z_t + \gamma_t^2\eta^{2\alpha-2}\mSigma^{1/2}(X_t) d[Y]_t \mSigma^{1/2}(X_t) \\
        & - \gamma_t^2\eta^{2\alpha-1}(M_t-\nabla\Loss(X_t))dY_t^\top \mSigma^{1/2}(X_t) - \gamma_t^2\eta^{2\alpha-1} \mSigma^{1/2}(X_t)dY_t (M_t-\nabla\Loss(X_t))^\top,\\
         \diff [M,Z]_t & =  - \gamma_t\eta^\alpha ( M_t  - \nabla \Loss(X_t)) \diff Z_t + \gamma_t\eta^{\alpha+1}\mSigma^{1/2}(X_t)\diff Y_t\\
        d[X]_t & = \lambda_t^2 (M_tM_t^\top  dZ_t + \eta^2 d[M]_t + M_td[Z,M]_t +  d[M,Z]_t M_t^\top)\\
        d[M,X]_t & =   -  \lambda_t (- \gamma_t \eta^{\alpha-1} ( M_t  - \nabla \Loss(X_t))M_t^\top \diff Z_t + \gamma_t\eta^{\alpha}\mSigma^{1/2}(X_t)\diff Y_t M_t^\top) + \eta \diff [M]_t)\\
\end{align*}
\subsubsection{Control of the velocity processes}
Notice that our process $X^n_t\in K$ is bounded for $t<\tau_n$. Therefore following regularies, for any continuous function $f:K\to \R$, $f(X^n_{t})$ is bounded $t<\tau_n$. Also, notice that $M^n_{t\wedge \tau_n}$ has bounded moments:
\begin{lemma}
    There exists constants $C^n_m$ such that $\E\norm{M^n_{t\wedge \tau_n}}^m\leq C^n_m$.
\end{lemma}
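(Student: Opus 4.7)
The plan is to unroll the momentum recursion using the expression from \Cref{lem:unrollm} and exploit the stopping time $\tau_n$ to confine the relevant gradient evaluations to the compact set $K$, after which the bound is just a weighted-sum moment estimate.

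First, I note that since $X^n_t = \vx^{(n)}_{\lfloor t/(\eta^{(n)})^2\rfloor}$ is piecewise-constant with jumps only at multiples of $(\eta^{(n)})^2$, the stopping time $\tau_n$ is itself a multiple of $(\eta^{(n)})^2$. Writing $k^{\star} := \lfloor (t\wedge\tau_n)/(\eta^{(n)})^2\rfloor$, every iterate $\vx_s$ appearing in $M^n_{t\wedge\tau_n}=\vm_{k^{\star}}$ via the gradients $\vg_0,\dots,\vg_{k^{\star}-1}$ satisfies $s(\eta^{(n)})^2<\tau_n$, hence $\vx_s\in\mathring K$. By \Cref{assump:manifold} and \Cref{assume:ngos} together with the compactness of $K$, there exists a constant $B_K$ (independent of $n$) such that $\norm{\nabla\Loss(\vx)}\leq B_K$ and $\norm{\mSigma^{1/2}(\vx)}_{\mathrm{op}}\leq B_K$ for all $\vx\in K$.

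Next, by \Cref{lem:unrollm},
\[
\vm_{k^{\star}} = \beta_{0:k^{\star}-1}\vm_0 + \sum_{s=0}^{k^{\star}-1}\beta_{s+1:k^{\star}-1}(1-\beta^{(n)}_s)\bigl(\nabla\Loss(\vx_s) + \mSigma^{1/2}(\vx_s)\vxi_s\bigr).
\]
Since $\beta^{(n)}_s\leq 1-\lambda_{\min}(\eta^{(n)})^{\alpha}<1$ and $1-\beta^{(n)}_s\leq \lambda_{\max}(\eta^{(n)})^{\alpha}$ (\Cref{def:hpschedule}), the nonnegative weights $w_s:=\beta_{s+1:k^{\star}-1}(1-\beta^{(n)}_s)$ satisfy $W:=\sum_{s=0}^{k^{\star}-1}w_s\leq \lambda_{\max}/\lambda_{\min}$ uniformly in $n$ and $k^{\star}$. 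The triangle inequality combined with the convexity bound $(\sum_s w_s a_s)^m\leq W^{m-1}\sum_s w_s a_s^m$ then gives
\[
\norm{\vm_{k^{\star}}}^m \leq 2^{m-1}\norm{\vm_0}^m + 2^{m-1}W^{m-1}B_K^m\sum_{s=0}^{k^{\star}-1}w_s(1+\norm{\vxi_s})^m.
\]
Taking expectations, using \Cref{ass:init-bound} to control $\E\norm{\vm_0}^m$ and \Cref{assume:ngos} together with $\vx_s\in K$ to control $\E\norm{\vxi_s}^m$ by a constant depending only on $m$ and $K$, I obtain the desired bound $\E\norm{M^n_{t\wedge\tau_n}}^m\leq C^n_m$; in fact $C^n_m$ may be chosen uniform in $n$.

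The only subtlety is the measurability/stopping-time argument needed to justify restricting attention to evaluations at $\vx_s\in\mathring K$ when $s<k^{\star}$. This is handled by the observation that $\tau_n$ is a multiple of $(\eta^{(n)})^2$, so $\{s<k^{\star}\}\subseteq\{s(\eta^{(n)})^2<\tau_n\}\subseteq\{\vx_s\in\mathring K\}$; all terms indexed by $s\geq k^{\star}$ do not appear in the expansion of $\vm_{k^{\star}}$, so no bound on $\vg_s$ outside $K$ is needed. Everything else is routine, and the estimate extends to mixed moments like $\E\bigl[\norm{M^n_{t\wedge\tau_n}}^m f(X^n_{t\wedge\tau_n})\bigr]$ for continuous $f$ in the same fashion, which is what will be invoked by the later SDE-convergence arguments.
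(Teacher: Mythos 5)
Your proof is correct in substance but takes a genuinely different route from the paper's. The paper argues one step at a time: from $\vm_{k+1}=\beta_k\vm_k+(1-\beta_k)\vg_k$ it gets $\E\norm{\vm_{k+1}}^m\leq 2^m\E\norm{\vm_k}^m+2^m\E\norm{\vg_k}^m$, bounds $\nabla\Loss$ on the compact set $K$ and the noise moments via \Cref{assume:ngos}, and closes with the discrete Gr\"{o}nwall inequality. That yields a constant $C^n_m$ growing like $e^{2^mk}$ over $k=O((\eta^{(n)})^{-2})$ steps --- enormous but finite for each $n$, which is all the lemma claims. You instead unroll the recursion via \Cref{lem:unrollm}, observe that the weights $w_s=\beta_{s+1:k-1}(1-\beta_s)$ sum to at most $\lambda_{\max}/\lambda_{\min}$ uniformly, and apply weighted Jensen; this is cleaner and is essentially the same computation the paper itself uses in \Cref{lem:exp_norm_m} for the short-horizon regime. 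Your handling of the stopping time (the gradients entering $\vm_{k^\star}$ are evaluated at $\vx_s$ with $s(\eta^{(n)})^2<\tau_n$, hence in $\mathring K$) is correct and is also implicitly needed, though not spelled out, in the paper's proof.

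The one place you over-claim is uniformity in $n$. After unrolling, the quantity to control is $\E\bigl[\sum_{s=0}^{k^\star-1}w_{s,k^\star}(1+\norm{\vxi_s})^m\bigr]$, where both the upper limit $k^\star$ and the weights $w_{s,k}=\beta_{s+1:k-1}(1-\beta_s)$ depend on the random index $k^\star$. You cannot simply pull the expectation inside and use $\sum_s w_s\leq\lambda_{\max}/\lambda_{\min}$ together with $\E\norm{\vxi_s}^m\leq C_m$, because the summands are coupled to $k^\star$; the naive interchange would require something like $\E\max_{k\leq N}\sum_{s<k}w_{s,k}(1+\norm{\vxi_s})^m$, which is not obviously $O(1)$ in $n$. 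This does not endanger the lemma: decomposing over the events $\{k^\star=k\}$ and using that each fixed-$k$ term is uniformly bounded gives $\E\norm{\vm_{k^\star}}^m\leq (N+1)\cdot\mathrm{const}$ with $N=\lfloor T/(\eta^{(n)})^2\rfloor$, which is a perfectly valid (if $n$-dependent) choice of $C^n_m$ --- and $n$-dependence is exactly what the statement permits. If you want the uniform-in-$n$ version, you need an additional argument (e.g.\ conditioning on $\Filt_s$ and exploiting that $\{s<k^\star\}$ is $\Filt_s$-measurable while $\vxi_s$ is independent of $\Filt_s$, together with a deterministic envelope for $w_{s,k}$ over $k$).
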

\begin{proof}
    This follows trivially from the iterate $\vm^{(n)}_{k+1}  = \beta^{(n)}_{k}\vm^{(n)}_{k} + (1-\beta^{(n)}_{k})\vg^{(n)}_{k}$,
    \begin{align*}
        \E\norm{\vm^{(n)}_{k+1}}^m &\leq 2^m \norm{\vm^{(n)}_{k}}^m + 2^m \norm{\vg^{(n)}_{k}}^m\\
        & \leq 2^m \norm{\vm^{(n)}_{k}}^m + 4^m \norm{\nabla\Loss(\vx^{(n)}_k)}^m + 4^m \norm{\vv^{(n)}_k}^m.
    \end{align*}
    The term $\norm{\nabla\Loss(\vx^{(n)}_k)}^m\leq \sup_{\vx\in K}\norm{\nabla\Loss(\vx)}^m$ is bounded and $\norm{\vv^{(n)}_k}^m$ is bounded by \Cref{assume:ngos}. Therefore the lemma follows from the Gr\"{o}nwall inequality \Cref{lem:gronwall-discrete}.
\end{proof}

\begin{lemma}
    For all stopping time $t$ and function $f>0$, $\int_0^t f(M_s,X_s) \diff Z_s\leq \int_0^t f(M_s,X_s) \diff s $, and $\int_0^t f(M_s,X_s) \diff Z_s\geq \int_0^t f(M_s,X_s) \diff s  - \eta^2 f(M_t,X_t)$.
\end{lemma}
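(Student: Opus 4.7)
The plan is to unwind the definitions of $Z^n_t$, $M^n_t$, and $X^n_t$ pathwise and observe that both integrals reduce to finite sums that differ only on the at-most-one partial grid interval containing $t$. Since $Z^n_t = \eta^2 \lfloor t/\eta^2\rfloor$ is a pure-jump process with unit jumps of size $\eta^2$ at the grid times $k\eta^2$ for $k \in \mathbb{N}_+$, the stochastic integral against $dZ^n_s$ collapses to a sum, and since $M^n,X^n$ are each constant on every interval $[k\eta^2,(k+1)\eta^2)$, the Lebesgue integral in $ds$ also becomes a sum of the same shape.

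First, fix $t$ and let $N = \lfloor t/\eta^2\rfloor$. Using the excerpt's convention $\int_0^t X_s\,dY_s = \int_{0+}^{t+} X_{s-}\,dY_s$ (including the jump at $t$, excluding the jump at $0$), and the facts $M^n_{k\eta^2-}=\vm_{k-1}$, $X^n_{k\eta^2-}=\vx_{k-1}$, we compute
\[
\int_0^t f(M_s,X_s)\,dZ_s = \sum_{k=1}^{N} \eta^2\, f(\vm_{k-1},\vx_{k-1}) = \sum_{k=0}^{N-1} \eta^2\, f(\vm_k,\vx_k).
\]
Partitioning $[0,t]$ along grid points and using that $M^n_s = \vm_k,\, X^n_s = \vx_k$ on $[k\eta^2,(k+1)\eta^2)$, we likewise get
\[
\int_0^t f(M_s,X_s)\,ds = \sum_{k=0}^{N-1} \eta^2\, f(\vm_k,\vx_k) + (t-N\eta^2)\, f(\vm_N,\vx_N).
\]

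Subtracting yields
\[
\int_0^t f(M_s,X_s)\,ds - \int_0^t f(M_s,X_s)\,dZ_s = (t-N\eta^2)\,f(\vm_N,\vx_N) = (t-N\eta^2)\,f(M_t,X_t),
\]
where the last equality uses $M_t=\vm_N,\,X_t=\vx_N$ (since $t\in[N\eta^2,(N+1)\eta^2)$). Because $f>0$ and $0\le t-N\eta^2<\eta^2$, this residual is nonnegative and bounded above by $\eta^2 f(M_t,X_t)$, which delivers both inequalities simultaneously. (The corner case $t=N\eta^2$ gives residual $0$, trivially consistent with both bounds.)

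There is no real obstacle here; the only thing to be careful about is reconciling the left-continuous integrand convention used under $dZ$ (which picks up values $\vm_{k-1},\vx_{k-1}$ at the jump) with the right-continuous piecewise-constant values used by the Lebesgue integral (which sees $\vm_k,\vx_k$ on $[k\eta^2,(k+1)\eta^2)$); both conventions yield the same $N$ summands $\eta^2 f(\vm_k,\vx_k)$ for $k=0,\dots,N-1$ after reindexing. Since the whole argument is pathwise, it is insensitive to whether $t$ is deterministic or a stopping time, so the statement as quoted follows immediately.
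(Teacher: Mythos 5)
Your proof is correct and is exactly the argument the paper intends: the paper's proof simply notes that $M_s$ and $X_s$ change only at the jump times $k\eta^2$ so the claim "follows directly from the definition of $Z_s$," and your pathwise reduction of both integrals to the same finite sum, with residual $(t-N\eta^2)f(M_t,X_t)\in[0,\eta^2 f(M_t,X_t))$, is just that argument written out in full. Nothing to correct.
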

\begin{proof}
    Note that the processes $M_s$ and $X_s$ only changes at jumps at $s=k\eta^2$, the result followed directly from the definition of $Z_s$.
\end{proof}
Next follows some facts are are useful in proving the theorems.
\begin{lemma}[\citet{katzenberger1991solutions} lemma 2.1 ]\label{lem:gronwall-1}
    Let $f,g:\mathbf{R}\to [0,\infty)$ be functions that $g$ is non-decreasing and $g(0)=0$. Assume for constant $C>0$ and all $t>0$
    \[0\leq f(t)\leq C+\int_0^t f(s-)\diff g(s).\]
    Then $f(t)\leq Ce^{g(t)}$ for all $t\geq 0$.    
\end{lemma}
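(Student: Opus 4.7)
The plan is to prove this Gronwall-type inequality by Picard iteration, then sum the resulting series. Define $C_0(t) \equiv C$ and inductively $C_{n+1}(t) := C + \int_0^t C_n(s-)\,dg(s)$. A direct induction using the hypothesis shows $f(t) \le C_n(t)$ for every $n$: the base case is non-trivial because the hypothesis does not give $f \le C$, so I would first establish that $f$ is locally bounded (on any interval $[0,T]$, partition so that $g(t_{i+1})-g(t_i) < 1/2$ and bootstrap $\sup f \le 2C$ times a finite product on each piece). Once $f$ is known to be locally bounded, the inductive step $f(t) \le C + \int_0^t f(s-)\,dg(s) \le C + \int_0^t C_n(s-)\,dg(s) = C_{n+1}(t)$ is immediate.

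The second step is to prove by induction that $C_n(t) \le C\sum_{k=0}^{n} g(t)^k/k!$. The base case is trivial. For the inductive step, I need the estimate
\[
\int_0^t g(s-)^k\,dg(s) \;\le\; \frac{g(t)^{k+1}}{k+1},
\]
which is the heart of the argument. Since $g$ is non-decreasing but possibly with jumps, this does not follow from the naive fundamental theorem of calculus. Instead I would derive it from the Stieltjes chain-rule identity
\[
g(t)^{k+1} - g(0)^{k+1} = (k+1)\int_0^t g(s-)^k\,dg(s) + \sum_{s \le t}\Bigl[g(s)^{k+1} - g(s-)^{k+1} - (k+1)g(s-)^k\,\Delta g(s)\Bigr],
\]
and then observing that each jump correction is non-negative by convexity of $x \mapsto x^{k+1}$ on $[0,\infty)$. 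Thus $(k+1)\int_0^t g(s-)^k\,dg(s) \le g(t)^{k+1}$, which is exactly what is needed.

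Combining the two steps yields $f(t) \le C\sum_{k=0}^{n} g(t)^k/k!$ for every $n$. Letting $n\to\infty$, the right-hand side converges to $C e^{g(t)}$ (the series converges absolutely for any finite $g(t)$), giving $f(t) \le C e^{g(t)}$ as claimed.

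The main obstacle is handling jumps of $g$ correctly. In the continuous case one just writes $\int_0^t g^k\,dg = g(t)^{k+1}/(k+1)$, but with jumps the appropriate inequality involves $g(s-)^k$ and requires the convexity argument above. The secondary technical point is confirming $f$ is locally bounded in the first place; without some control on $f$ the iteration need not make sense, but this is handled by a standard short-interval bootstrap exploiting that $dg$ is a positive measure with finite mass on compact sets (as $g$ is a non-decreasing real-valued function).
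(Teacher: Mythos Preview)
Your approach is essentially the paper's: iterate the hypothesis to produce nested Stieltjes integrals, bound the $n$-fold integral by $g(t)^n/n!$, and sum to $Ce^{g(t)}$. The paper's proof is three lines and omits exactly the two points you take care to address --- the inequality $\int_0^t g(s-)^k\,\diff g(s) \le g(t)^{k+1}/(k+1)$ (your convexity/jump-correction argument is the right justification) and the local boundedness of $f$ needed to pass from a finite iteration to the infinite series.

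There is one wrinkle in your framing. Your induction asserts $f \le C_n$ with base case $f \le C_0 \equiv C$, and as you yourself note this base case is false; but establishing a local bound $f \le M$ on $[0,T]$ does not repair it, since $M$ need not equal $C$, and the inductive step only propagates an inequality that already holds. The fix is not to patch the base case but to carry a remainder: substituting the hypothesis into itself $n$ times and using your integral inequality at each stage gives
\[
f(t)\;\le\; C\sum_{k=0}^{n-1}\frac{g(t)^k}{k!} \;+\; M\cdot\frac{g(t)^n}{n!},
\]
and the second term vanishes as $n\to\infty$. This is precisely what the paper's one-line ``expansion gives'' is abbreviating. With that adjustment your argument is complete and strictly more detailed than the paper's.
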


\begin{proof}
    In this case $g(t)\geq 0$. Expansion gives
    \begin{align*}
        f(t) & \leq C(1+\sum_{n=1}^\infty \int_0^t \int_0^{s_1-}\int_0^{s_2-}\cdots \dd g(s_3) \dd g(s_2) \dd g(s_1)\\
        &\leq C(1+\sum_{n=1}^\infty \frac{1}{n!}g^n(t))\\
        & = C e^{g(t)}.
    \end{align*}
\end{proof}
We may also encounter the case where $g$ is negative. Another form of Gr\"{o}nwall inequality is useful here.
\begin{lemma}[Gr\"{o}nwall]\label{lem:gronwall-2}
  Let $f,g:\mathbf{R}\to [0,\infty)$ be functions that $g(0)=0$. Assume for constant $C>0$ and all $0<s<t<T$
    \[ f(t)\leq f(s)- C\int_s^t f(r-)\diff r +g(t)-g(s).\]
    Then $f(t)\leq e^{-C t}(f(0)+g(t)) + C \int_0^t e^{-C(t-s)}(g(t)-g(s))\dd s$ for all $0\leq s\leq T$.   
\end{lemma}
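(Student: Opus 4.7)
The plan is to convert the hypothesis into a form amenable to the standard integrating-factor trick. First I would observe that the hypothesis can be rewritten by defining $\phi(t) := f(t) + C\int_0^t f(r-)\diff r - g(t)$. The hypothesis precisely says that $\phi(t) - \phi(s) \leq 0$ for $0 \leq s < t \leq T$, so $\phi$ is non-increasing. Applying $\phi(t) \leq \phi(0) = f(0)$ yields the key consolidated inequality
\begin{equation}\label{eq:key}
f(t) + C\int_0^t f(r-)\diff r \;\leq\; f(0) + g(t), \qquad t \in [0,T].
\end{equation}

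Next I would let $\psi(t) := \int_0^t f(r-)\diff r$, which is absolutely continuous with $\psi'(t) = f(t-)$ for a.e.\ $t$ and $\psi(0) = 0$. Rewriting \eqref{eq:key} in terms of $\psi$ gives, for a.e.\ $t$,
\[
\psi'(t) + C\psi(t) \;\leq\; f(0) + g(t-) \;\leq\; f(0) + g(t) \quad\text{a.e.},
\]
since $g$ and $g(\cdot-)$ agree a.e. Multiplying by the integrating factor $e^{Ct}$ and integrating from $0$ to $t$ yields
\[
e^{Ct}\psi(t) \;\leq\; \int_0^t e^{Cs}\bigl(f(0)+g(s)\bigr)\diff s \;=\; \frac{f(0)(e^{Ct}-1)}{C} + \int_0^t e^{Cs} g(s)\diff s,
\]
which rearranges to $C\psi(t) \leq f(0)(1 - e^{-Ct}) + C\int_0^t e^{-C(t-s)} g(s)\diff s$.

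Finally, I would substitute this bound on $C\psi(t)$ back into \eqref{eq:key} to obtain
\[
f(t) \;\leq\; f(0)\,e^{-Ct} + g(t) - C\int_0^t e^{-C(t-s)} g(s)\diff s,
\]
and then use the elementary identity $(1-e^{-Ct})\,g(t) = C\int_0^t e^{-C(t-s)} g(t)\diff s$ to split $g(t) = e^{-Ct} g(t) + C\int_0^t e^{-C(t-s)} g(t)\diff s$. Combining the two $C\int_0^t e^{-C(t-s)} \cdot \diff s$ terms produces the advertised bound
\[
f(t) \;\leq\; e^{-Ct}\bigl(f(0) + g(t)\bigr) + C\int_0^t e^{-C(t-s)}\bigl(g(t)-g(s)\bigr)\diff s.
\]

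The only delicate point I anticipate is verifying that $\psi$ is genuinely absolutely continuous and that the a.e.\ identification $\psi'(t) = f(t-)$ is valid under only the stated regularity on $f$; this is standard once one notes that $f \geq 0$ and that the integrability of $f(r-)$ is implicitly required for the hypothesis to make sense. The rest is a routine integrating-factor computation combined with the algebraic identity that regroups the $g$-terms.
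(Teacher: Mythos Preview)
Your argument has a sign error at the substitution step. From \eqref{eq:key} you correctly obtain $f(t)\le f(0)+g(t)-C\psi(t)$, and you then derive the \emph{upper} bound $C\psi(t)\le f(0)(1-e^{-Ct})+C\int_0^t e^{-C(t-s)}g(s)\,\dd s$. But $-C\psi(t)$ appears with a negative sign in the inequality for $f(t)$, so replacing $C\psi(t)$ by something larger only weakens the bound in the wrong direction: from $C\psi(t)\le B$ you get $-C\psi(t)\ge -B$, not $-C\psi(t)\le -B$. The step ``substitute this bound back into \eqref{eq:key}'' is therefore invalid, and the displayed conclusion $f(t)\le f(0)e^{-Ct}+g(t)-C\int_0^t e^{-C(t-s)}g(s)\,\dd s$ does not follow from what you have.

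The paper does not spell out a proof of this lemma, but for the discrete analogue (its Lemma with the process $Z^n$) it simply says to multiply by the exponential factor and integrate, and that is the route that works here: keep the full two-parameter hypothesis, multiply both sides by $Ce^{-C(t-s)}$, and integrate over $s\in[0,t]$. By Fubini the double integral $C^2\int_0^t e^{-C(t-s)}\int_s^t f(r-)\,\dd r\,\dd s$ collapses against $\int_0^t Ce^{-C(t-s)}f(s)\,\dd s$ (since $f$ and $f(\cdot-)$ agree a.e.), leaving $Ce^{-Ct}\psi(t)$ on the right. Now the $s=0$ case of the hypothesis gives the \emph{upper} bound $C\psi(t)\le f(0)+g(t)-f(t)$, and here the sign is the right one: substituting yields $f(t)(1-e^{-Ct})+e^{-Ct}f(t)\le e^{-Ct}(f(0)+g(t))+C\int_0^t e^{-C(t-s)}(g(t)-g(s))\,\dd s$, which is exactly the claim. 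Your consolidation to \eqref{eq:key} and the final algebraic regrouping of $g$-terms are fine; it is only the middle step that needs to be replaced by this direct integration in $s$.
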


We need a form of Gr\"{o}nwall's inequality with our uncountinuous process $Z_t$
\begin{lemma}\label{lem:gronwall-3}
     Let $f:\R\to [0,\infty)$ be a non-decreasing function and $g:\R\times \R\to [0,\infty)$ be non-negative. Assume for constant $C>0$ and all $0<s<t<T$
    \[ f(t)\leq f(s)- C\int_s^t f(r-)\diff Z^n_r +g(t,s).\]
    Then $f(t)\leq e^{-C' Z_t^n}(f(0)+g(t,0)) + C\int_0^t e^{-C'(Z_t^n-Z_s^n)}g(t,s)\dd Z_s^n$ for all $0\leq s\leq T$ with $C' = \eta^{-2}\log(1+C\eta^2)$.
\end{lemma}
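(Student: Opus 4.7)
The plan is a discrete Gr\"{o}nwall argument that exploits the pure-jump structure of $Z^n_r = \eta^2\lfloor r/\eta^2\rfloor$, which is constant on each interval $[(k-1)\eta^2, k\eta^2)$ and jumps by $\eta^2$ at every $r_k := k\eta^2$. Consequently the \cadlag{} integral reduces to a finite sum $\int_s^t f(r-)\,\dd Z^n_r = \eta^2\sum_{k:\,s<r_k\le t} f(r_k-)$, converting the hypothesis into a family of discrete inequalities indexed by $k$.

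First, I would derive a one-step recursion by applying the hypothesis with $s=(k-1)\eta^2$ and with the role of $t$ played by $k\eta^2$. For $f$ constant on $[(k-1)\eta^2,k\eta^2)$ (as is the case for any state-dependent quantity updated at the jump times of $Z^n$, which is the intended use-case), one has $f(k\eta^2-)=f((k-1)\eta^2)$, so the hypothesis specializes to
\begin{equation*}
f(k\eta^2) \le (1-C\eta^2)\,f((k-1)\eta^2) + g(k\eta^2,(k-1)\eta^2).
\end{equation*}
The key algebraic identity is $1-C\eta^2 \le (1+C\eta^2)^{-1} = e^{-C'\eta^2}$, which converts the multiplicative factor into the exponential envelope stated in the conclusion; iterating this recursion $N = \lfloor t/\eta^2\rfloor$ times produces a preliminary bound of the shape $e^{-C' Z^n_t} f(0) + \sum_{k=1}^N e^{-C'(Z^n_t-Z^n_{r_k})}\,g(r_k, r_{k-1})$.

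Second, to promote the local increments $g(r_k,r_{k-1})$ into the global pattern $g(t, s)$ appearing in the target bound, I would re-apply the full hypothesis with the target time frozen at the final $t$ and the base $s$ sliding along the grid $\{r_k\}_{k=0}^N$. Comparing the resulting family of inequalities pairwise yields telescoping differences of the form $g(t, r_{k-1}) - g(t, r_k)$; an Abel summation against the geometric weights $(1+C\eta^2)^{-(N-k)}$ recombines these differences into the claimed integral $C\int_0^t e^{-C'(Z^n_t - Z^n_s)}\,g(t,s)\,\dd Z^n_s$, while the $k=0$ endpoint of the summation furnishes the $e^{-C'Z^n_t}(f(0)+g(t,0))$ boundary term. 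The non-decreasing hypothesis on $f$ is invoked here as a sanity check that $f(r_k-)\le f(r_k)$ so that any jump discontinuities are absorbed harmlessly into the induction.

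The main obstacle is the Abel-summation bookkeeping in the second step: one must verify that the per-step slack between $1-C\eta^2$ and $(1+C\eta^2)^{-1}$, together with the discrete differences of $g(t,\cdot)$ along the jump grid, combine to produce \emph{exactly} the constant $C$ in front of the integral (rather than, say, $C/(1+C\eta^2)$), which hinges on the precise choice $C' = \eta^{-2}\log(1+C\eta^2)$ in the statement and on the non-negativity of $g$ to discard cross terms of opposite sign. Once this identification is made, the continuous-in-$s$ integral representation in the conclusion follows because the integrand against $\dd Z^n_s$ only sees contributions at the jump points $r_k$, matching the finite sum produced by the discrete iteration.
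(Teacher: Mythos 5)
Your plan is, at heart, the same argument the paper gives (its proof is the one\--line ``multiply both sides by $e^{-C'(Z_t^n-Z_s^n)}$ and integrate against $\dd Z_s^n$''): for the pure\--jump process $Z^n_r=\eta^2\lfloor r/\eta^2\rfloor$ this integration is exactly the geometric\--weight combination you describe, and you correctly identify the key identity $e^{-C'\eta^2}=(1+C\eta^2)^{-1}\ge 1-C\eta^2$ and the role of the sign conditions. Two points in your second step need repair, though. First, ``comparing the resulting family of inequalities pairwise'' cannot literally mean subtracting consecutive inequalities --- subtraction of inequalities is not a valid operation, and the resulting one\--step relation $f(r_k)+C\eta^2 f(r_k-)\le f(r_{k-1})+g(t,r_{k-1})-g(t,r_k)$ would anyway need $f(r_k-)\ge f(r_k)$, which monotonicity gives in the wrong direction. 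The correct move is to form the nonnegative combination $\sum_{j=0}^{N}w_j\cdot(\text{hypothesis at }s=r_j)$ with $w_j=W_j-W_{j-1}$ and $W_j=(1+C\eta^2)^{-(N-j)}$, then sum by parts on the $f$\--side: each interior $f(r_k)$ acquires coefficient $w_k-C\eta^2W_k\le 0$ and is discarded using $f\ge 0$, the coefficient of $f(0)$ is $(1-C\eta^2)(1+C\eta^2)^{-N}\le e^{-C'Z_t^n}$, and the $g$\--side is already $\sum_j w_j\,g(t,r_j)$, i.e.\ the claimed integral --- no differencing of $g$ is required, so its non\--negativity is only needed to drop the boundary term. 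Second, you should not assume $f$ is constant between jump times (the lemma does not); the non\--decreasing hypothesis gives $f(r-)\ge f(r_{k-1})$ for $r\in(r_{k-1},r_k]$, which is the direction needed because the integral enters with a negative sign. The constant bookkeeping you flag is real --- this derivation leaves residual factors of $1+C\eta^2=e^{C'\eta^2}$ (e.g.\ from $w_Nf(r_N)\le w_Nf(t)$ with $1-w_N=(1+C\eta^2)^{-1}$) --- but such factors tend to $1$ and are immaterial everywhere the lemma is invoked.
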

\begin{lemma}
    $\int_0^t e^{C Z_s^n} dZ_s^n = \frac{\eta^2}{e^{C \eta^2}-1} (e^{CZ_t^n}-1)$.
\end{lemma}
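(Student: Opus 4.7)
The plan is to exploit the fact that $Z^n_t = \eta^2\lfloor t/\eta^2\rfloor$ is a pure jump (piecewise constant) process whose jumps occur at the lattice $\{k\eta^2 : k \geq 1\}$, each of size $\eta^2$. Under the integration convention stated earlier in the excerpt, namely $\int_0^t X_s \,\mathrm{d}Y_s = \int_{0+}^{t+} X_{s-}\,\mathrm{d}Y_s$, a Lebesgue--Stieltjes integral against a pure jump process reduces to a sum over jump times, evaluating the integrand at the left limits. So the whole claim should collapse to the closed-form evaluation of a finite geometric series.

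First I would set $N := \lfloor t/\eta^2 \rfloor$ and note that on $[N\eta^2, (N+1)\eta^2)$ we have $Z^n_t = N\eta^2$ and the jumps of $Z^n$ inside $(0,t]$ are precisely at $s_k := k\eta^2$ for $k = 1, \ldots, N$, each with jump size $\Delta Z^n_{s_k} = \eta^2$ and with left-limit $Z^n_{s_k-} = (k-1)\eta^2$. Using the convention above, the integral becomes
\begin{equation*}
\int_0^t e^{CZ^n_s}\,\mathrm{d}Z^n_s \;=\; \sum_{k=1}^{N} e^{CZ^n_{s_k-}} \Delta Z^n_{s_k} \;=\; \eta^2 \sum_{k=1}^{N} e^{C(k-1)\eta^2}.
\end{equation*}

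Second, I would evaluate the geometric sum: $\sum_{k=1}^{N} e^{C(k-1)\eta^2} = \sum_{j=0}^{N-1} (e^{C\eta^2})^j = \frac{e^{CN\eta^2}-1}{e^{C\eta^2}-1}$, valid whenever $C\eta^2 \neq 0$. Substituting $N\eta^2 = Z^n_t$ yields exactly
\begin{equation*}
\int_0^t e^{CZ^n_s}\,\mathrm{d}Z^n_s \;=\; \frac{\eta^2}{e^{C\eta^2}-1}\bigl(e^{CZ^n_t}-1\bigr),
\end{equation*}
which is the stated identity.

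There is essentially no obstacle here: the lemma is a direct computation, and the only point that warrants attention is making sure the half-open endpoint convention $\int_{0+}^{t+}$ is applied correctly, so that the jump at $s = 0$ is not counted while all jumps in $(0, t]$ are, and that the integrand is evaluated at $Z^n_{s_k-}$ rather than $Z^n_{s_k}$. If the convention were the other way, the formula would instead read $\frac{\eta^2}{1-e^{-C\eta^2}}(e^{CZ^n_t}-1)$; the form in the statement confirms the left-limit convention is the correct one.
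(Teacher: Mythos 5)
Your proof is correct and takes essentially the same route as the paper: the paper's proof is a one-line appeal to the geometric series $\sum_{i=0}^{n} c^i = \frac{1-c^{n+1}}{1-c}$ with $c = e^{C\eta^2}$, which is exactly the computation you carry out after reducing the integral to a sum over the jumps of $Z^n$. Your version is simply more explicit about the jump-time bookkeeping and the left-limit convention.
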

\begin{proof}
    Directly calculation that $\sum_{i=0}^n c^i = \frac{1-c^{n+1}}{1-c}$ for $c=e^{C\eta^2}$.
\end{proof}
\begin{proof}[Proof of \Cref{lem:gronwall-3}]
    multiplying both sides with $e^{-C'(Z_t^n-Z_s^n)}$ and integration yields the result.
\end{proof}
Specifically for $g(t,s)=Z_t^n-Z_s^n$, the bound can be further simlified.
\begin{lemma}\label{cor:gronwall-3}
    $\int_0^t e^{-C(Z_t^n-Z_s^n)}(Z_t^n - Z_s^n)\dd Z_s^n \leq C^{-2}$.
\end{lemma}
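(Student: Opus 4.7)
The plan is to exploit the explicit step-function structure of $Z^n$ to reduce the stochastic integral to a geometric-type sum, and then finish with an elementary hyperbolic inequality.

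First, I would recall that $Z_s^n = \eta^2\lfloor s/\eta^2\rfloor$ is a pure-jump step process with jumps of size $\eta^2$ at the times $s_k := k\eta^2$. Under the left-limit convention $\int_a^b X_s\diff Y_s = \int_{a+}^{b+} X_{s-}\diff Y_s$ adopted in the paper, and writing $n_t := \lfloor t/\eta^2\rfloor$ so that $Z_t^n = n_t\eta^2$ and $Z_{s_k-}^n = (k-1)\eta^2$, the integral unfolds as
\[
\int_0^t e^{-C(Z_t^n-Z_s^n)}(Z_t^n-Z_s^n)\diff Z_s^n \;=\; \eta^2\sum_{k=1}^{n_t} e^{-C(n_t-k+1)\eta^2}(n_t-k+1)\eta^2.
\]
Re-indexing by $j=n_t-k+1$ turns this into $\eta^4\sum_{j=1}^{n_t} j\,e^{-Cj\eta^2}$. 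All summands are nonnegative, so extending the sum to infinity gives an upper bound.

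Next, I would apply the standard identity $\sum_{j=1}^{\infty} j r^j = r/(1-r)^2$ with $r = e^{-C\eta^2}$, so that the integral is bounded by
\[
\frac{\eta^4\, e^{-C\eta^2}}{(1-e^{-C\eta^2})^2} \;=\; \left(\frac{\eta^2 e^{-C\eta^2/2}}{1-e^{-C\eta^2}}\right)^{\!2} \;=\; \left(\frac{\eta^2}{2\sinh(C\eta^2/2)}\right)^{\!2},
\]
where the last step uses $1-e^{-u} = 2e^{-u/2}\sinh(u/2)$ with $u = C\eta^2$.

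Finally, I would finish with the elementary inequality $2\sinh(u/2) \geq u$ for $u\geq 0$, which follows either from the Taylor series $2\sinh(u/2) = u + u^3/24 + \cdots$ or from $\frac{d}{du}\bigl(2\sinh(u/2)-u\bigr) = \cosh(u/2)-1 \geq 0$ combined with equality at $u=0$. Applied with $u = C\eta^2$, this yields $\eta^2/(2\sinh(C\eta^2/2)) \leq 1/C$, so the whole quantity is at most $C^{-2}$, as claimed. The only real subtlety here is getting the endpoint conventions right when converting the stochastic integral over the pure-jump process $Z^n$ into an explicit finite sum; once that bookkeeping is done, the remainder is a standard geometric-series computation followed by one convexity inequality.
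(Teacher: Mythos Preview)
Your argument is correct. The paper's own proof is a single line: it simply writes
\[
\int_0^t e^{-C(Z_t^n-Z_s^n)}(Z_t^n - Z_s^n)\,\dd Z_s^n \;\leq\; \int_0^{\infty} e^{-C z} z \,\dd z \;=\; C^{-2},
\]
implicitly treating the Stieltjes integral against the step process $Z^n$ as bounded by the corresponding continuous integral after the substitution $z = Z_t^n - Z_s^n$. Your route is different: you unfold the integral explicitly as the finite sum $\eta^4\sum_{j\ge 1} j\,e^{-Cj\eta^2}$, evaluate it via $\sum j r^j = r/(1-r)^2$, and close with the convexity bound $2\sinh(u/2)\ge u$. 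This is genuinely more careful than the paper's argument, because $z\mapsto z e^{-Cz}$ is not monotone, so the assertion that the right-endpoint Riemann sum is dominated by the integral is not immediate from standard monotonicity comparisons; your computation shows directly that the exact sum equals $\bigl(\eta^2/(2\sinh(C\eta^2/2))\bigr)^2\le C^{-2}$. The paper's approach is shorter and morally captures why the bound holds, while yours supplies the missing justification and makes the dependence on $\eta$ completely explicit.
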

\begin{proof}
    \[\int_0^t e^{-C(Z_t^n-Z_s^n)}(Z_t^n - Z_s^n)\dd Z_s^n \leq \int_0^{\infty} e^{-C z} z \dd z =C^{-2}.\]
\end{proof}
Another useful theorem is the Doob's martingale inequality.
\begin{lemma}\label{lem:Doob-1}
    Let $X_t$ be a martingale for $t\in[0,T]$ whose sample path is almost surely right-continuous. Then for any $C>0$, and $p\geq 1$,
    \[\Pr[\sup_{t\in [0,T]}|X_t|\geq C]\leq \frac{\E |X_T|^p}{C^p}.\]
    Furthermore, integration with $p=2$ gives
    \[\E \sup_{t\in [0,T]}|X_t|\leq 2\sqrt{\E |X_T|^2}.\]
\end{lemma}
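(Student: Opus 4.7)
The plan is to reduce to the classical discrete-time Doob maximal inequality, lift to continuous time using right-continuity of paths, and derive the $L^1$ bound by layer-cake integration of the tail estimate.

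For the first inequality, I first observe that $|X_t|^p$ is a right-continuous submartingale by conditional Jensen applied to the convex function $u \mapsto |u|^p$ (for $p \geq 1$). For any finite ordered set $\{0 = t_0 < t_1 < \cdots < t_N = T\}$, define $\tau = \min\{k \leq N : |X_{t_k}| \geq C\}$, with the convention $\tau = N$ if no such $k$ exists. Optional stopping applied to the submartingale $|X|^p$ at the bounded stopping time $\tau$ gives $\E |X_{t_\tau}|^p \leq \E|X_T|^p$. On $E_N = \{\max_{k \leq N}|X_{t_k}| \geq C\}$ we have $|X_{t_\tau}|^p \geq C^p$, yielding $C^p\Pr(E_N) \leq \E|X_T|^p$. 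To pass to continuous time, set $D = \bigcup_n \{kT/2^n : 0 \leq k \leq 2^n\}$, a countable dense subset of $[0,T]$. Right-continuity of paths implies $\sup_{t \in [0,T]}|X_t| = \sup_{t \in D}|X_t|$ almost surely, since every $t$ is the limit of a decreasing sequence in $D$. Applying the discrete bound to each dyadic grid $D_n$ with a threshold $C' < C$ and taking $n \to \infty$ by monotone convergence on the increasing events $\{\max_{D_n}|X_t| \geq C'\}$, then letting $C' \uparrow C$ along a sequence (using continuity from above of $\Pr$), yields $\Pr(\sup_{t \in [0,T]}|X_t| \geq C) \leq \E|X_T|^p / C^p$.

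For the $L^1$ bound, I apply the layer-cake identity to $Y = \sup_{t \in [0,T]}|X_t|$: $\E Y = \int_0^\infty \Pr(Y \geq C)\,\dd C$. Splitting the integral at an arbitrary level $A > 0$ and using the tail bound with $p = 2$ on $[A,\infty)$,
\[
\E Y \;\leq\; A + \int_A^\infty \frac{\E|X_T|^2}{C^2}\,\dd C \;=\; A + \frac{\E|X_T|^2}{A},
\]
which is minimized at $A = \sqrt{\E|X_T|^2}$, giving $\E Y \leq 2\sqrt{\E|X_T|^2}$.

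The main technical subtlety is the continuous-time lifting: it relies essentially on right-continuity to reduce the uncountable supremum to a countable one over dense dyadic points, and some care is needed when passing $C' \uparrow C$ to recover the closed inequality in the event $\{\sup |X_t| \geq C\}$. The rest is standard optional stopping plus elementary one-variable optimization in $A$.
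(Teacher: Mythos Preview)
The paper does not actually prove this lemma; it is stated as Doob's martingale inequality and used as a standard tool without derivation. Your argument is correct and follows the classical route: discrete-time Doob via optional stopping on the submartingale $|X_{t_k}|^p$, passage to continuous time through a countable dense dyadic set using right-continuity of paths, and then the $L^1$--$L^2$ bound by layer-cake integration of the tail estimate with one-variable optimization in the cutoff $A$. The only point worth flagging is that the continuous-time lifting step, while routine, does require the $C' \uparrow C$ maneuver you describe to turn the open-event bound into the closed one; you handle this correctly. Since the paper's ``proof'' is just the phrase ``integration with $p=2$ gives,'' your layer-cake derivation matches the intended spirit exactly.
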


\begin{lemma}\label{lem:momentum-1}
    For all $t>0$ and $\alpha\in [0,1)$, 
    \begin{itemize}
    \item $\lim_{n\to\infty}\eta^{\beta}\E\sup_{t\in [0,T]}\norm{M_{t\wedge\tau_n}^n}^2=0$ for any $\beta>1-\alpha$. \\
    moreover, $\lim_{n\to\infty}\eta^{\beta/2}\E\sup_{t\in [0,T]}\norm{M_{t\wedge\tau_n}^n}=0$
    \item $\lim_{n\to\infty}\eta^{\beta}\E\int_0^{t\wedge \tau_n} \norm{M_{t\wedge\tau_n}^n}^2 \dd Z^n_s=0$ for any $\beta>0$.\\
    moreover, $\sup_n \E\int_0^{t\wedge \tau_n} \norm{M_{t\wedge\tau_n}^n}^2 \dd Z^n_s<\infty$.
\end{itemize}
For $\alpha\in (0,1)$,
\begin{itemize}
    \item $\lim_{n\to\infty}\eta^{\beta} \E \int_0^{t\wedge \tau_n} \norm{M_s^n}^3 \dd Z^n_s=0$ for any $\beta>0$.
    \end{itemize}
\end{lemma}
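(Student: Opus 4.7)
My plan is to deduce all three families of bounds from a single uniform $L^p$ moment estimate on the stopped momentum. The key observation is that the SGDM recursion $\vm_{k+1}^{(n)} = \beta_k^{(n)} \vm_k^{(n)} + (1-\beta_k^{(n)})\vg_k^{(n)}$ writes $\vm_{k+1}^{(n)}$ as a convex combination of $\vm_k^{(n)}$ and $\vg_k^{(n)}$, so convexity of $\norm{\cdot}^p$ immediately gives the pointwise inequality $\norm{\vm_{k+1}^{(n)}}^p \le \beta_k^{(n)} \norm{\vm_k^{(n)}}^p + (1-\beta_k^{(n)}) \norm{\vg_k^{(n)}}^p$. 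For every index $k$ with $k(\eta^{(n)})^2 \le \tau_n$ each contributing $\vx_s^{(n)}$ lies in the compact set $K$, so continuity of $\nabla\Loss$ and $\mSigma^{1/2}$ combined with \Cref{assume:ngos} gives $\E\norm{\vg_s^{(n)}}^p \le C_p^\star$ uniformly in $s$ and $n$. A one-line induction, using \Cref{ass:init-bound} at the base, then yields $\sup_{k,n}\,\E \norm{\vm_{k \wedge \tau_n/(\eta^{(n)})^2}^{(n)}}^p \le C_p$; the stopped process $M_{t\wedge\tau_n}^n$ inherits this bound.

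From this uniform moment control, the sup estimate falls out of a union-bound at arbitrarily high moments. For any $m\ge 1$, Jensen's inequality applied to the concave map $x\mapsto x^{1/m}$ gives
\[
\E \sup_{k\le T/(\eta^{(n)})^2} \norm{\vm_{k\wedge \tau_n/(\eta^{(n)})^2}^{(n)}}^2 \le \Big(\sum_k \E \norm{\vm_{k\wedge \tau_n/(\eta^{(n)})^2}^{(n)}}^{2m}\Big)^{1/m} \le (T C_{2m})^{1/m} (\eta^{(n)})^{-2/m}.
\]
For any target $\beta > 1-\alpha$ (in fact any $\beta>0$) it suffices to pick $m$ with $2/m<\beta$; the bound on $\E \sup \norm{M^n}$ then follows by Cauchy--Schwarz from the bound on $\E\sup \norm{M^n}^2$. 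For the integral bounds I exploit that $Z^n_t$ is piecewise constant with jumps of size $(\eta^{(n)})^2$ at the grid points $k(\eta^{(n)})^2$, so almost surely
\[
\int_0^{t\wedge\tau_n} \norm{M_s^n}^p \dd Z_s^n = (\eta^{(n)})^2 \sum_{k(\eta^{(n)})^2 \le t\wedge\tau_n} \norm{\vm_{k \wedge \tau_n/(\eta^{(n)})^2}^{(n)}}^p.
\]
Taking expectation and plugging in Step~1 yields $T C_p = O(1)$, uniformly in $n$. This simultaneously gives the $\sup_n$ finiteness for $p=2$, the $(\eta^{(n)})^\beta$ vanishing for any $\beta>0$, and the cubic $p=3$ claim. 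The alternative literal reading of the second bullet, where $\norm{M_{t\wedge\tau_n}^n}^2$ is constant in $s$, is even easier since then the integrand is already deterministic given $M_{t\wedge\tau_n}^n$ and the integral equals $\norm{M_{t\wedge\tau_n}^n}^2 \cdot Z_{t\wedge\tau_n}^n$.

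The main care point is the bookkeeping around the stopping time: I must verify that the stopped momentum truly depends only on noise-oracle evaluations at points in $K$, even when the index $k$ exceeds $\tau_n/(\eta^{(n)})^2$. This is what allows the compactness of $K$ (and continuity of $\nabla\Loss$, $\mSigma^{1/2}$) to produce bounds uniform in $n$, while the normalization $\sigma^{(n)} \equiv 1$ keeps the noise term harmless. Beyond this bookkeeping I expect no real technical obstruction; everything else reduces to Jensen and elementary algebra with the recursion, and the apparently sharp threshold $\beta > 1-\alpha$ in the first claim appears to be mild slack relative to what the convexity argument actually delivers.
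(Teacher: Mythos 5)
Your proof is correct, but it takes a genuinely different and considerably more elementary route than the paper. The paper applies It\^o's formula to $\norm{M^n_t}^2$ (and $\norm{M^n_t}^3$), isolates the mean-reverting drift $-2\gamma_t\eta^{\alpha-2}\norm{M^n_t}^2\,\dd Z^n_t$, controls the martingale part with Doob's maximal inequality, and closes with discrete Gr\"onwall inequalities; this yields $\E\sup_t\norm{M^n_{t\wedge\tau_n}}^2=O(\eta^{\alpha-1})$, which is exactly where the threshold $\beta>1-\alpha$ comes from, and the integrated bounds fall out of the negative drift term. You instead use only the convexity of the update $\vm_{k+1}=\beta_k\vm_k+(1-\beta_k)\vg_k$ together with compactness of $K$ to get uniform bounds on all moments of the stopped momentum, then a high-moment union bound $\E\sup_{k\le N}\norm{\vm_k}^2\le (N\sup_k\E\norm{\vm_k}^{2m})^{1/m}=O(\eta^{-2/m})$ for the supremum and a Riemann-sum identity for the $\dd Z^n$ integrals. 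This requires no stochastic calculus, actually proves the first bullet for every $\beta>0$ (strictly stronger than the stated $\beta>1-\alpha$) and the cubic bullet for $\alpha=0$ as well; what it does not capture is the $O(\eta^\alpha)$ pointwise stationary variance of the Ornstein--Uhlenbeck-like momentum, which the paper's machinery delivers and which is needed later (e.g.\ in \Cref{lem:m-bound-tight}), but that is not part of this lemma. One presentational caveat on the step you yourself flag: the "one-line induction" $\E\norm{\vm_{(k+1)\wedge K_n}}^{2m}\le\max(\E\norm{\vm_{k\wedge K_n}}^{2m},C^\star_{2m})$ does not close as literally written, because on the frozen branch $\{k\ge K_n\}$ there is no contraction and exchanging $\E$ with the max loses a factor. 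The clean fix is to run the recursion on $c_j:=\E\bigl[\norm{\vm_j}^{2m}\mathbf{1}_{\{j\le K_n\}}\bigr]$, which satisfies $c_{j+1}\le\beta_jc_j+(1-\beta_j)C^\star_{2m}$ since $\{j+1\le K_n\}\subseteq\{j< K_n\}$ forces $\vx_j\in K$; then $c_j\le\max(\E\norm{\vm_0}^{2m},C^\star_{2m})$, and both $\E\sup_k\norm{\vm_{k\wedge K_n}}^{2m}\le\sum_{j\le N}c_j$ and the integral bounds follow exactly as you intend. With that adjustment the argument is complete.
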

\begin{proof}
     Ito's formula on $\norm{M_t^n}^2$ gives 
      \begin{align*}
        d\norm{M_t^n}^2 & = 2\dotp{M_t^n}{dM_t^n} +  \tr d[M^n]_t \\
        & = - 2\gamma_t\eta^{\alpha-2} ( \norm{M^n_t}^2  - (M^n_t)^\top\nabla \Loss(X^n_t)) \diff Z^n_t + 2\gamma_t\eta^{\alpha-1}  (M^n_t)^\top\mSigma^{1/2}(X^n_t)\diff Y^n_t  \\
        & +  \gamma_t^2\eta^{2\alpha-2}\norm{M^n_t-\nabla\Loss(X^n_t)}^2 \diff Z^n_t + \gamma_t^2\eta^{2\alpha-2} \tr (\mSigma(X^n_t)\dd[Y^n]_t)  \\
        & - 2\gamma_t^2\eta^{2\alpha-1}\tr ((M^n_t-\nabla\Loss(X^n_t))^\top\mSigma^{1/2}(X^n_t)\dd Y^n_t) \\
        & = \eta^{2\alpha-2} \dd W_t^n   - 2\gamma_t\eta^{\alpha-2} ( \norm{M^n_t}^2  - (M^n_t)^\top\nabla \Loss(X^n_t)) \diff Z^n_t\\
        & +  \gamma_t^2\eta^{2\alpha-2}\norm{M^n_t-\nabla\Loss(X^n_t)}^2 \diff Z^n_t + \gamma_t^2\eta^{2\alpha-2} \tr (\mSigma(X^n_t))\dd Z^n_t .
    \end{align*}
    Let $W^n_t$ be the martingale that $W_0^n=0$ and
    \begin{align*}\dd W^n_t & = 2\gamma_t\eta^{1-\alpha}  (M^n_t)^\top\mSigma^{1/2}(X^n_t)\diff Y^n_t + \gamma_t^2 \tr (\mSigma(X^n_t) (\dd[Y^n]_t - \mI\dd Z^n_t))\\
    & - 2\gamma_t^2\eta(M^n_t-\nabla\Loss(X^n_t))^\top\mSigma^{1/2}(X^n_t)\dd Y^n_t.
    \end{align*}

    When $\alpha>0$, take the constant $C_1 = 2\sup_{X\in K}\norm{\nabla\Loss(X)}(2+\norm{\nabla\Loss(X)}) $ and $C_2 = \sup_{X\in K} \tr\mSigma(X) + C_1$. Since $\eta^{-\alpha}\norm{M_t^n}\leq (\eta^{-1}+\eta^{1-2\alpha}\norm{M_t^n}^2)/2$, for any $t>s>0$ there is
    \begin{align*}\eta^{\beta}\norm{M_{t\wedge \tau_n}^n}^2  & \leq  \int_{s\wedge \tau_n}^{t\wedge \tau_n}  (- 2\lambda_{\min}\eta^{\beta+\alpha-2} \norm{M^n_r}^2  + C_1\eta^{\beta+\alpha-2}\norm{M^n_r} +  C_1\eta^{\beta+2\alpha-2}(\norm{M^n_r}^2+1))\diff Z^n_r \\
    & + \eta^{\beta}\norm{M_{s\wedge \tau_n}^n}^2 + \eta^{\beta+2\alpha-2}\int_{s\wedge \tau_n}^{t\wedge \tau_n} \lambda_r^2 \tr (\mSigma(X^n_r)) \dd Z^n_r + \eta^{\beta+2\alpha-2}(W^n_{t\wedge \tau_n} - W^n_{s\wedge \tau_n})\\
    & \leq  \eta^{\beta}\norm{M_{s\wedge \tau_n}^n}^2 + \int_{s\wedge \tau_n}^{t\wedge \tau_n}  (- 2\lambda_{\min}\eta^{\alpha-2}+\frac{1}{2} C_1 \eta^{-1}+C_1 \eta^{2\alpha-2} )\eta^{\beta}\norm{M^n_r}^2 \diff Z^n_r\\
    & +\eta^{2\alpha+\beta-3} C_2 (Z_{t\wedge \tau_n}^n-Z_{s\wedge \tau_n}^n) + \eta^{2\alpha+\beta-2}(W^n_{t\wedge \tau_n} - W_{s\wedge \tau_n}^n)\\
    \end{align*}
    By the Doob's inequality \Cref{lem:Doob-1}, 
    \begin{align*} W^{s,n}_{t}  & = W_{t\wedge \tau_n}^n-W^n_{s\wedge \tau_n} \\ & =  
    \int_{s\wedge \tau_n}^{t\wedge \tau_n}2\gamma_r\eta^{1-\alpha}  (M^n_r)^\top\mSigma^{1/2}(X^n_r)\diff Y^n_r + \gamma_r^2 \tr (\mSigma(X^n_r)(\dd[Y^n]_r - \mI\dd Z^n_r))\\
    & - 2\gamma_r^2\eta(M^n_r-\nabla\Loss(X^n_r))^\top\mSigma^{1/2}(X^n_r)\dd Y^n_r
    \end{align*}
    is a martingale, so there is a universal constant $A$ that
    \begin{align*}
        \E\sup_{r\in [s,t]}|W_r^{s,n}| & \leq 2\sqrt{\E |W_t^{s,n}|^2}\\
        & \leq 2A\sqrt{Z^n_{t\wedge \tau_n}-Z^n_{s\wedge \tau_n}+\eta^{2-2\alpha}\int_{s\wedge \tau_n}^{t\wedge \tau_n}\norm{M_r^n}^2 \dd Z_r^n}\\
        & \leq 2A\eta^{-1}(Z^n_{t\wedge \tau_n}-Z^n_{s\wedge \tau_n}+\eta^{2-2\alpha}\int_{s\wedge \tau_n}^{t\wedge \tau_n}\norm{M_r^n}^2 \dd Z_r^n)
    \end{align*}
    Therefore let $K^n_t = \eta^{\beta}\E\sup_{s\in[0,t]}\norm{M^n_{s\wedge\tau_n}}^2$,
    \begin{align*}
        K^n_t\leq K^n_s + \int_s^t \kappa_n K_r^n \dd Z_r^n+ (2A+ C_2) \eta^{2\alpha+\beta-3}(Z_{t}^n-Z_{s}^n)
    \end{align*}
    Here $\kappa_n= - 2\lambda_{\min}\eta^{\alpha-2}+\frac{1}{2} C_1 \eta^{-1}+C_1 \eta^{2\alpha-2}+2A\eta^{-1}$, then when $\eta\to 0$ eventually $\kappa_n<0$. Then by the Gr\"{o}nwall inequality \Cref{lem:gronwall-3}, with $\kappa'_n=\eta^{-2}\log(1+|\kappa_n|\eta^2)$,
    \begin{align*}
        K^n_t & \leq \eta^{2\alpha+\beta-3}(2A+ C_2)[e^{-\kappa'_n Z_t^n}Z_t^n + \int_0^t |\kappa_n|e^{-\kappa'_n (Z_t^n-Z_s^n)}(Z_t^n-Z_s^n) \dd Z_s^n]\\
    \end{align*}
    By \Cref{cor:gronwall-3}, \[K^n_t \leq \eta^{2\alpha+\beta-3}(2A+ C_2)(e^{-\kappa'_n Z_t^n}Z_t^n + |\kappa_n|(\kappa'_n)^{-2})\]
    Taking the limit gives $\lim_{n\to\infty} K_t^n=\tilde{O}(\eta^{3\alpha+\beta-1})=0$.

    For $\alpha=0$, there is
     \begin{align*}
        \eta^2 d\norm{M_t^n}^2 & = \dd W_t^n   - 2\gamma_t( \norm{M^n_t}^2  - (M^n_t)^\top\nabla \Loss(X^n_t)) \diff Z^n_t\\
        & +  \gamma_t^2\norm{M^n_t-\nabla\Loss(X^n_t)}^2 \diff Z^n_t + \gamma_t^2 \tr (\mSigma(X^n_t) )\dd Z^n_t\\
        & \leq \dd W_t^n-\gamma_t\norm{M^n_t}^2 \diff Z^n_t +  \norm{\nabla\Loss(X^n_t)}^2 \diff Z^n_t + \tr (\mSigma(X^n_t) )\dd Z^n_t.
    \end{align*}
    For $K^n_t = \eta^{\beta}\E\sup_{s\in[0,t]}\norm{M^n_{s\wedge\tau_n}}^2$, let $\iota_n = -\lambda_{\min}\eta^{-2} + 2A\eta^{-1}$ and some constant $C_3$, there is
    \[K^n_t\leq K^n_s + \int_s^t \iota_n K_r^n \dd Z_r^n + C_3 \eta^{\beta-3}_n (Z^n_{t}-Z^n_{s})\]
    Similarly by \Cref{lem:gronwall-3} and \Cref{cor:gronwall-3}, $\lim_{n\to \infty}K_t^n = O(\eta^{\beta-1})=0$.
    
    For any jump $\Delta M_t^n$, and $f(M_t^n)=\norm{M_t^n}^3$ there is $\theta\in [0,1]$ that $M = \theta M_{t-}^n + (1-\theta)M_t^n$ and
    \begin{align*}
         & \Delta f(M_t^n) - \dotp{\partial f(M_{t-}^n)}{M_t^n} - \frac{1}{2}\partial^2 f(M_{t-}^n)[\Delta M_t^n (\Delta M_t^n)^\top] \\
         = & \frac{1}{6}\partial^3 f(M)[\Delta M_t^n,\Delta M_t^n,\Delta M_t^n ] \\
         = & -\frac{1}{2} \frac{\dotp{M}{\Delta M_t^n}^3}{\norm{M}^3} + \frac{3}{2}\frac{\dotp{M}{\Delta M_t^n}}{\norm{M}}\\
         & \leq 2\norm{\Delta M_t^n}^3.
    \end{align*}
    Ito's formula on $\norm{M_t^n}^3$ gives
    \begin{align*}
        d\norm{M_t^n}^3 & \leq 3\norm{M_t^n} \dotp{M_t^n}{dM_t^n} + \frac{3}{2}(\norm{M_t^n} \dd \tr[M_t^n] + \norm{M_t^n}^{-1} \tr (M_t^n(M_t^n)^\top d[M_t^n])) + 2\norm{\Delta M_t^n}^3\\
    \end{align*}
    At $t=k\eta^2$, there is a jump for the process $M_t^n$ as $\Delta M_t^n=\gamma_t\eta^\alpha(-M^n_{t-}+\nabla\Loss(X^n_{t-})+\mSigma^{1/2}(X^b_{t-})\xi_k)$, so for constant $C_4=12\sup_{X\in K}(\norm{\nabla\Loss(X)}+1+\norm{\mSigma^{1/2}(X)})^3$
    \begin{align*}
        \E\norm{\Delta M_{t\wedge\tau_n}^n}^3\leq C_4\eta^{3\alpha-2}(\norm{M^n_{t\wedge\tau_n}}^3+1)\dd Z_{t\wedge\tau_n}^n
    \end{align*}
    When $\alpha>0$, let $J_t^n = \eta^{\beta}\E \norm{M_t^n}^3$, as $\eta^{\beta+\alpha-2}\norm{M_t^n}\leq \eta^{3\beta/2+\alpha-2}\norm{M_t^n}^2 + \eta^{\beta/2+\alpha-2}$,there is
    \begin{align*}
        J_t^n \leq J_s^n - \int_s^t (3\lambda_{\min}\eta^{\alpha-2}- \eta^{\alpha-2+\beta/2}C_5)J_r^n \dd Z_r^n + C_5\eta^{\alpha-2+\beta/2}(Z^n_t - Z^n_s)
    \end{align*}
    where $C_5=3C_4+3(\sup_{t\leq T}|\lambda_t|)(1+\sup_{X\in K}\max(\norm{\nabla \Loss(X)}))$ is some universal constant. By \Cref{lem:gronwall-3} and \Cref{cor:gronwall-3}, there is
    \begin{align*}
        J_t^n \leq C_5 Z^n_t O(\eta^{\beta/2}).
    \end{align*}
    And the conclusion follows.
\end{proof}
A simpler proof may be given by consider the process $(M^n_t)^{\otimes 3}$ instead of $\norm{M^n_t}^{3}$, and the proof idea is similar using the Gr\"{o}nwall's inequality.
\begin{lemma}
    There exist a universal constant $C$ such that
    \begin{itemize}
        \item $\norm{\Delta M_{t\wedge \tau_n}^n}\leq C\eta^{\alpha}(\norm{M_{t\wedge \tau_n}^n}+1)$.
        \item $\norm{\Delta X_{t\wedge \tau_n}^n}\leq C\eta(\norm{M_{t\wedge \tau_n}^n}+\eta^{\alpha}).$
    \end{itemize}
\end{lemma}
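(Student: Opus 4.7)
The plan is to derive both bounds by direct algebraic manipulation of the discrete SGDM update rule, read off at the jump times of the embedded continuous-time processes. Since $X^n_t$ and $M^n_t$ are piecewise constant with jumps only at $t=k\eta^2$ for integer $k$, everything reduces to bounding $\vm_k-\vm_{k-1}$ and $\vx_k-\vx_{k-1}$.

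For the first bound, I would start from the identity $\vm_k=\beta^{(n)}_{k-1}\vm_{k-1}+(1-\beta^{(n)}_{k-1})g^{(n)}_{k-1}$, which after solving for $\vm_{k-1}$ and substituting gives
\[\Delta M^n_t=\frac{1-\beta^{(n)}_{k-1}}{\beta^{(n)}_{k-1}}\bigl(g^{(n)}_{k-1}-M^n_t\bigr),\qquad g^{(n)}_{k-1}=\nabla\Loss(X^n_{t-})+\mSigma^{1/2}(X^n_{t-})\xi_{k-1}.\]
For $t\le\tau_n$ we have $X^n_{t-}\in K$, so $\norm{\nabla\Loss(X^n_{t-})}$ and $\norm{\mSigma^{1/2}(X^n_{t-})}$ are bounded by constants depending only on $K$ and on the regularity in \Cref{assume:ngos}. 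Combining this with $1-\beta^{(n)}_{k-1}\le\lambda_{\max}\eta^\alpha$ from \Cref{def:hpschedule} (and $\beta^{(n)}_{k-1}\ge 1/2$ for $\eta$ small enough) yields $\norm{\Delta M^n_t}\le 2\lambda_{\max}\eta^\alpha(C_K(1+\norm{\xi_{k-1}})+\norm{M^n_t})$, from which the first inequality follows after absorbing the stochastic factor $\norm{\xi_{k-1}}$ into the universal constant $C$ using its uniform moment bounds from \Cref{assume:ngos}.

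For the second bound, the update $\vx_k=\vx_{k-1}-\eta^{(n)}_{k-1}\vm_k$ gives $\Delta X^n_t=-\eta^{(n)}_{k-1}M^n_t$ directly at the jump time, so $\norm{\Delta X^n_t}\le\eta_{\max}\eta\norm{M^n_t}$ by \Cref{def:hpschedule}. Since $\eta^\alpha\ge 0$, this trivially implies $\norm{\Delta X^n_t}\le\eta_{\max}\eta(\norm{M^n_t}+\eta^\alpha)$, which is the stated form (the $+\eta^\alpha$ slack is included because it is the form subsequent calculations invoke).

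The proof is mechanical and involves no stochastic calculus or Gr\"onwall-type argument; the only subtlety is the treatment of the unbounded noise $\xi_{k-1}$, which must be absorbed into the deterministic constant $C$. This is legitimate in the same sense the earlier proof of \Cref{lem:momentum-1} uses similar pointwise bounds, namely under moment/expectation signs where the uniform moment bounds of $\xi_k$ from \Cref{assume:ngos} make $\E\norm{\xi_k}^m$ a universal constant. This ambiguity is the only conceptual obstacle; once that convention is fixed, both inequalities follow immediately.
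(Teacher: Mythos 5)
Your proof is correct and takes essentially the same approach as the paper: the paper's entire proof of this lemma is the single line ``Direct from the iterations,'' and your computation is precisely the expansion of that line (your identity $\Delta M^n_t = \tfrac{1-\beta_{k-1}}{\beta_{k-1}}(\vg_{k-1}-M^n_t)$ is the post-jump form of the paper's own jump formula $\Delta M^n_t=\gamma_t\eta^{\alpha}(-M^n_{t-}+\nabla\Loss(X^n_{t-})+\mSigma^{1/2}(X^n_{t-})\xi_k)$ appearing in the proof of \Cref{lem:momentum-1}, and $\Delta X^n_t=-\eta^{(n)}_{k-1}M^n_t$ matches the paper's $\Delta X^n_s=-\lambda_s\eta M^n_s$). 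Your caveat about the unbounded noise $\xi_{k-1}$ is well taken: a pointwise almost-sure bound with a universal constant cannot hold for unbounded noise, and the paper only ever invokes this estimate under expectation signs (e.g.\ the bound on $\E\norm{\Delta M^n_{t\wedge\tau_n}}^3$ in \Cref{lem:momentum-1}), so your reading --- absorbing the moments of $\xi_k$ guaranteed by \Cref{assume:ngos} into $C$ --- is the intended one.
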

\begin{proof}
    Direct from the iterations \Cref{equ:sgdm-sde-main}.
\end{proof}
\begin{lemma}\label{lem:remn-term}
    When $n\to \infty$ (or $\eta\to 0$),  $\delta(t\wedge \tau_n)\to 0$ weakly for all $\alpha\in (0,1)$ and $t\in [0,T]$.
\end{lemma}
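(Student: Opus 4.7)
The strategy is to recognize $\delta$ as the accumulated third-order Taylor remainder of $\Phi$ along the jumps of $X^n$, and then to knock it down using the $L^1$ control on $\|M^n_s\|^3$ provided by the momentum lemma above. Since $X^n_t$ is a pure-jump semimartingale with jump times $t_k = k\eta^2$, the defining identity $d\delta = \Delta\Phi(X^n_t) - \partial\Phi(X^n_{t-})\Delta X^n_t - \tfrac12 \partial^2\Phi(X^n_{t-})[\Delta X^n_t(\Delta X^n_t)^\top]$ exhibits $\delta$ as a sum of third-order Taylor remainders. Using that $\Phi$ is $\mathcal{C}^3$ on $O_\Gamma$ (\Cref{lem:katzen-results}), fix any open $K'$ with $K\subset K'\Subset O_\Gamma$; since the jump size $\|\Delta X^n_s\|$ is uniformly $O(\eta^{1-\alpha/2})\to 0$ (see below), for $n$ large the pre- and post-jump states at $t\le\tau_n$ lie in $K'$, so there is a constant $C_\Phi$ depending only on $\sup_{x\in K'}\|\partial^3\Phi(x)\|$ such that the per-jump remainder is bounded by $\tfrac{C_\Phi}{6}\|\Delta X^n_s\|^3$. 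Summing,
\begin{equation*}
|\delta(t\wedge\tau_n)| \;\le\; \tfrac{C_\Phi}{6}\sum_{s\le t\wedge\tau_n}\|\Delta X^n_s\|^3.
\end{equation*}

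Next I plug in the per-step bound $\|\Delta X^n_s\|\le C\eta(\|M^n_s\|+\eta^\alpha)$ from the preceding lemma. Cubing gives $\|\Delta X^n_s\|^3 \le C'\eta^3(\|M^n_s\|^3 + \eta^{3\alpha})$. Converting the sum over jumps into a $dZ^n$-integral (since $dZ^n_s$ assigns mass $\eta^2$ at each $k\eta^2$),
\begin{equation*}
\E|\delta(t\wedge\tau_n)| \;\le\; C''\,\eta\,\E\!\int_0^{t\wedge\tau_n}\!\!\|M^n_s\|^3\,dZ^n_s \;+\; C'''\,\eta^{1+3\alpha}\,T.
\end{equation*}
The second term is deterministic and vanishes as $\eta\to 0$ since $\alpha>0$. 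For the first term, I invoke the third bullet of \Cref{lem:momentum-1}, which states that for $\alpha\in(0,1)$, $\eta^\beta\E\!\int_0^{t\wedge\tau_n}\|M^n_s\|^3 dZ^n_s\to 0$ for every $\beta>0$; taking $\beta=1$ finishes the estimate and yields $\E|\delta(t\wedge\tau_n)|\to 0$. Convergence in $L^1$ implies convergence in probability, which in turn implies convergence in distribution (i.e.\ weak convergence) to the constant $0$.

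The main (mild) obstacle is purely technical: $\Phi$ is only guaranteed $\mathcal{C}^3$ on $O_\Gamma$, and at the exit time $\tau_n$ the state $X^n_{\tau_n}$ may land just outside $K$. This is handled by the slight enlargement $K'$ above, combined with the observation that $\|M^n_{t\wedge\tau_n}\|$ is tight in probability (from the first bullet of \Cref{lem:momentum-1}), so the post-jump state at $\tau_n$ lies in $K'$ with probability tending to one; on the complement event, the supremum of $|\delta|$ over $[0,T]$ is still controlled by the same cubic-remainder estimate, and the probability of the bad event vanishes. Everything else is routine: the only nontrivial quantitative input is the cubic momentum bound, which was designed precisely to absorb remainders of this type.
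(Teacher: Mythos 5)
Your proposal is correct and follows essentially the same route as the paper: the paper proves the coordinatewise generalization (its Lemma on $C^3$ test functions) by bounding each per-jump term via Taylor's theorem with Lagrange remainder by $\tfrac{1}{6}\sup_{X\in K}\norm{\partial^3 f(X)}_F\norm{\Delta X^n_s}^3$, rewriting the sum as $C\eta\int_0^{T\wedge\tau_n}\norm{M^n_s}^3\,\dd Z^n_s$ using $\Delta X^n_s=-\lambda_s\eta M^n_s$, and invoking the cubic momentum bound with $\beta=1$, exactly as you do. Your additional care about enlarging $K$ to $K'$ so that post-jump states stay where $\Phi$ is $C^3$ is a reasonable technical refinement that the paper elides.
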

We wish to generalize the result a little bit.
\begin{lemma}\label{lem:remn-term-1}
    For any function $f\in C^3(K)$, as $n\to 0$,
   \[ \E\sup_{t\in [0,\tau_n\wedge T]} \sum_{s\in [0,t]}\left|\Delta f(X^n_s) -  
\partial f(X^n_{s-})
\Delta X^n_s - \frac{1}{2}\partial^2f (X^n_{s-})[\Delta X^n_s, \Delta X^n_s]\right|\to 0\]
\end{lemma}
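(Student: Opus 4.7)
The plan is to bound each summand by a third-order Taylor remainder, reduce the sup to a total sum, and then invoke \Cref{lem:momentum-1} to control $\sum_s \|\Delta X^n_s\|^3$.

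First, since $K$ is compact and $f\in C^3(K)$, after extending $f$ smoothly past $K$ we obtain $C_f := \sup_{\vx\in K}\|\partial^3 f(\vx)\|<\infty$. Taylor's theorem with Lagrange remainder then gives, at every jump time $s\leq \tau_n$,
\[
r^n_s := \Bigl|\Delta f(X^n_s) - \partial f(X^n_{s-})\Delta X^n_s - \tfrac{1}{2}\partial^2 f(X^n_{s-})[\Delta X^n_s,\Delta X^n_s]\Bigr|\;\leq\; \tfrac{C_f}{6}\|\Delta X^n_s\|^3.
\]
Because the summands are non-negative, the partial sum $t\mapsto \sum_{s\leq t} r^n_s$ is non-decreasing, so the supremum is attained at $t=\tau_n\wedge T$. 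The problem therefore reduces to showing $\E\sum_{s\leq \tau_n\wedge T}\|\Delta X^n_s\|^3\to 0$ as $n\to\infty$.

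Next, I would invoke the unnamed lemma just before \Cref{lem:remn-term}, which yields $\|\Delta X^n_s\|\leq C\eta(\|M^n_s\|+\eta^\alpha)$, hence $\|\Delta X^n_s\|^3\leq 4C^3\eta^3(\|M^n_s\|^3+\eta^{3\alpha})$. Since $X^n_t$ only jumps at the grid times $k\eta^2$ and $\dd Z^n_s$ assigns mass $\eta^2$ at each such time, the cubed sum rewrites as an integral:
\[
\sum_{s\leq \tau_n\wedge T}\|\Delta X^n_s\|^3 \;=\; \eta^{-2}\int_0^{\tau_n\wedge T}\|\Delta X^n_s\|^3\,\dd Z^n_s \;\leq\; 4C^3\eta\int_0^{\tau_n\wedge T}\bigl(\|M^n_s\|^3+\eta^{3\alpha}\bigr)\,\dd Z^n_s.
\]
The deterministic piece is bounded by $4C^3\eta^{1+3\alpha}T\to 0$, and the remaining expectation $\eta\,\E\int_0^{\tau_n\wedge T}\|M^n_s\|^3\,\dd Z^n_s$ vanishes by the third bullet of \Cref{lem:momentum-1} applied with $\beta=1$.

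I do not anticipate a genuine obstacle. The conceptual point is that a naive count gives $\eta^{-2}$ jumps of cubed-size $O(\eta^3\|M^n\|^3)$, for a total of $O(\eta\int\|M^n\|^3\dd Z^n)$; this is small only because $\E\int\|M^n_s\|^3\,\dd Z^n_s$ grows sub-polynomially in $\eta^{-1}$, which is precisely the content of \Cref{lem:momentum-1}. The only minor technical nuisance is that $\tau_n$ may coincide with a jump, so that $X^n_{\tau_n}$ can land just outside $K$; this is absorbed by the smooth extension of $f$ past $K$ fixed at the outset.
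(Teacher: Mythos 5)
Your proposal is correct and follows essentially the same route as the paper: a third-order Taylor (Lagrange) remainder bound of $\tfrac{C_f}{6}\|\Delta X^n_s\|^3$, reduction of the jump sizes to $O(\eta\|M^n_s\|)$ (the paper uses the exact identity $\Delta X^n_s=-\lambda_s\eta M^n_s$, you use the bound $\|\Delta X^n_s\|\leq C\eta(\|M^n_s\|+\eta^{\alpha})$, which only adds a harmless deterministic term), conversion of the sum of cubes into $\eta\int_0^{T\wedge\tau_n}\|M^n_s\|^3\,\dd Z^n_s$, and an appeal to the third bullet of \Cref{lem:momentum-1}. No gaps.
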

\begin{proof}
    $\Delta X^n_s$ and $\Delta f(X^n_s)$ are non-zero at times $s=k\eta^2$ for $k\in\Z^+$. By the mean value theorem, there is $\theta\in [0,1]$ that
    \begin{align*} & |\Delta f(X^n_s) -\partial f(X^n_{s-})
\Delta X^n_s - \frac{1}{2}\partial^2f (X^n_{s-})[\Delta X^n_s, \Delta X^n_s]| \\
 = & \frac{1}{6} |\partial^3 f (\theta X^n_{s-} + (1-\theta)X^n_{s})[\Delta X^n_s, \Delta X^n_s,\Delta X^n_s]|\\
 \leq & \frac{1}{6} (\sup_{X\in K}\norm{\partial^3 f (X)}_F) \norm{\Delta X^n_s}^3.
 \end{align*}
Here the norm is defined for tensors as $\norm{\partial^3 f (X)}_F = \sqrt{\sum_{ijk}(\partial_i\partial_j\partial_k f (X))^2}$. The first term $\sup_{X\in K}\norm{\partial^3 f (X)}_F$ is a constant independent of $n$ (as $K$ is compact). Notice that 
\[\Delta X^n_s=-\lambda_s\eta (M_{s-}^n + \Delta M_s^n) =-\lambda_s\eta M_{s}^n.\]
Therefore there exists a constant $C=\frac{1}{6} (\sup_{X\in K}\norm{\partial^3 f (X)}_F)(\sup_{t\in [0,T]}(\lambda_t)^3)$ that 
\begin{align*}
& \E\sup_{t\in [0,\tau_n\wedge T]} \sum_{s\in [0,t]}\left|\Delta f(X^n_s) -  
\partial f(X^n_{s-})
\Delta X^n_s - \frac{1}{2}\partial^2f (X^n_{s-})[\Delta X^n_s, \Delta X^n_s]\right|\\
\leq & C \cdot  \E \sup_{t\in [0,\tau_n\wedge T]} \eta^3 \sum_{s=k\eta^2\in [0,t]}\norm{M_s^n}^3\\
\leq & C \cdot  \E \eta \int_0^{T\wedge\tau_n}\norm{M_s^n}^3 \diff Z^n_t.
\end{align*}

From \Cref{lem:momentum-1} we know $\eta \int_0^{T\wedge\tau_n}\E\norm{M_s^n}^3 \diff Z^n_t\to 0$, so the proof is done.
\end{proof}
\begin{proof}[Proof for \Cref{lem:remn-term}]
    The result follows by applying \Cref{lem:remn-term-1} to every coordinate of $\Phi(X_t^n)$.
\end{proof}
\begin{lemma} For $\alpha\in (0,1)$,
    $\lim_{n\to\infty}\eta^{\beta}\E\int_0^{t\wedge \tau_n} \norm{M_{t\wedge\tau_n}^n}^2 \dd Z^n_s=0$ for any $\beta>-\alpha$.\\
    moreover, $\sup_n \eta^{-\alpha}\E\int_0^{t\wedge \tau_n} \norm{M_{t\wedge\tau_n}^n}^2 \dd Z^n_s<\infty$\label{lem:m-bound-tight}
\end{lemma}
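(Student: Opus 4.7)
The plan is to sharpen the Ito-based analysis of $\|M_t^n\|^2$ from \Cref{lem:momentum-1} by combining it with a Lyapunov estimate that controls the loss gradient along the trajectory. Since the limit statement for $\beta > -\alpha$ is an immediate consequence of the ``moreover'' claim, it suffices to prove $A_n := \E\int_0^{t\wedge\tau_n}\|M_s^n\|^2 dZ_s^n = O(\eta^\alpha)$ uniformly in $n$.

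Starting from the Ito decomposition of $\|M_t^n\|^2$ already derived in the proof of \Cref{lem:momentum-1}, I would integrate over $[0, t\wedge\tau_n]$, take expectations to kill the martingale $W^n$, and apply Young's inequality $2\gamma_s|(M_s^n)^\top \nabla\Loss(X_s^n)| \leq \lambda_{\min}\|M_s^n\|^2 + (\lambda_{\max}^2/\lambda_{\min})\|\nabla\Loss(X_s^n)\|^2$ to the cross term. The $\lambda_{\min}\eta^{\alpha-2}\|M_s^n\|^2$ piece is absorbed on the left; since $\alpha\in(0,1)$, all $\gamma_s^2\eta^{2\alpha-2}\|\cdot\|^2$ quadratic-variation contributions are strictly subdominant to the $\eta^{\alpha-2}$ contraction coefficient for small $\eta$ and can likewise be absorbed, while the pure noise piece $\gamma_s^2\eta^{2\alpha-2}\tr\mSigma$ integrates to $O(\eta^{2\alpha-2})$, producing exactly $O(\eta^\alpha)$ after dividing by $\eta^{\alpha-2}$. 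Together with $\E\|M_0\|^2 = O(1)$ from \Cref{ass:init-bound}, this yields
\[ A_n \;\leq\; C\eta^\alpha \;+\; C'\,\E\int_0^{t\wedge\tau_n}\|\nabla\Loss(X_s^n)\|^2\, dZ_s^n. \]

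The main obstacle is bounding the residual gradient integral by $O(\eta^\alpha)$, and I would use the Katzenberger Lyapunov function $h$ from \Cref{lem:katzen-results}: Taylor expanding $\nabla\Loss$ about $\Phi(X)\in\Gamma$ where it vanishes, and invoking $\|X-\Phi(X)\|\leq C_K d(X,\Gamma)$ with $d^2(X,\Gamma)\leq c'h(X)$, yields $\|\nabla\Loss(X)\|^2 \leq C h(X)$ uniformly on $K$; a parallel argument gives $\|\nabla h(X)\|^2 \leq C_h h(X)$ since $h$ vanishes to second order on $\Gamma$. It therefore suffices to establish $\E\int h(X_s^n) dZ_s^n = O(\eta^\alpha)$. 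Applying Ito to $h(X_t^n)$ with the jump corrections handled exactly as in \Cref{lem:remn-term-1}, the dominant drift $-\eta^{-1}\lambda_t\langle \nabla h, M_t^n\rangle dZ_t^n$ decomposes via $M_t^n = \nabla\Loss(X_t^n) + (M_t^n-\nabla\Loss(X_t^n))$ into (i) a contractive piece $-(c\eta)^{-1}\lambda_t h\, dZ_t^n$ from the Lyapunov inequality $h\leq c\langle\nabla h,\nabla\Loss\rangle$, and (ii) a cross term which the Young-type estimate $|\langle \nabla h, M-\nabla\Loss\rangle|\leq (C_h/(2c))h + (c/2)\|M-\nabla\Loss\|^2$ splits into a part absorbed by half of the contraction and a residual forcing of size $(c\lambda_t/(2\eta))\|M_t^n-\nabla\Loss\|^2$. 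A parallel Ito computation on $\|M_t^n-\nabla\Loss\|^2$, balancing decay rate $\eta^{\alpha-2}$ against noise injection $\eta^{2\alpha-2}\tr\mSigma$, yields $\E\|M_t^n-\nabla\Loss\|^2 = O(\eta^\alpha)$, so the forcing contributes $O(\eta^{\alpha-1})$ per unit $dZ$; Gr\"{o}nwall (\Cref{lem:gronwall-3}, \Cref{cor:gronwall-3}) with contraction rate $\sim\eta^{-1}$ dominating then delivers $\E h(X_{t\wedge\tau_n}) = O(\eta^\alpha)$ and hence the required $\E\int h(X_s^n)dZ_s^n = O(\eta^\alpha)$. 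The hardest step is the coupled Ito analysis of $h(X_t^n)$ and $\|M_t^n-\nabla\Loss(X_t^n)\|^2$, which requires carefully tracking jump corrections at scale $\eta^2$ and the cross-forcing terms between the two processes before cleanly extracting the contractive drift.
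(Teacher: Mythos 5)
Your route is genuinely different from the paper's and substantially more involved. The paper proves the ``moreover'' claim with a single energy identity: applying It\^{o}'s formula to $G(X_t,M_t)=2\frac{\gamma_t}{\lambda_t}\Loss(X_t)+\eta^{1-\alpha}\norm{M_t}^2$, the dangerous cross terms $\pm 2\gamma_t\eta^{-1}\langle M_t,\nabla\Loss(X_t)\rangle\,\dd Z^n_t$ coming from the two summands cancel exactly, leaving the dissipation $-\int 2\gamma_t\eta^{-1}\E\norm{M^n_t}^2\dd Z^n_t$ balanced against the initial energy, a uniformly bounded remainder, and the quadratic-variation injection $\int\gamma_t^2\eta^{\alpha-1}\E\bigl(\norm{M^n_t-\nabla\Loss(X^n_t)}^2+\tr\mSigma(X^n_t)\bigr)\dd Z^n_t$; multiplying by $\eta^{1-\alpha}$ and invoking $\sup_n\E\int\norm{M^n_s}^2\dd Z^n_s<\infty$ from \Cref{lem:momentum-1} finishes it. No Lyapunov function $h$, no pointwise control of $\E\norm{M^n_t-\nabla\Loss(X^n_t)}^2$, and no bootstrap are needed. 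Your first reduction (Young's inequality on the cross term, leaving $A_n\le C\eta^\alpha+C'\E\int\norm{\nabla\Loss(X^n_s)}^2\dd Z^n_s$) is correct, and controlling $\E\int\norm{\nabla\Loss}^2\dd Z^n$ through $h$ is a legitimate idea that cannot simply be borrowed from \Cref{lem:ave-2}, whose proof uses the present lemma --- you correctly avoid that circularity.

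The thin spot is the claim $\E\norm{M^n_t-\nabla\Loss(X^n_t)}^2=O(\eta^\alpha)$, which you justify only by ``balancing decay rate $\eta^{\alpha-2}$ against noise injection $\eta^{2\alpha-2}\tr\mSigma$.'' That balance ignores the dominant forcing in $\dd\bigl(M^n_t-\nabla\Loss(X^n_t)\bigr)$, namely $-\nabla^2\Loss(X^n_t)\dd X^n_t\approx\lambda_t\eta^{-1}\nabla^2\Loss(X^n_t)M^n_t\,\dd Z^n_t$, a drift of size $\eta^{-1}\norm{M^n_t}$ per unit $Z$. Against a contraction of rate $\eta^{\alpha-2}$ this produces a response of order $\eta^{2-2\alpha}\E\norm{M^n_t}^2$, and with the a priori bound $\E\norm{M^n_t}^2=O(\eta^{\alpha-1})$ that is $O(\eta^{1-\alpha})$, which for $\alpha>1/2$ is strictly larger than the $O(\eta^\alpha)$ you need. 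To close the argument you must bootstrap: write $\norm{M}\le\norm{M-\nabla\Loss}+\norm{\nabla\Loss}$, absorb the $\eta^{2-2\alpha}\E\norm{M-\nabla\Loss}^2$ contribution into the left-hand side, and feed $\E\norm{\nabla\Loss}^2\lesssim\E h\lesssim\E\norm{M-\nabla\Loss}^2+\eta^\alpha$ back from the $h$-Gr\"{o}nwall --- a genuinely two-way coupled estimate in which the $O(1)$ initial transients of $h(X_0)$ and of $\norm{M_0-\nabla\Loss(X_0)}^2$ must be kept from contaminating the suprema being absorbed (e.g.\ via the exponential weighting of \Cref{lem:gronwall-3} and \Cref{cor:gronwall-3}). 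This can be made to work, but as written the step is asserted rather than proved, and it is precisely the difficulty that the paper's choice of energy function is designed to avoid.
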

\begin{proof}
    Consider the energy function $G(X_t,M_t)=2\frac{\gamma_t}{\lambda_t}\Loss(X_t) + \eta^{1-\alpha} \norm{M_t}^2$, there is
    \begin{align*}
        \E G(X_t,M_t) & = \E G(X_0,M_0) + \E\int_0^t 2\frac{\gamma_t}{\lambda_t}\nabla\Loss(X_t) dX_t +2 d(\frac{\gamma_t}{\lambda_t})(\Loss(X_t)+\Delta\Loss(X_t)) \\
        & + \frac{\gamma_t}{\lambda_t}  \nabla^2\Loss(X_t) d[X]_t +d\delta\\
         & - 2\gamma_t\eta^{-1} ( \norm{M^n_t}^2  - (M^n_t)^\top\nabla \Loss(X^n_t)) \diff Z^n_t +   \\
        & +  \gamma_t^2\eta^{\alpha-1}\norm{M^n_t-\nabla\Loss(X^n_t)}^2 \diff Z^n_t + \gamma_t^2\eta^{\alpha-1} \tr (\mSigma(X^n_t) \dd[Y^n]_t)  \\
        & = G(X_0,M_0) +  A_t - \int_0^t 2\gamma_t\eta^{-1}  \E\norm{M^n_t}^2  \diff Z^n_t    \\
        & +   \int_0^t\gamma_t^2\eta^{\alpha-1}\E\norm{M^n_t-\nabla\Loss(X^n_t)}^2 \diff Z^n_t + \gamma_t^2\eta^{\alpha-1} \tr (\mSigma(X^n_t) \dd[Y^n]_t) 
    \end{align*}
    Here $A_t$ is some uniformly bounded process. Multiply both sides by $\eta^{1-\alpha}$ gives 
    \begin{align*}
        \int_0^t 2\gamma_t\eta^{-\alpha}  \E\norm{M^n_t}^2  \diff Z^n_t & \leq \eta^{1-\alpha}\E G(X_0,M_0) + \eta^{1-\alpha} A_t \\
        & +   \int_0^t\gamma_t^2\E\norm{M^n_t-\nabla\Loss(X^n_t)}^2 \diff Z^n_t + \gamma_t^2 \tr (\mSigma(X^n_t) \dd[Y^n]_t).
    \end{align*}
    From \Cref{lem:momentum-1} we know the right-hand-side is uniformly bounded in $n$, and the conclusion follows.
\end{proof}

\subsubsection{Convergence to the Manifold}
We wish to show that the process $X^n_{t\wedge\tau_n}\to \Phi^n_{t\wedge\tau_n}$ as $n\to\infty$ for any $t>0$. 
First we need to show that as the learning rate $\eta\to 0$, there is a distance function $d$ that $d(\Phi_t, \Gamma)\to 0$ weakly as a stochastic process. 
\begin{lemma}\label{lem:conver-to-mani}
    As $n\to \infty$, $d(\Phi_t, \Gamma)\to 0$ weakly for all $\alpha\in (0,1)$.
\end{lemma}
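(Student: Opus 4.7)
The plan is to use the Lyapunov function $h:K\to[0,\infty)$ provided by the third bullet of \Cref{lem:katzen-results}: $h$ is $C^3$, $h(\vx)=0\iff\vx\in\Gamma$, $h(\vx)\leq c\,\langle\partial h(\vx),\nabla\Loss(\vx)\rangle$, and $d^2(\vx,\Gamma)\leq c'h(\vx)$. Since $d^2(\cdot,\Gamma)\leq c'h$, it suffices to show $\E[h(X^n_{t\wedge\tau_n})]\to 0$ for every fixed $t>0$; Markov's inequality then gives convergence in probability of $d(X^n_{t\wedge\tau_n},\Gamma)$ to $0$, which is stronger than the claimed weak convergence. (Note that $\Phi_{t\wedge\tau_n}=\Phi(X_{t\wedge\tau_n})\in\Gamma$ by construction of the gradient-projection map, so the substantive content of the lemma is the bound on $d(X^n_t,\Gamma)$, supplied by the same argument.)

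First I would apply the c\`adl\`ag It\^o formula to $h(X^n_t)$, plugging in the rewritten drift
\[\dd X^n_t = -\lambda_t\eta^{-1}\nabla\Loss(X^n_t)\,\dd Z^n_t - \lambda_t\mSigma^{1/2}(X^n_t)\,\dd Y^n_t + \lambda_t\bigl(\gamma_t^{-1}\eta^{1-\alpha}-\eta\bigr)\dd M^n_t\]
derived just before \Cref{lem:momentum-1}. The gradient piece produces the essential contractive contribution: via the Lyapunov inequality $\partial h\cdot\nabla\Loss\geq c^{-1}h$, it is at most $-c^{-1}\lambda_{\min}\eta^{-1}h(X^n_t)\,\dd Z^n_t$. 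The remaining pieces split into (i) a martingale driven by $\dd Y^n_t$ and the martingale part of $\dd M^n_t$, which drops out after expectation, (ii) the quadratic-variation term $\tfrac12\partial^2h(X^n_t)[\dd[X^n]_t]$, and (iii) a Taylor-remainder jump correction $\dd\delta_h$ bounded by $\|\partial^3 h\|\,\|\Delta X^n_t\|^3$.

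From the formula for $\dd[X^n]_t$, the dominant piece is $\lambda_t^2M^n_t(M^n_t)^\top\,\dd Z^n_t$, and its expected integral is of order $\eta^\alpha$ (established by an energy-function computation of the same flavor as \Cref{lem:momentum-1}, balancing the $\eta^{-\alpha}\|M\|^2$ descent against the $O(1)$-rate noise driving $\dd M^n_t$); the cross term $\dd[M,Z]_t$ and the $\eta^2\dd[M^n]_t$ piece are subleading. For the jump correction, $\|\Delta X^n_t\|=\lambda_t\eta\|M^n_t\|$, so the summed remainder is at most $\eta\int_0^t\|M^n_s\|^3\,\dd Z^n_s$; this is $o(1)$ in expectation by the third-moment bound in \Cref{lem:momentum-1}, which is precisely where the hypothesis $\alpha>0$ enters. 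Setting $K_n(t):=\E[h(X^n_{t\wedge\tau_n})]$ and collecting contributions yields
\[K_n(t)\leq K_n(0) - c^{-1}\lambda_{\min}\eta^{-1}\int_0^t K_n(r-)\,\dd Z^n_r + C\eta^\alpha + o(1),\]
and the c\`adl\`ag Gronwall inequality \Cref{lem:gronwall-3} (with effective decay rate $C'\sim c^{-1}\lambda_{\min}\eta^{-1}$) delivers $K_n(t)\leq h(X_0)\,e^{-C' Z^n_t}+O(\eta^\alpha)$. Since $Z^n_t\to t>0$ and $\alpha>0$, both terms vanish as $n\to\infty$.

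The main obstacle will be the bookkeeping of all four pieces of $\dd[X^n]_t$ together with the jump remainder $\dd\delta_h$, and in particular establishing the $O(\eta^\alpha)$ quadratic-variation bound uniformly in $t\in[0,T]$ under the stopping time $\tau_n$; once this is in place, the contraction-to-the-manifold estimate proceeds cleanly, and uniform integrability (from compactness of $K$ together with the polynomial-moment bounds for $M^n_t$) upgrades the $L^1$ convergence of $h(X^n_t)$ to the weak convergence claimed.
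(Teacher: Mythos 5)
Your overall strategy coincides with the paper's: use the Lyapunov function $h$ from \Cref{lem:katzen-results}, apply the c\`adl\`ag It\^o formula, extract the contraction $-c^{-1}\lambda_{\min}\eta^{-1}h\,\dd Z^n_t$ from the gradient term, control the quadratic-variation and jump-remainder terms via \Cref{lem:momentum-1} (this is indeed where $\alpha>0$ enters), and close with the c\`adl\`ag Gr\"onwall inequality \Cref{lem:gronwall-3}. The reduction from $d(\Phi_t,\Gamma)$ to $h(X^n_{t\wedge\tau_n})\to0$ is also the paper's.

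There is, however, a genuine gap in your accounting of the $\dd M^n_t$ contribution. You classify the non-gradient terms as (i) martingales that vanish in expectation, (ii) quadratic variation, (iii) jump remainder. But $\dd M^n_t=-\gamma_t\eta^{\alpha-2}(M^n_t-\nabla\Loss(X^n_t))\,\dd Z^n_t+\gamma_t\eta^{\alpha-1}\mSigma^{1/2}(X^n_t)\,\dd Y^n_t$ has a \emph{drift} part, and after multiplying by the prefactor $\lambda_t\gamma_t^{-1}\eta^{1-\alpha}$ it produces the term $-\lambda_t\eta^{-1}\langle\partial h(X^n_t),M^n_t-\nabla\Loss(X^n_t)\rangle\,\dd Z^n_t$. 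This is not a martingale and does not drop out under expectation: by Cauchy--Schwarz and \Cref{lem:m-bound-tight}, $\eta^{-1}\E\int_0^{t\wedge\tau_n}\|M^n_s\|\,\dd Z^n_s=O(\eta^{\alpha/2-1})$, which diverges, so it cannot be absorbed into your "$+C\eta^\alpha+o(1)$" error and your claimed Gr\"onwall input is false as stated. The paper's proof avoids this by \emph{not} taking expectations first: it keeps the pathwise inequality $\dd h\le \dd S_t-\lambda_{\min}c^{-1}\eta^{-1}h\,\dd Z_t$, applies \Cref{lem:gronwall-3} to get the convolution $\eta^{-1}\int_0^t e^{-\kappa(Z_t-Z_s)}(S_t-S_s)\,\dd Z_s$ with $\kappa\sim\eta^{-1}$, and exploits the fact that the exponential kernel only weights an $O(\eta)$ window of the increment $S_t-S_s$ (via \Cref{cor:gronwall-3}), which is exactly what tames the $\eta^{-1}$ prefactor on this drift term. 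You would need either this convolution structure or an integration-by-parts/averaging argument in the style of \Cref{lem:aver-1} to handle it. A secondary issue: taking expectations for each fixed $t$ yields only pointwise convergence, whereas the paper proves $\sup_{t\le T}h(X_{t\wedge\tau_n})\to0$ in probability, which is the form actually used later (the process $X^n_t-\Phi^n_t$ must converge to zero in the uniform metric); your closing remark about uniform integrability does not supply this uniformity in $t$.
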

\begin{proof}
    By \Cref{lem:katzen-results}, we need to prove that for all $T>0$, $\sup_{t\leq T} h(X_{t\wedge \tau_n})\to 0$ in probability.

    Ito's formula on $h(X_{t\wedge \tau_n})$ gives for some process $\delta\to 0$ ($n\to \infty$)
    \begin{align*}
        dh(X_t) & = \dotp{\nabla h(X_t)}{dX_t} + \frac{1}{2} \nabla^2 h(X_t) d[X]_t + d\delta \\
        & = \dotp{\nabla h(X_t)}{-\lambda_t\eta \diff M^n_t  +(\lambda_t/\gamma_t)\eta^{1-\alpha} \diff M^n_t -\lambda_t \eta^{-1}
\nabla \Loss(X^n_t) \diff Z^n_t -\lambda_t \mSigma^{1/2}(X^n_t)\diff Y^n_t}\\
&+ \frac{1}{2} \nabla^2 h(X_t) d[X]_t + d\delta\\
    \end{align*}
    Let the process 
    \[dS_t = \dotp{\nabla h(X_t)}{-\lambda_t\eta \diff M^n_t  +(\lambda_t/\gamma_t)\eta^{1-\alpha} \diff M^n_t -\lambda_t \mSigma^{1/2}(X^n_t)\diff Y^n_t}
+ \frac{1}{2} \nabla^2 h(X_t) d[X]_t + d\delta,\] there is
\begin{align*}
    dh(X_t) & = dS_t -\lambda_t\eta^{-1}\dotp{\nabla h(X_t)}{\nabla\Loss(X_t)}dZ_t\\
    & \leq dS_t - \lambda_{\min} c^{-1} \eta^{-1} h(X_t)dZ_t
\end{align*}
Therefore by \Cref{lem:gronwall-3}, for $\kappa = \eta^{-2}\log(1+\lambda_{\min} c^{-1} \eta )$, there is
\[h(X_t)\leq e^{-\kappa Z_t}S_t + (\lambda_{\min} c^{-1} \eta^{-1})\int_0^t e^{-\kappa (Z_t-Z_s)}(S_t-S_s) dZ_s\]
Clearly $e^{-\kappa Z_t}S_t\to 0$. Furthermore we have for 
\[A_t = \int_0^{t\wedge\tau_n} (-\lambda_t\eta +(\lambda_t/\gamma_t)\eta^{1-\alpha})\dotp{\nabla h(X_t)}{\diff M^n_t}\]
\[B_t = \int_0^{t\wedge\tau_n} \dotp{\nabla h(X_t)}{ -\lambda_t \mSigma^{1/2}(X^n_t)\diff Y^n_t}\]
\[C_t = \int_0^{t\wedge\tau_n}\frac{1}{2} \nabla^2 h(X_t) d[X]_t + d\delta\]
we know $S_t = A_t+B_t +C_t$ .

First, we show $\eta^{-1}\int_0^te^{-\kappa (Z_t-Z_s)}(A_t-A_s)dZ_s\to 0$. as $A_t-A_s\leq \int_s^t K [\eta^{-1}(\norm{M_r} + 1)dZ_r + dY_r]$ for some constant $K$, we know by \Cref{cor:gronwall-3} and \Cref{lem:momentum-1},
\[\eta^{-2}\sup_r\norm{M_r}\int_0^te^{-\kappa (Z_t-Z_s)}(Z_t-Z_s)dZ_s\leq \eta^{-2}{\kappa}^{-2}\sup_r\norm{M_r} \leq \eta^{2}\sup_r\norm{M_r}\to 0\]
and since $\E\sup_{r}\norm{Y_t-Y_s}\leq 2\sqrt{\E \norm{Y_t}^2}\leq K t$, there is
$\E\sup_n\eta^{-1}|\int_0^te^{-\kappa (Z_t-Z_s)}(Y_t-Y_s)dZ_s|\leq \frac{Kt}{\kappa\eta}\to 0$.

Next, we show $\eta^{-1}\int_0^te^{-\kappa (Z_t-Z_s)}(B_t-B_s)dZ_s\to 0$. $B_t-B_s$ is a martingale so by Doob's inequality, $\E\sup_{r}|B_t-B_r|\leq 2\sqrt{\E |B_t|^2}\leq K t$ for some constants $K$. Therefore $\E\sup_n\eta^{-1}|\int_0^te^{-\kappa (Z_t-Z_s)}(B_t-B_s)dZ_s|\leq \frac{Kt}{\kappa\eta}\to 0$.

Finally, there exists constant $K$ such that $\eta^{-1}\int_0^te^{-\kappa (Z_t-Z_s)}(C_t-C_s)dZ_s \leq \eta^{-1}\int_0^te^{-\kappa (Z_t-Z_s)}K(Z_t-Z_s)dZ_s \leq \frac{K}{\eta\kappa^2}\to 0$ by \Cref{cor:gronwall-3}. Therefore we concludes the proof by showing that $h(X_t)\to 0$.
\end{proof}
\subsection{Averaging}
\begin{lemma}\label{lem:ave-1}
    $\lim_{n\to\infty} \eta^{\beta}\E \sup_{t\leq T}|\int_0^{t\wedge\tau_n}\partial^2\Phi(X_s^n)[\nabla\Loss(X_s^n)(M_s^n)^\top]\dd Z_s^n|= 0$ for any $\beta>-1$.
\end{lemma}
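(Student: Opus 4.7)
The plan is to simplify the integrand via an identity derived from Lemma \ref{lem:gra-proj-1} and then substitute the dynamics of $M$ to split the integral into three pieces. Differentiating $\partial\Phi(X)\nabla\Loss(X)\equiv 0$ on $O_\Gamma$ yields $\partial^2\Phi(X)[w,\nabla\Loss(X)]=-\partial\Phi(X)\nabla^2\Loss(X)\,w$, so the integrand equals $F(X_s^n)M_s^n$ where $F(X):=-\partial\Phi(X)\nabla^2\Loss(X)$. The key structural observation is that $F$ vanishes on $\Gamma$: for $Y\in\Gamma$, Assumption \ref{assump:manifold} forces $\mathrm{range}(\nabla^2\Loss(Y))=N_Y\Gamma$, which is annihilated by the tangent projection $\partial\Phi(Y)$. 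Smoothness plus compactness give $\|F(X)\|\leq L\cdot d(X,\Gamma)$ on $K$.

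Rearranging the $M$-equation in \eqref{equ:sgdm-sde-main} produces
\[
M_s\,\dd Z_s=-\tfrac{\eta^{2-\alpha}}{\gamma_s}\,\dd M_s+\nabla\Loss(X_s)\,\dd Z_s+\eta\,\mSigma^{1/2}(X_s)\,\dd Y_s,
\]
so the integral decomposes as $\mathrm{(I)}+\mathrm{(II)}+\mathrm{(III)}$ where $\mathrm{(I)}=-\int\gamma_s^{-1}\eta^{2-\alpha}F(X_s)\,\dd M_s$, $\mathrm{(II)}=\int F(X_s)\nabla\Loss(X_s)\,\dd Z_s$, and $\mathrm{(III)}=\eta\int F(X_s)\mSigma^{1/2}(X_s)\,\dd Y_s$. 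Term $\mathrm{(III)}$ is a martingale whose integrand is bounded on $K$, so Doob's inequality gives $\E\sup|\mathrm{(III)}|=O(\eta)$. Term $\mathrm{(I)}$ is handled by integration by parts: the explicit $\eta^{2-\alpha}$ prefactor combined with $\E\sup\|M\|^2=o(\eta^{\alpha-1-\epsilon})$ from Lemma \ref{lem:momentum-1}, the bounded variation of $\gamma_s$ from Assumption \ref{ass:finite-var}, and $\E\int\|M\|^2\,\dd Z=O(\eta^\alpha)$ from Lemma \ref{lem:m-bound-tight} together bound the boundary, bounded-variation, and quadratic-covariation pieces by $o(\eta^\gamma)$ for every $\gamma<1$ (using $\alpha<1$).

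The main obstacle is Term $\mathrm{(II)}$. A naive Cauchy--Schwarz using only $\E\int\|M\|^2\,\dd Z=O(\eta^\alpha)$ and $\E\int d^2(X,\Gamma)\,\dd Z=O(\eta^\alpha)$ yields merely $O(\eta^\alpha)$, which is insufficient once $\beta<-\alpha$. The fix is to establish the sharper a-priori bound
\[
A:=\E\int_0^{t\wedge\tau_n}\|\nabla\Loss(X_s^n)\|^2\,\dd Z_s^n=O(\eta).
\]
This is derived by applying Ito's formula to $\Loss(X_t^n)$ using the $X$-dynamics: after combining the drift from $\nabla\Loss\cdot\dd X$ with the drift of the $\nabla\Loss\cdot c\,\dd M$ cross term, the leading $O(\eta^{\alpha-1})$ contributions partially cancel, and boundedness of $\Loss$ on $K$ together with the near-equilibrium estimate $\E\int\|M-\nabla\Loss(X)\|^2\,\dd Z=O(\eta^\alpha)$---provable with an energy function of the form $2\Loss(X)+\eta^{1-\alpha}\gamma^{-1}\|M-\nabla\Loss(X)\|^2$ in the spirit of Lemma \ref{lem:m-bound-tight}---force $A=O(\eta)$.

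The non-degeneracy $\rank(\nabla^2\Loss)=M$ on $\Gamma$ from Assumption \ref{assump:manifold}, together with a local quadratic expansion of $\Loss$ near $\Gamma$ and compactness of $K$ (which supplies a positive lower bound on $\|\nabla\Loss\|$ away from $\Gamma$), yields a Polyak--Lojasiewicz-type inequality $d(X,\Gamma)\leq C\|\nabla\Loss(X)\|$ uniformly on $K$. Hence $\E\int d^2(X,\Gamma)\,\dd Z\leq C^2 A=O(\eta)$. Cauchy--Schwarz then bounds
\[
\E\sup_t|\mathrm{(II)}|\leq \E\int\|F(X_s)\nabla\Loss(X_s)\|\,\dd Z_s\leq L\sqrt{\E\int d^2\,\dd Z\cdot A}=O(\eta),
\]
which is $o(\eta^\gamma)$ for every $\gamma<1$ and completes the proof. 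The bulk of the technical work is the Lyapunov estimate yielding $A=O(\eta)$, which exploits a drift cancellation specific to SGDM's near-equilibrium behavior for $M$.
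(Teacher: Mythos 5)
Your proposal is correct in its essential strategy and matches the paper's: the integrand is linear in $M^n_s$, the mean-reversion identity $M_s\,\dd Z_s=-\gamma_s^{-1}\eta^{2-\alpha}\dd M_s+\nabla\Loss(X_s)\,\dd Z_s+\eta\,\mSigma^{1/2}(X_s)\,\dd Y_s$ trades the $M$-integral for a $\nabla\Loss$-integral plus boundary, bounded-variation, and martingale corrections (controlled exactly as you do, via \Cref{lem:momentum-1,lem:m-bound-tight} and Doob), and everything then hinges on the a priori bound $\E\int_0^{t\wedge\tau_n}\|\nabla\Loss(X_s^n)\|^2\,\dd Z_s^n=O(\eta)$ --- which is precisely the paper's \Cref{lem:ave-2}, proved there by It\^o's formula on $\dotp{\nabla\Loss(X_t)}{M_t}$ with the residual $\int\dotp{\nabla\Loss}{M}\dd Z$ telescoped through $\dotp{\nabla\Loss}{\dd X}\approx\dd\Loss(X)$. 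Two remarks on where you diverge. First, your detour through the identity $\partial^2\Phi(X)[\nabla\Loss(X),w]=-\partial\Phi(X)\nabla^2\Loss(X)w$, the vanishing of $F=-\partial\Phi\,\nabla^2\Loss$ on $\Gamma$, and the PL-type inequality $d(X,\Gamma)\leq C\|\nabla\Loss(X)\|$ is sound (non-vanishing of $\nabla\Loss$ on $O_\Gamma\setminus\Gamma$ follows from \Cref{assump:neighborhood}, and normal non-degeneracy from \Cref{assump:manifold}), but it is unnecessary: $f(X)\nabla\Loss(X)=\partial^2\Phi(X)[\nabla\Loss(X)\nabla\Loss(X)^\top]$ is already bounded by $C\|\nabla\Loss(X)\|^2$ on the compact set $K$ since $\partial^2\Phi$ is bounded there, which is all the paper uses.

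Second, and this is the one point where your sketch needs care: in deriving $A=O(\eta)$, the cross term $\eta^{-1}\int\dotp{\nabla\Loss(X_s)}{M_s-\nabla\Loss(X_s)}\dd Z_s$ must be handled by integration by parts / telescoping (recognizing it, up to small corrections, as a multiple of $\dd\dotp{\nabla\Loss(X)}{M}$ or of $\dd\Loss(X)$), \emph{not} by Young's inequality against the near-equilibrium estimate $\E\int\|M-\nabla\Loss(X)\|^2\dd Z=O(\eta^{\alpha})$. The latter route --- which is what the energy function $2\Loss(X)+\eta^{1-\alpha}\gamma^{-1}\|M-\nabla\Loss(X)\|^2$ naturally produces, since its noise-injection term integrates to $O(\eta^{\alpha-1})$ before the final multiplication by $\eta$ --- only yields $A=O(\eta^{\alpha})$, which fails for $\beta\in(-1,-\alpha)$. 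Your phrase ``the leading $O(\eta^{\alpha-1})$ contributions partially cancel'' gestures at the right mechanism, but as written the Lyapunov bookkeeping does not obviously deliver the cancellation; the clean way is the paper's: write $\dotp{\nabla\Loss}{M}\dd Z=-\eta\lambda^{-1}\dotp{\nabla\Loss}{\dd X}-\eta^2\dotp{\nabla\Loss}{\dd M}$ and observe that $\int\lambda^{-1}\dotp{\nabla\Loss}{\dd X}$ is uniformly bounded. With that step made explicit, your argument is complete.
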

To prove the result we need another lemma.
\begin{lemma}\label{lem:ave-2}
    $\lim_{n\to\infty} \eta^{\beta}\E \sup_{t\leq T}|\int_0^{t\wedge\tau_n}\norm{\nabla\Loss(X_s^n)}^2\dd Z_s^n|= 0$ for any $\beta>-1$.
\end{lemma}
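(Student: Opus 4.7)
The plan is to reduce the target integral to a Lyapunov-function integral along the manifold. Since $\nabla\Loss$ is $L$-Lipschitz on $K$ and vanishes on $\Gamma$, we have $\norm{\nabla\Loss(\vx)}^2 \leq L^2 d^2(\vx,\Gamma)$ for all $\vx\in K$, and the Lyapunov function $h$ from \Cref{lem:katzen-results} satisfies $d^2(\vx,\Gamma)\leq c'h(\vx)$. Hence it suffices to show
\begin{equation*}
    \eta^{\beta}\,\E\int_0^{T\wedge\tau_n} h(X_s^n)\,\dd Z_s^n \;\longrightarrow\; 0 \qquad (\beta>-1),
\end{equation*}
since the integrand in \Cref{lem:ave-2} is non-negative and the resulting integral is non-decreasing in $t$, so the supremum can be replaced by the value at $t=T$.

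The next step mirrors the proof of \Cref{lem:conver-to-mani}. Applying It\^{o}'s formula to $h(X_t^n)$ and separating the dominant drift $-\lambda_t\eta^{-1}\nabla\Loss(X_t^n)\dd Z_t^n$ from the remainder $\dd S_t$, the Lyapunov property $h(\vx)\leq c\,\dotp{\nabla h(\vx)}{\nabla\Loss(\vx)}$ gives the energy inequality
\begin{equation*}
    h(X_{t\wedge\tau_n}^n) + \frac{\lambda_{\min}}{c\,\eta}\int_0^{t\wedge\tau_n} h(X_s^n)\,\dd Z_s^n \;\leq\; h(X_0)+S_{t\wedge\tau_n}.
\end{equation*}
Since $h\geq 0$, rearranging and taking $t=T$ yields
\begin{equation*}
    \int_0^{T\wedge\tau_n} h(X_s^n)\,\dd Z_s^n \;\leq\; \frac{c\,\eta}{\lambda_{\min}}\Bigl(h(X_0)+\sup_{s\leq T}|S_{s\wedge\tau_n}|\Bigr),
\end{equation*}
so multiplying by $\eta^\beta$ produces a prefactor $\eta^{1+\beta}\to 0$, provided $\E\sup_s|S_{s\wedge\tau_n}|$ stays bounded uniformly in $n$.

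What remains is to control $\E\sup_t|S_t|$, where $S_t=A_t+B_t+C_t+\delta_t$ splits exactly as in the proof of \Cref{lem:conver-to-mani}. For the Brownian piece $B_t$, Doob's inequality combined with the boundedness of $\mSigma$ and $\nabla h$ on $K$ gives $\E\sup_t|B_t|=O(1)$. For the quadratic-variation piece $C_t$, the estimate $\dd[X]_t \lesssim \lambda_t^2 \norm{M_t^n}^2\dd Z_t^n$ plus lower-order contributions together with \Cref{lem:m-bound-tight} gives $\E\sup_t|C_t|=O(\eta^{\alpha})$. The jump remainder $\delta_t$ vanishes by \Cref{lem:remn-term-1}. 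The delicate term is $A_t=\int_0^t(-\lambda_s\eta+\lambda_s\gamma_s^{-1}\eta^{1-\alpha})\dotp{\nabla h(X_s^n)}{\dd M_s^n}$. I integrate by parts to transfer $\dd M_s$ onto the smooth process $\nabla h(X_s^n)$ and onto the coefficient processes $\lambda_s,\gamma_s^{-1}$, whose total variation on $[0,T]$ is uniformly bounded by \Cref{ass:finite-var}. The resulting bound reads $\sup_t|A_t|\leq C(\eta+\eta^{1-\alpha})\sup_s\norm{M_s^n}$, and the $L^1$ estimate $\E\sup_s\norm{M_s^n}=o(\eta^{(\alpha-1)/2-\varepsilon})$ from \Cref{lem:momentum-1} forces $\E\sup_t|A_t|\to 0$.

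The main obstacle is exactly the handling of $A_t$: a na\"{\i}ve bound using the pointwise total variation of $M^n$ blows up, so one must use integration by parts to convert the $\dd M$ integral into integrals against $\dd(\nabla h\circ X)$ and $\dd\lambda$, $\dd\gamma^{-1}$, and then combine the bounded variation of the hyperparameter processes (\Cref{ass:finite-var}) with the supremum moment bounds on $M_t^n$ (\Cref{lem:momentum-1}). Once $A_t$ is controlled, the remaining terms are routine and the energy inequality closes the argument, yielding the target $O(\eta^{1+\beta})\to 0$ for every $\beta>-1$.
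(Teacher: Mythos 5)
Your proof is correct in substance but follows a genuinely different route from the paper's. The paper never touches the Lyapunov function $h$ here: it applies It\^{o}'s formula to $\dotp{\nabla\Loss(X_t)}{M_t}$, so that the momentum relaxation drift $\gamma_t\eta^{\alpha-2}(\nabla\Loss(X_t)-M_t)\,\dd Z_t$ produces the term $\gamma_t\eta^{\alpha-2}\norm{\nabla\Loss(X_t)}^2\,\dd Z_t$, which can then be solved for; the leftover $\int\dotp{\nabla\Loss(X_t)}{M_t}\,\dd Z_t$ is absorbed by writing $M_t\,\dd Z_t=-\eta\lambda_t^{-1}\dd X_t-\eta^2\dd M_t$ and recognizing $\lambda_t^{-1}\dotp{\nabla\Loss(X_t)}{\dd X_t}$ as essentially $\dd(\lambda_t^{-1}\Loss(X_t))$, a bounded process. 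You instead use the geometric comparison $\norm{\nabla\Loss(\vx)}\le L\,d(\vx,\Gamma)$ (valid since $\nabla\Loss$ is Lipschitz and vanishes on $\Gamma$) together with $d^2(\vx,\Gamma)\le c'h(\vx)$ from \Cref{lem:katzen-results}, and then harvest the factor $\eta$ from the strong drift $-\lambda_{\min}c^{-1}\eta^{-1}h(X_t)\,\dd Z_t$ in the energy inequality of \Cref{lem:conver-to-mani}. Your route is arguably more transparent (it makes explicit that the gain of $\eta$ comes from the attraction to the manifold) and reuses existing machinery; the paper's route is more self-contained within the averaging section and exploits the momentum equation rather than the position equation. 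Neither argument is circular: both rely only on \Cref{lem:momentum-1,lem:m-bound-tight}, which precede this lemma.

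One caveat: your stated intermediate bound $\sup_t|A_t|\le C(\eta+\eta^{1-\alpha})\sup_s\norm{M_s^n}$ is not what integration by parts actually yields. Transferring $\dd M_s$ onto $\nabla h(X_s^n)$ produces, via $\dd(\nabla h(X_s))\approx-\lambda_s\eta^{-1}\nabla^2h(X_s)M_s\,\dd Z_s$, an additional contribution of order $\eta^{-\alpha}\int_0^{t\wedge\tau_n}\norm{M_s^n}^2\,\dd Z_s^n$, which is not dominated pathwise by $\eta^{1-\alpha}\sup_s\norm{M_s^n}$. Its expectation is, however, uniformly bounded by the second assertion of \Cref{lem:m-bound-tight}, and since your argument only needs $\E\sup_t|S_{t\wedge\tau_n}|$ to be bounded uniformly in $n$ (not to vanish), the proof still closes. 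You should state the bound on $A_t$ with this extra term included and cite \Cref{lem:m-bound-tight} for it, exactly as the paper does in the analogous integration by parts of \Cref{lem:aver-1}.
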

\begin{proof}
    Use Ito on $\dotp{\nabla\Loss(X_t)}{M_t}$, there is
    \begin{align*}
         d\dotp{\nabla\Loss(X_t)}{M_t} & = -\lambda_t \partial^2\Loss(X_t)[ \eta^{-1} M_t M_t^\top dZ_t+ \eta M_t dM_t^\top] + \dotp{\Delta\nabla\Loss(X_t) }{\Delta M_t}\\
         & +\gamma_t\nabla\Loss(X_t)[\eta^{-2+\alpha}(\nabla\Loss(X_t) - M_t) dZ_t +\eta^{-1+\alpha}\mSigma^{1/2}(X_t) dY_t ].
    \end{align*}
    Therefore we know there exists constant $K$ such that
    \begin{align*}
        \E\int_0^{t\wedge\tau_n}\norm{\nabla\Loss(X_s)}^2 dZ_s & \leq K\eta^{2-\alpha}\E(\dotp{\nabla\Loss(X_{t\wedge\tau_n})}{M_{t\wedge\tau_n}}-\dotp{\nabla\Loss(X_0)}{M_0})\\
        & + \eta^{1-\alpha}K\E\int_0^{t\wedge\tau_n} \norm{M_s}^2 dZ_s + K\eta^{3}\E\sum_{\Delta M_t\neq 0} \norm{ M_t}^2\\
        & + \E\int_0^{t\wedge\tau_n}\dotp{\nabla\Loss(X_t)}{M_t}dZ_t. 
    \end{align*}
    The first four terms vanishes when multiplied $\eta^{\beta}$ for $\beta>-1$ by \Cref{lem:m-bound-tight}. Note the last term
    \begin{align*}
        \int_0^{t\wedge\tau_n}\dotp{\nabla\Loss(X_t)}{M_t}dZ_t & = \int_0^{t\wedge\tau_n}\dotp{\nabla\Loss(X_t)}{-\eta \lambda_t^{-1} dX_t -\eta^2 dM_t}
    \end{align*}
    so 
    \[\lambda_t^{-1}\dotp{\nabla\Loss(X_t)}{ dX_t}=d(\lambda_t^{-1}\Loss(X_t))- (\Delta\lambda_t^{-1})(\Delta \Loss(X_t)+\Loss(X_t)) -\frac{1}{2}\lambda_t^{-1}\partial^2\Loss(X_t)[d[X]_t] - d\delta.\] 
    $\lambda_t^{-1}\dotp{\nabla\Loss(X_t)}{ dX_t}$ is clearly a bounded process given the bounded variation of $\lambda_t^{-1}$ and boundedness of $X_{t\wedge \tau_n}$. Thereby we finished the proof.
\end{proof}
\begin{proof}[Proof of \Cref{lem:ave-1}]
    Let $f(X_s)=\partial^2\Phi(X_s)\nabla\Loss(X_s)$.
    
    Ito's formula on $\partial^2\Phi(X_s)[\nabla\Loss(X_s)(M_s)^\top] = f(X_s)M_s$ gives
    \begin{align*}
        d(f(X_s)M_s) & =df(X_s)M_s + f(X_s) dM_s  + \Delta f(X_s) \Delta M_s\\ 
        & = df(X_s) M_s + \Delta f(X_s) \Delta M_s\\
        & -\gamma_s\eta^{\alpha-2} f(X_s)(M_s-\nabla \Loss(X_s))dZ_s + \gamma_s\eta^{\alpha-1} f(X_s)\mSigma^{1/2}(X_s) dY_s .
    \end{align*}
    Therefore there is constant $K$ such that
    \begin{align*}
        & \int_0^t f(X_s) M_s dZ_s\leq \int_0^t f(X_s)\nabla\Loss (X_s)dZ_s\\
   & + K(\int_0^t \eta^{2-\alpha}df(X_s)M_s + \eta f(X_s)\mSigma^{1/2}(X_s) dY_s+\eta^{2-\alpha}\sum \Delta f(X_s) \Delta M_s)
    \end{align*}
    We know that $\eta^{2-\alpha}\sum \Delta f(X_s) \Delta M_s=O(\eta)$ and $\sup_t\int_0^t\eta f(X_s)\mSigma^{1/2}(X_s) dY_s = O(\eta)$. Expansion gives $\int_0^t \eta^{2-\alpha}df(X_s)M_s = O(\eta)$ by \Cref{lem:m-bound-tight}. Finally by \Cref{lem:ave-2} we obtain the desired result.
\end{proof}
Let $\phi_t =  \lambda_t/\gamma_t-\eta^\alpha\lambda_t$.
\begin{lemma}\label{lem:aver-1}
$\int_{c_1}^{c_2} [\eta^{1-\alpha}\phi_t \partial\Phi(X_t) \dd M_t+\phi_t\lambda_t(\eta^{-\alpha} - \lambda_t)\partial^2\Phi(X_t) [M_t M_t^\top]\dd Z_t \to 0$ as $\eta\to 0$.
\end{lemma}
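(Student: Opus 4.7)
The plan is to apply It\^o integration by parts to the auxiliary product $V_t := \partial\Phi(X_t)\,M_t$ in order to transfer the momentum increment $dM_t$ in the first integrand onto the bounded-variation process $V_t$, at the cost of producing an $\eta^{-\alpha}$-order term proportional to $\partial^2\Phi[M_tM_t^\top]\,dZ_t$ that should cancel (up to finite-order residuals) with the second integrand.

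Concretely, the It\^o product rule for c\`adl\`ag semimartingales gives
$V_t-V_0 = \int_0^t \partial\Phi(X_{s-})\,dM_s + \sum_{s\le t}\Delta\partial\Phi(X_s)\,M_s$,
so that $\int_0^t\partial\Phi(X_{s-})\,dM_s = V_t - V_0 - \sum_{s\le t}\Delta\partial\Phi(X_s)\,M_s$. A first-order Taylor expansion $\Delta\partial\Phi(X_s) = \partial^2\Phi(X_{s-})\Delta X_s + O(\|\Delta X_s\|^2)$ together with the exact jump identity $\Delta X_s = -\eta\lambda_s M_s$ (inherited from the SGDM update $\vx_{k+1}=\vx_k-\eta_k\vm_{k+1}$ under the time rescaling $s=k\eta^2$) reduces the jump correction to $-\eta\sum\lambda_s\partial^2\Phi(X_{s-})[M_sM_s^\top]$ plus a third-order Taylor remainder. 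Converting the jump sum to an integral via $\sum_{s\le t,\,s=k\eta^2}f_s = \eta^{-2}\int_0^t f_s\,dZ_s$ yields a dominant $\eta^{-1}\int\lambda_s\partial^2\Phi[M_sM_s^\top]\,dZ_s$ contribution, while the Taylor remainder collects to $O(\eta\sum\|M_s\|^3)=O(\eta^{-1}\int\|M\|^3\,dZ)$; after multiplying by $\eta^{1-\alpha}\phi_s$, the latter is controlled by $\eta^{-\alpha}\cdot\eta\,\E\int\|M\|^3\,dZ = o(1)$ using the third-moment bound of \Cref{lem:momentum-1}. Multiplying the identity by $\eta^{1-\alpha}\phi_s$ and adding the explicit second term $\phi_s\lambda_s(\eta^{-\alpha}-\lambda_s)\partial^2\Phi[M_sM_s^\top]\,dZ_s$, the two $\eta^{-\alpha}\phi_s\lambda_s\partial^2\Phi[MM^\top]\,dZ_s$ contributions cancel to leading order, and the surviving residual $\int\phi_s\lambda_s^2\partial^2\Phi[MM^\top]\,dZ_s$ is bounded in expectation by $\|\partial^2\Phi\|_\infty\E\int\|M\|^2\,dZ = O(\eta^\alpha)$ via the energy estimate of \Cref{lem:m-bound-tight}.

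It then remains to control $\int_{c_1}^{c_2}\eta^{1-\alpha}\phi_s\,dV_s$. Since $\phi_s=\lambda_s/\gamma_s-\eta^\alpha\lambda_s$ has total variation $O(1)$ on $[0,T]$ (by \Cref{ass:finite-var}, which gives $\int|d\lambda_s|,\int|d\gamma_s|\le Q$), integration by parts yields $\int_{c_1}^{c_2}\eta^{1-\alpha}\phi_s\,dV_s = \eta^{1-\alpha}[\phi_sV_s]_{c_1}^{c_2} - \int_{c_1}^{c_2}\eta^{1-\alpha}V_{s-}\,d\phi_s$. Using the moment bound $\E\|V_s\|\le\|\partial\Phi\|_\infty\sqrt{\E\|M_s\|^2} = O(\eta^{(\alpha-1)/2})$ from \Cref{lem:boun-mome} (applied along the trajectory inside the compact set $K$), both the boundary and the integral contribute $O(\eta^{(1-\alpha)/2})\to 0$. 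Combining the three estimates proves the lemma (with convergence in $L^1$ of the supremum, in accordance with the norms used in \Cref{lem:ave-1,lem:ave-2}).

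The main technical obstacle lies in the cancellation above: when converting the discrete sum $\sum\lambda_s\partial^2\Phi[M_sM_s^\top]$ (with post-jump $M_s$) into the left-continuous stochastic integral $\int\lambda_s\partial^2\Phi[M_sM_s^\top]\,dZ_s$, per-jump discrepancies of size $O(\eta^\alpha\|M\|^2)$ appear through $\Delta(MM^\top) = M_{s-}(\Delta M)^\top+(\Delta M)M_{s-}^\top+(\Delta M)(\Delta M)^\top$. Na\"ively these accumulate to $\eta^{-\alpha}\cdot O(\eta^\alpha)=O(1)$, threatening the cancellation; the remedy is to expand $\Delta(MM^\top)$ explicitly, replace $\Delta M$ by the SDE right-hand side $-\gamma_s\eta^\alpha(M_{s-}-\nabla\Loss)+\gamma_s\eta^\alpha\mSigma^{1/2}\xi_k$, and bound each resulting term either by the $\eta^{-\alpha}$-scaled energy inequality $\eta^{-\alpha}\E\int\|M\|^2\,dZ = O(1)$ of \Cref{lem:m-bound-tight}, or, for the $\Delta Y$-driven pieces, by Doob's martingale inequality analogous to the one used in \Cref{lem:momentum-1}.
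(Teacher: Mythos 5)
Your proposal follows essentially the same route as the paper's proof: the c\`adl\`ag It\^o product rule applied to $\phi_t\,\partial\Phi(X_t)M_t$, extraction of the $\eta^{-\alpha}\phi_t\lambda_t\,\partial^2\Phi[M_tM_t^\top]\,\dd Z_t$ contribution from the second-order expansion of $\Delta\partial\Phi(X_t)$ combined with $\Delta X_t=-\eta\lambda_t M_t$, the third-order remainder controlled by $\E\int\|M_s\|^3\,\dd Z_s\to 0$ (\Cref{lem:momentum-1}), the surviving $O(\lambda^2)$ quadratic term by the energy estimate of \Cref{lem:m-bound-tight}, and the schedule-variation terms by \Cref{ass:finite-var}. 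The only deviations are cosmetic: staging $\phi_t$ outside the product and folding it back in at the end is just a reorganization of the paper's single It\^o expansion, and the momentum-moment bound you cite should come from \Cref{lem:momentum-1} (valid in the $O(1/\eta^2)$, $\sigma=1$ regime) rather than \Cref{lem:boun-mome}, which belongs to the $O(1/\eta)$-step analysis.
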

\begin{proof}
     Ito's formula on $\eta^{1-\alpha}\phi_t\partial\Phi(X_t)M_t$ gives
    \begin{align*}
        \dd(\phi_t\partial\Phi(X_t)M_t) &=  \dd(\phi_t) \partial\Phi(X_t)M_t + (\Delta \phi_t)\Delta (\partial\Phi(X_{t-}) M_t)+ \phi_t \partial\Phi(X_t) \dd M_t\\
        &+\phi_t \partial^2\Phi(X_t) [M_t\dd X_t^\top] +\phi_t \Delta\partial\Phi(X_{t})\Delta  M_t+\dd\delta\\
    \end{align*}
    where \[\dd \delta = \phi_t (\Delta\partial\Phi(X_t) - \partial^2\Phi(X_{t-})\Delta X_t)M_t\]
    We know for $\alpha\in (0,1)$,  by \Cref{lem:momentum-1}
    \begin{itemize}
        \item $\eta^{1-\alpha}\delta\to 0$ as $|\delta_t| \leq C \int_0^t \norm{M_s}^3 \dd Z_s$.
        \item $\int \eta^{1-\alpha}\dd(\phi_t) \partial\Phi(X_t)M_t\to 0$ as $\int \dd(\phi_t)<\infty$ by \Cref{ass:finite-var} and $\sup_t \eta^{1-\alpha} \norm{M_t}\to 0$.
        \item $ \sum\eta^{1-\alpha}(\Delta \phi_t)\Delta (\partial\Phi(X_{t-}) M_t)\to 0$ as $\sum_t \Delta \phi_t<\infty$ by \Cref{ass:finite-var} and $\int \eta^{1-\alpha} \norm{M_t}\to 0$.
        \item $\int \eta^{1-\alpha}\phi_t \partial^2\Phi(X_t) [M_t\dd X_t^\top] +\int \eta^{-\alpha}\phi_t\lambda_t \partial^2\Phi(X_t) [M_t M_t^\top]\dd Z_t\to 0$
        \item $\sum \eta^{1-\alpha}\phi_t \Delta\partial\Phi(X_{t})\Delta M_t -\int \phi_t \lambda_t\gamma_t \partial^2\Phi(X_t)[M_t,M_t-\nabla\Loss(X_t)]\dd Z_t \to 0$
        \item $\eta^{1-\alpha} \phi_t \partial\Phi(X_t)M_t\to 0$
    \end{itemize}
    
    Therefore adding them up we obtain the result.
\end{proof}

\begin{lemma}
    $|\int_0^{t\wedge \tau_n}\partial^2 \Phi(X_s) [\dd[X]_s]-\int_0^{t\wedge \tau_n}\lambda_t^2\partial^2 \Phi(X_s) [M_tM_t^\top]\dd Z_t|\to 0$.\label{lem:aver-2}
\end{lemma}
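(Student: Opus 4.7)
\textbf{Proposal for Lemma \ref{lem:aver-2}.} The plan is to substitute the explicit expression for $d[X]_t$ derived earlier in the paper, cancel the main $M_tM_t^\top \,dZ_t$ term, and control the three remaining pieces using the moment bounds already established.

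Recall from the computations preceding \Cref{lem:remn-term} that
\begin{equation*}
d[X]_t \;=\; \lambda_t^2\bigl(M_tM_t^\top\,dZ_t + \eta^2\,d[M]_t + M_t\,d[Z,M]_t + d[M,Z]_t\,M_t^\top\bigr).
\end{equation*}
After subtracting the leading piece $\lambda_t^2\,\partial^2\Phi(X_s)[M_sM_s^\top]\,dZ_s$, the lemma reduces to showing that the three error integrals
$E_1^n := \eta^2\!\int_0^{t\wedge\tau_n}\!\lambda_s^2\,\partial^2\Phi(X_s)[d[M]_s]$,
$E_2^n := \int_0^{t\wedge\tau_n}\!\lambda_s^2\,\partial^2\Phi(X_s)[M_s\,d[Z,M]_s]$, and its transpose $E_3^n$, each vanish in probability uniformly on $[0,T]$. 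Since $\partial^2\Phi$ is bounded on the compact set $K$ and $\lambda_t$ is bounded, it suffices to bound the Frobenius norm of each bracket.

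For $E_1^n$, substitute the formula for $d[M]_t$ given in the excerpt. The prefactor is $\eta^2 \cdot \gamma_t^2\eta^{2\alpha-2} = O(\eta^{2\alpha})$, and the four constituent pieces are (i) $(M_t-\nabla\Loss)(M_t-\nabla\Loss)^\top\,dZ_t$, (ii) $\mSigma^{1/2}(X_t)\,d[Y]_t\,\mSigma^{1/2}(X_t)$, and (iii)-(iv) the cross martingale terms $\propto (M_t-\nabla\Loss)\,dY_t^\top\,\mSigma^{1/2}$. Part (i) is controlled by $\eta^{2\alpha}\int\norm{M_s}^2 dZ_s + \eta^{2\alpha}\int\norm{\nabla\Loss(X_s)}^2 dZ_s$, which is $O(\eta^{3\alpha}) + o(\eta^{2\alpha+1-\epsilon})$ by \Cref{lem:m-bound-tight} and \Cref{lem:ave-2}. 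Part (ii) gives $\eta^{2\alpha}\int\tr(\mSigma(X_s))\,d[Y]_s$; since $d[Y]_t$ concentrates on $I\,dZ_t$ and $\tr\mSigma$ is bounded on $K$, this is $O(\eta^{2\alpha})$. The martingale pieces (iii)-(iv) are handled by Doob's inequality (\Cref{lem:Doob-1}) applied to their quadratic variations, yielding $O(\eta^{2\alpha+1})\cdot O(\eta^{\alpha/2})$ after taking square roots. All contributions vanish as $\eta\to 0$ when $\alpha\in(0,1)$.

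For $E_2^n$ (and symmetrically $E_3^n$), substitute $d[M,Z]_t = -\gamma_t\eta^\alpha(M_t-\nabla\Loss(X_t))\,dZ_t + \gamma_t\eta^{\alpha+1}\mSigma^{1/2}(X_t)\,dY_t$. The first piece contributes $-\gamma_s\eta^\alpha\,\partial^2\Phi(X_s)[M_s(M_s-\nabla\Loss(X_s))^\top]\,dZ_s$. Bounding in Frobenius norm gives two integrals: $\eta^\alpha\int\norm{M_s}^2\,dZ_s = O(\eta^{2\alpha})$ via \Cref{lem:m-bound-tight}, and $\eta^\alpha\int\norm{M_s}\norm{\nabla\Loss(X_s)}\,dZ_s$, which by Cauchy--Schwarz is at most $\eta^\alpha\sqrt{\int\norm{M_s}^2 dZ_s}\sqrt{\int\norm{\nabla\Loss(X_s)}^2 dZ_s} = \eta^\alpha \cdot O(\eta^{\alpha/2}) \cdot o(\eta^{(1-\epsilon)/2})$ by combining \Cref{lem:m-bound-tight} and \Cref{lem:ave-2}. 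The second piece is a martingale with prefactor $O(\eta^{\alpha+1})$; its quadratic variation is $O(\eta^{2\alpha+2})\int\norm{M_s}^2\,dZ_s = O(\eta^{3\alpha+2})$, so Doob's inequality yields a sup bound of $O(\eta^{(3\alpha+2)/2})\to 0$.

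The main obstacle is the cross term $\eta^\alpha\int\partial^2\Phi(X_s)[M_s\nabla\Loss(X_s)^\top]\,dZ_s$ appearing in $E_2^n$: neither $\int\norm{M_s}^2 dZ_s$ nor $\int\norm{\nabla\Loss}^2 dZ_s$ alone is sharp enough, and one must combine them via Cauchy--Schwarz and invoke \emph{both} \Cref{lem:m-bound-tight} and \Cref{lem:ave-2} simultaneously. This is precisely why \Cref{lem:ave-2} is placed immediately after \Cref{lem:ave-1}. Once the three error integrals are shown to converge to zero in $L^1$ (hence in probability) uniformly on $[0,T]$, the lemma follows.
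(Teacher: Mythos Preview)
Your argument is correct but considerably more elaborate than needed, and it misidentifies the difficulty. The paper's one-line proof invokes only the expansion of $d[X]_t$ together with \Cref{lem:momentum-1}. The reason this suffices is that for $s\le\tau_n$ one has $X_s^n\in K$, and since $\nabla\Loss$ is continuous on the compact set $K$, there is a constant $C_\nabla$ with $\|\nabla\Loss(X_s^n)\|\le C_\nabla$ uniformly. Consequently what you call the ``main obstacle'', namely $\eta^\alpha\int\partial^2\Phi(X_s)[M_s\nabla\Loss(X_s)^\top]\,dZ_s$, is bounded directly by
\[
C\,\eta^\alpha\int_0^{T}\|M_s\|\,dZ_s
\;\le\; C\,\eta^\alpha\sqrt{T}\,\sqrt{\int_0^{T}\|M_s\|^2\,dZ_s}
\;=\;O(\eta^\alpha)
\]
using only the uniform bound $\sup_n\E\int\|M_s\|^2\,dZ_s<\infty$ from \Cref{lem:momentum-1}. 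No appeal to \Cref{lem:m-bound-tight} or \Cref{lem:ave-2}, and no Cauchy--Schwarz pairing between the $\|M\|^2$ and $\|\nabla\Loss\|^2$ integrals, is required. The same observation collapses every occurrence of $\nabla\Loss(X_s)$ in your $E_1^n,E_2^n,E_3^n$ to a constant, after which all pieces are immediate from \Cref{lem:momentum-1} (plus Doob for the martingale remainders, as you do).

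What your route buys is a sharper rate on some individual terms (e.g.\ $o(\eta^{3\alpha/2+1/2-\epsilon})$ rather than $O(\eta^\alpha)$), but since the lemma only asks for $o(1)$ the extra machinery is unnecessary. Your remark that ``this is precisely why \Cref{lem:ave-2} is placed immediately after \Cref{lem:ave-1}'' is also off: those lemmas are placed there because they feed into \Cref{lem:aver-1} and \Cref{lem:aver-3}, not into the present lemma.
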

\begin{proof}
    This  follows directly by the expansion of $[X]_t$ and \Cref{lem:momentum-1}.
\end{proof}
\begin{lemma}
$\sup_{c_1,c_2}\left|\int_{c_1}^{c_2}\frac{\lambda^2_t/\gamma_t}{\eta^\alpha} \partial^2\Phi(X_t)[M_t,M_t]\dd Z_t  - \int_{c_1}^{c_2}\frac{\lambda_t^2}{2}\partial^2\Phi(X_t)[\mSigma(X_t)]\dd t\right|\to 0$ as $\eta\to 0$.\label{lem:aver-3}
\end{lemma}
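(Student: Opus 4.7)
}
The underlying mechanism is a fluctuation-dissipation relation on the fast $M$-timescale: with $X$ treated as frozen, the recursion $M_{k+1}=\beta_k M_k+(1-\beta_k)(\nabla\Loss(X_k)+\vv_k)$ equilibrates in $O(\eta^{-\alpha})$ steps (much shorter than the $O(\eta^{-2})$ horizon) to a distribution with mean $\nabla\Loss(X_k)$ and stationary second moment $\nabla\Loss\nabla\Loss^\top+\tfrac{1-\beta_k}{1+\beta_k}\mSigma(X_k)\approx\nabla\Loss\nabla\Loss^\top+\tfrac{\gamma_t\eta^\alpha}{2}\mSigma(X_t)$. The $\nabla\Loss$-pieces will be eliminated using \Cref{lem:ave-1,lem:ave-2}, leaving precisely the asserted identity.

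To turn this into a proof, I would start from the exact one-step recursion
\begin{align*}
\Delta_k(MM^\top) &:= M_{k+1}M_{k+1}^\top-M_kM_k^\top \\
&= -(1-\beta_k^2)M_kM_k^\top + (1-\beta_k)^2[\nabla\Loss\nabla\Loss^\top+\mSigma(X_k)] \\
&\quad + \beta_k(1-\beta_k)[M_k\nabla\Loss^\top+\nabla\Loss M_k^\top] + \Delta N_k,
\end{align*}
with $\Delta N_k$ a martingale increment. Solving for $M_kM_k^\top$, substituting $\tfrac{1-\beta_k}{1+\beta_k}\approx\tfrac{\gamma_k\eta^\alpha}{2}$ and $\tfrac{\beta_k}{1+\beta_k}\approx\tfrac12$, multiplying by the per-step weight $\tfrac{\lambda_k^2\eta^{2-\alpha}}{\gamma_k}\partial^2\Phi(X_k)$ (which reproduces the integrand $\tfrac{\lambda_t^2/\gamma_t}{\eta^\alpha}\partial^2\Phi[\cdot]\dd Z_t$ since $\dd Z_t$ has atoms of size $\eta^2$), and summing over $k$ yields
\[
\int_{c_1}^{c_2}\tfrac{\lambda_t^2/\gamma_t}{\eta^\alpha}\partial^2\Phi(X_t)[M_tM_t^\top]\dd Z_t = \int_{c_1}^{c_2}\tfrac{\lambda_t^2}{2}\partial^2\Phi(X_t)[\mSigma(X_t)]\dd Z_t + R_1+R_2+R_3+R_4.
\]
Here $R_1$ is an integral of $\partial^2\Phi[\nabla\Loss\nabla\Loss^\top]$, vanishing by \Cref{lem:ave-2}; $R_2=\eta^{-\alpha}\int\partial^2\Phi[M\nabla\Loss^\top+\nabla\Loss M^\top]\dd Z$, vanishing by \Cref{lem:ave-1} applied with exponent $-\alpha>-1$; $R_3=\eta^{-2\alpha}\sum\partial^2\Phi(X_k)\cdot\Delta_k(MM^\top)$, which an Abel summation converts into a boundary term plus an $\eta^{1-2\alpha}\int\|M\|^3\dd Z$-type integral (using $\Delta\partial^2\Phi=\partial^3\Phi\cdot\Delta X\sim\eta\,\partial^3\Phi\cdot M$), the latter being $o(1)$ by the third-moment estimate in \Cref{lem:momentum-1}; and $R_4$ is a weighted martingale assembled from $\sum\Delta N_k$. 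Finally $\int\partial^2\Phi[\mSigma]\dd Z\to\int\partial^2\Phi[\mSigma]\dd t$ because $\dd Z_t-\dd t$ is supported on jumps of size $\eta^2$ and the integrand is bounded on $K$.

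The delicate part is making the control \emph{uniform in} $c_1,c_2$, as demanded by the outer $\sup$. The boundary term of $R_3$ is $\eta^{-2\alpha}\partial^2\Phi(X_t)[M_tM_t^\top]$ evaluated at $t\in\{c_1,c_2\}$; its supremum is bounded via the sup-level estimate $\E\sup_t\|M_t\|^2=o(\eta^{\alpha-1-\epsilon})$ from \Cref{lem:momentum-1}, and after the $\eta^{-2\alpha}$ prefactor this is $o(\eta^{1-\alpha-\epsilon})\to 0$---precisely the margin that forces $\alpha<1$. The martingale $R_4$ has predictable quadratic variation of order $\eta^{-4\alpha}(\eta^{4\alpha-2}+\eta^{2\alpha}\int\|M\|^2\dd Z)=o(1)$ by \Cref{lem:m-bound-tight}, and Doob's maximal inequality then gives $\sup_{c_1,c_2}|R_4|\to 0$. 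Time-variation corrections arising from $\lambda_t,\gamma_t$ are absorbed via the finite-variation assumption \Cref{ass:finite-var}.
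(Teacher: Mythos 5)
Your plan is essentially the paper's own argument: the paper applies the generalized It\^o formula to $A_t=\phi_t\,\partial^2\Phi(X_t)[M_t,M_t]$ with $\phi_t=\lambda_t^2/\gamma_t^2$, which for this pure-jump process is exactly your one-step recursion for $M_{k+1}M_{k+1}^\top-M_kM_k^\top$ plus Abel summation, and it disposes of the resulting terms with the same tools you cite — the $\nabla\Loss$ cross terms via \Cref{lem:ave-1,lem:ave-2}, the $\partial^3\Phi[M,M,M]$ correction via the third-moment bound in \Cref{lem:momentum-1}, the martingale via Doob plus \Cref{lem:m-bound-tight}, and the boundary term via $\sup_t\|M_t\|^2$, yielding the same fluctuation--dissipation balance $\tfrac{1-\beta}{1+\beta}\mSigma\approx\tfrac{\gamma_t\eta^\alpha}{2}\mSigma$. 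The only blemishes are bookkeeping: your displayed prefactors on $R_3$ and $R_4$ drop the $\eta^2$ atom of $\dd Z_t$ (they should read $\eta^{2-2\alpha}$ and $\eta^{4-4\alpha}$), but the orders you actually conclude, $o(\eta^{1-\alpha-\epsilon})$ and $o(1)$, are the ones the correct prefactors produce, so the argument stands.
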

\begin{proof}
 let $A_t =\phi_t \partial^2\Phi(X_t)[M_t,M_t]$ for some schedule $\phi_t$, then for some uniformly bounded process $B_t$,
 \begin{align*}\dd A_t & = d(\phi_t) (\partial^2\Phi(X_t)[M_t,M_t]+\Delta \partial^2\Phi(X_t)[M_t,M_t]) + \eta^{3\alpha-2}dB_t \\
 &\qquad - \frac{\phi_t\lambda_t}{\eta}\partial^3\Phi(X_t)[M_t,M_t,M_t] \diff Z_t + \frac{\gamma^2_t\phi_t}{\eta^{2-2\alpha}}\partial^2\Phi(X_t)[\mSigma(X_t)^{1/2}\dd [Y]_t\mSigma(X_t)^{1/2}]\\
 &\qquad  -  \frac{2\phi_t\gamma_t}{\eta^{2-\alpha}}\partial^2\Phi(X_t)[ M_t,M_t]\diff Z_t+\frac{2\phi_t\gamma_t}{\eta^{2-\alpha}}\partial^2\Phi(X_t)[ \nabla L(X_t), M_t]\diff Z_t \\
 &\qquad + \frac{2\phi_t\gamma_t}{\eta^{1-\alpha}}\partial^2\Phi(X_t)[\mSigma^{1/2}(X_t)\diff Y_t,M_t]\\
 &\qquad - \frac{2\phi_t\gamma_t^2}{\eta^{1-2\alpha}}\partial^2\Phi(X_t)[(M_t-\nabla\Loss(X_t))dY_t^\top \mSigma^{1/2}(X_t)]\\
 &\qquad + \gamma_t^2\eta^{2\alpha-2}  \phi_t\partial^2\Phi(X_t)[(M_t-\nabla\Loss(X_t))(M_t-\nabla\Loss(X_t))^\top]\diff Z_t.
 \end{align*}
 Multiply both sides by $\eta^{2\alpha-2}$, by \Cref{lem:ave-1,lem:ave-2} we know $\int \partial^2\Phi(X_t)[\nabla\Loss(X_t)M_t^\top]\diff Z_t$ and $\int \partial^2\Phi(X_t)[\nabla\Loss(X_t)\nabla\Loss(X_t)^\top]\diff Z_t$ converges to 0. Bound the Martingale $W_t$ that
 \[dW_t = 2\phi_t\gamma_t\eta^{1-\alpha}\partial^2\Phi(X_t)[\mSigma^{1/2}(X_t)\diff Y_t,M_t] -2\phi_t\gamma_t^2\eta\partial^2\Phi(X_t)[(M_t-\nabla\Loss(X_t))dY_t^\top \mSigma^{1/2}(X_t)],\]
 Doob's inequality alongside with \Cref{lem:momentum-1} shows $W_t\to 0$. Then we know with $\phi_t=\lambda^2_t/\gamma_t^2$ that 
 \[\sup_{c_1,c_2}\left|\int_{c_1}^{c_2}\frac{\lambda^2_t/\gamma_t}{\eta^\alpha} \partial^2\Phi(X_t)[M_t,M_t]\dd Z_t  - \int_{c_1}^{c_2}\frac{\lambda_t^2}{2}\partial^2\Phi(X_t)[\mSigma(X_t)]\dd t\right|\to 0\]
\end{proof}
Finally we are ready to show the limiting dynamics as

\begin{theorem}\label{thm:sgdm-slow-sde-main}
    For any $t>0$, $(X^{n}_{t\wedge \tau_n},\tau_n)$ converges in distribution to $(X_{t\wedge \tau},\tau)$ that $\tau=\inf\{t>0:X_t\not\in K\}$, and that
    \[X_t = \Phi(\vx_0) + \int_0^t\lambda_t \partial\Phi(X_s)\mSigma^{1/2}(X_s)\dd W_s +\int_0^t\frac{\lambda_t^2}{2}\partial^2\Phi(X_s)[\mSigma(X_s)]\dd s.\]
\end{theorem}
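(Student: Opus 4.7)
The plan is to establish the convergence in two pieces: first the SGD case via Katzenberger's framework, then the SGDM case by showing that after integrating out the fast momentum oscillations the effective dynamics matches SGD's. For the SGD part, I would reparametrize the iteration with $A_n(t)=\sum_{k<\lfloor t/\eta^2\rfloor}\eta_k^{(n)}$ and $Z_n(t)=\sum_{k<\lfloor t/\eta^2\rfloor}\eta_k^{(n)}\vxi_k$, verify that $A_n$ increases infinitely fast (using \Cref{ass:conve-hpschedule}) and that the martingale $Z_n$ converges in distribution to $\int_0^\cdot \lambda_s\,\dd W_s$ via a standard CLT argument (Levy's characterization after normalizing), then apply \Cref{thm:previous_thm} to conclude that the SGD limit is exactly the slow SDE \eqref{eq:general_sde}.

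For the SGDM case, the dynamics is not directly a Katzenberger process because of the momentum buffer, so I would write the joint iteration in integral form (analogous to \eqref{equ:sgdm-sde-main}) and apply Ito's formula to the projected process $\Phi(X_t^n)$ on the stopping interval $[0,\tau_n]$. Using $\partial\Phi(X)\nabla\cL(X)=0$ on $K$, the dominant-looking $\eta^{-1}\nabla\cL\,\dd Z$ term drops out, leaving only: (i) an $\eta$-small direct momentum contribution, (ii) a martingale part driven by $Y_n$, (iii) a quadratic-variation term $\tfrac12\partial^2\Phi[d[X]_t]$, and (iv) a cubic remainder $\delta$ from Ito.

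The heart of the argument is then a sequence of moment/averaging bounds on $M_t^n$. I would first establish Lyapunov-type moment bounds $\E\sup_{t\le\tau_n\wedge T}\norm{M^n_t}^m=O(\eta^{\alpha-1})$ and the tight integral bound $\E\int_0^{t\wedge\tau_n}\norm{M^n_s}^2\,\dd Z^n_s=O(\eta^\alpha)$ by applying Ito to $\norm{M_t^n}^2$ and using the Gronwall inequality with the discrete driver $Z_t^n$ (these are exactly \Cref{lem:momentum-1} and \Cref{lem:m-bound-tight}). These bounds immediately kill the Ito remainder $\delta$ (since it is controlled by $\eta\int\norm{M}^3\,\dd Z$) and the direct $\dd M$ terms. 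For the distance-to-manifold, I would apply Ito to the Lyapunov $h$ from \Cref{lem:katzen-results}, isolate the $-\lambda_t c^{-1}\eta^{-1}h\,\dd Z$ term, and use Gronwall to show $h(X_{t\wedge\tau_n})\to 0$ in probability.

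The hard part is the averaging: showing that $\int_0^{\cdot}\lambda_t^2\,\partial^2\Phi(X_t)[M_tM_t^\top]\,\dd Z_t$ converges to $\int_0^{\cdot}\tfrac12\lambda_t^2\,\partial^2\Phi(X_t)[\mSigma(X_t)]\,\dd t$. Here $M_t^n$ behaves like a fast Ornstein--Uhlenbeck process with stationary covariance $\tfrac{\gamma_t\eta^\alpha}{2-\gamma_t\eta^\alpha}\mSigma(X_t)\approx \tfrac{\eta^\alpha}{2}\mSigma(X_t)$, so one expects the time-average $\eta^{-\alpha}[M_tM_t^\top]$ to replace itself by $\tfrac12\mSigma$. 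I would make this rigorous by applying Ito to a carefully chosen auxiliary process such as $\phi_t\partial\Phi(X_t)M_t$ and $\phi_t\partial^2\Phi(X_t)[M_t,M_t]$ with $\phi_t=\lambda_t^2/\gamma_t^2$, then solving for the desired quadratic-in-$M$ integral in terms of (a) boundary terms that vanish, (b) martingale increments controlled by Doob and \Cref{lem:momentum-1}, and (c) the $\mSigma$-driven drift coming from $d[Y]_t\approx \mI\,\dd Z_t$ (these are \Cref{lem:aver-1,lem:aver-2,lem:aver-3}). Once this averaging is in place, the SGDM expression for $\Phi(X^n_{t\wedge\tau_n})$ matches, term by term, the Katzenberger expansion obtained for SGD, and an application of \Cref{thm:weak-limit} identifies the unique limit point as the slow SDE, with the stopping-time coupling $\tau_n\to\tau$ following from continuity of $\bvx$ at the boundary of $K$.
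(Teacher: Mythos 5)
Your proposal is correct and follows essentially the same route as the paper's proof: It\^o's formula on $\Phi(X_t^n)$ with the cancellation $\partial\Phi\,\nabla\cL=0$, Lyapunov/Gr\"{o}nwall moment bounds on $M_t^n$ (the paper's \Cref{lem:momentum-1,lem:m-bound-tight}), convergence to the manifold via the Lyapunov function $h$ (\Cref{lem:conver-to-mani}), the averaging of $\partial^2\Phi[M_tM_t^\top]$ against the stationary OU covariance via auxiliary It\^o processes with $\phi_t=\lambda_t^2/\gamma_t^2$ (\Cref{lem:aver-1,lem:aver-2,lem:aver-3}), and the Kurtz--Protter limit theorem (\Cref{thm:weak-limit}) to identify the slow SDE. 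The only cosmetic difference is that you also fold in the SGD/Katzenberger argument, which belongs to \Cref{thm:sgd-slow-sde-main} rather than to this statement.
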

\begin{proof}
    Recall the process $\Phi_t^n$ as
    \begin{align*}
     \dd \Phi^n_t  & =  \partial \Phi(X^n_t)((-\lambda_t\eta+ \lambda_t\eta^{1-\alpha}/\gamma_t)\diff M^n_t -\lambda_t \mSigma^{1/2}(X^n_t)dY^n_t) + \frac{1}{2}\partial^2 \Phi(X^n_t) [\dd[X^n]_t] +\diff\delta_t^n
    \end{align*}
    Therefore we know
    \begin{align}\label{iter:1}
        X^n_{t\wedge \tau_n} & = \Phi(\vx_0) + (X^n_{t\wedge \tau_n} - \Phi^n_{t\wedge \tau_n}) + \delta^n_t \\
        & + \int_0^{t\wedge \tau_n} \partial \Phi(X^n_t)((-\lambda_t\eta+ \lambda_t\eta^{1-\alpha}/\gamma_t)\diff M^n_t -\lambda_t \mSigma^{1/2}(X^n_t)dY^n_t) + \frac{1}{2}\partial^2 \Phi(X^n_t) [\dd[X^n]_t].
    \end{align}
    By \Cref{lem:conver-to-mani} we know the process $X^n_t-\Phi^n_t$ weakly converges to zero. By \Cref{lem:remn-term} we know $\delta_t\to 0$. By \Cref{lem:aver-1,lem:aver-2,lem:aver-3} we know
    \begin{align*}\left|\int_0^{t\wedge \tau_n} \partial \Phi(X^n_t)((-\lambda_t\eta+ \lambda_t\eta^{1-\alpha}/\gamma_t)\diff M^n_t + \frac{1}{2}\partial^2 \Phi(X^n_t) [\dd[X^n]_t]\right.\\ \left.-  \int_0^{t\wedge \tau_n}\frac{\lambda_t^2}{2}\partial^2\Phi(X^n_s)[\mSigma(X^n_s)]\dd [Y^n]_s\right|\to 0.
    \end{align*}
    We know the process $Y^n_t$, $Z^n_t$ and $[Y^n]_t$ are of bounded quadratic variation, so are $\tilde{Y}^n_t= Y^{n}_t-\sum_{s\leq t}h_\delta(|\Delta Y^{n}_s|)\Delta Y^{n}_s$, $\tilde{Z}^n_t= Z^{n}_t-\sum_{s\leq t}h_\delta(|\Delta Z^{n}_s|)\Delta Z^{n}_s$ and $[\tilde{Y}^n]_t= [Y^{n}]_t-\sum_{s\leq t}h_\delta(|\Delta [Y^{n}]_s|)\Delta [Y^{n}]_s$. Furthermore by the Donsker's theorem $Y^n_t\to W_t$ in the uniform metric, where $W_t$ is a Brownian motion, and $Z^n_t\to t$. By the law of large numbers we know $[Y^n]_t\to t$. Additionally, the process $X_t^n$, $Z^n_t$ and $Y^n_t$ always share jumps at the same locations. This implies we can write
    \[X^n_{t} = X_0 + P^n_{t\wedge\tau_n} + \int_0^{t} F_n(X^n_s)\dd Y^n_{s} + G_n(X^n_s)\dd [Y^n]_{s} + H_n(X^n_s) \dd Z^n_s\]
    for the functions $F_n, G_n$ and $H_n$ that describe the iterates \Cref{iter:1} and 
    \begin{align*}
        P^n_t & = (X^n_{t} - \Phi^n_{t}) + \delta^n_t +\\ &\int_0^{t} \partial \Phi(X^n_t)((-\lambda_t\eta+ \lambda_t\eta^{1-\alpha}/\gamma_t)\diff M^n_t + \frac{1}{2}\partial^2 \Phi(X^n_t) [\dd[X^n]_t]-  \frac{\lambda_t^2}{2}\partial^2\Phi(X^n_s)[\mSigma(X^n_s)]\dd [Y^n]_s.
    \end{align*} Notice that the stopping time is incorporated into the functions that  $F_n(x)=G_n(x)=H_n(x)=0$ for $x\not\in K$. 
    Notice that the processes $(P^n_{t} ,Y^n_{t},[Y^n]_{t},Z^n_t,F_n(X_t),G_n(X_t),H_n(X_t))$ converges in the uniform metric to $(0,W_t,t,t,F(X_t),G(X_t),H(X_t))$ for any process $X_t$, then by \Cref{thm:weak-limit}, the limit of $X^n_t$ can be denoted by
    \[X_{t} = X_0 +\int_0^{t} F(X_s)\dd W_s + G(X_s)\dd s + H(X_s) \dd s.\]
    Plugging in the above results gives the limit
    \[X_t = \Phi(\vx_0) + \int_0^t\lambda_t \partial\Phi(X_s)\mSigma^{1/2}(X_s)\dd W_s +\int_0^t\frac{\lambda_t^2}{2}\partial^2\Phi(X_s)[\mSigma(X_s)]\dd s.\] 
\end{proof}
\begin{proof}[Proof for \Cref{thm:slow-sde-formal}]
    The result is a natural corollary of \Cref{thm:sgd-slow-sde-main} and \Cref{thm:sgdm-slow-sde-main}.
\end{proof}

\section{Experimental Details}\label{app_sec:exps}
\subsection{Language Model Fine-Tuning}
We fine-tune RoBERTa-large~\citep{liu2019roberta} on several tasks using the code provided by~\citet{malladi2023kernelbased}.
We are interested in comparing SGD and SGDM, so we use a coarser version of the learning rate grid proposed for fine-tuning masked langauge models with SGD in~\citep{malladi2023kernelbased}. 
We randomly sample $512$ examples per class using $5$ seeds, following the many shot setting in~\citet{gao-etal-2021-making}.
Then, we fine-tune for $4$ epochs with batch sizes $2, 4,$ and $8$ and learning rates $1\mathrm{e}-4$, $1\mathrm{e}-3$, and $1\mathrm{e}-2$. 
We select the setting with the best dev set performance and report its performance on a fixed $1000$ test examples subsampled from the full test set, which follows~\citep{malladi2023kernelbased}. 
We also compare the trajectories of SGDM and SGD when fixing the data seed and see that the two closely track each other (see~\Cref{fig:sgdm_lmft_trajectories}) on five different tasks.

Past work~\citep{malladi2023kernelbased} has suggested that using a suitable prompt can fundamentally modify the dynamics of fine-tuning, so we also report results when using a suitable prompt. Prompts are taken from~\citet{gao-etal-2021-making}. Results in~\Cref{tab:lmft_prompt} demonstrate that SGD and SGDM track each other closely in the prompt-based fine-tuning setting as well.

\begin{figure}[t]
    \centering 
    \includegraphics[width=0.9\textwidth]{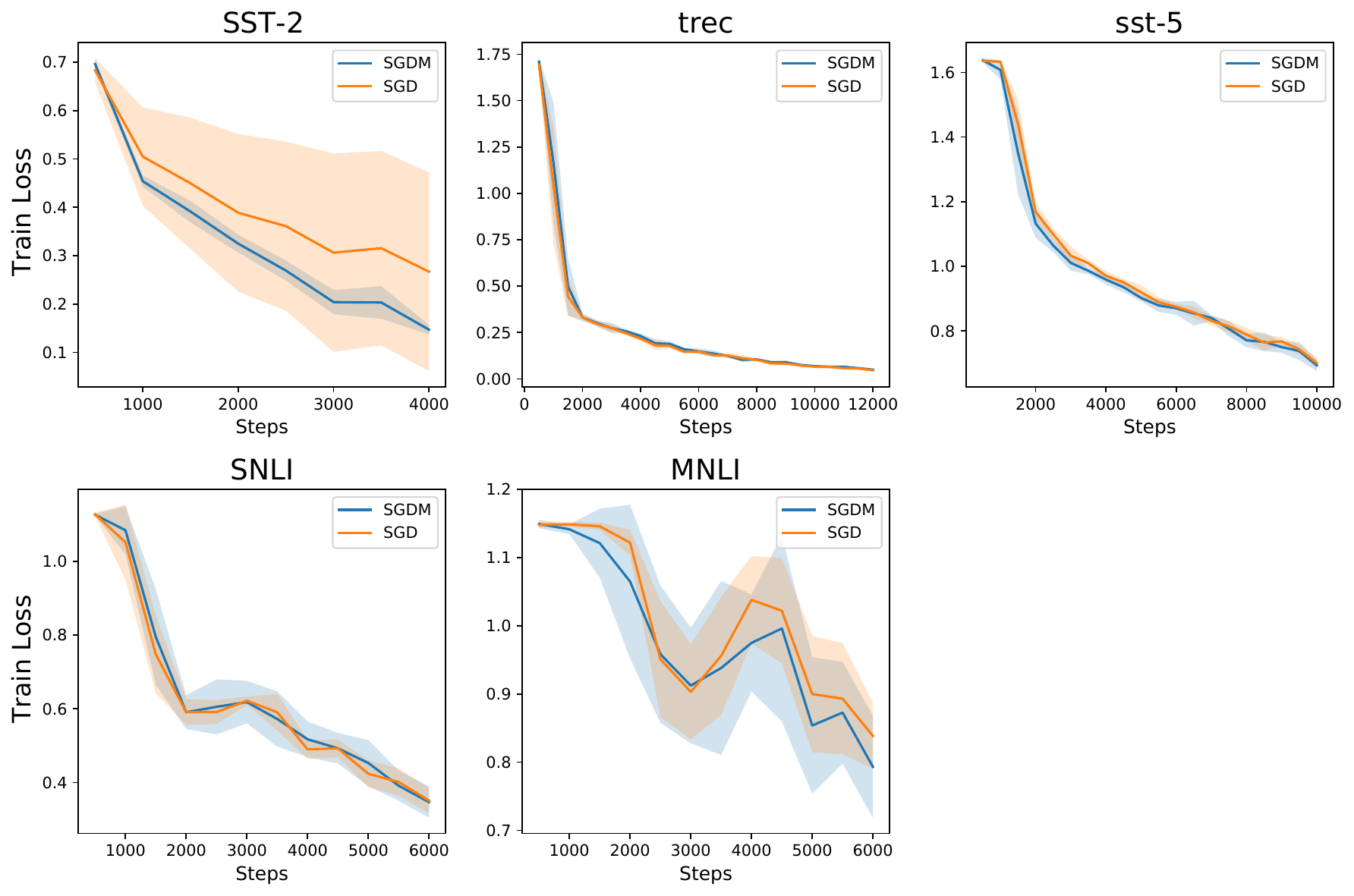}
    \caption{
        SGD and SGDM trajectories when fine-tuning RoBERTa-large on five downstream tasks. We ensure the effective learning rate is fixed in both cases, so SGDM trajectories are with learning rate 0.001 and SGD trajectories are with learning rate 0.01. We fix the data seed and fine-tune using five different optimization seeds. The results show that SGD and SGDM track each other closely on average over the course of language model fine-tuning.
        }  
    \label{fig:sgdm_lmft_trajectories} 
\end{figure}

\begin{table*}[h]
\centering
\resizebox{0.8\textwidth}{!}{
    \setlength{\tabcolsep}{0.3cm}
    \begin{tabular}{lccccccc}
    \toprule
     Task  &  \multicolumn{1}{c}{\textbf{SST-2}} &\multicolumn{1}{c}{\textbf{SST-5}} & \multicolumn{1}{c}{\textbf{SNLI}} & \multicolumn{1}{c}{\textbf{TREC}} & \multicolumn{1}{c}{\textbf{MNLI}} \\
    \midrule
    Zero-shot & 79.0 & 35.5 & 50.2  & 51.4 & 48.8  \\
    SGD & 93.1 (0.9) & 54.9 (0.7) & 87.9 (0.8) & 97.0 (0.2) & 82.4 (1.4) \\
    SGDM & 93.1 (0.4) & 55.3 (0.9) & 87.3 (0.5) & 96.8 (0.7) & 82.8 (1.4) \\

    \bottomrule
    \end{tabular}}
    \caption{
        SGD and SGDM for fine-tuning RoBERTa-large on $5$ tasks using $512$ examples from each class~\citep{gao-etal-2021-making,malladi2023kernelbased}. We use the simple task-specific prompt introduced in~\citet{gao-etal-2021-making}. Results are averaged over $5$ random subsets of the full dataset. These findings confirm that SGD and SGDM approximate each other in noisy settings.
    }
    \label{tab:lmft_prompt}
\end{table*}

\subsection{CIFAR-10 Experiments}
We report the results of training ResNet-32 on CIFAR-10 with and without momentum. 
First, we grid search over learning rates between $0.1$ and $12.8$, multiplying by factors of 2, to find the best learning rate for SGD with momentum.
We find this to be $\eta=0.2$.
Then, we run SGD and SGDM with SVAG to produce the figure. We obey the SGDM formulation \Cref{def:SGDM} by setting $\gamma=\frac{\eta}{1-\beta}$ with $\beta=0.9$.

Standard SGD and SGDM exhibit different test accuracies ($89.35\%$ vs $92.68\%$), suggesting that momentum exhibits a different implicit regularization than SGD.
However, as we increase the gradient noise by increasing $\ell$ in SVAG (\Cref{def:svag}), we see that the two trajectories get closer.
At $\ell=4$, the final test accuracies for SGD and SGDM are $91.96\%$ and $91.94\%$, respectively, verifying our theoretical analysis.

\end{document}